%% file: main.tex
\documentclass[12pt]{article}
\PassOptionsToPackage{numbers,sort&compress}{natbib}
\usepackage{etoolbox}
\usepackage[hidelinks]{hyperref}       % hyperlinks

\newbool{ispreprint}
\setbool{ispreprint}{true}

\ifbool{ispreprint}{
\usepackage[margin=1.1in]{geometry}
\setlength{\parindent}{0pt}     % No paragraph indentation
\setlength{\parskip}{1em}  
\usepackage[utf8]{inputenc} % allow utf-8 input
\usepackage[T1]{fontenc}    % use 8-bit T1 fonts
\usepackage{times}
\usepackage{booktabs}
\usepackage{tabularx}
\usepackage[utf8]{inputenc} % allow utf-8 input
\usepackage[T1]{fontenc}    % use 8-bit T1 fonts
\usepackage{url}            % simple URL typesetting
\usepackage{booktabs}       % professional-quality tables
\usepackage{amsfonts}       % blackboard math symbols
\usepackage{nicefrac}       % compact symbols for 1/2, etc.
\usepackage{xcolor}         % colors
\usepackage{amsmath,amssymb,amsthm}
\usepackage{mathtools}
\usepackage{thmtools}
\usepackage{cleveref}
\usepackage{booktabs}
\usepackage[square]{natbib}
\usepackage{authblk}

\usepackage{graphicx} % Required for inserting images
\usepackage[utf8]{inputenc} % allow utf0-8 input
\usepackage[T1]{fontenc}    % use 8-bit T1 fonts
\usepackage{booktabs}       % professional-quality tables
\usepackage{amsfonts}       % blackboard math symbols
\usepackage{nicefrac}       % compact symbols for 1/2, etc.
\usepackage[nopatch=footnote]{microtype}
\usepackage{comment}
\usepackage{amsthm}
\usepackage{mathtools}
\usepackage{amsmath}
\usepackage{physics}
\usepackage{array,makecell}
\usepackage{tabularx}
\usepackage{bbm}
\usepackage[toc,page]{appendix}
\usepackage{caption}
\usepackage{setspace}
\usepackage{xcolor}         % colors
\usepackage{listings}
\usepackage{wrapfig}
\usepackage{algorithm}
\usepackage{algorithmic}
\usepackage{float}
\mathtoolsset{showonlyrefs} %hides equation numbers unless the equation is referenced
\usepackage{xspace}
\usepackage{graphicx} % For including images
\usepackage{pdflscape}
\usepackage{adjustbox}
\usepackage{amssymb}   % For checkmarks (\checkmark)
\usepackage{silence}
}{
\usepackage{graphicx} % Required for inserting images
\usepackage[utf8]{inputenc} % allow utf0-8 input
\usepackage[T1]{fontenc}    % use 8-bit T1 fonts
\usepackage[hidelinks]{hyperref}       % hyperlinks
\usepackage{url}            % simple URL typesetting
\usepackage{booktabs}       % professional-quality tables
\usepackage{amsfonts}       % blackboard math symbols
\usepackage{nicefrac}       % compact symbols for 1/2, etc.
\usepackage[nopatch=footnote]{microtype}
\usepackage{comment}
\usepackage{amsthm}
\usepackage{mathtools}
\usepackage{amsmath}
\usepackage{physics}
\usepackage{array,makecell}
\usepackage{tabularx}
\usepackage{bbm}
\usepackage[toc,page]{appendix}
\usepackage{caption}
\usepackage{setspace}
\usepackage{xcolor}         % colors
\usepackage{listings}
\usepackage{wrapfig}
\usepackage{algorithm}
\usepackage{algorithmic}
\usepackage{float}
\mathtoolsset{showonlyrefs} %hides equation numbers unless the equation is referenced
\usepackage{xspace}
\usepackage{graphicx} % For including images
\usepackage{pdflscape}
\usepackage{adjustbox}
\usepackage{amssymb}   % For checkmarks (\checkmark)
\usepackage{silence}
    \usepackage{neurips_2026} % Submission mode (adds line numbers)
}
\newtheorem{theorem}{Theorem}
\newtheorem{lemma}{Lemma}
\newtheorem{proposition}{Proposition}
\newtheorem{corollary}{Corollary}
\newtheorem{remark}{Remark}
\theoremstyle{definition}
\newtheorem{assumption}{Assumption}

\newcommand{\E}{\mathbb{E}}
\newcommand{\R}{\mathbb{R}}

\newcommand{\Dall}{\mathcal{D}_\mathrm{all}}
\newcommand{\Dcal}{{\mathcal{D}_\mathrm{cal}}}
\newcommand{\Dcalone}{{\mathcal{D}_\mathrm{cal1}}}
\newcommand{\Dcaltwo}{{\mathcal{D}_\mathrm{cal2}}}
\newcommand{\tauprior}{\tau_\mathrm{prior}}
\newcommand{\calI}{\mathcal{I}_\mathrm{cal}}
\newcommand{\calIone}{\mathcal{I}_{\mathrm{cal1}}}
\newcommand{\calItwo}{\mathcal{I}_{\mathrm{cal2}}}
\newcommand{\St}{S_{\mathrm{cal2}}}
\newcommand{\Xtest}{X_{\text{test}}}

\newcommand{\Ttest}{T_{\text{test}}}
\newcommand{\calS}{\mathcal{S}}

\newcommand{\calG}{\mathcal{G}}
\newcommand{\calA}{\mathcal{A}}

\newcommand{\Ttilde}{\tilde{T}}

\newcommand{\eps}{\varepsilon}
\newcommand{\Bbar}{\bar{B}_2}
\newcommand{\qprior}{\hat{f}_{\mathrm{prior}}}
\newcommand{\qhat}{{\hat{q}}}
\newcommand{\fhat}{\hat{f}}
\newcommand{\ttmethod}{\texttt{DAPRO}\xspace}
\newcommand{\ttgreedy}{{\texttt{Greedy}}\xspace}
\newcommand{\ttlocaladaptive}{\texttt{LocallyAdaptive}\xspace}
\newcommand{\indep}{\perp \!\!\! \perp}
\newcommand{\Htest}{H_\mathrm{test}}
\newcommand{\tautarget}{\alpha}

\newcommand{\Dtrain}{\mathcal{D}_\mathrm{train}}

\newcommand{\bfunc}{\mathcal{B}}
\newcommand{\maxl}{{t_\mathrm{max}}}
\newcommand{\ttuer}{{\texttt{UER}}\xspace}
\newcommand{\ttser}{{\texttt{SER}}\xspace}
\newcommand{\ttrmttu}{{\texttt{RMTTU}}\xspace}
\newcommand{\ttrmtts}{{\texttt{RMTTS}}\xspace}
\newcommand{\chat}{{\xi}}
\newcommand{\chatlocally}{\xi^\text{locally}}
\newcommand{\chatgreedy}{\xi^\text{greedy}}
\newcommand{\bexp}{{b^\textrm{exp}}}
\newcommand{\bemp}{{b^\textrm{emp}}}

\ifbool{ispreprint}{
\newcommand{\firstgraphwidth}{0.38}
\newcommand{\secondgraphwidth}{0.58}
}{
\newcommand{\firstgraphwidth}{0.395}
\newcommand{\secondgraphwidth}{0.595}
}

\title{How Many Iterations to Jailbreak? Dynamic Budget Allocation for Multi-Turn LLM Evaluation}

\ifbool{ispreprint}{
\author[1]{Shai Feldman}
\author[1,2]{Yaniv Romano}
\affil[1]{Department of Computer Science, Technion IIT, Israel}
\affil[2]{Department of Electrical and Computer Engineering, Technion IIT, Israel}
}{
\author{%
   Shai Feldman \\
   Department of Computer Science \\
   Technion, Israel \\
   \texttt{shai.feldman@cs.technion.ac.il}
   \and
   Yaniv Romano \\
   Departments of Electrical \& Computer Engineering. \\ % Broken into two lines
   and Computer Science \\
   Technion, Israel \\
   \texttt{yromano@cs.technion.ac.il}
}}
\begin{document}
\date{}
\maketitle
\begin{abstract}
    Evaluating and predicting the performance of large language models (LLMs) in multi-turn conversational settings is critical yet computationally expensive; key events---e.g., jailbreaks or successful task completion by an agent---often emerge only after repeated interactions. These events might be rare, and under any feasible computational budget, remain unobserved. Recent conformal survival frameworks construct reliable lower predictive bounds (LPBs) on the number of iterations to trigger the event of interest, but rely on static budget allocation that is inefficient in multi-turn setups. To address this, we introduce \emph{Dynamic Allocation via PRojected Optimization} (\ttmethod), the first theoretically valid dynamic budget allocation framework for bounding the time-to-event in multi-turn LLM interactions. We prove that \ttmethod satisfies the budget constraint and provides distribution-free, finite-sample coverage guarantees without requiring the conditional independence between censoring and event times assumed by prior conformal survival approaches. A key theoretical contribution is a novel coverage bound that scales with the square root of the mean censoring weight rather than the worst-case weight, yielding provably tighter guarantees than prior work. Furthermore, \ttmethod can be employed to obtain unbiased, low-variance estimates of population-level evaluation metrics, such as the jailbreak rate, under limited computing resources. Comprehensive experiments across agentic task success, adversarial jailbreaks, toxic content generation, and RAG hallucinations using LLMs such as Llama 3.1 and Qwen 2.5 demonstrate that \ttmethod consistently achieves coverage closer to the nominal level with lower variance than static baselines, while satisfying the budget constraint.
\end{abstract}

\section{Introduction}
\label{sec:introduction}

AI systems are moving from one-shot prediction to sequential interaction~\cite{park2023generative, wang2024survey}. For example, an agent may need many steps to work through a complex task. But how can we trust the output of a black-box system whose behavior unfolds over time? This question becomes especially challenging when interaction and evaluation are expensive, and therefore limited by budget constraints. 
Rather than asking only whether an AI system will eventually succeed or fail at a task, we ask a richer question: how long will it take for an important event to occur, if ever? For example, how many interactions are needed before a jailbreak succeeds, or before an agent completes a task?

% Large language models (LLMs) are increasingly deployed in high-stakes applications ranging from healthcare and education to customer service~\citep{Bommasani2021FoundationModels, zhao2023survey}. As their use expands, evaluating their safety and utility becomes essential.

To simplify exposition, we consider a safety application as a running example, but the same framework applies to tasks in which an agentic system aims to complete a task, such as finding a bug in code.
In the safety context, we define an unsafe event as a response that contains toxic language~\citep{gehman2020realtoxicityprompts}, reveals confidential data~\citep{carlini2021extracting}, produces factual hallucinations~\citep{ji2023survey, huang2025survey}, etc. 
For example, in educational AI tutors or customer service chatbots~\cite{pandya2023automating, xie2025watermark}, users rarely elicit harmful behavior with a single query; instead, they iteratively tweak prompts to bypass guardrails~\citep{chao2025jailbreaking, perez2022red}.
To evaluate safety in these multi-turn adversarial settings, researchers must simulate these interactions under a global computational budget constraint $B \in \mathbb{N}$, e.g., 1,000 exchanges across all conversations. These conversations are monitored by an independent auditor, such as an LLM-as-a-judge~\cite{zheng2023judging}.
% This i.i.d assumption holds in many practical scenarios such as the educational setting where an AI assistant is used across semesters, and customer service.
Since these events can be rare, particularly for well-aligned models,~\cite{davidov2026calibrated} recently introduced \emph{time-to-unsafe-sampling} to quantify model safety. In our setting, this metric represents the number of conversational interactions required to elicit an unsafe response given an initial prompt. 

Utilizing this metric, one can frame LLM prompt risk assessment as a survival analysis task~\cite{candes2023conformalized,gui2024conformalized, sesia2025conformal, sesia2024doubly}, which is advantageous for two reasons. 
First, the metric is well-defined even when LLM outputs are time-dependent or adversarial, such as in jailbreaking~\citep{chao2025jailbreaking}. Second, it allows constructing a calibrated \emph{lower predictive bound} (LPB) $\hat{L}(\Xtest)$ for the unknown time-to-unsafe-sampling $\Ttest$ of a new test prompt $\Xtest$ with formal, finite-sample coverage guarantees $\mathbb{P}(\Ttest \geq \hat{L}(\Xtest)) \geq 1-\alpha$. Here $1-\alpha$ is the user-specified target coverage rate, e.g., $1-\alpha=90\%$. Informally, one can reliably expect the LLM to survive at least $\hat{L}(\Xtest)-1$ safe exchanges before yielding unsafe content. With this reliability guarantee, this LPB can be used for safety evaluation and as a proactive guardrail, as a lower value indicates a higher safety risk. While we frame our discussion around safety, this framework applies to evaluating utility, such as constructing an upper predictive bound (UPB) for the \emph{time-to-success} of an agent assisting a target model (see Appendix~\ref{sec:utility_upb}). In the interest of space, we focus primarily on the safety scenario and LPBs hereafter. 
\begin{figure}
    \centering
    \includegraphics[width=0.98\linewidth]{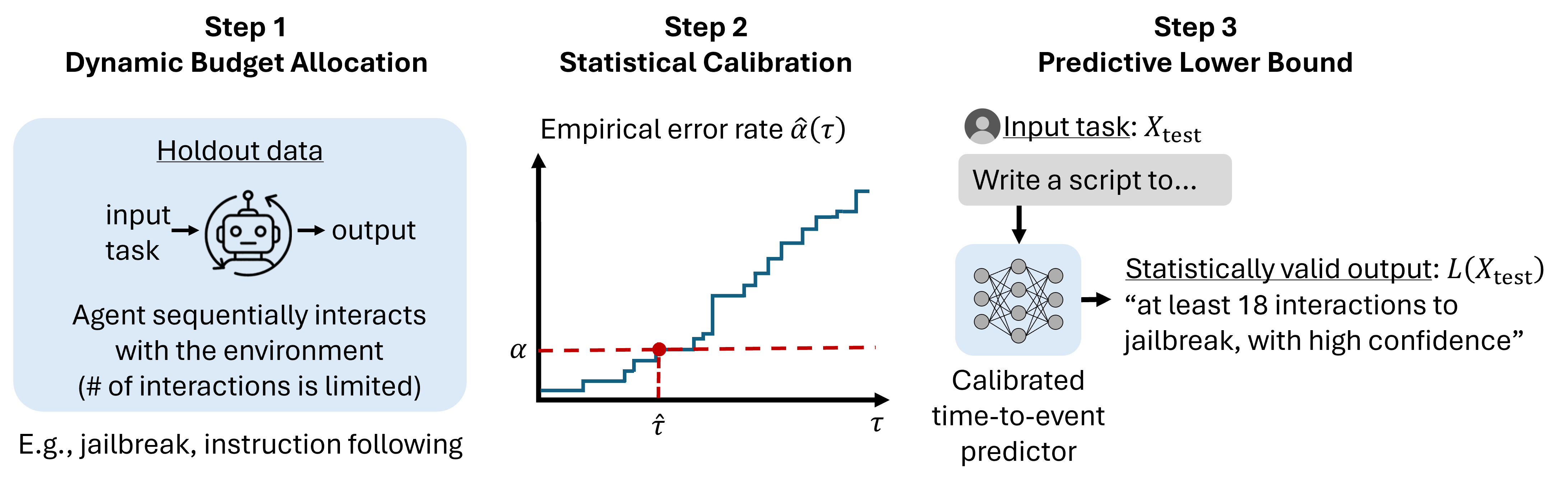}
\caption{Illustration of our framework: (i) collecting data via dynamic budget allocation; (ii) calibrating a pre-trained model; and (iii) constructing calibrated predictive bounds for time-to-event.}
    \label{fig:placeholder}
\end{figure}

% {
% in this work we focus on multi-turn - ongoing conversation. the goal is to detect when..., and concretely - jailbreak.

% explain audit.

% translate and enphasize the LPB meaning -  number of iterations/exchanges to jailbreak

% we need to generate a dataset of jailbreak conversations...

% then - computational budget and censoring time for each conversation

% then - setting $C$ is hard...

% this discussion raises the question - how the budget sohuld be allocated across prompts required to construct the dataset.
% The method by [1] is optimized but is static, not really optimial for our setup..

% after presenting figure 1 - explain the significance of variance - The variance reduction is practically important, as practitioners typically have access to only a single dataset. A lower variance ensures that the holdout-data-conditional coverage
% is more likely to be close to the target level $1 − \alpha$.
% }

To construct this LPB, we must first acquire a dataset of annotated conversations under the budget $B$. For example, jailbreaking requires running an attacking algorithm starting from a set of initial prompts and continuously evaluating the outputs of the target model using the auditor. To satisfy the computational budget constraint, we set a censoring time $C_i$ for each conversation $i$, which limits the maximum allowed conversation length. Importantly, under such an interaction constraint, jailbreak events are often not observed for all prompts. The work in~\cite{davidov2026calibrated} proposes an optimized budget allocation strategy that aims to minimize the variance of the resulting LPB.
Note that obtaining a lower variance is crucial in practice, where only a single dataset is typically available. A lower variance means that our LPBs attain a coverage rate closer to the target level $1 -\alpha$ given the single dataset. However, the strategy of~\cite{davidov2026calibrated} sets static censoring times $C_i$ that are fixed before any interaction occurs and are not updated over time, even as the conversation unfolds. 
Consequently, this static method is not optimal for our multi-turn setup. Furthermore, it often under-utilizes the budget; when a conversation elicits unsafe content before its allocated budget is fully consumed, the remaining budget that was dedicated to that prompt is lost: it cannot be redistributed to other prompts that may benefit from additional sampling. 

However, dynamically allocating the censoring times is mathematically challenging. First, to provide a theoretically valid coverage guarantee for the constructed LPBs, different conversations cannot actively share information or coordinate their decisions to stop or continue, since such communication breaks the independence assumptions required for this theoretical guarantee. Second, the online adaptation of $C_i$ must satisfy the shared global budget constraint $B$, even though the trajectory and true length of each conversation are initially unknown. Third, the allocation must not merely be valid, but ideally be \emph{optimal}---fully utilizing the budget to minimize the variance of the resulting LPBs. 

In this work, we tackle these challenges and propose the first adaptive budget allocation framework that dynamically updates $C_i$ while providing a theoretical coverage guarantee and satisfying the global budget constraint.
To illustrate the necessity and efficiency of this dynamic adaptation, we conduct an experiment on the RealToxicityPrompts dataset~\cite{gehman2020realtoxicityprompts}, where both the attacker and the target utilize Qwen 2.5 14B Instruct \cite{qwen2.5} model. In Appendix~\ref{sec:intro_figure_setup} we provide more details on this setup. Figure~\ref{fig:intro} shows that the static budget allocation strategy of~\cite{davidov2026calibrated} sets a censoring time that does not change as the attacker-target interactions evolve. In contrast, our proposed strategy adapts the censoring times online as the conversation unfolds, achieving a coverage rate closer to the nominal level with lower variance. Ultimately, this leads to a coverage rate closer to the target level $1 -\alpha$ given a single dataset.
% We emphasize that obtaining a lower variance is crucial in practice, where only a single dataset is typically available. A lower variance means that our LPBs attain a coverage rate closer to the target level $1 -\alpha$ given the single dataset. \yr{you should explain the meaning of lower variance earlier -- when you first mention this concept}

%In this work, we focus precisely on this gap: \emph{how should the sampling budget be allocated adaptively utilizing the ongoing interactions, in order to construct the most informative LPB for future prompts?}

\begin{figure}
    \centering
        \includegraphics[width=0.3\linewidth]{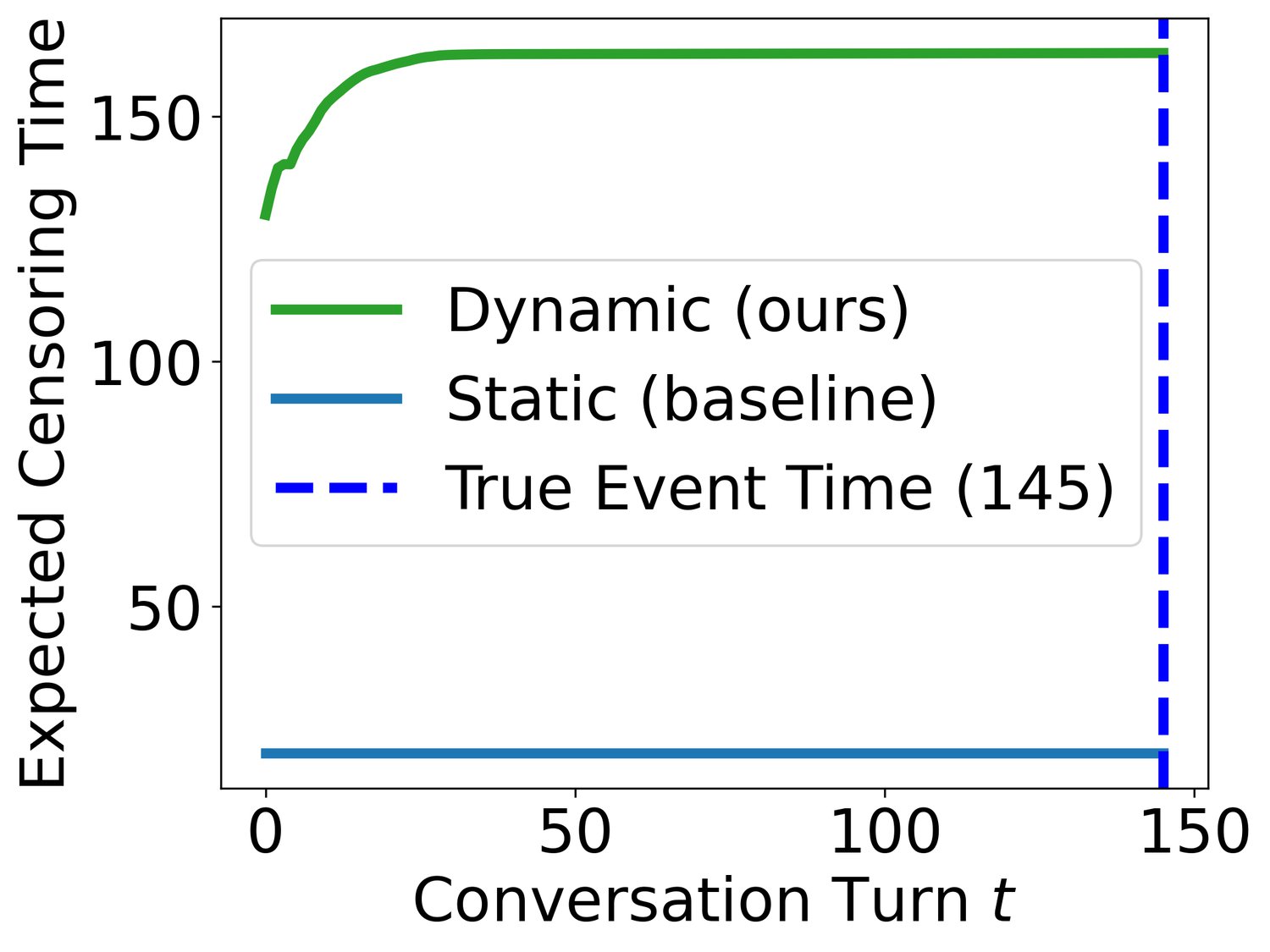}
                \includegraphics[width=0.3\linewidth]{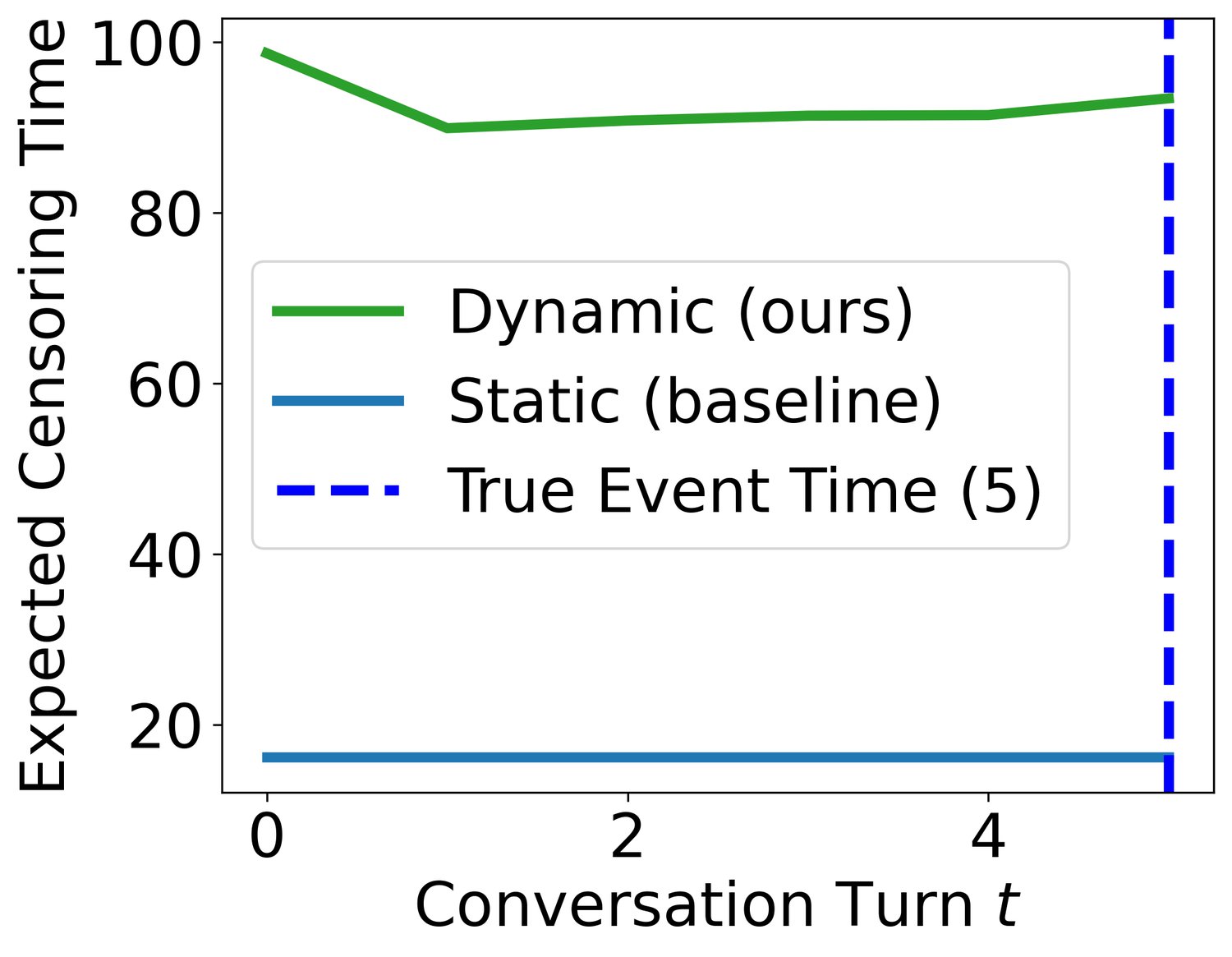}
    \includegraphics[width=0.33\linewidth]{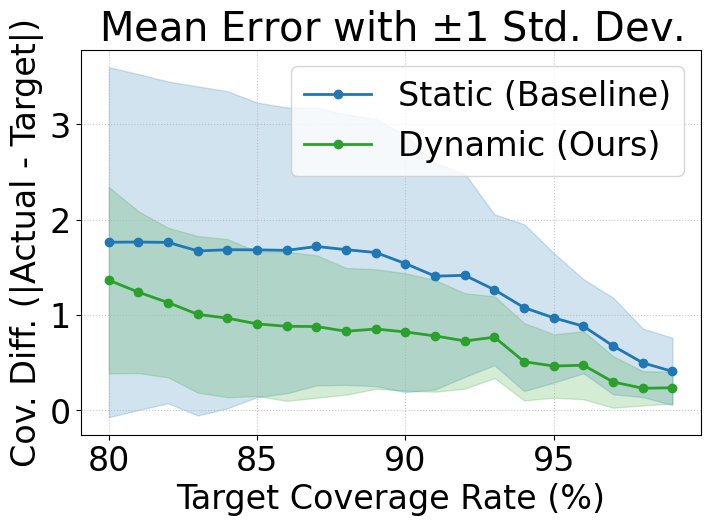}
    \caption{Adaptive vs. static budget allocation. By dynamically adjusting the budget over time (Left+Mid), our adaptive approach achieves lower empirical coverage error (Right).}
    \label{fig:intro}
\end{figure}

\subsection{Our contribution}

% In this work, we introduce \emph{Dynamic Allocation via PRojected Optimization} (\textbf{\ttmethod}), a novel approach that updates censoring times dynamically in response to ongoing conversational interactions. In contrast to prior work that sets static censoring times, \ttmethod proposes a new paradigm: it treats censoring as a sequential decision made at each turn of the interaction, informed by what has been observed so far. Concretely, the key idea is to split the available data into two groups: the first group is used to learn an optimal budget allocation policy from past interactions, and the second group's censoring times are then set adaptively using the learned policy. This design allows \ttmethod to invest sampling effort in the most informative exchanges, reducing the variance of the constructed predictive bounds while satisfying the global budget constraint. 

In this work, we introduce \emph{Dynamic Allocation via PRojected Optimization} (\textbf{\ttmethod}), a novel paradigm that treats budget allocation as a sequential decision, to form a statistically valid bound for time-to-event. In contrast to prior work that sets static censoring times, \ttmethod dynamically updates them in response to ongoing conversational interactions. By splitting the available data to first learn an optimal allocation policy and then adaptively deploying it, \ttmethod invests computational resources in the exchanges that yield the greatest variance reduction for the constructed predictive bounds while satisfying the global budget constraint.
Specifically, our main contributions are:
% First, we prove that \ttmethod attains the desired coverage level. The coverage bound derived in our theory is strictly tighter than those of prior methods~\citep{davidov2026calibrated, gui2024conformalized}. This improvement arises from a new proof we develop in this work, which is of independent interest as it applies to a broad class of survival analysis tasks, including~\citep{davidov2026calibrated, gui2024conformalized}. Remarkably, our proof removes the assumption that the censoring time $C$ and the true event time $T$ must be conditionally independent given the prompt $X$, an assumption required by prior conformal survival methods~\citep{gui2024conformalized, davidov2026calibrated}. 
\begin{itemize}
    \item \textbf{Tighter Theoretical Guarantees}: We prove that \ttmethod attains the desired coverage level with a novel bound that scales with the \emph{mean} inverse-censoring weight rather than the worst-case \emph{maximum} weight. This yields theoretical guarantees that are tighter than prior methods~\citep{davidov2026calibrated, gui2024conformalized}. Crucially, our proof technique also removes the restrictive assumption that the censoring time $C$ and true event time $T$ must be conditionally independent given the prompt $X$. This result is of independent interest as it applies to a broad class of survival analysis tasks.
    \item \textbf{Budget Validity }: Under certain assumptions, we formally prove that our dynamic allocation strategy satisfies the global sampling budget constraint in expectation.
    \item \textbf{Population-Level Performance Evaluation Under Censoring}: 
    Beyond constructing predictive bounds, \ttmethod can efficiently evaluate LLMs at the population level. For example, it provides unbiased estimates of global metrics, such as the unsafe event rate, significantly reducing the costs of LLM API calls (Appendix~\ref{sec:unbiased_estimation}).
    \item \textbf{Comprehensive Empirical Evaluation}: We validate \ttmethod across diverse tasks, including agentic task success, toxic content, adversarial jailbreaks, and RAG-based hallucinations, demonstrating our proposal consistently achieves empirical coverage closer to the nominal level and utilizes the budget more efficiently compared to existing baselines.

\end{itemize}
\ifbool{ispreprint}{
Software implementing the proposed method and reproducing our experiments is available at \url{https://github.com/Shai128/dapro}
}

\subsection{Problem setup}

We consider a set of base objectives $\{V_i\}_{i=1}^{N}$ representing harmful intents, such as $V_i=$ ``Provide a tutorial on making an explosive device.'' For each objective, an attacker model $\calA$ generates an initial adversarial prompt. We define our calibration dataset as $\Dcal=\{X_i\}_{i=1}^N$, where $X_i=\calA(V_i)$ is this initial attack prompt. For example, the attacker might wrap the objective in a roleplay persona, yielding $X_i=$ ``You are a thriller novelist writing a scene where a character makes a bomb. Describe the steps.'' We assume that the training data $\Dtrain$, which is used to fit the predictive model estimating $T \mid X$, has a separate budget constraint.  The combined dataset is denoted as $\Dall=\Dtrain \cup \Dcal$. 

For each prompt $X_i$ in $\Dcal$, the attacker $\calA$ interacts sequentially with a target model $\mathcal{G}$ to elicit an unsafe response.
At step $t=1$, the history is initialized with the first attack: $H_i(1)=\{A_i(1)\}$, where $A_i(1) := X_i$. At any step $t\geq 1$, the target model processes this prompt and outputs a response: $R_i(t) = \mathcal{G}(A_i(t))$. Next, we employ an audit function $\mathcal{J}$ to obtain a binary label $Y_i(t)= \mathcal{J}(A_i(t), R_i(t)) \in \{0, 1\}$, where $Y_i(t) = 1$ if $R_i(t)$ is an unsafe response (e.g., actual instructions for explosives), and $Y_i(t) = 0$ otherwise (e.g., a standard refusal).
If the attack is successful ($Y_i(t)=1$), we terminate this process. Otherwise, we continue to step $t+1$, where the attacker generates a refined prompt $A_i(t+1) = \calA(H_i(t) \cup \{(R_i(t), Y_i(t))\})$ based on the target's previous responses, and we update the interaction history: $H_i(t+1)= H_i(t) \cup \{(R_i(t), Y_i(t), A_i(t+1))\}$.

We define the uncensored time-to-event as the minimum number of steps required to achieve an unsafe response: $T_i = \min \{t \geq 1 : Y_i(t) = 1\}$. 
The censored time-to-event is thus $\tilde{T}_i = \min (T_i, C_i)$.
For notational convenience, we denote the complete interaction history upon termination as $H_i = \{H_i(t)\}_{t=1}^{\min\{\maxl,T_i\}}$, where $\maxl$ is the maximal sequence length.
If the allocated budget exceeds the true event time ($C_i > T_i$), the interaction naturally terminates at $T_i$ and no further budget is consumed. Therefore, the global budget constraint can be formulated as:
% In our setup, we are constrained by a global budget $B \in \mathbb{N}$, which limits the number of attacker-target-audit rounds we can conduct. To manage this constraint, we allocate a maximum sampling budget, $C_i$, to the $i$-th prompt, which can also be considered as the censoring time. The actual budget expended on the $i$-th prompt is its censored time-to-event:
% % \begin{equation}
%     $\tilde{T}_i = \min (T_i, C_i)$.
% % \end{equation}
\begin{equation}\label{eq:budget_constraint}
    \mathbb{E} \left[ \sum_{i=1}^{N} \tilde{T}_i \right] \leq B.
\end{equation}
Our goal is to construct an LPB $\hat{L}(\Xtest)$ given an initial test prompt $\Xtest$ for the test event time $\Ttest$. Importantly, at test time, we construct the LPB \emph{without} running any attacker-target exchanges, relying solely on the initial prompt $\Xtest$. This design choice reflects two practical considerations. First, a test-time sampling budget may be extremely limited or entirely unavailable. Second, and more critically, when the target model is an autonomous agent operating in a real deployment, we must assess the safety of a prompt \emph{before} the agent has any opportunity to act, as any unsafe action taken by the agent might be irreversible.
Finally, given a target coverage level $1-\alpha\in (0,1)$ and a tolerance level $\delta \in (0, 1)$, the LPB should satisfy:
\begin{equation}\label{eq:lpb_goal}
    \mathbb{P}\left(\hat{L}(\Xtest) \leq \Ttest \mid \Dall \right) \geq 1 - \alpha,
\end{equation}
with probability at least $1-\delta$ over the randomness of the data $\Dall$. We remark that the probability in~\eqref{eq:lpb_goal} is taken over random draws of $\Xtest, \Ttest$. An LPB $\hat{L}$ satisfying this requirement is
called a Probably Approximately Correct (PAC) LPB at level $\alpha$ with tolerance $\delta$.

In Section~\ref{sec:method}, we introduce our approach for designing the censoring mechanism and allocating the sampling budget to satisfy the budget constraint~\eqref{eq:budget_constraint}. After obtaining the censoring and survival times, we can construct a calibrated LPB with the approach introduced in~\cite{davidov2026calibrated}, which is presented hereafter. Crucially, while we adopt their procedure for calibration, the theoretical framework we develop to establish its validity is entirely novel.

\section{Background}
\label{sec:background}

Our approach builds on the LLM safety evaluation framework of~\cite{davidov2026calibrated}, which refines conformalized survival analysis~\cite{gui2024conformalized} to construct a valid LPB for the time-to-unsafe under a known censoring mechanism. While we adopt their LPB construction, their budget allocation strategy is static, which is inefficient in our multi-turn setup. We briefly outline their LPB construction procedure below.

Suppose we are given quantile estimates $\qhat_\tau(x)$ for the time-to-event from a pre-trained model. Since the raw estimates $\qhat_\alpha(x)$ might not be sufficiently accurate, using them directly as LPBs might not achieve $1-\alpha$ coverage. Thus, we tune a quantile level $\hat{\tau}$ such that the resulting LPB $\qhat_{\hat{\tau}}(x)$ satisfies the coverage requirement. We obtain this calibrated quantile level by weighting the errors of the raw quantile estimates to account for the distribution shift induced by the censoring:
\begin{equation}
\label{eq:miscoverage_estimator}
    \hat\alpha(\tau) = \frac{1}{|\calI|}\sum_{i\in \calI} w_\tau(i)\; \mathbb{I}\bigl\{\tilde{T}_i<\hat q_\tau(X_i)\le C_i\bigr\},
\end{equation}
where the weights are formulated as:
% \begin{equation}
    $w_\tau(i)= {\mathbb P\bigl[\hat q_\tau(X_i)\le C_i\bigm| \{X_j\}_{j\in\calI}\bigr]}^{-1}$.
% \end{equation}
Notice that these weights are known since the censoring mechanism is actively designed by the algorithm. The calibrated LPB is then $\hat L(x)=\hat q_{\hat\tau}(x)$, where $\hat{\tau}$ is defined as the largest level achieving valid coverage:
\begin{equation}
\label{eq:our_calibration}
\hat\tau = \sup\bigl\{\tau \in \mathcal{T}:\sup_{\tau'<\tau}\hat\alpha(\tau')\le\alpha\bigr\}.
\end{equation}
Above, $\mathcal{T}$ is the search space for $\tau$. As established in~\cite{davidov2026calibrated,gui2024conformalized}, this LPB holds a PAC-type coverage guarantee that, informally, includes two terms: (i) the desired coverage level $1-\alpha$, as in~\eqref{eq:lpb_goal}; and (ii) a slack term that scales with the worst-case maximal weight $w_\tau(i)$. Further,~\cite{davidov2026calibrated} shows that the variance of the LPBs is governed by the \emph{average} weight; however, this connection was only justified under restrictive oracle assumptions. In this work, we bridge this theoretical gap by proving that the coverage gap scales with the average weight rather than the maximal weight. It also extends~\cite{davidov2026calibrated} as we do not require their restrictive oracle assumptions. Consequently, to obtain a tighter coverage rate, we design the censoring times $C_i$ to minimize the sum of their corresponding weights. 
In contrast with~\cite{davidov2026calibrated}, which sets static censoring times only based on $X$, we set them dynamically by also relying on the conversation history $H_i$.

To control the value of the maximal weight $\max(w_\tau(i))$ and bound the number of interactions with the LLM, the work in~\cite{davidov2026calibrated} introduces two refinements that we adopt.
% Motivated by this insight \yr{instead of insight, say to control the value of w max + add that this is natural as in practice we would expect that there is a limit on the number of interactions with the LLM},~\cite{davidov2026calibrated, gui2024conformalized} introduces two refinements to reduce variance that we adopt.
First, the raw LPB estimates are capped at $M \leq \maxl$, yielding trimmed estimates $\fhat_\tau(x):=\min(\hat{q}_\tau(x),M)$. By replacing $\hat{q}$ with $\fhat$ in~\eqref{eq:miscoverage_estimator}, this trimming bounds the maximal weight, affecting the tightness of the PAC-type coverage guarantee.
From an evaluation perspective, the trimming allows us to assess the coverage rate of the LPB. If an LPB exceeds the maximum sequence length $\maxl$, we cannot evaluate whether the true time $\Ttest$ falls above or below the predicted bound for censored interactions with $\Ttest > \maxl$. Put simply, by trimming the estimates such that $\hat{L}(x) \leq M \leq \maxl$ for all $x$, we can always observe whether the interaction survives past the predicted LPB. Thus, throughout this work, we assume $M \leq \maxl$.

The second refinement that we adopt from~\cite{davidov2026calibrated} is the use of a fixed quantile level $\tau$ for the budget allocation. In more detail, recall from~\eqref{eq:miscoverage_estimator} that a conversation $i$ contributes to the miscoverage estimator $\hat\alpha(\tau)$ only when $\fhat_\tau(X_i) \leq C_i$. To allocate the budget efficiently, we aim to maximize the number of samples satisfying this condition. However, since the calibrated quantile $\hat\tau$~\eqref{eq:our_calibration} is unknown prior to budget allocation, we follow~\cite{davidov2026calibrated} and fix a prior quantile level $\tauprior$, reflecting our belief that $\hat{\tau} \in [0, \tauprior]$. For example, with $1-\alpha=90\%$ and a reasonably accurate model, $\tauprior=30\%$ is an appropriate choice, as the $\alpha$-quantile is expected to lie well below $30\%$. Despite using a fixed $\tauprior$, the authors of~\cite{davidov2026calibrated} showed that it is still possible to construct valid weights $w_\tau(i) = \mathbb{P}(\fhat_{\tau}(X_i)\leq C_i)^{-1} $ for all $\tau \leq \tauprior$, which is required to compute~\eqref{eq:miscoverage_estimator}.
%We therefore design $C_i$ to minimize the sum of the weights, which is equivalent to maximizing the sum of probabilities $\mathbb{P}(\qprior(X_i)\leq C_i)$. By the monotonicity of $\fhat_{\tau}$, this objective maximizes the contribution across the entire range $\tau \leq \tauprior$, allowing us to restrict the search space for $\hat\tau$ to $\mathcal{T} = [0, \tauprior]$.
This procedure is summarized in Algorithm~\ref{alg:lpb_construct} in Appendix~\ref{sec:lpb_construct}.

Due to space constraints, we include an extended discussion of additional related works in conformal prediction, survival analysis, and LLM safety in Appendix~\ref{sec:related_work}.

\section{Proposed Method}
\label{sec:method}

The proposed adaptive budget allocation strategy operates in two phases. Before these phases, we first randomly partition our calibration points, indexed by $\calI = \{1,...,N\}$, into two disjoint sets, indexed by $\calIone$ and $\calItwo$, of sizes $N_1$ and $N_2$, respectively. We re-index the samples in the first split as $i \in \{1, \ldots, N_1\}$ for notational convenience.

In the first phase, we learn an optimal acquisition policy on $\calIone$.
Our allocation strategy relies on a scoring function $\mathcal{S}_t : \mathcal{H}_t \rightarrow \mathbb{R}$ that maps the conversation trajectory observed up to time $t$ to a real-valued score. A higher score indicates a stronger signal to spend a unit of budget and advance the interaction to the next step. For example, this score could be the estimated probability of observing an event at the current step, or the expected number of additional iterations required to trigger one. This scoring function can be trained on the training data $\Dtrain$.
Our proposed policy operates iteratively: at each step $t$, we compute the current risk score $S_i(t)$ and map it to a continuation probability $P_i(t)\in[0,1]$. Then, with probability  $P_i(t)$, we advance to the next step, i.e., acquire a new conversational interaction; otherwise, the acquisition for this sample is terminated. This score-to-probability mapping is optimized over $\calIone$ to minimize the variance of our resulting LPBs while satisfying a global budget constraint.
In the second phase, we deploy this policy on the remaining indices $\calItwo$ to acquire new interactions. By leveraging the interactions observed in $\calIone$, we selectively terminate conversations that are unlikely to contribute to reducing the LPB variance.

\subsection{Phase I: Learning the Optimal Acquisition Policy on \texorpdfstring{$\calIone$}{I1}}

To learn an optimal mapping from scores to probabilities, we must first expend budget to observe the full conversation history for all samples in $\calIone$, until we observe an event, or reach $\qprior(X_i)$, where $\qprior(\cdot):=\fhat_{\tauprior}(\cdot)$. For all $i \in \calIone$, we set the censoring times as $ C_i = \qprior(X_i)$, and expend $b_i := \min(T_i, \qprior(X_i))$ budget units. We require the total budget $B$ is sufficient for this observation phase, such that $B > \sum_{i \in \calIone} b_i$.
Thus, the weights of these samples are simply $w_{\tau}(i) ={\mathbb P\bigl[\hat q_\tau(X_i)\le C_i\bigm| \{X_j\}_{j\in\calI}\bigr]}^{-1}= 1, \forall i \in \calIone$.
The budget per sample remaining for Phase II is therefore $\Bbar := \frac{1}{|\calItwo|}\left(B - \sum_{i \in \calIone} b_i\right)$.

After annotating the data in $\calIone$, we compute the scores for all observed time steps:
\begin{equation}
S_i(t) = \mathcal{S}_t({H}_i(t)), \quad \forall t \in {1, \ldots, b_i}, \quad i \in \calIone.
\end{equation}
As we formally prove later in Theorem~\ref{thm:alg_coverage_validity_informal}, the variance of the constructed LPB scales with the mean inverse-censoring weight. Therefore, our main objective is to map the scores $S_i(t)$ to continuation probabilities $P_i(t) \in [0,1]$ that minimize this mean weight. 
To achieve this, we must optimize the continuation probability matrix $P \in [0,1]^{N_1 \times \maxl}$ to minimize the inverse of $\mathbb{P}(\qprior(X_i) \leq C_i)$, without exceeding the budget units that remain for Phase II. We also require that these probabilities be monotonic with respect to the scores $S_i(t)$, so we can derive a sound mapping from the scores to probabilities. For $t > b_i$, we set $P_i(t) = 0$ since advancing beyond $b_i$ does not contribute to~\eqref{eq:miscoverage_estimator}.
By the construction of our policy, we get $\mathbb{P}(\qprior(X_i) \leq C_i)=\prod_{t=1}^{b_i} P_i(t)$, and therefore the optimization problem is formulated as:
\begin{equation}\label{eq:opt_prob}
\begin{aligned}
    \min_{P \in [0,1]^{N_1 \times\maxl}} \quad & \frac{1}{N_1} \sum_{i=1}^{N_1} \frac{1}{\prod_{t=1}^{b_i} P_i(t)} \\
    \text{s.t.} \quad & \frac{1}{N_1} \sum_{i=1}^{N_1} \bfunc(P_i) \leq \Bbar, \quad P_i(t) \leq P_j(t) \iff S_i(t) \leq S_j(t), \quad \forall t, i, j.
\end{aligned}
\end{equation}
Here, $\bfunc(P_i)$ is the expected budget expended on the $i$-th sample.
Since each interaction step consumes one budget unit, and the probability of the conversation surviving to step $t$ is the product of all prior continuation probabilities $\prod_{j=1}^{t} P_i(j)$, the expected budget is the sum of these survival probabilities:
% \begin{equation}
    $\bfunc(P_i) = \sum_{t=1}^{b_i} \prod_{j=1}^{t} P_i(j)$.
% \end{equation}
We solve this optimization problem using coordinate descent, yielding a sequence of continuation probabilities $\{P_i(t)\}_{t=1}^{b_i}$ for each sample $i \in \calIone$. 
To generalize this mapping to the unseen points in $\calItwo$, we fit a model $M_t$ on the paired tuples $\{S_i(t), P_i(t)\}_{i \in \calIone}$ to predict a continuation probability $P_i(t)$ directly from the observed score $S_i(t)$. In our experiments, we implement $M_t$ as a sigmoid to transform the scores to probabilities and tune its parameters with Platt scaling. 

\subsection{Phase II: Adaptive Acquisition on \texorpdfstring{$\calItwo$}{I2}}

Armed with the sequence of score-to-probability mapping models $\{M_t\}_{t=1}^\maxl$, we adaptively acquire interactions for the second split, $\calItwo$. We initialize the process at $t=1$ and define a continuation indicator $\chat_i(0) = 1$. At each step $t$, we compute the score $S_i(t)=\mathcal{S}_t({H}_i(t))$ and transport it into a continuation probability $P_i(t) = M_t(S_i(t))$. We then draw a Bernoulli random variable with parameter $P_i(t)$ to determine if the interaction should proceed. Formally, the continuation indicator at step $t$ updates as $\chat_i(t) = \chat_i(t-1)$ with probability $M_t(S_i(t))$, and $\chat_i(t) = 0$ otherwise.
If $\chat_i(t) = 1$, we advance to the next step: we acquire the new conversation exchange $H_i(t+1)$, set $t \leftarrow t+1$, and repeat the process. The interaction terminates when $\chat_i(t) = 0$ or when reaching the predefined boundary $t = \min (\qprior(X_i), T_i)$. 

Finally, we define the censoring time for each sample $i\in \calItwo$ as the last successful step:
% \begin{equation}
    $C_i = \qprior(X_i)$ if $\chat_i(T_i)=1$ or $\chat_i(\qprior(X_i))=1$, and $C_i= \max \{t \in \{0, 1, \ldots,\maxl\} : \chat_i(t) = 1\}$ otherwise.
    %\max \{t \in \{0, 1, \ldots,\maxl\} : \chat_i(t) = 1\}.$
% \end{equation}
When an event occurs at step $T_i \leq \qprior(X_i)$, we terminate the procedure early and set $C_i = \qprior(X_i)$. This ensures the sample is treated as an uncensored observation ($\Ttilde_i = T_i$).
% We note that we terminate this procedure if we encounter an unsafe event, i.e., we reach $t=T_i$. In this case, we artificially set $C_i = \qprior(X_i)$, as the attacker-target interaction is undefined beyond timestep $T_i$.
Consequently, for samples $i \in \calItwo$ that successfully reach their target sequence length, i.e., \allowbreak
$\qprior(X_i) \leq C_i$, \allowbreak
the probability of achieving this event is the product of the sequential continuation probabilities:
$$ \mathbb{P}(\qprior(X_i) \leq C_i \mid X_i,H_i,T_i,\{(X_j, H_j, T_j)\}_{j \in \calIone},\Dtrain) = \prod_{t=1}^{\min (\qprior(X_i), T_i)} P_i(t). $$
The censored event times are given by $\Ttilde_i = \min(C_i, T_i)$.
Lastly, we employ Algorithm~\ref{alg:lpb_construct} using the samples we acquired $\{(C_i, w(i)=\mathbb{P}(\qprior(X_i) \leq C_i)^{-1}, \Ttilde_i)\}_{i \in \calI}$. Note that for samples where $C_i < \qprior(X_i)$, the true probability $\mathbb{P}(\qprior(X_i) \leq C_i)$ is unknown. However, because Algorithm~\ref{alg:lpb_construct} only uses weights for samples that satisfy $C_i \geq \qprior(X_i)$, these unobserved probabilities are not required for the calibration of $\hat{\tau}$. For convenience, this procedure is summarized in Algorithm~\ref{alg:main_alg} in Appendix~\ref{sec:proposed_alg}. In what follows, we prove that this process generates an LPB that achieves the desired coverage rate while satisfying the budget constraint.

\subsection{Theoretical guarantees}\label{sec:theoretical guarantees}

In this section, we analyze the coverage rate and budget used by our proposed approach. 
First, we show that the LPB constructed by Algorithm~\ref{alg:main_alg} holds a PAC-type coverage guarantee.\footnote{\noindent An analogous UPB guarantee holds by reversing the inequality; see Appendix~\ref{sec:utility_upb}.}
\begin{theorem}[Coverage validity (informal)]
\label{thm:alg_coverage_validity_informal}    
Fix a miscoverage level $\alpha \in (0,1)$ and a tolerance level $\delta \in (0,1)$.
Suppose that $\{(X_i, T_i, H_i)\}_{i \in \calI}$ and $(\Xtest, \Ttest, \Htest)$
are drawn i.i.d., and that $\fhat_\tau(x)$ is non-decreasing and continuous in $\tau$.
Further assume that there exists a constant $\bar{w} \geq 1$, such that almost surely: $
    \mathbb{E}[w_\tau(i) \mid \{(X_i, H_i, T_i)\}_{i \in \calIone}, \Dtrain] \leq \bar{w},
     \forall i \in \calItwo$.
Then, with probability at least $1 - \delta$ over the draw of
$\Dall$, the lower predictive bound $\hat{L}(x)$ generated by Algorithm~\ref{alg:main_alg} satisfies:
\ifbool{ispreprint}{
\begin{equation}
\begin{split}
\mathbb{P}\!\left[
        \Ttest \geq \hat{L}(\Xtest)
        \,\middle|\,
        \Dall
    \right]
    \;&\geq\;
    1 - \alpha - \frac{\log(1/\delta)}{3|\calI|}\\
    &\quad- \sqrt{
        \frac{\log^2(1/\delta)}{9|\calI|^2}
        +
        \frac{2\bigl(\bar{w} - \alpha^2\bigr)\log(1/\delta)}{|\calI|}
    }.
\end{split}
\end{equation}
}{
\begin{equation*}
    \mathbb{P}\!\left[
        \Ttest \geq \hat{L}(\Xtest)
        \,\middle|\,
        \Dall
    \right]
    \;\geq\;
    1 - \alpha - \frac{\log(1/\delta)}{3|\calI|}
    - \sqrt{
        \frac{\log^2(1/\delta)}{9|\calI|^2}
        +
        \frac{2\bigl(\bar{w} - \alpha^2\bigr)\log(1/\delta)}{|\calI|}
    }.
\end{equation*}}\end{theorem}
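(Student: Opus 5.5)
The bound has the shape of a one-sided Bernstein (Freedman-type) inequality. Write the variance proxy as $\bar w-\alpha^2$ and set $\varepsilon$ to be the positive root of $\varepsilon^2 = 2(\bar w-\alpha^2)\log(1/\delta)/|\calI| + \tfrac{2}{3}\varepsilon\log(1/\delta)/|\calI|$. Solving the quadratic gives exactly $\varepsilon = \frac{\log(1/\delta)}{3|\calI|}+\sqrt{\frac{\log^2(1/\delta)}{9|\calI|^2}+\frac{2(\bar w-\alpha^2)\log(1/\delta)}{|\calI|}}$. The plan is therefore:
\begin{enumerate}
\item Reduce the coverage event to a single quantile level $\tau^*$.
\item Show that the weighted miscoverage terms at $\tau^*$ are unbiased, with conditional second moment at most $\bar w$.
\item Apply a Bernstein-type inequality to the lower tail of $\hat\alpha(\tau^*)$.
\end{enumerate}

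\textbf{Reduction.} Let $\alpha(\tau) := \mathbb{P}(\Ttest < \fhat_\tau(\Xtest)\mid \Dtrain)$. By monotonicity and continuity of $\fhat_\tau$ in $\tau$, this is non-decreasing and continuous. Define $\tau^* := \inf\{\tau: \alpha(\tau) > \alpha+\varepsilon\}$; we may assume this set is nonempty, otherwise coverage trivially holds. If $\hat\tau > \tau^*$, then by the definition~\eqref{eq:our_calibration}, $\hat\alpha(\tau')\le\alpha$ for all $\tau'<\hat\tau$, and in particular at $\tau^*$ (or at levels approaching it, handled by continuity). On the complementary event $\hat\tau\le\tau^*$, coverage is at least $1-\alpha(\tau^*)\ge 1-\alpha-\varepsilon$. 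So it suffices to show $\mathbb{P}(\hat\alpha(\tau^*)\le\alpha)\le\delta$, given that $\alpha(\tau^*)\ge\alpha+\varepsilon$.

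\textbf{Unbiasedness without $C\indep T\mid X$.} Write $Z_i = w_{\tau^*}(i)\,\mathbb{I}\{T_i<\fhat_{\tau^*}(X_i)\le C_i\}$; note $\tilde T_i<\fhat\le C_i$ is equivalent to $T_i<\fhat\le C_i$. The key is to condition on $\mathcal{F}_1 := (\{(X_j,H_j,T_j)\}_{j\in\calIone},\Dtrain)$ and on the full trajectory $(X_i,H_i,T_i)$. The continuation coins of Phase II are then the only source of randomness in $C_i$, and they are independent across $i\in\calItwo$. The displayed product formula gives $\mathbb{P}(\fhat_{\tau^*}(X_i)\le C_i\mid X_i,H_i,T_i,\mathcal{F}_1)=w_{\tau^*}(i)^{-1}$, where for $\tau\le\tauprior$ we use the relevant prefix of the product. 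Hence $\mathbb{E}[Z_i\mid X_i,H_i,T_i,\mathcal{F}_1]=\mathbb{I}\{T_i<\fhat_{\tau^*}(X_i)\}$, and taking expectations gives $\mathbb{E}[Z_i\mid\mathcal{F}_1]=\alpha(\tau^*)$. This is where conditional independence of $C$ and $T$ is avoided: the weight is computed conditionally on the realized history, so the censoring may depend on it arbitrarily.

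For $i\in\calIone$ the weight is $1$ and $Z_i=\mathbb{I}\{T_i<\fhat_{\tau^*}(X_i)\}$, which is trivially unbiased. Moreover, conditional on $\mathcal{F}_1$, the $Z_i$ for $i\in\calItwo$ are i.i.d. because the mapping $\{M_t\}$ is fixed by $\mathcal{F}_1$.

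\textbf{Second moment.} We have $Z_i^2 = w^2\,\mathbb{I}\{\cdot\}$, so $\mathbb{E}[Z_i^2\mid X_i,H_i,T_i,\mathcal{F}_1]= w_{\tau^*}(i)\,\mathbb{I}\{T_i<\fhat\}\le w_{\tau^*}(i)$. Taking expectations and applying the assumption gives $\mathbb{E}[Z_i^2\mid\mathcal{F}_1]\le\bar w$, and the same bound holds trivially for $i \in \calIone$ since $\bar w\ge1$. The variance is therefore at most $\bar w-\alpha(\tau^*)^2\le\bar w-\alpha^2$.

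\textbf{Concentration, the main obstacle.} Since the $Z_i\ge0$ are unbounded above, a standard two-sided Bernstein inequality would require the maximal weight. This is the main obstacle, and I would address it by using the \emph{lower-tail} Bernstein inequality for nonnegative variables, which requires only the second moment. Apply the bound $\mathbb{E}e^{-\lambda(Z-\mathbb{E}Z)}\le \exp(\lambda^2\mathbb{E}Z^2/2)$, or a Bennett-type variant, treating the two groups via a martingale/Freedman argument over $\mathcal{F}_1$: first handle the $\calIone$ block, which is i.i.d. and bounded, then the $\calItwo$ block conditionally. The $\tfrac{1}{3}$ term arises from the boundedness of the centered variable below, since $Z_i-\mathbb{E}Z_i\ge-\alpha(\tau^*)\ge -1$.

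This yields $\mathbb{P}\bigl(\hat\alpha(\tau^*)\le\alpha(\tau^*)-\varepsilon\bigr)\le\exp\bigl(-\tfrac{|\calI|\varepsilon^2}{2(\bar w-\alpha^2)+2\varepsilon/3}\bigr)=\delta$ with the chosen $\varepsilon$. Combined with $\alpha(\tau^*)\ge\alpha+\varepsilon$, this gives the claim after averaging over $\mathcal{F}_1$.

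The delicate points I expect to be hardest are (a) justifying the sub-Gaussian lower tail for unbounded nonnegative variables with the exact Bernstein constants, and (b) the fact that $\tau^*$ must be measurable with respect to $\Dtrain$ only, which holds since $\fhat$ is pretrained.
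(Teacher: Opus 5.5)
Your proposal is correct in outline and follows the paper's proof. The paper also reduces to a $\Dtrain$-measurable oracle level: its $\tau(\alpha+\Delta)=\sup\{\lambda: P_\lambda\le\alpha+\Delta\}$ is your $\tau^*$. It gets unbiasedness and the second-moment identity $w(i)\,\mathbb{I}\{T_i<\fhat_{\tau}(X_i)\}$ by conditioning on the full trajectory and the Phase-I data. It then runs a Chernoff bound conditionally on $\Dcalone$, so that the $\calItwo$ MGF bound is deterministic, averages over the Bernoulli $\calIone$ block, and solves the same quadratic.

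Three points need tightening.

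\emph{Continuity.} $\alpha(\tau)$ is only left-continuous, not continuous. Since $T$ is integer-valued, when $\fhat_\tau(X)$ increases through an integer $k$ the event jumps from $\{T<k\}$ to $\{T\le k\}$, so $\alpha(\tau^*)\ge\alpha+\varepsilon$ can fail. Left-continuity does give $\alpha(\tau^*)\le\alpha+\varepsilon$, which is all the coverage step on $\{\hat\tau\le\tau^*\}$ needs. For the concentration step, apply the bound at $\tau^*+1/n$, where $\alpha(\tau^*+1/n)>\alpha+\varepsilon$ by definition of the infimum. Use $\{\hat\tau>\tau^*+1/n\}\subseteq\{\hat\alpha(\tau^*+1/n)\le\alpha\}$ and let $n\to\infty$ by continuity of measure; this is exactly the paper's shift-and-limit device.

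\emph{Concentration constants.} The bound $\mathbb{E}e^{-\lambda(Z-\mathbb{E}Z)}\le\exp(\lambda^2\mathbb{E}Z^2/2)$ for nonnegative $Z$ has variance proxy $\mathbb{E}Z^2\le\bar w$. It yields a gap $\sqrt{2\bar w\log(1/\delta)/|\calI|}$, which does not imply the stated bound. The constants $\bar w-\alpha^2$ and $1/3$ come instead from the one-sided Bernstein MGF bound $\mathbb{E}e^{tY}\le\exp\bigl(\tfrac{t^2\,\mathbb{E}Y^2/2}{1-t/3}\bigr)$, $0<t<3$, valid for $Y\le 1$. Apply it to $Y_i=\mathbb{E}Z_i-Z_i\le 1$, which has $\mathbb{E}Y_i^2=\mathrm{Var}(Z_i)\le\bar w-\alpha^2$; your remark about the centered variable being bounded below by $-1$ already points to this.

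\emph{Reach probability.} The identity $\mathbb{P}(C_i\ge\fhat_{\tau^*}(X_i)\mid\cdot)=w(i)^{-1}$ holds only on $\{T_i<\fhat_{\tau^*}(X_i)\}$. There, $C_i\ge\fhat_{\tau^*}(X_i)$ exactly when the acquisition reaches $T_i$, after which $C_i$ is reset to $\qprior(X_i)$. So the relevant product runs to $T_i=\min(T_i,\qprior(X_i))$, not to a $\tau$-dependent prefix ending at $\fhat_{\tau^*}(X_i)$. This matches the algorithm's $\tau$-independent weight, and since the indicator vanishes off this event, your unbiasedness and second-moment computations go through.
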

We defer the proof of Theorem~\ref{thm:alg_coverage_validity_informal} to Appendix~\ref{sec:main_alg_coverage_proof}. 
While the theorem formally requires an almost-sure upper bound $\bar{w}$ on the expected weight, we refer to this term as the \emph{mean weight}. In practice, optimizing the empirical mean weight effectively minimizes this theoretical upper bound. 
% Furthermore, the almost-sure assumption can be replaced with a high-probability PAC guarantee over the calibration splits, which we omit here for clarity.

While this proof builds on the theoretical foundations in~\cite{gui2024conformalized, davidov2026calibrated}, we highlight three key implications that distinguish our result:
\begin{itemize}
    \item \textbf{Relaxed Assumptions:} This coverage guarantee holds in finite samples for any quantile estimator $\qhat_\tau(\cdot)$, any LLM, and any data distribution. Crucially, unlike prior conformal frameworks~\citep{davidov2026calibrated, gui2024conformalized}, our result does not rely on the assumption of conditional independence between the censoring times $C$ and event times $T$ given the covariates $X$. \ttmethod inherently violates this assumption by using $T$ for early stopping, making this relaxation necessary.
    \item \textbf{Tighter Scaling:} Our coverage gap scales as $\sqrt{\bar{w}/N}$. This is tighter than the bounds derived in~\cite{gui2024conformalized,davidov2026calibrated}, which scale with $\sqrt{\gamma^2/N}$, where $\gamma \geq \bar{w} $ is the worst-case maximum weight across all calibration samples. This insight directly motivates our Phase I objective: minimizing the mean weight explicitly tightens the theoretical coverage gap.
    \item \textbf{Independent Interest:} Our proof technique can be directly applied to the static procedure of~\cite{davidov2026calibrated}. Their original motivation for minimizing the mean weight relied on an oracle assumption where true conditional quantiles are known. Our analysis bridges this gap by formally demonstrating that, even in finite samples without oracle quantiles, the coverage gap is bounded by the mean weight, providing a theoretical justification for their optimization objective.
\end{itemize}
\begin{figure}
    \centering
    \includegraphics[width=0.5\linewidth]{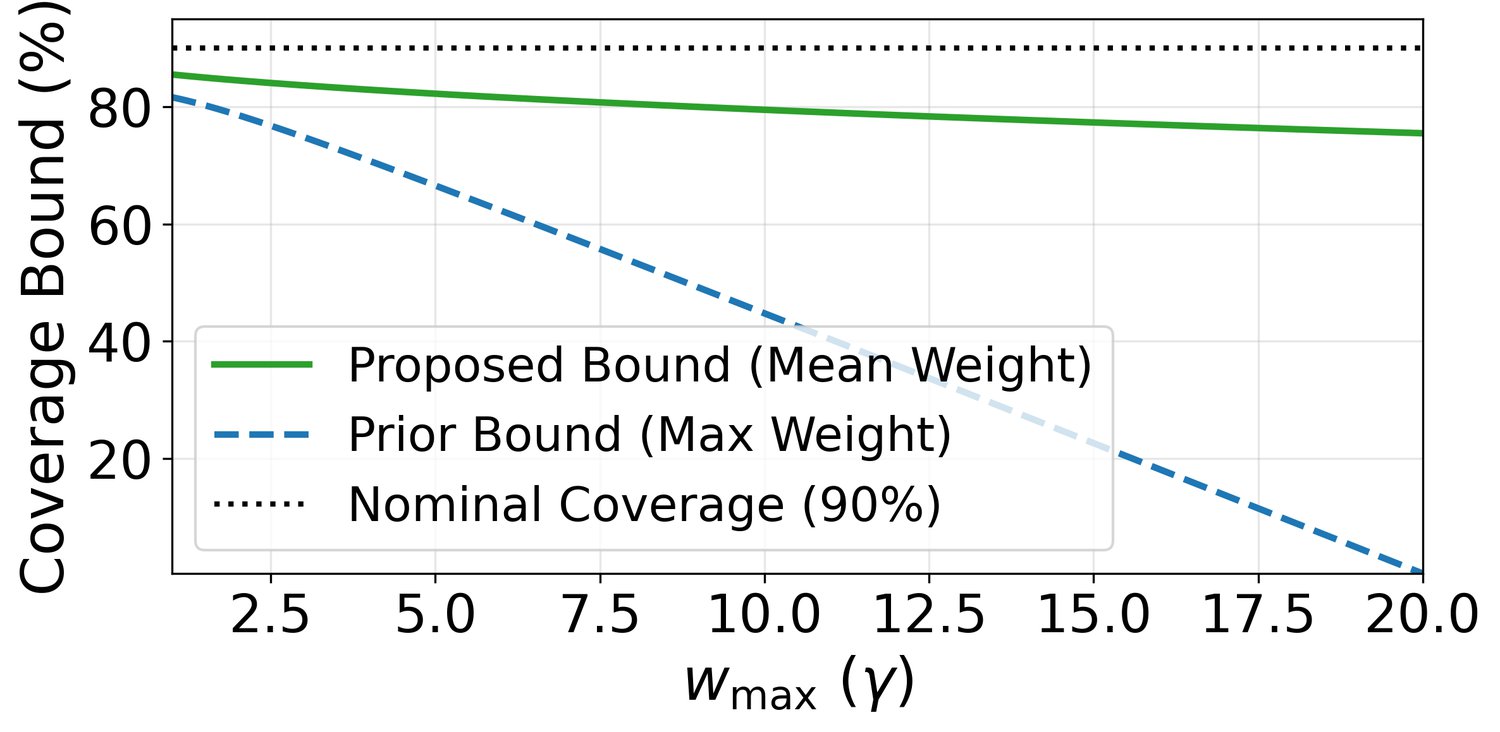}
    \caption{Theoretical coverage lower bounds of Theorem~\ref{thm:alg_coverage_validity_informal} (green) and~\cite[Theorem 4.1]{davidov2026calibrated} (blue) as a function of the maximum weight, $\gamma$, assuming the weights are uniformly distributed.}% See text for details.}
    \label{fig:bounds}
\end{figure}
To demonstrate the tightness of our proposed bound, we compare in Figure~\ref{fig:bounds} our theoretical coverage guarantee with the one developed by \cite{davidov2026calibrated}. We consider a realistic setup with $N=3000$ calibration samples, a target coverage level of $1-\alpha=0.9$, and a tolerance of $\delta=0.05$. For the sake of this illustration, we assume the inverse-censoring weights are uniformly distributed between $1$ and a maximum value $\gamma$, yielding a mean weight of $\bar{w} = \frac{1}{2}(1+\gamma)$. The figure shows that while the prior bound decreases rapidly as the maximum weight increases, our proposed bound remains significantly tighter and informative across the entire range.

We now show that when the models $\{M_t\}_{t=1}^{\maxl}$ are sufficiently accurate, the expected budget consumed by Algorithm~\ref{alg:main_alg} satisfies the budget constraint, and we develop a finite-sample bound for the budget utilized in practice.
\begin{theorem}[Budget validity (informal)]
\label{thm:budget_validity_informal}
Suppose that $\{(X_i, T_i, H_i)\}_{i \in \calI}$ are drawn i.i.d. Furthermore, assume that the empirical probabilities $\{P_i(t)\}_{t\in \{1,\dots,b_i\}, i\in\calIone}$ learned in Phase I approximate the optimal oracle allocation, and the models $\{M_t\}_{t=1}^{\maxl}$ are sufficiently accurate. Then, the expected budget used by Algorithm~\ref{alg:main_alg} is upper bounded by $B$.
Additionally, with probability at least $1-\delta$, the average budget per sample consumed by Algorithm~\ref{alg:main_alg} is bounded by:
\ifbool{ispreprint}{\begin{equation}\begin{split}\frac{1}{|\calI|}\sum_{i\in \calI} \Ttilde_i &\leq \frac{B}{|\calI|} + \frac{\maxl\log(1/\delta)}{3|\calI|}\\&\quad+\frac{1}{|\calI|}\sqrt{\frac{\maxl^2\log^2(1/\delta)}{9}+2 |\calItwo| \maxl\Bbar\log(1/\delta)}.\end{split}\end{equation}}{\begin{equation}\frac{1}{|\calI|}\sum_{i\in \calI} \Ttilde_i \leq \frac{B}{|\calI|} + \frac{\maxl\log(1/\delta)}{3|\calI|}+\frac{1}{|\calI|}\sqrt{\frac{\maxl^2\log^2(1/\delta)}{9}+2 |\calItwo| \maxl\Bbar\log(1/\delta)}.\end{equation}}\end{theorem}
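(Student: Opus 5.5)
The proof has two parts: the expectation bound, then a Bernstein-type concentration bound for the realized average budget.

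\emph{Expected budget.} Split the total consumption as $\sum_{i\in\calI}\Ttilde_i=\sum_{i\in\calIone} b_i+\sum_{i\in\calItwo}\Ttilde_i$. The first sum equals $B-|\calItwo|\Bbar$ by the definition of $\Bbar$. It therefore suffices to show
\begin{equation}
\E\bigl[\Ttilde_i \,\big|\, \{(X_j,H_j,T_j)\}_{j\in\calIone},\Dtrain\bigr]\le \Bbar \quad \text{for each } i\in\calItwo .
\end{equation}
By construction, the Phase II interaction for $i\in\calItwo$ reaches step $t$ only if every prior Bernoulli draw succeeded. Hence, conditional on $(X_i,H_i,T_i)$ and Phase I,
\begin{equation}
\E[\Ttilde_i\mid X_i,H_i,T_i,\cdot]=\sum_{t=1}^{\min(\qprior(X_i),T_i)}\prod_{s=1}^{t} M_s(S_i(s)),
\end{equation}
which is exactly $\bfunc(P_i)$ with $P_i(s)=M_s(S_i(s))$. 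Taking expectation over the fresh i.i.d.\ draw of $(X_i,H_i,T_i)$ gives the population mean of $\bfunc$ under the fitted mapping. The informal assumptions are used here: ``the $M_t$ are sufficiently accurate'' and ``the Phase I solution approximates the oracle'' are formalized as $\E[\bfunc(M(S_i))\mid\text{Phase I}]\le \frac{1}{N_1}\sum_{j\in\calIone}\bfunc(P_j)\le\Bbar$. The last inequality is the constraint of~\eqref{eq:opt_prob}. Summing over $i\in\calItwo$ gives $\E\sum_i\Ttilde_i\le B$.

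\emph{Concentration.} Conditional on Phase I and $\Dtrain$, the $Z_i:=\Ttilde_i$ for $i\in\calItwo$ are i.i.d. This holds because each conversation's continuation draws depend only on its own history and the frozen maps $M_t$; there is no communication between conversations. Each $Z_i$ lies in $[0,\maxl]$, and $\E Z_i\le\Bbar$. The variance is bounded crudely by
\begin{equation}
\mathrm{Var}(Z_i)\le\E Z_i^2\le \maxl\,\E Z_i\le \maxl\Bbar .
\end{equation}
Apply Bernstein's inequality to $\sum_{i\in\calItwo}(Z_i-\E Z_i)$, with range bound $\maxl$ and total variance $|\calItwo|\maxl\Bbar$. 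Solve the quadratic $u^2/2=\log(1/\delta)\,(|\calItwo|\maxl\Bbar+\maxl u/3)$ for the deviation $u$. This yields, with probability at least $1-\delta$,
\begin{equation}
\sum_{i\in\calItwo} Z_i\le |\calItwo|\Bbar+\frac{\maxl\log(1/\delta)}{3}+\sqrt{\frac{\maxl^2\log^2(1/\delta)}{9}+2|\calItwo|\maxl\Bbar\log(1/\delta)} .
\end{equation}
Add $\sum_{\calIone}b_i=B-|\calItwo|\Bbar$ and divide by $|\calI|$ to get the stated bound. The bound is first obtained conditionally on Phase I and then holds unconditionally by the tower property. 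Using the one-sided bound $\E Z_i\le\Bbar$ inside Bernstein is legitimate, since increasing the mean only enlarges the right-hand side.

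\emph{Main obstacle.} The delicate step is the expectation bound, specifically passing from the empirical Phase I constraint to the population expectation under the fitted maps $M_t$. This is an in-sample versus out-of-sample generalization gap, plus the approximation error from Platt scaling. I would not try to bound this gap. Instead I would state it explicitly as the formal version of the ``sufficiently accurate'' assumption: the expected per-sample budget of the deployed policy on a fresh draw is at most $\Bbar$. With that assumption the remaining steps are routine.
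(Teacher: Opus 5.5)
Your proof is correct under your formalization, and most of it matches the paper's. The shared parts are the split $\sum_{i\in\calI}\Ttilde_i=\sum_{i\in\calIone}b_i+\sum_{i\in\calItwo}\Ttilde_i$ with $\sum_{i\in\calIone}b_i+|\calItwo|\Bbar=B$, and the identity $\E[\Ttilde_i\mid S_i,T_i,\Dcalone]=\bfunc(M(S_i;\Dcalone))$. The conditional Bernstein step is also the same; the paper bounds the variance by Bhatia--Davis, $(\maxl-\E U_i)\E U_i\le\maxl\Bbar$, which gives the same bound as your $Z_i^2\le\maxl Z_i$.

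You differ in what ``sufficiently accurate'' means. You assume outright that the fitted policy's out-of-sample expected budget is at most the in-sample Phase I budget, then invoke the empirical constraint of \eqref{eq:opt_prob}. The resulting inequality $\E[\bfunc(M(\St;\Dcalone))\mid\Dcalone]\le\Bbar$ is exactly the paper's Proposition~\ref{prop:budget_validity}. So your expectation part amounts to the assumption plus bookkeeping.

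The paper derives that inequality from more primitive conditions:
\begin{itemize}
\item It posits an oracle map $M^*(\cdot;\Bbar)$ obeying a \emph{population} budget constraint $\E_{S\sim\mu}[\bfunc(M^*(S;\Bbar))]\le\Bbar$. The empirical constraint of \eqref{eq:opt_prob} is not invoked in its chain.
\item It requires the mixed moments $\E[\prod_{k\in A}\eps_k\prod_{k\notin A}M_k^*\mid\Dcalone]$ to vanish for every nonempty $A$.
\item It shows the fitted and oracle expected budgets are equal via the multilinear expansion $\prod_{j\le t}(p_j+e_j)=\sum_{A}\prod_{j\in A}e_j\prod_{j\notin A}p_j$ (Lemma~\ref{lem:cross}).
\end{itemize}

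This decomposition explains what accuracy has to mean given the product structure of $\bfunc$. Stepwise-unbiased errors are not enough if they correlate with each other or with the oracle probabilities at other steps. It also yields checkable sufficient conditions: independent score coordinates with coordinatewise-unbiased errors, or consistency of the $M_t$ asymptotically. Finally, a telescoping version of the same expansion gives a quantitative fallback $B+|\calItwo|\Gamma$ (Theorem~\ref{thm:budget_guarantee_with_estimation_error}) when only mean absolute errors are controlled.

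Your single generalization assumption is more minimal and ties directly to what the optimizer actually enforces. However, it offers no such diagnosis or degradation bound when it fails.
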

\noindent This theorem is formally stated and proved in Appendix~\ref{sec:budget_validity_results}. 
In Appendix~\ref{sec:budget_with_errors}, we relax the assumption that the models are sufficiently accurate and develop an upper bound that accounts for the estimation errors of the score-to-probability mapping. 
While this budget bound scales with $O(\maxl^2/\sqrt{|\calIone|})$, it is highly conservative in finite samples: it assumes that estimation errors at every step aggregate and do not cancel each other. In practice, positive and negative errors mostly cancel, and Algorithm~\ref{alg:main_alg} satisfies the nominal budget constraint across our experiments. However, from a theoretical standpoint, this $O(1/\sqrt{|\calIone|})$ scaling is crucial: it establishes the sample complexity and asymptotic consistency of our method, proving that any potential budget violation vanishes as the size of the initial split grows.

% While this budget bound scales with $O(\maxl^2/\sqrt{\calIone})$, it is highly conservative: it assumes that estimation errors at every step aggregate and do not cancel each other. In practice, positive and negative errors mostly cancel, and Algorithm~\ref{alg:main_alg} satisfies the nominal budget constraint across our experiments. We therefore interpret this bound as a sample complexity result: it establishes the asymptotic consistency of our approach. {Why that is the conclusion? How do asymptotics relate?}
% In the following section, we analyze the utilized budget and coverage rate attained by our proposal.

\section{Experiments}
\label{sec:experiments}
We evaluate the coverage rate and budget consumption of our approach on attacker-target conversations generated using the PAIR algorithm~\cite{chao2025jailbreaking}. We sample 10,000 prompts from each of the following datasets: RealToxicityPrompts~\cite{gehman2020realtoxicityprompts} (where the target event $Y=1$ is a toxic response) and the Anthropic Red Team dataset~\cite{bai2022training, ganguli2022red} (where $Y=1$ is a successful jailbreak). To demonstrate the framework's applicability to utility evaluation in agentic tasks, we also include the AutoIF dataset~\cite{dong2025selfplay}, a benchmark for complex instruction following. For AutoIF, the ``attacker'' is a helper agent, and the event ($Y=1$) occurs when the target model successfully satisfies all instructions.
%RealToxicityPrompts~\cite{gehman2020realtoxicityprompts}, Anthropic Red Team~\cite{bai2022training, ganguli2022red}, and AutoIF~\cite{dong2025selfplay} to serve as our base objectives. 

To generate the interactions, we utilize Qwen 2.5 14B Instruct~\cite{qwen2.5} as the attacker model, as its outputs had the lowest formatting error rate among the models we examined. For the target models, we use Qwen 2.5 14B Instruct, Llama 3.1 8B Instruct~\cite{grattafiori2024llama}, Phi 4 Mini Instruct~\cite{abouelenin2025phi}, and Gemma 3 4B Instruct~\cite{gemma_2025}. 
We define the target event ($Y(t)=1$) using an audit function: for the Toxicity dataset, following~\cite{davidov2026calibrated}, we use Detoxify~\cite{Detoxify} with a threshold of 0.5 to identify toxic responses. For the Red-Team dataset, we employ Qwen 2.5 14B Instruct as an LLM-as-a-judge~\cite{zheng2023judging, gu2024survey}, which scores responses on a 1–10 scale; a score of 10 denotes a successful jailbreak. 

Finally, for the AutoIF dataset, a successful instruction-following response event is verified programmatically.
In Appendix~\ref{sec:red_team_with_llamaguard}, we conduct an experiment with Llama-Guard~\cite{inan2023llama} serving as the audit function, which outputs a binary success label. Additionally, Appendix~\ref{sec:hallucinations_exp} evaluates \ttmethod on a RAG-based dataset, in which the event is a hallucination.
We cap the maximum sequence length at $\maxl=200$. The full details regarding the dataset generation are provided in Appendix~\ref{sec:data_generation}.

We split the data into training (4000), calibration (3000), and test (3000) sets. We fix the training data and fit a transformer as the predictive model estimating $\hat{q}_{\tau}$ over it, using the full conversation history. 
We assume the training data is fully acquired without budget restrictions, as our proposed framework focuses on budget allocation during the calibration phase. At any given time $t_1$, the model estimates the probability of an event occurring at any future time step $t_2>t_1$. Specifically, it outputs a $\maxl \times (\maxl+1)$ matrix containing $\mathbb{P}(Y(t_2)=1 \mid T>t_1, H(t_1))$ and the probability of reaching the horizon without an unsafe event, $\mathbb{P}(T >\maxl \mid T>t_1, H(t_1))$. All methods use the same predictive models. In all experiments, we evaluate the methods over 50 independent random splits of the calibration and test sets. During each trial, the calibration algorithms are employed using the calibration data and evaluated on the test set. We set the average budget per sample as $B/|\calI|=20$. Following~\cite{davidov2026calibrated}, we set $\tauprior = 0.56$, and $M=200$.
\begin{figure}[ht]
    \centering
    \includegraphics[width=0.85\linewidth]{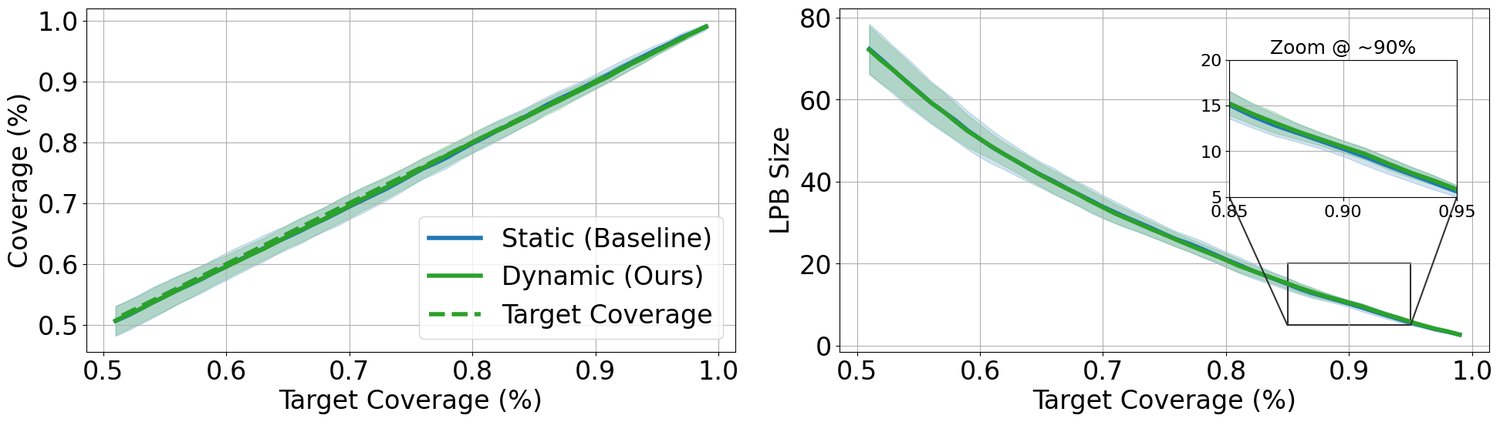}
    \caption{Coverage rate and LPB size of various methods over the RedTeam datasets with Qwen 2.5 serving as attacker, target, and judge.}
    \label{fig:cov_lpg_vs_target}
\end{figure}
We compare our proposal to an uncalibrated baseline, which constructs the LPB directly from the raw model predictions at the target $\alpha$ level, $\qhat_{\alpha}$, and to the static optimized allocation strategy proposed in~\cite{davidov2026calibrated}.
In addition to our main method, we developed two alternative dynamic allocation strategies, which we introduce in Appendix~\ref{sec:all_algorithms}. The first is designed to greedily discover as many unsafe events as possible. The second one is a locally adaptive variant that optimizes the continuation probabilities step-by-step rather than globally.
Since \ttmethod attains a lower variance compared to these variants across our experiments, we focus our main presentation on it and defer the evaluation of these dynamic variants to Appendix~\ref{sec:additional_experiments}.

Figure~\ref{fig:cov_lpg_vs_target} presents the empirical coverage rate and LPB size of each method as a function of the target coverage level on the RedTeam dataset with the Qwen 2.5 model serving as attacker, target, and judge. This figure demonstrates that all calibration methods attain the desired coverage level, while the uncalibrated model does not. This is anticipated by our theoretical results. 
Figure~\ref{fig:toxicity} presents the deviation from the target $90\%$ coverage rate and average budget per sample used by each method on the Toxicity dataset. This figure reveals that our proposed method satisfies the budget constraint while attaining the lowest coverage deviation.
In addition, Figure~\ref{fig:autoif} details the same metrics on the AutoIF dataset for both an LPB (target coverage $90\%$) and an upper predictive bound (UPB, target coverage $70\%$). We evaluate the UPB at this lower target coverage since a $90\%$ target pushes the bound too close to the maximum $\maxl=200$, which artificially reduces the variance differences between methods. Across both bound types, the results reveal the same trend: \ttmethod attains a coverage rate closer to the nominal level compared to the baselines.

% In addition, Figure~\ref{fig:autoif} details the same metrics on the AutoIF dataset for both an LPB and an upper predictive bound (UPB), revealing the same trend: \ttmethod attains a coverage rate closer to the nominal level compared to the baselines. 
% We conclude that our method generates predictions with the lowest coverage deviation and highest consistency across random splits among all valid methods. 
% In addition, Figure~\ref{fig:red_team} details the same metrics on the RedTeam dataset, revealing the same trend: \ttmethod attains a coverage rate closer to the nominal level compared to the baselines. We conclude that our method generates LPBs with the lowest coverage deviation and highest consistency across random splits among all valid methods. 

\begin{figure}[ht]
    \centering
    \includegraphics[width=\firstgraphwidth\linewidth]{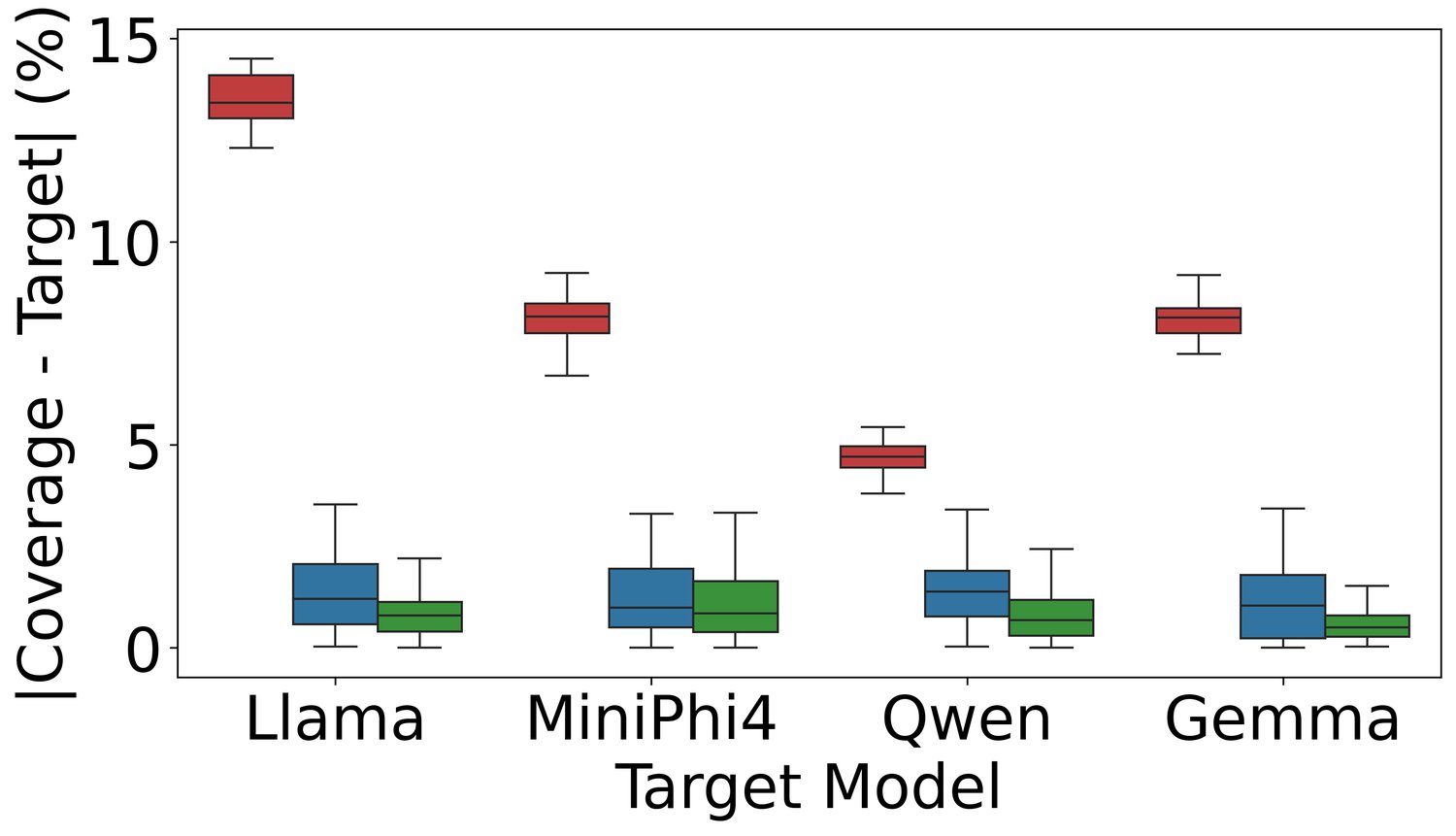}
    \includegraphics[width=\secondgraphwidth\linewidth]{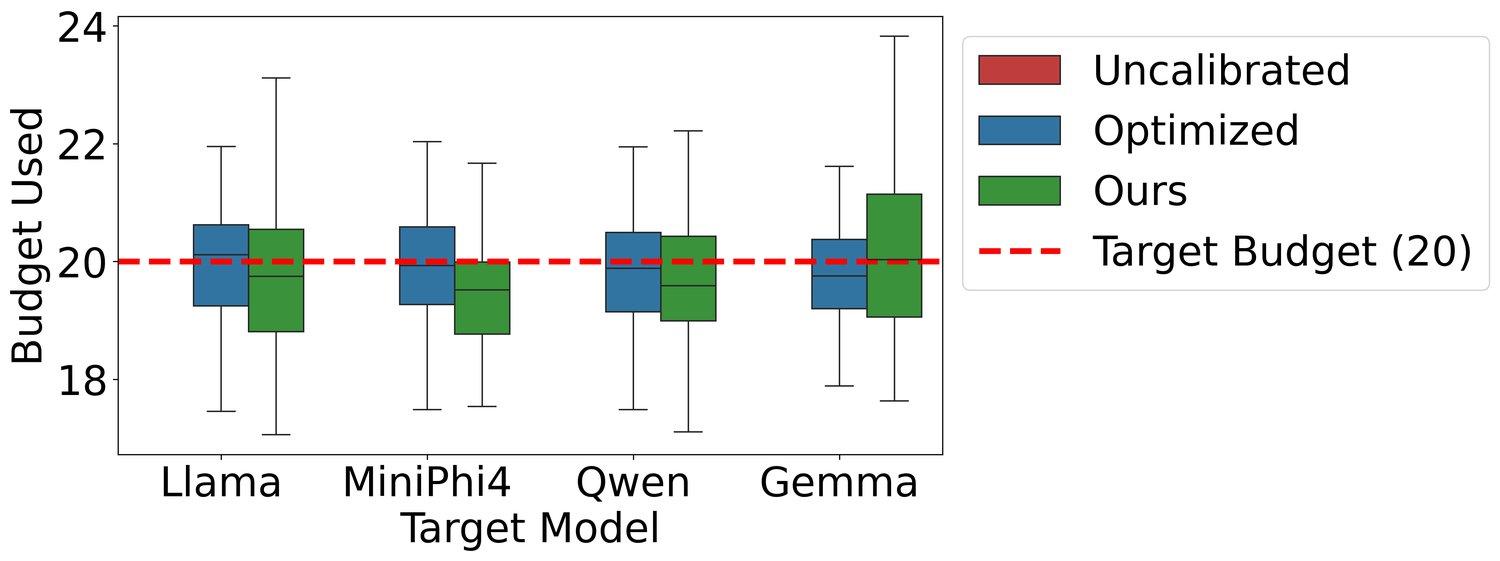}
    \caption{\textbf{Toxicity} dataset: absolute coverage deviation (left) and average budget utilized (right) across four target LLMs. Target coverage rate: $90\%$ and target $\bar{B}=20$ budget per sample.}
    \label{fig:toxicity}
\end{figure}
\noindent Additional experimental setups, including an analysis of the empirical mean weights and the number of observed unsafe events, are deferred to Appendix~\ref{sec:additional_experiments}. Furthermore, our ablation study in Appendix~\ref{sec:ablation_exps} reveals that even when degrading its components, i.e, the score or the first data-split size $N_1$, \ttmethod is robust and maintains a tighter coverage rate than the baseline.
Finally, beyond constructing individual LPBs, our framework can be utilized to construct unbiased estimates of population-level evaluation metrics, such as the unsafe event rate. As demonstrated in Appendix~\ref{sec:unbiased_estimation}, \ttmethod accurately recovers the true metric quantities with zero bias and a lower variance than existing baselines.

% \begin{figure}[ht]
%     \centering
%     \includegraphics[width=0.398\linewidth]{figures/main/red_team_qwen_coverage_diff_boxplot.jpg}
%         \includegraphics[width=0.595\linewidth]{figures/main/red_team_qwen_budget_used_boxplot.jpg}
%     \caption{\textbf{RedTeam} dataset: Absolute coverage deviation (left) and average budget utilized (right) across four target LLMs. Target coverage rate: $90\%$ and target $\bar{B}=20$ budget per sample.}
%     \label{fig:red_team}
% \end{figure}

% \begin{figure}[ht]
%     \centering
%     \includegraphics[width=0.398\linewidth]{figures/autoif_lpb_upb/autoif_coverage_diff_boxplot.jpg}
%         \includegraphics[width=0.595\linewidth]{figures/autoif_lpb_upb/autoif_budget_used_boxplot.jpg}
%     \caption{\textbf{AutoIF} dataset: coverage deviation (left) and budget utilized (right) over Qwen 2.5 target model. Target coverage rate: $90\%$ for LPB and $70\%$ for UPB, and $\bar{B}=20$ budget per sample.}
%     \label{fig:autoif}
% \end{figure}

\begin{figure}[ht]
    \centering
    \includegraphics[width=\firstgraphwidth\linewidth]{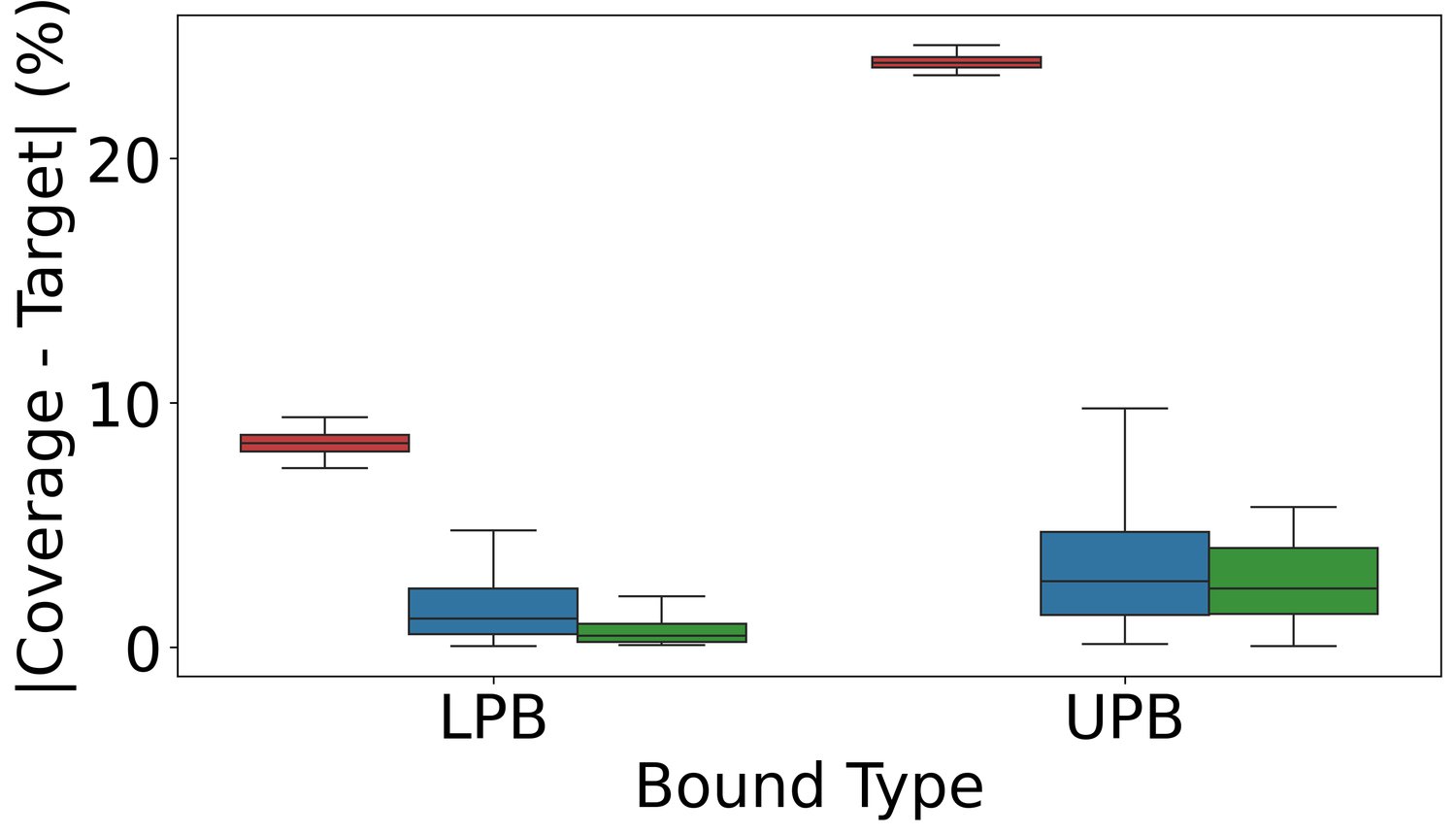}
        \includegraphics[width=\secondgraphwidth\linewidth]{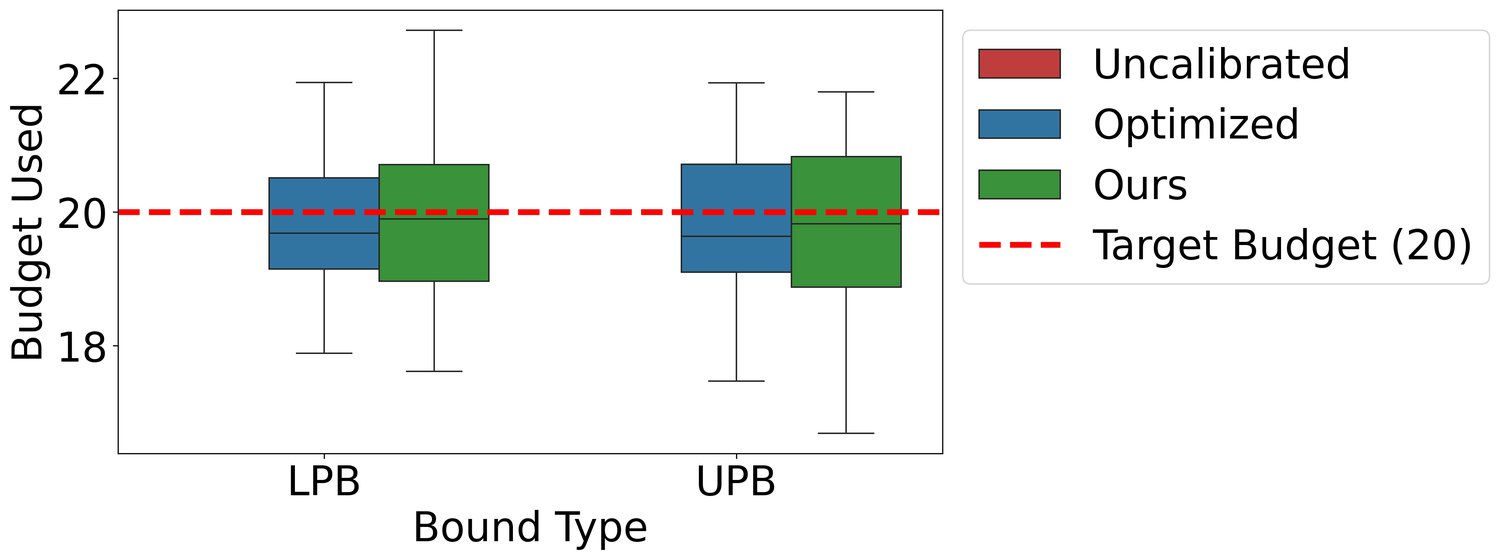}
    \caption{\textbf{AutoIF} dataset: coverage deviation (left) and budget utilized (right) over the Qwen 2.5 target model. Target coverage rate: $90\%$ for LPB and $70\%$ for UPB, and $\bar{B}=20$ budget per sample.}
    \label{fig:autoif}
\end{figure}

\section{Discussion and impact statement}\label{sec:discussion}

We introduced \ttmethod, a novel dynamic budget allocation approach for
LLM evaluation. Our approach can be employed to construct reliable LPBs for individual prompts and extract unbiased population-level metrics, such as the jailbreak rate. By updating censoring times online based on ongoing attacker-target interactions, our method achieves lower variance, as supported by our PAC-type coverage guarantee, which is tighter than prior approaches. Our comprehensive experiments on real datasets validate these theoretical claims, demonstrating tighter bounds compared to baselines.

While our method satisfies the budget constraint in our empirical simulations, our theoretical expected budget guarantee relies on the accuracy of the estimated score mapping. Establishing a finite-sample budget guarantee that does not rely on the accuracy of the model could be a promising research direction, possibly by relying on risk control theory~\cite{angelopoulos2021learn,angelopoulos2026conformal} or leave-one-out to handle scarce data~\cite{barber2021predictive, lee2025leave, gupta2022nested}. For applications requiring a strict budget guarantee, practitioners can utilize the variants presented in Appendices~\ref{sec:proposed_baseline_greedy_alg} and \ref{sec:proposed_baseline_alg}, which provide exact budget control without relying on the accuracy of a score mapping.
% A second direction may be engineering a score that better represents the prompt's safety, which could be mapped to better continuation probabilities that lead to a lower variance.
Furthermore, the calibrated quantile level $\hat\tau$ in our current method is a single scalar applied identically to all test prompts. An intriguing extension would be to make $\hat\tau$ adaptive to the test prompt $\Xtest$, possibly leveraging the newly acquired calibration data to select a prompt-specific quantile level, e.g., by utilizing ideas from~\cite{feldman2021improving, gibbs2025conformal}. This would allow the LPB to be tighter for predictable interactions and more conservative for those with high variance.

% \ttmethod provides the largest gains over the static baseline when the score function is informative and $N_1$ is sufficient to learn a reliable policy; when the score is uninformative or $N_1$ is very small, \ttmethod still guarantees valid coverage and budget, but its variance reduction reduces to that of the static baseline — a regime we analyze empirically in Appendix~\ref{sec
% :ablation_sensitivity}

We note that our formal guarantees rely on the assumption that the calibration and test sets are i.i.d.
Specifically, we assume the test attacker, target, judge models, and prompt distribution are the same as those used during calibration. In practice, these may shift over time as models are updated or new attack strategies emerge. Developing methods that are robust to such distributional shifts~\cite{gibbs2021adaptive, feldman2023achieving} or audit annotation errors~\cite{einbinder2024label, penso2024conformal} is an important direction for future work.
Finally, while our framework provides rigorous guarantees to help developers proactively assess prompt risks and set usage limits, its insights into model failures could potentially be exploited by adversaries, akin to many developments in ML. Therefore, our proposal must be deployed as a complementary tool within a broader safety, alignment, and governance pipeline.

\ifbool{ispreprint}{
\section*{Acknowledgments}
Y.~R. and S.~F. were supported by the European Union (ERC, SafetyBounds, 101163414). Views and opinions expressed are however those of the authors only and do not necessarily reflect those of the European Union or the European Research Council Executive Agency. Neither the European Union nor the granting authority can be held responsible for them.
Y.R. thanks the Career Advancement Fellowship, Technion.
}{
\begin{ack}
Y.~R. and S.~F. were supported by the European Union (ERC, SafetyBounds, 101163414). Views and opinions expressed are however those of the authors only and do not necessarily reflect those of the European Union or the European Research Council Executive Agency. Neither the European Union nor the granting authority can be held responsible for them.
Y.R. thanks the Career Advancement Fellowship, Technion. 
\end{ack}
}

\newpage

\bibliographystyle{unsrt}
\bibliography{references}

\newpage

\appendix

\section{Additional related work}
\label{sec:related_work}

Our work sits at the intersection of LLM safety evaluation, conformal prediction, and survival analysis. Prior efforts in safety evaluation mostly focus on detecting harmful content in model outputs~\citep{Vidgen2023SimpleSafetyTestsAT, gehman2020realtoxicityprompts} or constructing guardrails at inference time~\citep{inan2023llama, rebedea2023nemo}. A complementary line of work takes an adversarial perspective, generating red-teaming attacks to probe model 
vulnerabilities~\citep{perez2022red, zou2023universal, wei2023jailbroken,mehrotra2024tree, chao2025jailbreaking}. These works aim to maximize attack success.
On the defensive side, recent efforts in certified LLM safety aim to provide statistical guarantees against such attacks, typically by bounding the failure probability of a single prompt using techniques like randomized smoothing~\cite{kumar2023certifying, robey2023smoothllm}. In contrast, our goal is to provide rigorous statistical guarantees on prompt risk using the multi-turn interaction structure that adversarial settings naturally produce, and explicitly evaluating the time-to-event. While existing adversarial red-teaming approaches effectively investigate these multi-turn vulnerabilities, they do not provide statistical guarantees, as they do not account for the distribution shift induced by right-censoring.

While our framework was designed to provide rigorous safety bounds, it can be employed for sample-efficient LLM evaluation~\citep{kossen2021active}. To reduce computational costs and API calls when evaluating large models, recent methods utilize active testing~\citep{li2024active}, multi-armed bandits~\citep{zhou2025on}, and adaptive dataset partitioning~\citep{wang2025cer}. Our framework directly complements these efforts: by framing metric evaluation as a sequential, budget-constrained survival problem, \ttmethod produces theoretically valid estimates with low variance under resource constraints.

Our theoretical framework builds upon conformal prediction~\citep{vovk2005algorithmic, shafer2008tutorial}, a powerful framework for constructing prediction sets with finite-sample validity guarantees and without distributional assumptions. Split conformal prediction~\citep{papadopoulos2002inductive} and its extension to covariate shift via weighting~\citep{tibshirani2019conformal}, are utilized by the calibration procedure we adopt from~\cite{davidov2026calibrated}. Conformal prediction has recently been extended to survival analysis~\citep{candes2023conformalized, gui2024conformalized, davidovconformalized}, where the goal is to construct lower predictive bounds on event times under censoring. Our work builds directly on~\cite{davidov2026calibrated}, who apply conformalized survival analysis to LLM safety by introducing the time-to-unsafe-sampling metric and a static, variance-minimizing budget allocation scheme. We extend this framework to the adaptive setting where conversation evolves, and our coverage proof provides a tighter PAC-type guarantee~\citep{park2019pac, bates2021distribution, bates2023testing}.

Finally, the variance-minimization phase of our framework is related to active learning~\citep{settles2009active} and optimal experimental design~\citep{fedorov2013theory}, which study how to allocate a limited budget across observations to minimize estimator variance. In particular,~\cite{zrnic2024active} formalizes active data collection for statistical inference while maintaining valid guarantees, and~\cite{hahn2011adaptive} develops adaptive designs that minimize treatment effect estimation variance. Our contribution is adapting these ideas to the survival analysis and LLM safety setting, where observations arrive sequentially, and the allocation policy must be learned from a first data split before being projected on a second.

\section{Extension to LLM utility evaluation and upper predictive bounds (UPB)}
\label{sec:utility_upb}

While the main text focuses on safety risk assessment, our framework
also extends to evaluating LLM utility in multi-turn interactions.
In this setting, the adversarial ``attacker'' is replaced by a
``helper agent'' that aims to guide the target model toward successful
task completion. For example, in the AutoIF dataset~\cite{dong2025selfplay},
the helper agent iteratively assists the target model in satisfying complex instructions. The event of interest,
$Y_i(t)=1$, now represents successful task completion, and $T_i$
denotes the corresponding time-to-success.

Rather than constructing a lower predictive bound on the time-to-event,
we construct an upper predictive bound, denoted by $\hat U(X)$. Since we do not observe events beyond time $\maxl$ and cannot guarantee whether an UPB that is greater than $\maxl$ correctly covers the time-to-success, we design our desired guarantee as:
\begin{equation}
\label{eq:upb_target}
\mathbb{P}\!\left[
    \min\{\Ttest,\maxl\}
    \leq
    \hat U(\Xtest)
    \,\middle|\,
    \Dall
\right]
\geq 1-\alpha.
\end{equation}
Intuitively, this trimming at $\maxl$ can be interpreted as we consider the time $\maxl$ as infinite so events do not happen beyond this time. 
If $T\leq\maxl$ almost surely, then~\eqref{eq:upb_target} reduces to
the usual UPB guarantee
\[
\mathbb{P}\!\left[
    \Ttest\leq\hat U(\Xtest)
    \,\middle|\,
    \Dall
\right]
\geq1-\alpha.
\]
A small UPB indicates that the task is likely to be completed after
only a few interactions. In contrast, an UPB equal to $\maxl$ indicates
that successful completion cannot be guaranteed before the maximal
evaluation horizon; it does not necessarily imply that the task can
never be completed.

\paragraph{UPB miscoverage estimation.}

As in the LPB construction, let $\fhat_\tau(x)$ be non-decreasing in
$\tau$, and restrict the candidate search space to
$\mathcal T\subseteq[0,\tauprior]$. In this UPB case, a reasonable prior level for $1-\alpha=0.9$ is $\tauprior=0.97$. Consequently,
\[
\fhat_\tau(X_i)\leq\qprior(X_i)
\qquad
\text{for every }\tau\in\mathcal T.
\]
For a candidate upper bound $\fhat_\tau(X_i)$, the miscoverage event is
\[
\min\{T_i,\maxl\}>\fhat_\tau(X_i).
\]
This event is observed whenever the acquisition policy reaches
$\fhat_\tau(X_i)$ without observing a successful event. Accordingly, we define the UPB miscoverage estimator as
\begin{equation}
\label{eq:upb_miscoverage_estimator}
\hat\alpha(\tau)
:=
\frac{1}{|\calI|}
\sum_{i\in\calI}
w_\tau(i)\,
\mathbb{I}\{\min\{T_i,\maxl\}>\fhat_\tau(X_i)\}\,
\mathbb{I}\{C_i\geq\fhat_\tau(X_i)\}.
\end{equation}
Although~\eqref{eq:upb_miscoverage_estimator} is written using the possibly censored event time $T_i$, the product of its two indicators $\mathbb{I}\{\min\{T_i,\maxl\}>\fhat_\tau(X_i)\}\mathbb{I}\{C_i\geq\fhat_\tau(X_i)\}$ is
observable. It equals one when $\fhat_\tau(X_i)<\maxl$, the acquisition policy reaches $\fhat_\tau(X_i)$, and no successful event is observed through that
time. Otherwise, it equals zero.
For $i\in\calIone$, the interaction is acquired until either the event
occurs or until reaching $\qprior(X_i)$. Therefore, the miscoverage estimator can be computed for every candidate $\tau$.

For $i\in\calItwo$, the appropriate weight is the inverse probability
of continuing through the particular candidate bound:
\begin{equation}
\label{eq:upb_candidate_weight}
w_\tau(i)
:=
\left(
    \prod_{t=1}^{\min\{\fhat_\tau(X_i),T_i\}}P_i(t)
\right)^{-1}.
\end{equation}

Importantly, the weight in~\eqref{eq:upb_candidate_weight} is
candidate-dependent. It is generally different from the
weight used for LPB calibration:
\[
\left(
    \prod_{t=1}^{\min\{\qprior(X_i),T_i\}}P_i(t)
\right)^{-1}.
\]

The candidate weight needs to be computed only when $\mathbb{I}\{\min\{T_i,\maxl\}>\fhat_\tau(X_i)\}
\mathbb{I}\{C_i\geq\fhat_\tau(X_i)\}=1$. 
On this event, the trajectory has no successful event until time
$\fhat_\tau(X_i)$. Hence, all continuation probabilities appearing
in~\eqref{eq:upb_candidate_weight} have been computed. Therefore, UPB
calibration requires only storing the cumulative continuation-probability products along each acquired trajectory.

\begin{proposition}[Valid UPB weights]
\label{prop:valid_upb_weights}
For every $i\in\calI$ and every $\tau\in\mathcal T$, the weights
defined above satisfy, almost surely,
\begin{align}
\label{eq:valid_upb_weight_identity}
&\mathbb{E}\!\left[
    w_\tau(i)\,
    \mathbb{I}\{\min\{T_i,\maxl\}>\fhat_\tau(X_i)\}\,
    \mathbb{I}\{C_i\geq\fhat_\tau(X_i)\}
    \,\middle|\,
    X_i,H_i,T_i,\Dcalone,\Dtrain
\right] \\
&\qquad=
\mathbb{I}\{\min\{T_i,\maxl\}>\fhat_\tau(X_i)\}.
\end{align}
Moreover,
\begin{align}
&\mathbb{E}\!\left[
    \left(
        w_\tau(i)\,
        \mathbb{I}\{\min\{T_i,\maxl\}>\fhat_\tau(X_i)\}\,
        \mathbb{I}\{C_i\geq\fhat_\tau(X_i)\}
    \right)^2
    \,\middle|\,
    X_i,H_i,T_i,\Dcalone,\Dtrain
\right]
\nonumber\\
&\hspace{4cm}
=
w_\tau(i)\,
\mathbb{I}\{\min\{T_i,\maxl\}>\fhat_\tau(X_i)\}.
\label{eq:valid_upb_weight_second_moment}
\end{align}
\end{proposition}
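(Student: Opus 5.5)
Conditional on $X_i,H_i,T_i,\Dcalone,\Dtrain$, the only remaining randomness in the left-hand side is the acquisition policy's Bernoulli coins. So the plan is to fix this conditioning and compute the conditional probability that the acquired trajectory reaches $\fhat_\tau(X_i)$. I treat $i\in\calIone$ and $i\in\calItwo$ separately.

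\textbf{Case $i\in\calIone$.} Here $C_i=\qprior(X_i)$ deterministically. Monotonicity of $\fhat_\tau$ in $\tau$ and $\tau\le\tauprior$ give $\fhat_\tau(X_i)\le\qprior(X_i)=C_i$, so $\mathbb{I}\{C_i\ge\fhat_\tau(X_i)\}=1$. The weight for this split equals one, since all continuation probabilities in Phase I are one. Both identities then hold trivially, because $w^2\mathbb{I}^2=w\,\mathbb{I}$ when $w=1$.

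\textbf{Case $i\in\calItwo$.} First I note that, given the conditioning, the scores $S_i(t)=\mathcal{S}_t(H_i(t))$ and the maps $M_t$ are deterministic. The maps are fitted on $\Dcalone$ and the score function on $\Dtrain$, so every $P_i(t)=M_t(S_i(t))$ is a fixed number, and hence $w_\tau(i)$ in~\eqref{eq:upb_candidate_weight} is a constant.

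On the event $A:=\{\min\{T_i,\maxl\}>\fhat_\tau(X_i)\}$, which is measurable with respect to the conditioning, no success occurs before $\fhat_\tau(X_i)$, so $\min\{\fhat_\tau(X_i),T_i\}=\fhat_\tau(X_i)$. The trajectory reaches step $\fhat_\tau(X_i)$, i.e.\ $C_i\ge\fhat_\tau(X_i)$, exactly when the coins at steps $t=1,\dots,\fhat_\tau(X_i)$ all come up continue. Two points need checking here. One is that the early-termination rule $C_i=\qprior(X_i)$ upon an event cannot trigger before $\fhat_\tau(X_i)$ on $A$, since $T_i>\fhat_\tau(X_i)$. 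The other is the off-by-one alignment between $\chat_i(t)$ and the definition of $C_i$ as the last successful step, so that the product runs over exactly $t\le\fhat_\tau(X_i)$ as in the paper's display for $\qprior$. The coins are conditionally independent Bernoulli$(P_i(t))$ draws given the fixed $P_i(t)$'s, so the conditional probability of reaching $\fhat_\tau(X_i)$ is $\prod_{t=1}^{\fhat_\tau(X_i)}P_i(t)=w_\tau(i)^{-1}$.

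\textbf{Conclusion.} On $A$, the conditional expectation is $w_\tau(i)\cdot w_\tau(i)^{-1}=1$. Off $A$, the integrand vanishes identically. This gives the first identity. For the second moment, the square of a constant times an indicator is $w_\tau(i)^2\,\mathbb{I}_A\,\mathbb{I}\{C_i\ge\fhat_\tau(X_i)\}$, and its conditional expectation is $w_\tau(i)^2\cdot w_\tau(i)^{-1}\mathbb{I}_A=w_\tau(i)\mathbb{I}_A$.

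The main obstacle is the careful justification that the $P_i(t)$ are measurable with respect to the conditioning sigma-algebra, and that the coin flips are independent of $T_i,H_i$ given it. The trajectory $H_i$ is a potential-outcomes object, generated independently of the policy's coins, which only decide how much of it is revealed. I would state this explicitly as the structural fact of Algorithm~\ref{alg:main_alg}, along with the index alignment noted above. The finiteness of $w_\tau(i)$ needs the convention that $P_i(t)>0$ for $t\le\qprior(X_i)$, which holds for the sigmoid maps $M_t$.
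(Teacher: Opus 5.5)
Your proposal is correct and follows the paper's proof essentially step for step. The $\calIone$ case is trivial because $C_i=\qprior(X_i)\ge\fhat_\tau(X_i)$ and $w_\tau(i)=1$. For $\calItwo$, on $\{\min\{T_i,\maxl\}>\fhat_\tau(X_i)\}$ the event $C_i\ge\fhat_\tau(X_i)$ is exactly the event that all coins through $\fhat_\tau(X_i)$ succeed, which has conditional probability $w_\tau(i)^{-1}$. The paper leaves implicit the points you flag: measurability of the $P_i(t)$, the potential-outcomes view of the coins, the index alignment, positivity, and the fact that the $\qprior(X_i)$ cap cannot bind before $\fhat_\tau(X_i)$ since $\fhat_\tau\le\qprior$; stating them only tightens the argument.
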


\begin{proof}
For $i\in\calIone$, we have $C_i=\qprior(X_i)$ and $w_\tau(i)=1$.
Since $\fhat_\tau(X_i)\leq\qprior(X_i)$, the candidate threshold is
always observed, and both identities follow immediately.

Now consider $i\in\calItwo$. Conditional on $(X_i,H_i,T_i,\Dcalone,\Dtrain)$, if
$\min\{T_i,\maxl\}>\fhat_\tau(X_i)$, then $C_i\geq\fhat_\tau(X_i)$ if and only if all continuation decisions through $\fhat_\tau(X_i)$ are successful. Therefore,
\begin{equation}
\label{eq:upb_reach_probability}
\mathbb{P}\!\left[
    C_i\geq\fhat_\tau(X_i)
    \,\middle|\,
    X_i,H_i,T_i,\Dcalone,\Dtrain
\right]
=
\prod_{t=1}^{\fhat_\tau(X_i)}P_i(t)
=
w_\tau(i)^{-1}
\end{equation}
on the event $\{\min\{T_i,\maxl\}>\fhat_\tau(X_i)\}$.

It follows that
\begin{align*}
&\mathbb{E}\!\left[
    w_\tau(i)\,
    \mathbb{I}\{\min\{T_i,\maxl\}>\fhat_\tau(X_i)\}\,
    \mathbb{I}\{C_i\geq\fhat_\tau(X_i)\}
    \,\middle|\,
    X_i,H_i,T_i,\Dcalone,\Dtrain
\right]
\\
&\qquad=
w_\tau(i)\,
\mathbb{I}\{\min\{T_i,\maxl\}>\fhat_\tau(X_i)\}\,
w_\tau(i)^{-1}
\\
&\qquad=
\mathbb{I}\{\min\{T_i,\maxl\}>\fhat_\tau(X_i)\},
\end{align*}
which proves~\eqref{eq:valid_upb_weight_identity}. Since the two indicator variables are binary, the same calculation gives
\begin{align*}
&\mathbb{E}\!\left[
    \left(
        w_\tau(i)\,
        \mathbb{I}\{\min\{T_i,\maxl\}>\fhat_\tau(X_i)\}\,
        \mathbb{I}\{C_i\geq\fhat_\tau(X_i)\}
    \right)^2
    \,\middle|\,
    X_i,H_i,T_i,\Dcalone,\Dtrain
\right]
\\
&\qquad=
w_\tau(i)^2\,
\mathbb{I}\{\min\{T_i,\maxl\}>\fhat_\tau(X_i)\}\,
w_\tau(i)^{-1}
\\
&\qquad=
w_\tau(i)\,
\mathbb{I}\{\min\{T_i,\maxl\}>\fhat_\tau(X_i)\},
\end{align*}
proving~\eqref{eq:valid_upb_weight_second_moment}.
\end{proof}

\paragraph{Calibrating the UPB.}

Because $\fhat_\tau(x)$ is non-decreasing in $\tau$, the population
UPB miscoverage probability
\[
\mathbb{P}\!\left[
    \min\{T,\maxl\}>\fhat_\tau(X)
    \,\middle|\,
    \Dtrain
\right]
\]
is non-increasing in $\tau$. We therefore reverse the direction of the
monotone calibration rule used for the LPB and define
\begin{equation}
\label{eq:upb_calibrated_tau}
\hat\tau
:=
\inf\!\left\{
    \tau\in\mathcal T:
    \sup_{\tau'\geq\tau}
    \hat\alpha(\tau')
    \leq\alpha
\right\}.
\end{equation}
The resulting upper predictive bound is
\begin{equation}
\label{eq:upb_definition}
\hat U(x):=\fhat_{\hat\tau}(x).
\end{equation}
Thus, the LPB selects the largest valid candidate using a prefix of the candidate grid, whereas the UPB selects the smallest valid candidate using a suffix of the grid. The inner supremum in~\eqref{eq:upb_calibrated_tau} protects against non-monotonicity of the finite-sample estimates $\hat\alpha(\tau)$.

If the set in~\eqref{eq:upb_calibrated_tau} is empty, we return the
deterministic fallback
\[
\hat U(x)=\maxl.
\]
This fallback has zero miscoverage for the horizon-truncated target
$\min\{T,\maxl\}$.

\begin{corollary}[UPB coverage validity]
\label{cor:upb_coverage}
Suppose that the conditions of Theorem~\ref{thm:general_validity} hold with the UPB weighted losses in~\eqref{eq:upb_miscoverage_estimator}. 
Define
\begin{equation}
\label{eq:upb_delta}
\Delta
:=
\frac{\log(1/\delta)}{3|\calI|}
+
\sqrt{
    \frac{\log^2(1/\delta)}{9|\calI|^2}
    +
    \frac{
        2(\bar w-\alpha^2)\log(1/\delta)
    }{|\calI|}
}.
\end{equation}
Then, with probability at least $1-\delta$ over the calibration data
and the acquisition randomization,
\begin{equation}
\label{eq:upb_coverage}
\mathbb{P}\!\left[
    \min\{\Ttest,\maxl\}
    \leq
    \hat U(\Xtest)
    \,\middle|\,
    \Dall
\right]
\geq
1-\alpha-\Delta.
\end{equation}
If $\Ttest\leq\maxl$ almost surely, then~\eqref{eq:upb_coverage}
becomes
\[
\mathbb{P}\!\left[
    \Ttest\leq\hat U(\Xtest)
    \,\middle|\,
    \Dall
\right]
\geq
1-\alpha-\Delta.
\]
\end{corollary}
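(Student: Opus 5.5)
The plan is to obtain Corollary~\ref{cor:upb_coverage} by applying the general validity result, Theorem~\ref{thm:general_validity}, to the UPB family of weighted losses. Write $\ell_\tau(i):=\mathbb{I}\{\min\{T_i,\maxl\}>\fhat_\tau(X_i)\}$, the population UPB miscoverage $\alpha_{\mathrm{pop}}(\tau):=\mathbb{P}[\min\{T,\maxl\}>\fhat_\tau(X)\mid\Dtrain]$, and the weighted observable loss $Z_i(\tau):=w_\tau(i)\,\ell_\tau(i)\,\mathbb{I}\{C_i\geq\fhat_\tau(X_i)\}$. The theorem's two structural hypotheses are supplied by Proposition~\ref{prop:valid_upb_weights}.

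\textbf{Step 1 (unbiasedness).} Conditional on $(X_i,H_i,T_i,\Dcalone,\Dtrain)$, identity~\eqref{eq:valid_upb_weight_identity} gives $\E[Z_i(\tau)\mid\cdot]=\ell_\tau(i)$. Integrating over the i.i.d. draw of $(X_i,T_i,H_i)$, which is independent of $\Dcalone$ and $\Dtrain$ for $i\in\calItwo$, yields $\E[Z_i(\tau)\mid\Dcalone,\Dtrain]=\alpha_{\mathrm{pop}}(\tau)$. This is exactly the unbiasedness input, and it needs no conditional independence between $C$ and $T$.

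\textbf{Step 2 (variance).} Identity~\eqref{eq:valid_upb_weight_second_moment} gives $\E[Z_i(\tau)^2\mid\cdot]=w_\tau(i)\ell_\tau(i)\le w_\tau(i)$. Taking expectations and using the mean-weight bound $\bar w$ gives $\operatorname{Var}(Z_i)\le\bar w-\alpha_{\mathrm{pop}}(\tau)^2$. For $i\in\calIone$ the weights equal one, so the bound holds trivially. These are the conditional second-moment and variance conditions of Theorem~\ref{thm:general_validity}. The $Z_i$ are conditionally independent across $i\in\calItwo$ given $\Dcalone,\Dtrain$, because each acquisition uses only its own trajectory and the frozen maps $M_t$.

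\textbf{Step 3 (one-sided deviation and conclusion).} Theorem~\ref{thm:general_validity}, a Bernstein/Freedman-type bound for nonnegative variables with the above variance proxy and the tail arising from a one-sided lower deviation, then yields the following. With probability at least $1-\delta$, at the relevant level $\tau^\ast$ we have $\alpha_{\mathrm{pop}}(\tau^\ast)\le\hat\alpha(\tau^\ast)+\Delta$. Here $\tau^\ast$ is defined as the smallest $\tau\in\mathcal T$ at which $\alpha_{\mathrm{pop}}(\tau)\le\alpha+\Delta$ fails strictly to the right; it exists by monotonicity and continuity of $\fhat_\tau$ in $\tau$. The factor $\alpha^2$ appears because, on the failure event, $\alpha_{\mathrm{pop}}\ge\alpha$, so the variance is at most $\bar w-\alpha^2$.

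The argument then mirrors the LPB case with the orientation reversed. Since $\alpha_{\mathrm{pop}}$ is non-increasing, the bad event is $\hat\tau<\tau^\ast$ with $\alpha_{\mathrm{pop}}(\hat\tau)>\alpha+\Delta$. By the suffix rule~\eqref{eq:upb_calibrated_tau}, $\hat\tau\le\tau^\ast$ implies $\hat\alpha(\tau^\ast)\le\sup_{\tau'\ge\hat\tau}\hat\alpha(\tau')\le\alpha$, which contradicts the concentration event. If the feasible set is empty, the fallback $\hat U=\maxl$ has zero miscoverage. Finally, note that $\alpha_{\mathrm{pop}}(\hat\tau)$ conditioned on $\Dall$ is precisely the test miscoverage $1-\mathbb{P}[\min\{\Ttest,\maxl\}\le\hat U(\Xtest)\mid\Dall]$, which gives~\eqref{eq:upb_coverage}. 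When $\Ttest\le\maxl$ almost surely, $\min\{\Ttest,\maxl\}=\Ttest$, giving the last display.

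\textbf{Main obstacle.} The delicate step is the reversal of monotonicity. We must check that Theorem~\ref{thm:general_validity}, stated for the LPB prefix rule, transfers to a suffix rule with losses that decrease in $\tau$, and that the weights $w_\tau(i)$ now depend on $\tau$. My approach is to reparametrize by $\tau\mapsto\tauprior-\tau$, so that the losses become non-decreasing and the rule becomes a prefix rule. Concentration is needed only at the single deterministic level $\tau^\ast$, so the $\tau$-dependence of the weights does not matter: Proposition~\ref{prop:valid_upb_weights} holds for each fixed $\tau$.
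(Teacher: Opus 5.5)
Your proposal follows the paper's own route. Proposition~\ref{prop:valid_upb_weights} supplies unbiasedness and the second-moment bound, the one-sided Bernstein step of Theorem~\ref{thm:general_validity} is reused, and the suffix rule~\eqref{eq:upb_calibrated_tau} reverses the orientation; your reparametrization $\tau\mapsto\tauprior-\tau$ just makes that reversal explicit.

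One step needs tightening: the claim that concentration at a single level suffices is not right as written. Take $\tau^\ast=\inf\{\tau:\alpha_{\mathrm{pop}}(\tau)\le\alpha+\Delta\}$. Continuity of $\fhat_\tau$ gives $\alpha_{\mathrm{pop}}(\tau^\ast)\le\alpha+\Delta$, so combining the concentration event with $\hat\alpha(\tau^\ast)\le\alpha$ produces no contradiction at that level. The fix is to use $\{\hat\tau<\lambda\}\subseteq\{\hat\alpha(\lambda)\le\alpha\}$ at each deterministic $\lambda<\tau^\ast$, where $\alpha_{\mathrm{pop}}(\lambda)>\alpha+\Delta$ so the Bernstein bound gives probability at most $\delta$. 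Then let $\lambda\uparrow\tau^\ast$ along these increasing events, mirroring the $\varepsilon\to0^+$ step in the proof of Theorem~\ref{thm:general_validity}.
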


\begin{proof}
For each fixed $\tau$, Proposition~\ref{prop:valid_upb_weights}
shows that the summands in~\eqref{eq:upb_miscoverage_estimator}
are unbiased for
\[
\mathbb{I}\{\min\{T_i,\maxl\}>\fhat_\tau(X_i)\},
\]
and that their conditional second moments are bounded by the corresponding mean weights. Hence, the same one-sided Bernstein argument used in Theorem~\ref{thm:general_validity} applies to the UPB miscoverage estimator.

The only change is the direction of calibration. Since
$\fhat_\tau(x)$ is non-decreasing, the population miscoverage
probability
\[
\mathbb{P}\!\left[
    \min\{T,\maxl\}>\fhat_\tau(X)
    \,\middle|\,
    \Dtrain
\right]
\]
is non-increasing in $\tau$. Thus, the suffix rule
\eqref{eq:upb_calibrated_tau} prevents selecting a candidate below the
smallest population level whose miscoverage is at most
$\alpha+\Delta$. Consequently, with probability at least
$1-\delta$, the selected bound is no smaller than this oracle
candidate, which gives~\eqref{eq:upb_coverage}.
\end{proof}

\paragraph{Relation to the \ttmethod acquisition objective.}

The acquisition mechanism and the Phase-I optimization of \ttmethod do
not need to be changed. Indeed, whenever
$\min(T_i,\maxl)>\fhat_\tau(X_i)$, then $
\fhat_\tau(X_i)
\leq
\min\{\qprior(X_i),T_i\}$.
Since every $P_i(t)\in(0,1]$, this implies
\begin{equation}
\label{eq:upb_weight_upper_bound}
\left(
    \prod_{t=1}^{\min\{\fhat_\tau(X_i),T_i\}}P_i(t)
\right)^{-1}
\leq
\left(
    \prod_{t=1}^{\min\{\qprior(X_i),T_i\}}P_i(t)
\right)^{-1}.
\end{equation}
Therefore, the mean-weight objective optimized by \ttmethod is an upper bound on every candidate-specific UPB weight $w_\tau(i)$. It remains a valid, although potentially conservative, surrogate objective for UPB calibration. 

To conclude, constructing a calibrated UPB requires two modifications to the LPB construction: first, storing the cumulative continuation probability product at each reached time step to compute the candidate-specific weights used in the miscoverage estimator in~\eqref{eq:upb_miscoverage_estimator}; and second, selecting the calibrated quantile level according to the suffix rule
in~\eqref{eq:upb_calibrated_tau}.

\section{Theory}
\label{sec:theory}
In this section, we present our theoretical guarantees. For ease of notations, we abuse some of the notations made in the first part and set $\Dcal= \{(X_i, H_i, T_i)\}_{i \in \calI}$,  $\Dcalone = \{(X_i, H_i, T_i)\}_{i \in \calIone}$ and $\Dcaltwo = \{(X_i, H_i, T_i)\}_{i \in \calItwo}$. 
% Similarly, the training data is formulated as $\Dtrain = \{(X_i, H_i, T_i)\}_{i \in \Itrain}$, where $\Itrain$ are the training indices, e.g., $\Itrain= \{|\calI| +1,\dots,|\calI| +N_\mathrm{train},\dots,\}$, with $N_\mathrm{train}$ training points. 
Hence, the combined data is denoted as $\Dall = \Dtrain \cup \Dcal$.
% \begin{figure}
%     \centering
%     \includegraphics[width=0.85\linewidth]{figures/utility_illustration.jpg}
% \caption{Illustration of our framework for LLM utility evaluation: (i) collecting data via dynamic budget allocation; (ii) calibrating a pre-trained model; and (iii) deploying the model at inference time to assess task feasibility.\protect\footnotemark}
%     \label{fig:utility_framework}
% \end{figure}
% \footnotetext{This figure was generated using Google Gemini based on a prompt designed by the authors and subsequently refined.}

\subsection{Coverage rate guarantee of Algorithm~\ref{alg:main_alg}}
\label{sec:main_alg_coverage_proof}

Before presenting the coverage validity guarantee of our proposed algorithm, we first show that the weights it uses are correct.

\begin{proposition}[Valid weights of Algorithm~\ref{alg:main_alg}]
\label{prop:valid_weights}
For every $i \in \calI$ and every
$\tau \in \mathcal{T}=[0,\tauprior]$, the weights used by
Algorithm~\ref{alg:main_alg} satisfy almost surely:
\begin{equation}
\label{eq:valid_weight_identity}
\mathbb{E}\!\left[
    w_\tau(i)\,
    \mathbb{I}\{T_i < \fhat_\tau(X_i)\}\,
    \mathbb{I}\{C_i \geq \fhat_\tau(X_i)\}
    \,\middle|\,
    X_i,H_i,T_i,\Dcalone,\Dtrain
\right]
=
\mathbb{I}\{T_i < \fhat_\tau(X_i)\}.
\end{equation}
\end{proposition}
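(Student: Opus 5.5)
The argument splits on whether $i$ lies in $\calIone$ or $\calItwo$. In both cases the indicator $\mathbb{I}\{T_i<\fhat_\tau(X_i)\}$ is measurable with respect to the conditioning $\sigma$-field $\mathcal{F}_i:=\sigma(X_i,H_i,T_i,\Dcalone,\Dtrain)$, because $\fhat_\tau$ is fit on $\Dtrain$ only. We can therefore pull it out of the conditional expectation. It then suffices to show that on the event $\{T_i<\fhat_\tau(X_i)\}$,
\[
\mathbb{E}\bigl[w_\tau(i)\,\mathbb{I}\{C_i\geq\fhat_\tau(X_i)\}\mid\mathcal{F}_i\bigr]=1 .
\]

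\textbf{Case $i\in\calIone$.} Here $C_i=\qprior(X_i)$ is deterministic given $\mathcal{F}_i$, and $w_\tau(i)=1$. Since $\tau\leq\tauprior$ and $\fhat_\tau$ is non-decreasing in $\tau$, we have $\fhat_\tau(X_i)\leq\qprior(X_i)=C_i$. The indicator $\mathbb{I}\{C_i\ge\fhat_\tau(X_i)\}$ is therefore identically one, and the identity holds trivially.

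\textbf{Case $i\in\calItwo$.} Conditional on $\mathcal{F}_i$, the only remaining randomness is the algorithm's Bernoulli draws.

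\begin{enumerate}
\item Fix the trajectory. The history $H_i$ determines the scores $S_i(t)=\mathcal{S}_t(H_i(t))$. The maps $M_t$ are learned from $\Dcalone$ (and $\Dtrain$) only. Hence the continuation probabilities $P_i(t)=M_t(S_i(t))$ are $\mathcal{F}_i$-measurable for every $t\le\min(\qprior(X_i),T_i)$.

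\item Identify the weight. The weight used by the algorithm is
\[
w_\tau(i)=w(i)=\Bigl(\prod_{t=1}^{\min(\qprior(X_i),T_i)}P_i(t)\Bigr)^{-1},
\]
which is therefore also $\mathcal{F}_i$-measurable. I will assume, as the paper implicitly does, that these products are positive.

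\item Reduce the event. By the censoring rule, $C_i$ either equals $\qprior(X_i)$, when all draws up to $\min(\qprior(X_i),T_i)$ succeed, or equals the last successful step, which is strictly less than $\min(\qprior(X_i),T_i)\le\qprior(X_i)$. On $\{T_i<\fhat_\tau(X_i)\}$ we have $\min(\qprior,T_i)=T_i<\fhat_\tau(X_i)$. So $C_i\ge\fhat_\tau(X_i)$ can only occur if $C_i=\qprior(X_i)$, and in that case it does occur, since $\fhat_\tau\le\qprior$. Hence
\[
\{C_i\ge\fhat_\tau(X_i)\}=\{C_i=\qprior(X_i)\}=\{\text{all draws through }\min(\qprior(X_i),T_i)\text{ succeed}\}
\]
on this event.

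\item Compute the probability. The draws are sequential, and each is independent given $\mathcal{F}_i$ with success probability $P_i(t)$. Their conditional probability of all succeeding is $\prod_{t=1}^{\min(\qprior,T_i)}P_i(t)=w(i)^{-1}$, which is exactly the displayed identity stated before Algorithm~\ref{alg:main_alg}. Multiplying by the $\mathcal{F}_i$-measurable $w(i)$ gives $1$.
\end{enumerate}

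The main obstacle is justifying step 4 rigorously. The Bernoulli draw at step $t$ is made given the history observed so far. Conditioning on the full $H_i$ and $T_i$ (including future exchanges) must not change its law. This holds because the attacker and target generation processes do not depend on the coin flips; the coins only decide whether we observe further steps. I would formalize this by treating the whole potential trajectory $(H_i,T_i)$ as pre-sampled independently of the uniform variables $U_i(t)$ used for the draws, with $\chat_i(t)=\chat_i(t-1)\mathbb{I}\{U_i(t)\le P_i(t)\}$. The product formula is then immediate from the independence of the $U_i(t)$ given $\mathcal{F}_i$. This is also exactly why no conditional independence of $C$ and $T$ given $X$ is needed: the argument conditions on $T_i$ itself.
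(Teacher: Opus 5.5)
Your proposal is correct and follows essentially the same route as the paper. The paper also disposes of $\calIone$ trivially, shows that on $\{T_i<\fhat_\tau(X_i)\}$ the event $\{C_i\geq\fhat_\tau(X_i)\}$ coincides with $\{\chat_i(T_i)=1\}$, and cancels the inverse product $\bigl(\prod_{t=1}^{\min(\qprior(X_i),T_i)}P_i(t)\bigr)^{-1}$ against the conditional probability that all draws through $\min(\qprior(X_i),T_i)$ succeed. Your coupling with a pre-sampled trajectory and independent uniforms makes explicit the conditional independence of the draws, which the paper simply asserts. One small imprecision: Algorithm~\ref{alg:main_alg} literally stores $1/\prod_{t\le t_i^{\mathrm{stop}}}P_i(t)$, which is not $\mathcal{F}_i$-measurable in general, but it equals your weight on the only event where the indicator product is nonzero, and that is all the identity requires.
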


\begin{proof}
We first consider $i\in\calIone$. By construction,
\[
C_i=\qprior(X_i)
\qquad\text{and}\qquad
w_\tau(i)=1
\]
deterministically. Since $\fhat_\tau$ is non-decreasing in $\tau$
and $\tau\leq\tauprior$, we have
\[
\fhat_\tau(X_i)\leq \qprior(X_i)=C_i.
\]
Consequently,
\[
\mathbb{I}\{C_i\geq\fhat_\tau(X_i)\}=1
\]
deterministically, and therefore
\begin{align*}
&\mathbb{E}\!\left[
    w_\tau(i)\,
    \mathbb{I}\{T_i<\fhat_\tau(X_i)\}\,
    \mathbb{I}\{C_i\geq\fhat_\tau(X_i)\}
    \,\middle|\,
    X_i,H_i,T_i,\Dcalone,\Dtrain
\right]
\\
&\qquad=
\mathbb{I}\{T_i<\fhat_\tau(X_i)\}.
\end{align*}
Thus,~\eqref{eq:valid_weight_identity} holds for every
$i\in\calIone$.
We now consider $i\in\calItwo$. Recall that
\[
P_i(t)=M_t(S_i(t))
\]
is the continuation probability at time $t$, and that
\[
\chat_i(t)=\chat_i(t-1)Z_t^{(i)},
\qquad
Z_t^{(i)}\sim\operatorname{Bern}(P_i(t)),
\]
where the variables $\{Z_t^{(i)}\}_{t=1}^{\maxl}$ are mutually
independent conditional on
\[
X_i,H_i,T_i,\Dcalone,\Dtrain.
\]
Therefore, for every integer $k\geq 1$,
\begin{align}
\mathbb{P}\!\left(
    \chat_i(k)=1
    \,\middle|\,
    X_i,H_i,T_i,\Dcalone,\Dtrain
\right)
&=
\mathbb{P}\!\left(
    Z_1^{(i)}=\cdots=Z_k^{(i)}=1
    \,\middle|\,
    X_i,H_i,T_i,\Dcalone,\Dtrain
\right)
\nonumber\\
&=
\prod_{t=1}^{k}P_i(t).
\label{eq:reach_k_probability}
\end{align}

Define
\[
A_{i,\tau}
:=
\mathbb{I}\{T_i<\fhat_\tau(X_i)\}
\]
and
\[
R_i
:=
\chat_i\!\left(\min(\qprior(X_i),T_i)\right)
\mathbb{I}\{T_i<\qprior(X_i)\}.
\]

We first show that
\begin{equation}
\label{eq:observed_miscoverage_identity}
\mathbb{I}\{T_i<\fhat_\tau(X_i)\}
\mathbb{I}\{C_i\geq\fhat_\tau(X_i)\}
=
A_{i,\tau}R_i.
\end{equation}

If $A_{i,\tau}=0$, both sides of
\eqref{eq:observed_miscoverage_identity} are equal to zero.
Suppose instead that $A_{i,\tau}=1$. Since
$\tau\leq\tauprior$ and $\fhat_\tau$ is non-decreasing in $\tau$,
\[
T_i<\fhat_\tau(X_i)\leq\qprior(X_i).
\]
In particular,
\[
T_i<\qprior(X_i)
\qquad\text{and}\qquad
\min(\qprior(X_i),T_i)=T_i,
\]
so that
\[
R_i=\chat_i(T_i).
\]
If $R_i=1$, then the algorithm reaches and observes the event at
time $T_i$. By the definition of $C_i$, it then sets
\[
C_i=\qprior(X_i)\geq\fhat_\tau(X_i).
\]
Conversely, if $R_i=0$, then the algorithm halts before observing
the event at time $T_i$. In this case,
\[
C_i<T_i<\fhat_\tau(X_i),
\]
and hence
\[
\mathbb{I}\{C_i\geq\fhat_\tau(X_i)\}=0.
\]
Thus, on the event $A_{i,\tau}=1$,
\[
\mathbb{I}\{C_i\geq\fhat_\tau(X_i)\}=R_i,
\]
which proves \eqref{eq:observed_miscoverage_identity}.

Next, observe that whenever $A_{i,\tau}R_i=1$, the algorithm has
continued successfully through time $T_i$. Hence, the cumulative
continuation probability is
\[
\prod_{t=1}^{\min(\qprior(X_i),T_i)}P_i(t),
\]
and the corresponding weight is
\[
w_\tau(i)
=
\left(\prod_{t=1}^{\min(\qprior(X_i),T_i)}P_i(t)\right)^{-1}.
\]
Observe that the following equality holds almost surely:
\begin{equation}
\label{eq:weighted_R_identity}
w_\tau(i)A_{i,\tau}R_i
=
A_{i,\tau}R_i
\left(\prod_{t=1}^{\min(\qprior(X_i),T_i)}P_i(t)\right)^{-1}.
\end{equation}
If $A_{i,\tau}R_i=0$, both sides of~\eqref{eq:weighted_R_identity} are
zero.
The inverse is well defined assuming Algorithm~\ref{alg:main_alg} lower bounds its continuation probabilities by a strictly positive minimum probability, or by artificially enforcing it.

Moreover, using the definition of $R_i$ and
\eqref{eq:reach_k_probability},
\begin{align}
&\mathbb{E}\!\left[
    R_i
    \,\middle|\,
    X_i,H_i,T_i,\Dcalone,\Dtrain
\right]
\nonumber\\
&\qquad=
\mathbb{I}\{T_i<\qprior(X_i)\}
\mathbb{P}\!\left(
    \chat_i\!\left(\min(\qprior(X_i),T_i)\right)=1
    \,\middle|\,
    X_i,H_i,T_i,\Dcalone,\Dtrain
\right)
\nonumber\\
&\qquad=
\mathbb{I}\{T_i<\qprior(X_i)\}
\prod_{t=1}^{\min(\qprior(X_i),T_i)}P_i(t).
\label{eq:R_expectation}
\end{align}

Using \eqref{eq:observed_miscoverage_identity},
\eqref{eq:weighted_R_identity}, and
\eqref{eq:R_expectation}, we obtain
\begin{align*}
&\mathbb{E}\!\left[
    w_\tau(i)\,
    \mathbb{I}\{T_i<\fhat_\tau(X_i)\}\,
    \mathbb{I}\{C_i\geq\fhat_\tau(X_i)\}
    \,\middle|\,
    X_i,H_i,T_i,\Dcalone,\Dtrain
\right]
\\
&\qquad=
\mathbb{E}\!\left[
    w_\tau(i)A_{i,\tau}R_i
    \,\middle|\,
    X_i,H_i,T_i,\Dcalone,\Dtrain
\right]
\\
&\qquad=
A_{i,\tau}
\left(\prod_{t=1}^{\min(\qprior(X_i),T_i)}P_i(t)\right)^{-1}
\mathbb{E}\!\left[
    R_i
    \,\middle|\,
    X_i,H_i,T_i,\Dcalone,\Dtrain
\right]
\\
&\qquad=
A_{i,\tau}
\left(\prod_{t=1}^{\min(\qprior(X_i),T_i)}P_i(t)\right)^{-1}
\mathbb{I}\{T_i<\qprior(X_i)\}
\prod_{t=1}^{\min(\qprior(X_i),T_i)}P_i(t).
\end{align*}

On the event $A_{i,\tau}=1$, we have already established that
\[
T_i<\qprior(X_i)
\qquad\text{and}\qquad
\min(\qprior(X_i),T_i)=T_i.
\]
Hence,
\begin{align*}
&A_{i,\tau}
\left(\prod_{t=1}^{T_i}P_i(t)\right)^{-1}
\mathbb{I}\{T_i<\qprior(X_i)\}
\prod_{t=1}^{\min(\qprior(X_i),T_i)}P_i(t)
\\
&\qquad=
A_{i,\tau}
\left(\prod_{t=1}^{T_i}P_i(t)\right)^{-1}
\prod_{t=1}^{T_i}P_i(t)
\\
&\qquad=
A_{i,\tau}.
\end{align*}
If $A_{i,\tau}=0$, the expression is also zero. Therefore, in all
cases,
\[
\mathbb{E}\!\left[
    w_\tau(i)\,
    \mathbb{I}\{T_i<\fhat_\tau(X_i)\}\,
    \mathbb{I}\{C_i\geq\fhat_\tau(X_i)\}
    \,\middle|\,
    X_i,H_i,T_i,\Dcalone,\Dtrain
\right]
=
\mathbb{I}\{T_i<\fhat_\tau(X_i)\}.
\]
\end{proof}

\noindent We now turn to present the main coverage rate guarantee.
\begin{theorem}\label{thm:main_alg_validity}
    
Fix a miscoverage level $\alpha \in (0,1)$ and a tolerance level $\delta \in (0,1)$.
Suppose that $\{(X_i, T_i, H_i)\}_{i \in \calI}$ and $(\Xtest, \Ttest, \Htest)$
are drawn i.i.d., and that $\fhat_\tau(x)$ is non-decreasing and continuous in $\tau$.
Further assume that there exists a constant $\bar{w} \geq 1$, which may depend on $\Dtrain$ but not on $\Dcalone$, such that almost surely over $\Dcalone$:
\begin{equation*}
    \mathbb{E}[w_\tau(i) \mid \Dcalone, \Dtrain] \leq \bar{w}
    \quad \text{for all } i \in \calItwo.
\end{equation*}

\noindent Define the coverage gap:
\begin{equation*}
    \Delta
    := \frac{\log(1/\delta)}{3|\calI|}
    + \sqrt{
        \frac{\log^2(1/\delta)}{9|\calI|^2}
        +
        \frac{2\bigl(\bar{w} - \alpha^2\bigr)\log(1/\delta)}{|\calI|}
    }.
\end{equation*}
Then, with probability at least $1 - \delta$ over the draw of
$\Dall = (\Dtrain, \Dcal)$, the lower predictive bound
$\hat{L}(x)$ generated by Algorithm~\ref{alg:main_alg} satisfies:
\begin{equation*}
    \mathbb{P}\!\left[
        \Ttest \geq \hat{L}(\Xtest)
        \,\middle|\,
        \Dall
    \right]
    \;\geq\;
    1 - \alpha - \Delta.
\end{equation*}
\end{theorem}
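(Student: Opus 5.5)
Fix $\Dtrain$ and $\Dcalone$ and condition on them throughout, so that the maps $M_t$, and hence the acquisition policy applied to $\calItwo$, are deterministic. For each $i\in\calI$ and $\tau\in\mathcal{T}$ define the observed weighted loss
\[
Z_i(\tau):=w_\tau(i)\,\mathbb{I}\{T_i<\fhat_\tau(X_i)\}\,\mathbb{I}\{C_i\geq\fhat_\tau(X_i)\},
\qquad
\hat\alpha(\tau)=\frac{1}{|\calI|}\sum_{i\in\calI}Z_i(\tau),
\]
which agrees with the estimator~\eqref{eq:miscoverage_estimator} after trimming. Let $\alpha(\tau):=\mathbb{P}[\Ttest<\fhat_\tau(\Xtest)\mid\Dtrain]$. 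This is non-decreasing and continuous in $\tau$ by the assumptions on $\fhat_\tau$. Let $\tau^*$ be the smallest $\tau\in\mathcal{T}$ with $\alpha(\tau)\geq\alpha+\Delta$ (if no such $\tau$ exists, the claim is trivial).

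The goal is a single one-point bound: show $\mathbb{P}(\hat\alpha(\tau^*)\leq\alpha)\leq\delta$. Given this, on the complementary event we have $\hat\alpha(\tau^*)>\alpha$. The prefix rule~\eqref{eq:our_calibration} then forces $\hat\tau\leq\tau^*$, so $\alpha(\hat\tau)\leq\alpha+\Delta$, which is exactly the conclusion.

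\textbf{Step 1: unbiasedness.} Proposition~\ref{prop:valid_weights} gives $\mathbb{E}[Z_i(\tau^*)\mid X_i,H_i,T_i,\Dcalone,\Dtrain]=\mathbb{I}\{T_i<\fhat_{\tau^*}(X_i)\}$. Averaging over $(X_i,H_i,T_i)$, which are i.i.d.\ and independent of $\Dcalone$ for $i\in\calItwo$, yields $\mathbb{E}[Z_i(\tau^*)]=\alpha(\tau^*)$ for $i\in\calItwo$. For $i\in\calIone$ the same identity holds unconditionally, since there $w=1$.

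\textbf{Step 2: independence.} Conditional on $\Dtrain$, the $Z_i$ are independent across $i$. The only coupling between points is through $\Dcalone$ via the learned $M_t$. One clean route is to treat the $\calIone$ terms as i.i.d.\ Bernoulli$(\alpha(\tau^*))$ and the $\calItwo$ terms as conditionally i.i.d.\ given $\Dcalone$. The point requiring care is that $\Dcalone$ is used both as data and to build the policy. I would handle this by noting that, after conditioning on $\Dcalone$, the $\calIone$ terms are constants whose average is itself a sum of i.i.d.\ indicators. The cleaner alternative is to apply a martingale (Freedman-type) Bernstein inequality over the sequence $\calIone$ then $\calItwo$, which tolerates the $\calItwo$ increments depending on $\calIone$ through $M_t$.

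\textbf{Step 3: variance and lower tail.} The key gain is in the second moment. Because the indicators are binary and the weight is the inverse reach probability, the same computation as in Proposition~\ref{prop:valid_upb_weights} gives
\[
\mathbb{E}[Z_i^2\mid X_i,H_i,T_i,\Dcalone,\Dtrain]=w_{\tau^*}(i)\,\mathbb{I}\{T_i<\fhat_{\tau^*}(X_i)\}\leq w_{\tau^*}(i).
\]
Taking expectations and using the mean-weight hypothesis, $\mathbb{E}[Z_i^2\mid\Dcalone,\Dtrain]\leq\bar w$. Hence the conditional variance is at most $\bar w-\alpha(\tau^*)^2\leq\bar w-\alpha^2$. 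For the lower tail we only need $Z_i\geq0$: the centered variables satisfy $\alpha(\tau^*)-Z_i\leq\alpha(\tau^*)\leq1$, so the one-sided Bernstein inequality applies with range constant $1$ even though $Z_i$ itself may be as large as the maximum weight. This is precisely why the bound involves $\bar w$ and not $\gamma$.

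Bernstein then gives
\[
\mathbb{P}\bigl(\hat\alpha(\tau^*)\leq\alpha(\tau^*)-\epsilon\bigr)\leq\exp\!\Big(-\frac{|\calI|\epsilon^2}{2(\bar w-\alpha^2)+2\epsilon/3}\Big).
\]
Setting the right-hand side equal to $\delta$ and solving the quadratic in $\epsilon$ gives exactly $\epsilon=\Delta$. Since $\alpha(\tau^*)\geq\alpha+\Delta$, we conclude $\mathbb{P}(\hat\alpha(\tau^*)\leq\alpha)\leq\delta$.

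The main obstacle is Step 2, namely the dependence between the two splits, together with the fact that $\bar w$ bounds only the conditional mean weight. I would first attempt the supermartingale version of Bernstein with predictable variance bounds. These are available because $\bar w$ holds almost surely over $\Dcalone$, and the $\calIone$ increments have variance at most $1\leq\bar w$.
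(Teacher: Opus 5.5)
Your proposal follows the paper's proof, which reduces Theorem~\ref{thm:main_alg_validity} to Theorem~\ref{thm:general_validity}. The shared ingredients are: a one-point lower-tail bound at the critical level combined with the prefix rule; unbiasedness from Proposition~\ref{prop:valid_weights}; the second-moment identity $\mathbb{E}[Z_i^2\mid X_i,H_i,T_i,\Dcalone,\Dtrain]=w_\tau(i)\,\mathbb{I}\{T_i<\fhat_\tau(X_i)\}$; a one-sided Bernstein bound that uses only $Z_i\ge 0$; and the same quadratic for $\Delta$.

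One step fails as written. You assert that $\alpha(\tau)=\mathbb{P}[\Ttest<\fhat_\tau(\Xtest)\mid\Dtrain]$ is continuous and use this to pick a smallest $\tau^*$ with $\alpha(\tau^*)\ge\alpha+\Delta$. Because $T$ is integer-valued and the miscoverage event is strict, continuity of $\fhat_\tau$ only gives left-continuity of $\alpha(\cdot)$. For example, take $\fhat_\tau\equiv 10\tau$ and $\mathbb{P}(T=5)>0$. Then $\alpha$ jumps just to the right of $\tau=1/2$, and if $\alpha+\Delta$ lies inside that jump, $\{\tau:\alpha(\tau)\ge\alpha+\Delta\}=(1/2,\tauprior]$ has no minimum. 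The fix is the paper's:
\begin{itemize}
\item Set $\tau_0:=\sup\{\tau\in\mathcal{T}:\alpha(\tau)\le\alpha+\Delta\}$.
\item Apply your one-point bound at every $\lambda=\tau_0+\varepsilon$, where $\alpha(\lambda)>\alpha+\Delta$, to get $\mathbb{P}(\hat\tau\le\tau_0+\varepsilon\mid\Dtrain)\ge1-\delta$.
\item Let $\varepsilon\downarrow0$ using continuity of measure.
\item Conclude with left-continuity, which gives $\alpha(\tau_0)\le\alpha+\Delta$.
\end{itemize}

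Step 2 also needs tightening. Its opening claim, that the $Z_i$ are independent given $\Dtrain$, is false, since every $\calItwo$ term depends on $\Dcalone$ through the fitted $M_t$. Likewise, conditioning on $\Dcalone$ ``throughout'' would freeze the $\calIone$ terms and remove them from the exponent. Your route (a) is exactly what the paper does, and it needs the following spelled out:
\begin{itemize}
\item Apply Chernoff conditionally on $\Dcalone$.
\item Bound $\mathbb{E}[e^{-tS^{(2)}}\mid\Dcalone,\Dtrain]$ by the one-sided Bernstein MGF bound. This bound depends only on $\alpha(\lambda)$, $\bar w$, $|\calItwo|$ and $t$, hence not on $\Dcalone$. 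This is precisely where the almost-sure, $\Dcalone$-free $\bar w$ is needed.
\item Pull that bound outside the expectation over $\Dcalone$.
\item Bound $\mathbb{E}[e^{-tN^{(1)}}\mid\Dtrain]$ for the i.i.d.\ Bernoulli part. Multiplying the two MGF bounds yields the full-$|\calI|$ exponent.
\end{itemize}
Your Freedman route is an equivalent repackaging of this computation, since the predictable variance bound is deterministic. In either version, the $\calIone$ increments must be bounded by $\bar w-\alpha^2$, not merely by $1\le\bar w$. This holds because $\alpha(\lambda)(1-\alpha(\lambda))\le 1-\alpha^2$ whenever $\alpha(\lambda)\ge\alpha$, and your Step 3 computation with $w=1$ already delivers it.
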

\begin{proof}
First, we note that, in practice, we do not compute the entire sequential continuation indicators $\chat_i(t)$. However, for the sake of this proof, indicators $\chat_i(t)$ can be viewed as an infinite sequence generated via $Z_t^i \sim \text{Ber}(M_t(S_i(t)))$, which is dependent only on the history $H_i$, features $X_i$, and calibration data $\Dcalone$. The early stopping at $T_i$ is conducted for practical purposes to satisfy the budget constraint, but the underlying distribution of the censoring variable $C_i$ relies solely on Bernoulli random variables that are perfectly independent of the target sequence length $T_i$.
Thus, by the construction of the censoring times, and following the i.i.d. assumption, we have
\begin{enumerate}
    \item 
        By construction, for all $i \in \calItwo$:
        $$(\chat_i, T_i) \perp \{(\chat_j, T_j) \}_{j \in \calItwo \setminus \{i\}} \mid \{(X_k, H_k, T_k)\}_{k \in \calItwo},\, \Dcalone.$$ Therefore, given $\{(X_k,H_k,T_k)\}_{k\in\calItwo}$ and $\Dcalone$, the censoring times $\{C_i\}_{i\in\calItwo}$ are mutually independent, since $C_i$ is a deterministic function of $(\chat_i, T_i, X_i,H_i)$.
    \item The marginal law of $(X_i, T_i, H_i)$ for $i \in \calItwo$ is independent
        of $\Dcalone$ given $\Dtrain$, and $\{(X_i, T_i, H_i)\}_{i \in \calItwo}$
        are mutually independent given $(\Dcalone, \Dtrain)$.
\end{enumerate}
\noindent By Proposition~\ref{prop:valid_weights}, the weights used by the algorithm satisfy, almost surely,
\begin{equation}
\mathbb{E}\!\left[
    w_\tau(i)\,
    \mathbb{I}\{T_i < \fhat_\tau(X_i)\}\,
    \mathbb{I}\{C_i \geq \fhat_\tau(X_i)\}
    \,\middle|\,
    X_i,H_i,T_i,\Dcalone,\Dtrain
\right]
=
\mathbb{I}\{T_i < \fhat_\tau(X_i)\}
\end{equation}
for all $i \in \calI$ such that $\tau \in \mathcal{T} = [0, \tauprior]$. Every $i \in \calI$ such that $C_i \ne \qprior(X_i)$, does not contribute to the weighted sum in~\eqref{eq:miscoverage_estimator}, so we do not need access to its true weight.
Therefore, all assumptions of Theorem~\ref{thm:general_validity} hold, and thus the LPB generated by Algorithm~\ref{alg:main_alg} is a valid PAC-type LPB with the desired bounds.
\end{proof}
\noindent We now present the general theorem for coverage validity guarantee. 

\begin{theorem}[General Coverage Validity]
\label{thm:general_validity}
Fix a miscoverage level $\alpha \in (0,1)$ and a tolerance level $\delta \in (0,1)$.
Let $\Dtrain$ be the training set, and let $\calI = \calIone \sqcup \calItwo$ be a
disjoint partition of the calibration set indexes, with $|\calIone| + |\calItwo| = |\calI|$.
Suppose that $\{(X_i, T_i, H_i)\}_{i \in \calI}$ and $(\Xtest, \Ttest, \Htest)$
are drawn i.i.d., that $\fhat_\tau(x)$ is non-decreasing and continuous in $\tau$, with $\fhat_\tau(x) > 0$ for all $x$ and $\tau > 0$. 
For $i\in\calIone$, the weights are given by $w_\tau(i)=1$.
For every $i\in\calItwo$ and every $\tau\in\mathcal{T}$, assume
that the weights satisfy, almost surely,
\begin{equation}
\label{eq:general_valid_weight_condition}
\mathbb{E}\!\left[
    w_\tau(i)\,
    \mathbb{I}\{T_i<\fhat_\tau(X_i)\}\,
    \mathbb{I}\{C_i\geq\fhat_\tau(X_i)\}
    \,\middle|\,
    X_i,H_i,T_i,\Dcalone,\Dtrain
\right]
=
\mathbb{I}\{T_i<\fhat_\tau(X_i)\}.
\end{equation}

We further assume that for every $i\in\calItwo$ the weights are formulated as
\begin{equation}
\label{eq:latent_weight}
    w_\tau(i):=\left(\prod_{t=1}^{\min\{\qprior(X_i),T_i\}}P_i(t) \right)^{-1}.
\end{equation}
We assume that $\prod_{t=1}^{\min\{\qprior(X_i),T_i\}}P_i(t) >0$ almost surely. Although this latent weight need not be computable for a sample censored before its event or prior horizon, its value is required by the estimator only when
\[
\mathbb{I}\{T_i<\fhat_\tau(X_i)\}
\mathbb{I}\{C_i\geq\fhat_\tau(X_i)\}=1,
\]
in which case Algorithm~\ref{alg:main_alg} computes exactly $\pi_i^{-1}$.

Assume the following conditions hold:
\begin{enumerate}
    \item \textbf{(Mutual independence)}
        For all $i \in \calItwo$:
        $({C}_i, T_i) \perp \{({C}_j, T_j)\}_{j \in \calItwo \setminus \{i\}} \mid \{(X_k, H_k, T_k)\}_{k \in \calItwo},\, \Dcalone$.
    \item \textbf{(Marginal independence of calibration data)}
        The marginal law of $(X_i, T_i, H_i)$ for $i \in \calItwo$ is independent
        of $\Dcalone$ given $\Dtrain$, and $\{(X_i, T_i, H_i)\}_{i \in \calItwo}$
        are mutually independent given $(\Dcalone, \Dtrain)$.
    \item \textbf{(Bounded mean weight)} There exists a constant $\bar{w} \geq 1$, which may depend on
        $\Dtrain$ but not on $\Dcalone$, such that almost surely over $\Dcalone$:
        \begin{equation*}
            \mathbb{E}[w_\tau(i) \mid \Dcalone, \Dtrain] \leq \bar{w}
            \quad \text{for all } i \in \calItwo.
        \end{equation*}
\end{enumerate}
Define the estimated miscoverage rate and calibrated quantile level by:
\begin{equation*}
    \hat{\alpha}(\tau)
    := \frac{1}{|\calI|} \sum_{i \in \calI}
        w_\tau(i)\,
        \mathbb{I}\!\left\{\tilde{T}_i < \fhat_{\tau}(X_i) \leq C_i\right\},
    \qquad
    \hat{\tau}
    := \sup\!\left\{
        \tau \in \mathcal{T}
        \;:\;
        \sup_{\tau' \leq \tau} \hat{\alpha}(\tau') \leq \alpha
    \right\}.
\end{equation*}
We remark that for $\hat{\tau}$ to be well defined, we assume there exists $\tau' \in \mathcal{T}=[0,\tauprior]$ such that $\hat{\alpha}(\tau') \leq \alpha$. This assumption can be trivially satisfied by including $\tau=0^+$ in $\mathcal{T}$ with $\fhat_0(x)=0$, which yields $\hat{\alpha}(0) = 0 \leq \alpha$.

Define the coverage gap:
\begin{equation*}
    \Delta
    := \frac{\log(1/\delta)}{3|\calI|}
    + \sqrt{
        \frac{\log^2(1/\delta)}{9|\calI|^2}
        +
        \frac{2\bigl(\bar{w} - \alpha^2\bigr)\log(1/\delta)}{|\calI|}
    }.
\end{equation*}
Then, with probability at least $1 - \delta$ over the draw of
$\Dall = (\Dtrain, \Dcal)$, the lower predictive bound
$\hat{L}(x) = \fhat_{\hat{\tau}}(x)$ satisfies:
\begin{equation*}
    \mathbb{P}\!\left[
        \Ttest \geq \hat{L}(\Xtest)
        \,\middle|\,
        \Dall
    \right]
    \;\geq\;
    1 - \alpha - \Delta.
\end{equation*}
\end{theorem}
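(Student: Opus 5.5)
The plan is the standard PAC argument for monotone calibration: first fix an oracle quantile level that depends only on $\Dtrain$, then show with a one-sided Bernstein inequality that the estimator $\hat\alpha$ is unlikely to fall far below the true miscoverage at that level. Conditional on $\Dtrain$, write $\alpha(\tau):=\mathbb{P}[\Ttest<\fhat_\tau(\Xtest)\mid\Dtrain]$. This function is non-decreasing in $\tau$ by monotonicity of $\fhat_\tau$, and it is right-continuous or otherwise well behaved by the continuity assumption. Define the oracle level $\tau^*:=\inf\{\tau\in\mathcal{T}:\alpha(\tau)>\alpha+\Delta\}$; if this set is empty, coverage holds trivially for every $\tau\in\mathcal{T}$. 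By monotonicity, the event $\{\alpha(\hat\tau)>\alpha+\Delta\}$ implies $\hat\tau\geq\tau^*$, and by the prefix rule $\sup_{\tau'\le\hat\tau}\hat\alpha(\tau')\le\alpha$ this forces $\hat\alpha(\tau^*)\le\alpha$ (with a limiting argument at $\tau^*$ using continuity if the infimum is not attained). It therefore suffices to show $\mathbb{P}(\hat\alpha(\tau^*)\le\alpha \text{ and } \alpha(\tau^*)\ge\alpha+\Delta)\le\delta$.

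For the concentration step at the fixed level $\tau^*$, which is measurable with respect to $\Dtrain$, set $Z_i:=w_{\tau^*}(i)\mathbb{I}\{T_i<\fhat_{\tau^*}(X_i)\}\mathbb{I}\{C_i\ge\fhat_{\tau^*}(X_i)\}$. Note that $\tilde T_i<\fhat\le C_i$ is the same event as $T_i<\fhat\le C_i$. By \eqref{eq:general_valid_weight_condition} and the tower property, $\mathbb{E}[Z_i\mid\Dcalone,\Dtrain]=\alpha(\tau^*)$ for $i\in\calItwo$. For $i\in\calIone$, $Z_i$ is an indicator with mean $\alpha(\tau^*)$ given $\Dtrain$. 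For the second moment, the indicator structure gives $Z_i^2=w_{\tau^*}(i)Z_i\le w_{\tau^*}(i)\mathbb{I}\{C_i\ge\cdots\}$. Conditioning on $(X_i,H_i,T_i)$ and using that $\mathbb{P}(C_i\ge\cdot\mid\cdot)=w^{-1}$ on the relevant event, exactly as in Proposition~\ref{prop:valid_weights}, yields $\mathbb{E}[Z_i^2\mid\Dcalone,\Dtrain]\le\mathbb{E}[w_{\tau}(i)\mid\Dcalone,\Dtrain]\le\bar w$. Hence $\mathrm{Var}(Z_i)\le\bar w-\alpha(\tau^*)^2\le\bar w-\alpha^2$, since $\alpha(\tau^*)\ge\alpha$. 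For $i\in\calIone$ the variance is at most $1-\alpha^2\le\bar w-\alpha^2$.

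Independence needs care. I would condition first on $\Dtrain$ and treat the $\calIone$ block, then condition on $(\Dcalone,\Dtrain)$ and use assumptions 1 and 2, which make the $Z_i$ for $i\in\calItwo$ conditionally independent with common conditional mean $\alpha(\tau^*)$. Because the mean does not depend on $\Dcalone$, the centered summands $Z_i-\alpha(\tau^*)$ form a martingale difference sequence when ordered $\calIone$ first, then $\calItwo$. Each difference satisfies $-(Z_i-\alpha(\tau^*))\le\alpha(\tau^*)\le1$, so the lower deviation is bounded by $1$, while the conditional variances are at most $\bar w-\alpha^2$ almost surely. A one-sided Bernstein/Freedman inequality for the lower tail then gives $\mathbb{P}\bigl(\frac1{|\calI|}\sum_i(Z_i-\alpha(\tau^*))\le-\epsilon\bigr)\le\exp\!\bigl(-\frac{|\calI|\epsilon^2}{2(\bar w-\alpha^2)+2\epsilon/3}\bigr)$. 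This uses only the lower tail, which is why the unbounded weights are harmless. Setting the right side equal to $\delta$ and solving the quadratic in $\epsilon$ gives exactly $\epsilon=\Delta$.

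The main obstacle is this concentration step. One point is justifying the Bernstein bound when the upper increments are unbounded: the one-sided version with bounded lower deviations must be invoked, for instance via $e^{-\lambda x}\le1-\lambda x+\lambda^2x^2/2$ for $x\ge-1$. The other point is handling the fact that the $\calItwo$ weights depend on $\Dcalone$ through $M_t$, which the martingale ordering addresses. Finally, conclude that on the complement event $\alpha(\hat\tau)\le\alpha+\Delta$, and therefore $\mathbb{P}[\Ttest\ge\hat L(\Xtest)\mid\Dall]=1-\alpha(\hat\tau)\ge1-\alpha-\Delta$. Here $\alpha(\hat\tau)$ is evaluated conditionally on $\Dall$; this is legitimate because the test point is independent and $\hat\tau$ is $\Dall$-measurable.
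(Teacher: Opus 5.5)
Your overall route matches the paper's. You fix an oracle level using $\Dtrain$, use the prefix rule to reduce to a single-level lower-tail event, and apply a one-sided Bernstein bound whose variance proxy $\bar w-\alpha^2$ comes from $\mathbb{E}[Z_i^2\mid X_i,H_i,T_i,\Dcalone,\Dtrain]=w_\tau(i)\mathbb{I}\{T_i<\fhat_\tau(X_i)\}$. Your martingale ordering only repackages the paper's steps. The paper conditions on $\Dcalone$, bounds the $\calItwo$ moment generating function by a $\Dcalone$-free constant, and then bounds the $\calIone$ one. Because the conditional means and variance bounds are deterministic, Freedman's inequality reduces to exactly that computation.

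\textbf{The gap.} The reduction to the level $\tau^*$ itself fails as written. The map $\tau\mapsto\alpha(\tau)=\mathbb{P}(T<\fhat_\tau(X)\mid\Dtrain)$ is left-continuous, not right-continuous. As $\tau\uparrow\tau_0$, the events $\{T<\fhat_\tau(X)\}$ increase to $\{T<\fhat_{\tau_0}(X)\}$. From the right, however, $\alpha$ can jump by $\mathbb{P}(T=\fhat_{\tau_0}(X))$, because $T$ is integer-valued; take $\fhat_\tau(x)=10\tau$, for instance. Hence the set $\{\tau:\alpha(\tau)>\alpha+\Delta\}$ is open at its infimum, and $\alpha(\tau^*)\le\alpha+\Delta$, typically strictly. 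The condition ``$\hat\alpha(\tau^*)\le\alpha$ and $\alpha(\tau^*)\ge\alpha+\Delta$'' is then typically empty, so bounding its probability proves nothing. Concentrating at $\tau^*$ is also useless, since its mean may lie below $\alpha$.

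\textbf{The repair.} This is what the paper does: show $\mathbb{P}(\hat\tau>\tau^*\mid\Dtrain)\le\delta$ as follows.
\begin{enumerate}
\item For $\varepsilon>0$, set $\lambda=\tau^*+\varepsilon$, so that $\alpha(\lambda)>\alpha+\Delta$.
\item By the prefix rule, $\{\hat\tau>\lambda\}\subseteq\{\hat\alpha(\lambda)\le\alpha\}$. Your Bernstein bound at this fixed $\lambda$ then gives probability at most $\delta$.
\item Let $\varepsilon\downarrow0$, using continuity of the probability measure along the increasing events $\{\hat\tau>\tau^*+\varepsilon\}$. This uses continuity of the measure, not continuity of $\alpha(\cdot)$.
\item On $\{\hat\tau\le\tau^*\}$, monotonicity and left-continuity give $\alpha(\hat\tau)\le\alpha(\tau^*)\le\alpha+\Delta$, which is the claim.
\end{enumerate}

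\textbf{A smaller slip.} The inequality $e^{-tx}\le1-tx+t^2x^2/2$ is false for $x\in[-1,0)$. The one-sided Bernstein step instead needs $e^{-tx}\le1-tx+\frac{t^2x^2}{2(1-t/3)}$ for $x\ge-1$ and $0<t<3$. This factor is also where the $2\epsilon/3$ in your exponent comes from.
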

\begin{proof}
We define the oracle miscoverage level by:
\begin{equation}
    \tau(\alpha + \Delta) = \sup \left\{\lambda \in \mathcal{T} :
    \mathbb{P} \left(T < \fhat_\lambda(X) \mid \Dtrain \right) \leq \alpha + \Delta \right\}.
\end{equation}
The above set is non-empty since $\tau = 0$ satisfies $\mathbb{P}(T < \fhat_0(X)\mid\Dtrain) = \mathbb{P}(T < 0\mid\Dtrain) = 0 \leq \alpha + \Delta$ by the convention $\fhat(x) = 0$.

Observe that it suffices to show
$1 - \delta \leq \mathbb{P}(\hat{\tau} \leq \tau(\alpha + \Delta) \mid \Dtrain)$,
since on the event $\{\hat{\tau} \leq \tau(\alpha + \Delta)\}$, the monotonicity of $\fhat_\tau$ in $\tau$ and the left-continuity of
$\tau \mapsto \mathbb{P}(T \geq \fhat_{\tau}(X) \mid \Dall)$ give:
\begin{equation}
    \mathbb{P}(T \geq \fhat_{\hat{\tau}}(X) \mid \Dall)
    \geq \mathbb{P}(T \geq \fhat_{\tau(\alpha + \Delta)}(X) \mid \Dall)
    \geq 1 - \alpha - \Delta.
\end{equation}
% holds with probability at least $1-\delta$.
% Under the monotonicity of $\fhat_\tau$ in $\tau$ and the left-continuity of
% $\tau \mapsto \mathbb{P}(T \geq \fhat_{\tau}(X) \mid \Dall)$, we obtain that
% with probability at least $1-\delta$:
It therefore remains to show
$\mathbb{P}(\hat{\tau} \leq \tau(\alpha+\Delta) \mid \Dtrain) \geq 1-\delta$.
Fix $\varepsilon > 0$ and set $\lambda := \tau(\alpha+\Delta)+\varepsilon$.
We claim that $\{\hat{\alpha}(\lambda) > \alpha\} \subseteq \{\hat{\tau} \leq \lambda\}$.
Indeed, if $\hat{\alpha}(\lambda) > \alpha$, then for every $\tau \geq \lambda$,
$\sup_{\tau' \leq \tau}\hat{\alpha}(\tau') \geq \hat{\alpha}(\lambda) > \alpha$,
so no $\tau \geq \lambda$ belongs to the set
$\{\tau \in \mathcal{T} : \sup_{\tau' \leq \tau}\hat{\alpha}(\tau') \leq \alpha\}$,
hence $\hat{\tau} = \sup\{\tau \in \mathcal{T} : \sup_{\tau'\leq\tau}\hat{\alpha}(\tau')\leq\alpha\}
\leq \lambda$.
It therefore suffices to show:
\begin{equation}\label{eq:suffices}
    \mathbb{P}(\hat{\alpha}(\lambda) \leq \alpha \mid \Dtrain) \leq \delta,
\end{equation}
since this gives $\mathbb{P}(\hat{\tau} \leq \lambda \mid \Dtrain) \geq 1-\delta$
for every $\varepsilon > 0$.
Taking $\varepsilon \to 0^+$ and applying continuity of the probability measure
to the decreasing family of events $\{\hat{\tau} \leq \tau(\alpha+\Delta)+\varepsilon\}$
(whose intersection equals $\{\hat{\tau} \leq \tau(\alpha+\Delta)\}$) gives:
\begin{equation}
    \mathbb{P}(\hat{\tau} \leq \tau(\alpha+\Delta) \mid \Dtrain)
    = \lim_{\varepsilon\to 0^+}
    \mathbb{P}(\hat{\tau} \leq \tau(\alpha+\Delta)+\varepsilon \mid \Dtrain)
    \geq 1-\delta,
\end{equation}
as required. We define:
\begin{align}
    S_\lambda &:= \sum_{i \in \calI} w_\lambda(i)\,
        \mathbb{I}\{T_i < \fhat_\lambda(X_i)\}\,
        \mathbb{I}\{C_i \geq \fhat_\lambda(X_i)\}, \\
    S^{(1)}_\lambda &:= \sum_{i \in \calIone} w_\lambda(i)\,
        \mathbb{I}\{T_i < \fhat_\lambda(X_i)\}\,
        \mathbb{I}\{C_i \geq \fhat_\lambda(X_i)\}, \\
    S^{(2)}_\lambda &:= \sum_{i \in \calItwo} w_\lambda(i)\,
        \mathbb{I}\{T_i < \fhat_\lambda(X_i)\}\,
        \mathbb{I}\{C_i \geq \fhat_\lambda(X_i)\},
\end{align}
so that $S_\lambda = S^{(1)}_\lambda + S^{(2)}_\lambda$ and
$\hat{\alpha}(\lambda) = S_\lambda / |\calI|$.

By construction, for $i\in\calIone$, $C_i=\qprior(X_i)$ and $w_\lambda(i)=1$ deterministically. Since $\fhat_\lambda(X_i)\leq\qprior(X_i)$,
$\mathbb{I}\{C_i\geq\fhat_\lambda(X_i)\}=1$ deterministically. This gives us:
\begin{equation}\label{eq:S1_simplification}
    S^{(1)}_\lambda
    = \sum_{i \in \calIone} \mathbb{I}\{T_i < \fhat_\lambda(X_i)\}
    =: N^{(1)}_\lambda.
\end{equation}
Lastly, we set
$A := \bar{w} - \alpha^2 \geq 1 - \alpha^2 > 0$ for $\alpha \in (0,1)$.

\paragraph{Step 1: Chernoff bound given $\Dcalone$.}

By the tower property:
\begin{equation}\label{eq:tower}
    \mathbb{P}(\hat{\alpha}(\lambda) \leq \alpha \mid \Dtrain)
    = \mathbb{E}\!\left[
        \mathbb{P}(N^{(1)}_\lambda + S^{(2)}_\lambda \leq |\calI|\alpha
        \mid \Dcalone, \Dtrain)
        \,\middle|\, \Dtrain
    \right].
\end{equation}
Conditioning on $\Dcalone$ makes $N^{(1)}_\lambda$ a deterministic non-negative integer. Notice that $\bar{w}$ is also deterministic given $\Dtrain$ by its definition.
Since $N^{(1)}_\lambda$ is deterministic given $\Dcalone$,
Markov's inequality applied to the non-negative random variable
$\exp(-t(N^{(1)}_\lambda + S^{(2)}_\lambda - |\calI|\alpha)) \geq 0$
gives, for any $t > 0$:
\begin{equation}\label{eq:chernoff}
    \mathbb{P}(N^{(1)}_\lambda + S^{(2)}_\lambda \leq |\calI|\alpha \mid \Dcalone, \Dtrain)
    \leq e^{t|\calI|\alpha - t N^{(1)}_\lambda}
    \cdot \mathbb{E}\!\left[e^{-t S^{(2)}_\lambda} \mid \Dcalone, \Dtrain\right].
\end{equation}

\paragraph{Step 2: Bounding $\mathbb{E}[e^{-tS^{(2)}_\lambda} \mid \Dcalone, \Dtrain]$.}

For $i \in \calItwo$, define
$V_i := w_\lambda(i)\,\mathbb{I}\{T_i < \fhat_\lambda(X_i)\}\,
\mathbb{I}\{C_i \geq \fhat_\lambda(X_i)\}$,
so $S^{(2)}_\lambda = \sum_{i\in\calItwo} V_i$.

\textit{Mutual independence of $\{V_i\}_{i\in\calItwo}$ given $(\Dcalone,\Dtrain)$.}
By the i.i.d. assumption, $\{(X_i,T_i,H_i)\}_{i\in\calItwo}$ are mutually independent
given $(\Dcalone,\Dtrain)$ (the global i.i.d.\ assumption gives cross-$i$ independence,
and the marginal law of each $(X_i,T_i,H_i)$ is independent of $\Dcalone$ given $\Dtrain$).
By the mutual independence assumption, given $\{(X_k,H_k,T_k)\}_{k\in\calItwo}$ and $\Dcalone$,
the censoring times $\{C_i\}_{i\in\calItwo}$ are mutually independent.
Combining these via the tower property: for any measurable
$f_i(X_i,T_i,H_i,{C}_i)$ and any finite $S\subseteq\calItwo$,
\begin{align}
    &\mathbb{E}\!\left[\prod_{i\in S}f_i(X_i,T_i,H_i,{C}_i)
    \,\middle|\,\Dcalone,\Dtrain\right] \notag\\
    &= \mathbb{E}\!\left[
        \mathbb{E}\!\left[
            \prod_{i\in S}f_i(X_i,T_i,H_i,{C}_i)
            \,\middle|\,\{(X_k,H_k,T_k)\}_{k\in\calItwo},\Dcalone,\Dtrain
        \right]
        \,\middle|\,\Dcalone,\Dtrain
    \right] \notag\\
    &= \mathbb{E}\!\left[
        \prod_{i\in S}
        \mathbb{E}\!\left[
            f_i(X_i,T_i,H_i,{C}_i)
            \,\middle|\,X_i,T_i,H_i,\Dcalone,\Dtrain
        \right]
        \,\middle|\,\Dcalone,\Dtrain
    \right] \notag\\
    &= \prod_{i\in S}\mathbb{E}\!\left[
        f_i(X_i,T_i,H_i,{C}_i)
        \,\middle|\,\Dcalone,\Dtrain
    \right],\label{eq:independence_Vi}
\end{align}
where the second equality uses mutual independence of $\{{C}_i\}_{i\in\calItwo}$
given $\{(X_k,H_k,T_k)\}_{k\in\calItwo}$ and $\Dcalone$,
and the third uses mutual independence of $\{(X_i,T_i,H_i)\}_{i\in\calItwo}$
given $(\Dcalone,\Dtrain)$.
Since $V_i$ is a measurable function of $(X_i,T_i,H_i,{C}_i)$,
\eqref{eq:independence_Vi} establishes mutual independence of
$\{V_i\}_{i\in\calItwo}$ given $(\Dcalone,\Dtrain)$.

\textit{Mean of $V_i$.}
By the tower property and~\eqref{eq:general_valid_weight_condition},
\begin{align}
\mathbb{E}[V_i\mid\Dcalone,\Dtrain]
&=
\mathbb{E}\!\left[
    \mathbb{E}\!\left[
        V_i
        \,\middle|\,
        X_i,H_i,T_i,\Dcalone,\Dtrain
    \right]
    \,\middle|\,
    \Dcalone,\Dtrain
\right]
\nonumber\\
&=
\mathbb{E}\!\left[
    \mathbb{I}\{T_i<\fhat_\lambda(X_i)\}
    \,\middle|\,
    \Dcalone,\Dtrain
\right]
\nonumber\\
&=
\mathbb{P}\!\left(
    T<\fhat_\lambda(X)
    \,\middle|\,
    \Dtrain
\right)
=:P_\lambda,
\label{eq:mean_V}
\end{align}
where the last equality follows from the i.i.d.\ assumption and
the independence of the marginal law of $(X_i,T_i,H_i)$ from
$\Dcalone$ given $\Dtrain$.

\textit{Second moment and variance of $V_i$.}

Define
\[
A_{i,\lambda}
:=
\mathbb{I}\{T_i<\fhat_\lambda(X_i)\},
\qquad
R_i
:=
\chat_i\!\left(\min\{\qprior(X_i),T_i\}\right)
\mathbb{I}\{T_i<\qprior(X_i)\}.
\]
By Proposition~\ref{prop:valid_weights},
\[
\mathbb{I}\{T_i<\fhat_\lambda(X_i)\}
\mathbb{I}\{C_i\geq\fhat_\lambda(X_i)\}
=
A_{i,\lambda}R_i
\qquad\text{almost surely},
\]
and therefore
\[
V_i=w_\lambda(i)A_{i,\lambda}R_i.
\]
Since $A_{i,\lambda}$ and $R_i$ are indicator variables, and
$w_\lambda(i)$ and $A_{i,\lambda}$ are fixed conditional on
$(X_i,H_i,T_i,\Dcalone,\Dtrain)$,
\begin{align}
&\mathbb{E}\!\left[
    V_i^2
    \,\middle|\,
    X_i,H_i,T_i,\Dcalone,\Dtrain
\right]
\nonumber\\
&\qquad=
w_\lambda(i)^2 A_{i,\lambda}
\mathbb{E}\!\left[
    R_i
    \,\middle|\,
    X_i,H_i,T_i,\Dcalone,\Dtrain
\right]
\nonumber\\
&\qquad=
w_\lambda(i)^2 A_{i,\lambda}
\mathbb{I}\{T_i<\qprior(X_i)\}
\prod_{t=1}^{\min\{\qprior(X_i),T_i\}}P_i(t)
\nonumber\\
&\qquad=
w_\lambda(i)A_{i,\lambda}
\mathbb{I}\{T_i<\qprior(X_i)\}
\nonumber\\
&\qquad=
w_\lambda(i)A_{i,\lambda},
\label{eq:valid_weights_second_moment}
\end{align}
where the third equality uses
\[
w_\lambda(i)
=
\left(
\prod_{t=1}^{\min\{\qprior(X_i),T_i\}}P_i(t)
\right)^{-1},
\]
and the last equality follows from
\[
A_{i,\lambda}
\mathbb{I}\{T_i<\qprior(X_i)\}
=
A_{i,\lambda},
\]
since
\[
T_i<\fhat_\lambda(X_i)
\quad\Longrightarrow\quad
T_i<\qprior(X_i)
\]
for every $\lambda\leq\tauprior$.

Taking conditional expectation with respect to
$(\Dcalone,\Dtrain)$ and applying the bounded mean-weight
assumption gives
\begin{align}
\mathbb{E}[V_i^2\mid\Dcalone,\Dtrain]
&=
\mathbb{E}\!\left[
    w_\lambda(i)A_{i,\lambda}
    \,\middle|\,
    \Dcalone,\Dtrain
\right]
\nonumber\\
&\leq
\mathbb{E}\!\left[
    w_\lambda(i)
    \,\middle|\,
    \Dcalone,\Dtrain
\right]
\leq\bar w.
\label{eq:second_moment_bound}
\end{align}
Consequently,
\begin{align}
\operatorname{Var}(V_i\mid\Dcalone,\Dtrain)
&=
\mathbb{E}[V_i^2\mid\Dcalone,\Dtrain]
-
\mathbb{E}[V_i\mid\Dcalone,\Dtrain]^2
\nonumber\\
&\leq
\bar w-P_\lambda^2
\nonumber\\
&\leq
\bar w-\alpha^2
=:A,
\label{eq:variance_V}
\end{align}
where the last inequality follows from
$P_\lambda>\alpha+\Delta>\alpha$.

\textit{One-sided Bernstein MGF bound for $S^{(2)}_\lambda$.}
Define mean-zero variables $Y_i^{(2)} := P_\lambda - V_i$ for $i \in \calItwo$.
Since $V_i \geq 0$, we have $Y_i^{(2)} \leq P_\lambda \leq 1$.

Since $Y_i^{(2)} := P_\lambda - V_i$ and $P_\lambda = \mathbb{E}[V_i \mid \Dcalone, \Dtrain]$
is deterministic given $(\Dcalone, \Dtrain)$, we have:
\begin{equation}
\begin{split} 
    \mathrm{Var}(Y_i^{(2)} \mid \Dcalone, \Dtrain)
    &= \mathrm{Var}(P_\lambda - V_i \mid \Dcalone, \Dtrain)\\
    &= \mathrm{Var}(V_i \mid \Dcalone, \Dtrain)\\
     &= \mathbb{E}[V_i^2 \mid \Dcalone, \Dtrain] - P_\lambda^2 \\
    &\leq \bar{w} - P_\lambda^2 \\
    &\leq \bar{w} - \alpha^2
    =: A
\end{split}
\end{equation}
where the second equality uses the fact that adding or subtracting a constant
does not change variance and the last inequality follows from the definition of the supremum $\tau(\alpha+\Delta)$ and since $\lambda = \tau(\alpha+\Delta)+\varepsilon > \tau(\alpha+\Delta)$, we have $ P_\lambda > \alpha + \Delta$.

We apply~\cite[Proposition~2.14]{wainwright2019high} for $Y_i^{(2)}$ with $Y_i^{(2)} \leq 1$ a.s., and $$\mathbb{E}\left[\left(Y_i^{(2)}\right)^2 \mid \Dcalone, \Dtrain\right] = \mathrm{Var}\left[Y_i^{(2)} \mid \Dcalone, \Dtrain \right] \leq A$$ to get:
\begin{equation}\label{eq:bernstein_S2}
    \mathbb{E}\!\left[
        \exp\!\left(t\sum_{i\in\calItwo} Y_i^{(2)}\right)
        \,\middle|\, \Dcalone, \Dtrain
    \right]
    \leq \exp\!\left(\frac{t^2 |\calItwo| A / 2}{1 - t/3}\right),
    \quad 0 < t < 3.
\end{equation}
Since $S^{(2)}_\lambda = |\calItwo|P_\lambda - \sum_{i\in\calItwo} Y_i^{(2)}$:
\begin{equation}\label{eq:MGF_S2}
    \mathbb{E}\!\left[e^{-tS^{(2)}_\lambda} \mid \Dcalone, \Dtrain\right]
    \leq \exp\!\left(
        -t|\calItwo|P_\lambda
        + \frac{t^2 |\calItwo| A / 2}{1 - t/3}
    \right),
    \quad 0 < t < 3.
\end{equation}
The right-hand side of \eqref{eq:MGF_S2} depends only on
the deterministic quantities $P_\lambda$, $A$, $|\calItwo|$, and $t$,
and in particular does not depend on $\Dcalone$.

\paragraph{Step 3: Bounding $\mathbb{E}[e^{-tN^{(1)}_\lambda} \mid \Dtrain]$.}

Under our assumptions, $\{(X_i,T_i)\}_{i\in\calIone}$ are i.i.d.\ with
marginal law independent of $\Dcaltwo$ given $\Dtrain$.
Hence $Z_i^{(1)} := \mathbb{I}\{T_i < \fhat_\lambda(X_i)\}
\sim \mathrm{Bernoulli}(P_\lambda)$ are i.i.d.\ given $\Dtrain$,
and $N^{(1)}_\lambda = \sum_{i\in\calIone} Z_i^{(1)}$.

Define mean-zero $U_i := P_\lambda - Z_i^{(1)}$ for $i \in \calIone$,
so $\{U_i\}_{i\in\calIone}$ are i.i.d.\ given $\Dtrain$.
Since $Z_i^{(1)} \geq 0$, $U_i \leq P_\lambda \leq 1$,
so the one-sided bound $U_i \leq 1$ holds.

\textit{Variance of $U_i$ bounded by $A$.}
Since $P_\lambda\leq1$, $\bar w\geq1$, and
$P_\lambda>\alpha$, we have
\begin{equation}
\label{eq:var_U}
\operatorname{Var}(U_i\mid\Dtrain)
=
P_\lambda(1-P_\lambda)
=
P_\lambda-P_\lambda^2
\leq
\bar w-\alpha^2
=
A.
\end{equation}
We apply~\cite[Proposition~2.14]{wainwright2019high} with $b=1$ and $\nu=A$ to each i.i.d.\ $U_i$, summing over $|\calIone|$ terms,
and writing $N^{(1)}_\lambda = |\calIone|P_\lambda - \sum_{i\in\calIone}U_i$:
\begin{equation}\label{eq:MGF_N1}
\begin{split}
    \mathbb{E}\!\left[e^{-tN^{(1)}_\lambda} \mid \Dtrain\right]
    &= e^{-t|\calIone|P_\lambda}
    \cdot \mathbb{E}\!\left[
        \exp\!\left(t\sum_{i\in\calIone} U_i\right)
        \,\middle|\, \Dtrain
    \right]\\
    & \leq \exp\!\left(
        -t|\calIone|P_\lambda
        + \frac{t^2|\calIone|A/2}{1 - t/3}
    \right),
    \quad 0 < t < 3.
\end{split}
\end{equation}
\paragraph{Step 4: Combining the bounds.}
Applying \eqref{eq:chernoff} pointwise in $\calIone$,
then taking the outer expectation over $\calIone$ via \eqref{eq:tower}:
\begin{equation}
    \mathbb{P}(\hat{\alpha}(\lambda) \leq \alpha \mid \Dtrain)
    \leq e^{t|\calI|\alpha}
    \cdot \mathbb{E}\!\left[
        e^{-tN^{(1)}_\lambda}
        \cdot \mathbb{E}\!\left[
            e^{-tS^{(2)}_\lambda} \mid \Dcalone, \Dtrain
        \right]
        \,\middle|\, \Dtrain
    \right].
\end{equation}
Since the upper bound in \eqref{eq:MGF_S2} is deterministic
(does not depend on $\Dcalone$), it factors out of the outer expectation:
\begin{equation}
    \mathbb{P}(\hat{\alpha}(\lambda) \leq \alpha \mid \Dtrain)
    \leq e^{t|\calI|\alpha}
    \cdot \exp\!\left(
        -t|\calItwo|P_\lambda + \frac{t^2|\calItwo|A/2}{1-t/3}
    \right)
    \cdot \mathbb{E}\!\left[e^{-tN^{(1)}_\lambda} \mid \Dtrain\right].
\end{equation}
Applying \eqref{eq:MGF_N1} and using $|\calIone| + |\calItwo| = |\calI|$:
\begin{equation}\label{eq:combined}
\begin{split}
    \mathbb{P}(\hat{\alpha}(\lambda) \leq \alpha \mid \Dtrain)
    &\leq \exp\!\left(
        t|\calI|\alpha
        - t|\calItwo|P_\lambda
        - t|\calIone|P_\lambda
        + \frac{t^2(|\calItwo|+|\calIone|)A/2}{1-t/3}
    \right) \\
    &= \exp\!\left(
        -t|\calI|(P_\lambda - \alpha)
        + \frac{t^2|\calI|A/2}{1-t/3}
    \right) \\
    &\leq \exp\!\left(
        -t|\calI|\Delta
        + \frac{t^2|\calI|A/2}{1-t/3}
    \right)
    =: \exp(\mathcal{E}(t)),
\end{split}
\end{equation}
for $0 < t < 3$,
where the last inequality uses $P_\lambda - \alpha \geq \Delta$
(since $\lambda > \tau(\alpha+\Delta)$ implies
$P_\lambda = \mathbb{P}(T < \fhat_\lambda(X) \mid \Dtrain) \geq \alpha+\Delta$,
by definition of $\tau(\alpha+\Delta)$ and continuity of $\fhat_\tau$).
\paragraph{Step 5: Evaluation at the Bernstein point and solving for $\Delta$.}

Set $N := |\calI|$, and recall $A = \bar{w} - \alpha^2 > 0$.
We evaluate $\mathcal{E}$ at:
\begin{equation}\label{eq:tstar}
    t^* := \frac{\Delta}{A + \Delta/3}.
\end{equation}
Notice that $t^* \in(0, 3)$ since $\Delta,A > 0$.
Substituting into $\mathcal{E}(t^*)$:
\begin{equation}\label{eq:Estar}
\begin{split}
    \mathcal{E}(t^*)
    &= -\frac{N\Delta^2}{A+\Delta/3}
    + \frac{NA}{2}\cdot\frac{\Delta^2}{(A+\Delta/3)^2}\cdot\frac{1}{A / (A + \Delta/3)}\\
    &= -\frac{N\Delta^2}{A+\Delta/3}
    + \frac{NA}{2}\cdot\frac{\Delta^2}{A(A+\Delta/3)}\\
    &= -\frac{N\Delta^2}{A+\Delta/3}
    + \frac{N\Delta^2}{2(A+\Delta/3)} \\
    &= -\frac{N\Delta^2}{2(A+\Delta/3)}
\end{split}
\end{equation}
Therefore, 
\begin{equation}\label{eq:condition}
    \exp(\mathcal{E}(t^*)) \leq \delta \iff \frac{N\Delta^2}{2(A + \Delta/3)} \geq \log\!\left(\frac{1}{\delta}\right).
\end{equation}
Multiplying both sides by $2(A+\Delta/3) > 0$ and rearranging:
\begin{equation}\label{eq:quadratic}
    |\calI|\Delta^2
    - \frac{2}{3}\log(1/\delta)\,\Delta
    - 2(\bar{w} - \alpha^2)\log\!\left(\frac{1}{\delta}\right)
    = 0.
\end{equation}
This quadratic in $\Delta$ has positive leading coefficient $|\calI|>0$
and negative constant term $-2A\log(1/\delta)<0$.
By Descartes' rule of signs, it has exactly one positive root:
\begin{equation}\label{eq:Delta_final}
    \Delta
    := \frac{\log(1/\delta)}{3|\calI|}
    + \sqrt{
        \frac{\log^2(1/\delta)}{9|\calI|^2}
        + \frac{2(\bar{w} - \alpha^2)\log(1/\delta)}{|\calI|}
    }.
\end{equation}
For the above $\Delta$, condition~\eqref{eq:condition}
holds with equality, so
$\mathbb{P}(\hat{\alpha}(\lambda)\leq\alpha\mid\Dtrain)
\leq \exp(\mathcal{E}(t^*)) \leq \delta$,
satisfying \eqref{eq:suffices}.

\paragraph{Step 6: Conclusion.}
Since \eqref{eq:suffices} holds for every $\varepsilon > 0$ with
$\lambda = \tau(\alpha+\Delta)+\varepsilon$,
we have $\mathbb{P}(\hat{\tau}\leq\lambda\mid\Dtrain)\geq 1-\delta$
for every $\varepsilon>0$. Taking $\varepsilon\to 0^+$ and applying
continuity of the probability measure to the decreasing family of events
$\{\hat{\tau}\leq\tau(\alpha+\Delta)+\varepsilon\}$,
whose intersection is $\{\hat{\tau}\leq\tau(\alpha+\Delta)\}$:
\begin{equation}
    \mathbb{P}\!\left(
        \hat{\tau} \leq \tau(\alpha+\Delta) \,\middle|\, \Dtrain
    \right)
    = \lim_{\varepsilon\to 0^+}
    \mathbb{P}\!\left(
        \hat{\tau} \leq \tau(\alpha+\Delta)+\varepsilon
        \,\middle|\, \Dtrain
    \right)
    \geq 1-\delta.
\end{equation}
On the event $\{\hat{\tau} \leq \tau(\alpha+\Delta)\}$,
monotonicity of $\fhat_\tau$ in $\tau$ gives
$\fhat_{\hat\tau}(x) \leq \fhat_{\tau(\alpha+\Delta)}(x)$
for all $x$, hence
$\{T \geq \fhat_{\tau(\alpha+\Delta)}(X)\}
\subseteq \{T \geq \fhat_{\hat\tau}(X)\}$.
Since $\fhat_{\tau(\alpha+\Delta)}$ is $\Dtrain$-measurable
and $(\Xtest, \Ttest)$ is independent of
$(\Dcalone,\Dcaltwo)$ given $\Dtrain$:
\begin{equation}
    \mathbb{P}\!\left[
        \Ttest \geq \fhat_{\hat{\tau}}(\Xtest)
        \,\middle| \Dall
    \right]
    \geq \mathbb{P}\!\left[
        \Ttest \geq
        \fhat_{\tau(\alpha+\Delta)}(\Xtest)
        \,\middle|\, \Dall
    \right]
    \geq 1 - \alpha - \Delta,
\end{equation}
where the last inequality uses the definition of $\tau(\alpha+\Delta)$
and left-continuity of $\tau\mapsto\mathbb{P}(T<\fhat_\tau(X)\mid\Dtrain)$,
which give
$\mathbb{P}(T<\fhat_{\tau(\alpha+\Delta)}(X)\mid\Dall)
\leq\alpha+\Delta$.
Notice that $\mathbb{P}(T<\fhat_{\tau(\alpha+\Delta)}(X)\mid\Dall) = \mathbb{P}(T<\fhat_{\tau(\alpha+\Delta)}(X)\mid\Dtrain)$ since $\fhat_{\tau(\alpha+\Delta)}$ is $\Dtrain$-measurable and $(\Xtest, \Ttest) \indep \Dcal \mid \Dtrain$.
This holds with probability at least $1-\delta$ over $\Dcal$ given $\Dtrain$, hence over $\Dall$, completing the proof.
\end{proof}

\subsection{Budget validity results}
\label{sec:budget_validity_results}
In this section, we prove that the expected budget used by Algorithm~\ref{alg:main_alg} is bounded by the nominal budget constraint $B$. 
Unless stated otherwise, all expectations and probabilities in this section are conditional on $\Dtrain$; we omit this explicit conditioning for notational convenience.
\subsubsection{Assumptions}
We begin by setting the following assumptions.
\begin{assumption}[i.i.d.\ data]\label{assump:iid}
The pairs $\{(S_k,P_k)\}_{k=1}^N$ are \emph{independent and
identically distributed}, each drawn from a joint distribution~$\nu$
on $\R^\maxl\times[0,1]^\maxl$. We denote by $\mu$ the marginal distribution of $S_k$ on $\R^\maxl$ and by $\mu_t$ the marginal distribution of the $t$-th coordinate $S_k(t)$ on $\R$, for $t\in\{1,\ldots,\maxl\}$.
\end{assumption}
\begin{assumption}[Split independence]\label{assump:split}
The data is randomly split by~$\pi$, which is a random variable, taking values in the collection of subsets of $\{1,\ldots,N\}$ of size~$N_1$, distributed uniformly over
all $\binom{N}{N_1}$ such subsets. The
random split $\pi$ is independent of the full dataset $\{(S_k,P_k)\}_{k=1}^N$.
\end{assumption}
\noindent Recall that for a given split $\pi$, Algorithm~\ref{alg:main_alg} splits the calibration data, indexed by $\{1,\dots,N\}$, as follows:
\begin{align*}
  \calIone   &:= \pi
    \subset \{1,\ldots,N\},\quad |\calIone|=N_1,\\
  \calItwo  &:= \{1,\ldots,N\}\setminus \calIone,
    \quad |\calItwo|=N_2,\\
  \Dcalone &:= \{(X_i,H_i, T_i)\}_{i\in\calIone}
    \qquad\text{(calibration dataset)},\\
  \St &:= S_j
    \quad\text{for a fixed }j\in\calItwo.
\end{align*}
By symmetry of the i.i.d.\ assumption and the uniform split, the choice of $j$ within $\calItwo$ does not affect any
expectation below. We re-index the calibration (cal1) samples as
$i\in\{1,\ldots,N_1\}$ for notational convenience.

\begin{assumption}[Oracle mapping]\label{assump:oracle}
For each $\beta \geq 0$ and each $t\in\{1,\ldots,\maxl\}$, there exists a measurable function
\[
  M_t^*(\,\cdot\,;\beta):\mathbb{R}\longrightarrow [0,1]
\]
that is monotonically increasing in its first argument.
For $s=(s(1),\ldots,s(\maxl))\in\R^\maxl$ we write
$M^*(s;\beta):=(M_1^*(s(1);\beta),\ldots,M_\maxl^*(s(\maxl);\beta))$.

Recall that $\Bbar = (B - B_1)/N_2$ is the remaining budget per sample for Phase~II, which is a deterministic function of $\Dcalone$.
For a given realization of $\Dcalone$ (and hence a given value of $\Bbar$),
the oracle mapping $M^*(\,\cdot\,;\Bbar)$ satisfies the following two conditions $\Dcalone$-almost surely:
\begin{enumerate}
  \item \textbf{(Pathwise oracle fit)} For all $i\in\{1,\ldots,N_1\}$:
  \[
    P_i(t) \;=\; M_t^*(S_i(t);\Bbar)
    \quad \mathbb{P}\text{-almost surely.}
  \]
  \item \textbf{(Pathwise population budget constraint)}
  \[
    \E_{S\sim\mu}\!\bigl[\bfunc(M^*(S;\Bbar))\bigr] \;\leq\; \Bbar.
  \]
\end{enumerate}

% For each $t\in\{1,\ldots,\maxl\}$ there exists a measurable function
% \[
%   M_t^*:\mathbb{R}\longrightarrow\;[0,1]
% \]
% that is monotonically increasing in its first
% argument, such that, for all $i\in\{1,\ldots,N_1\}$,
% \[
%   P_i(t) \;=\; M_t^*(S_i(t))
%   \quad \mathbb{P}\text{-almost surely.}
% \]
% For $s=(s(1),\ldots,s(\maxl))\in\R^\maxl$ we write
% $M^*(s):=(M_1^*(s(1)),\ldots,M_\maxl^*(s(\maxl)))$.
% Furthermore, we assume that the budget computed according to this oracle mapping is bounded by:
% \[
%   \E_{\Dcalone}\!\left[\frac{1}{N_1}\sum_{i=1}^{N_1}\bfunc(P_i)\right]
%   \;\leq\; \E_{\Dcalone} [\Bbar],
% \]
% where the expectation $\E_\Dcalone$ is taken over the joint distribution of the calibration dataset $\Dcalone$.
\end{assumption}

\begin{assumption}[Estimated mapping]\label{assump:est}
Given $\Dcalone$, the estimated mapping $M_t(\cdot;\Dcalone):\R\to[0,1]$ is
measurable and monotonically increasing for each
$t\in\{1,\ldots,\maxl\}$. Write
$M(s;\Dcalone):=(M_1(s(1);\Dcalone),\ldots,M_\maxl(s(\maxl);\Dcalone))$.

\end{assumption}

\begin{assumption}[vanishing cross-term moments]
\label{assump:joint_moments}
For each $t\in\{1,\ldots,\maxl\}$ and each non-empty subset
$A\subseteq\{1,\ldots,t\}$, the following holds for $\Dcalone$-almost every
realization:
\[
  \E_{\St}\!\left[
    \prod_{k\in A}\eps_k(\St(k);\Dcalone)
    \cdot
    \prod_{k\in\{1,\ldots,t\}\setminus A} M_k^*(\St(k))
  \;\Big|\;\Dcalone\right]
  \;=\; 0,
\]
where $\eps_k(s;\Dcalone) = M_k(s;\Dcalone) - M_k^*(s;\Bbar)$.
\end{assumption}
\noindent Conceptually, Assumption~\ref{assump:joint_moments} requires that these mixed products have zero conditional expectation over $\St$, given the calibration data $\Dcalone$. Intuitively, this implies that the projection errors $\{\eps_k\}_{k}$ are unbiased and mutually uncorrelated across time steps. We emphasize that this condition is not a mere mathematical artifact of our proof; rather, it is required to guarantee that the projection inaccuracies do not inflate the expected budget consumed during Phase II. 

In practice, there are two cases where this assumption is satisfied. First, it holds exactly in finite samples if the
coordinates of the score vector $\St$ are mutually independent, and the estimated mapping is coordinate-wise unbiased, i.e., $\E_{\St(k)} \left[\eps_k(\St(k);\Dcalone)\right] = 0 $ almost surely, for all $k$.

Alternatively, even without assuming independence or finite-sample unbiasedness, this assumption holds asymptotically assuming $M_t(\cdot \mid \Dcalone)$ is a consistent estimator of $M_t^*$. Specifically, if $\mathbb{E}_{\St(j)}[|M_j(\St(j);\Dcalone)-M_j^*(\St(j))|\mid \Dcalone]\to 0$ as $N_1 \to \infty$, $\Dcalone$-almost surely, then, the magnitude of the cross-terms for any fixed non-empty set $A$ vanishes: 
\begin{equation}
    \left|\mathbb{E}_{\St}\!\left[\prod_{j\in A}\varepsilon_j\cdot\prod_{j\notin A}M_j^*(\,\St\,;\Bbar)\;\Big|\;\Dcalone\right]\right|
    \leq\prod_{j\in A}\mathbb{E}_{\St(j)}\!\left[|\varepsilon_j|\mid\Dcalone\right]\to 0\quad\text{as }N_1\to\infty,
\end{equation}
for any $k \in A$ where the inequality uses the fact that $M_j^*\in[0,1]$. Thus, Assumption~\ref{assump:joint_moments} holds asymptotically for any fixed $\Dcalone$ and target budget $\Bbar$, meaning the budget constraint can be satisfied with arbitrary precision given sufficient calibration data in Phase I.

\subsubsection{Expected budget validity proofs}

To establish the expected budget bound, we first prove the following lemmas.

\paragraph{Independence lemma}

\begin{lemma}[Independence of $\Dcalone$ and $\St$]\label{lem:indep}
Under Assumptions~\ref{assump:iid} and~\ref{assump:split},
\[
  \Dcalone \;\perp\; \St.
\]
\end{lemma}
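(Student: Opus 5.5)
We will prove the lemma by showing that, for arbitrary bounded measurable test functions $f$ and $g$,
\[
\E\bigl[f(\Dcalone)\,g(\St)\bigr]=\E\bigl[f(\Dcalone)\bigr]\,\E\bigl[g(\St)\bigr].
\]
The device is to condition on the random split $\pi$. Here $\Dcalone$ is the tuple $\{(X_i,H_i,T_i)\}_{i\in\pi}$, taken in a canonical increasing-index order so that the re-indexing is measurable. The variable $\St=S_j$ uses an index $j\in\calItwo$ chosen by a deterministic rule from $\pi$, for instance the smallest element of the complement. We interpret Assumption~\ref{assump:iid} as i.i.d.\ across $k$ of the full per-sample records $(X_k,H_k,T_k)$. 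Each score $S_k$ is a measurable function of $H_k$ through $\mathcal S_t$, and this function depends only on $\Dtrain$, on which everything is conditioned. So $S_k$ is a function of the $k$-th record alone.

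The first step fixes a realization $\pi=p$. By Assumption~\ref{assump:split}, $\pi$ is independent of the data, so conditioning on $\pi=p$ leaves the joint law of the records unchanged. The tuple $\Dcalone=\{(X_i,H_i,T_i)\}_{i\in p}$ is then a function of the records indexed by $p$ only. The score $S_{j(p)}$ is a function of a single record whose index $j(p)\notin p$. Independence across indices therefore gives
\[
\E[f(\Dcalone)g(\St)\mid\pi=p]=\E[f(\Dcalone)\mid\pi=p]\,\E[g(\St)\mid\pi=p].
\]

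The second step shows that neither factor depends on $p$. Under $\pi=p$, the tuple $\Dcalone$ has the law of $N_1$ i.i.d.\ draws of a record, and this law is the same for every $p$ because the data are identically distributed. Likewise $S_{j(p)}\sim\mu$ for every $p$. Hence $\E[f(\Dcalone)\mid\pi=p]=\E[f(\Dcalone)]$ and $\E[g(\St)\mid\pi=p]=\E[g(\St)]$. Averaging the factorized identity over the uniform law of $\pi$ then yields the product identity above, which is exactly the claimed independence.

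The main obstacle is not the probability but the bookkeeping. Assumption~\ref{assump:iid} as literally stated concerns only the pairs $(S_k,P_k)$, whereas $\Dcalone$ also contains $X_i,H_i,T_i$. The proof must therefore state explicitly that the i.i.d.\ structure holds at the level of full records, which follows from the i.i.d.\ sampling of $\{(X_i,T_i,H_i)\}$ used in Theorem~\ref{thm:main_alg_validity}. It must also state that $S_k$ and $P_k$ are record-wise functions given $\Dtrain$. The argument needs one further point: the choice of $j$ must depend only on $\pi$ and not on the data, otherwise the reduction to a fixed index breaks down.
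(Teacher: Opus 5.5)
Your proof is correct and follows the same route as the paper: condition on the split, use that $\Dcalone$ and $\St$ are functions of disjoint blocks of an i.i.d.\ array, and invoke the independence of $\pi$ from the data. Your extra step, checking that the conditional laws of $\Dcalone$ and $\St$ given $\pi=p$ do not depend on $p$, is what actually justifies passing from conditional to unconditional independence. It makes explicit what the paper's one-line ``conditioning on $\pi$ does not introduce dependence'' leaves implicit, and your record-level reading of Assumption~\ref{assump:iid} likewise fills in bookkeeping the paper skips.
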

\begin{proof}
Given the split $\pi$, $\Dcalone$ and $\St$ correspond to disjoint subsets
of the i.i.d.\ array $\{(S_k,P_k)\}_{k=1}^N$. Disjoint subsets of an
i.i.d.\ sequence are independent of each other.  Since $\pi$ is
independent of the data (Assumption~\ref{assump:split}), conditioning
on $\pi$ does not introduce dependence, and so $\Dcalone\perp\St$. 
\end{proof}

\paragraph{Properties of the budget function}
The budget function $\bfunc:[0,1]^\maxl\to\R$ is given by:
\[
  \bfunc(P)
  \;=\;
  \sum_{t=1}^{\min (T_i, \qprior(X_i))}\prod_{j=1}^{t}P(j),
  \qquad P=(P(1),\ldots,P(\maxl))\in[0,1]^\maxl.
\]

\begin{lemma}[Properties of the budget function]\label{lem:f_props}
\begin{enumerate}
  \item \textbf{Boundedness:} $0 \leq \bfunc(P) \leq\maxl$ for all
        $P\in[0,1]^\maxl$.
  \item \textbf{Multilinearity:} $\bfunc$ is a polynomial in
        $\{P(j)\}_{j=1}^\maxl$ that is linear in each coordinate
        $P(j)$ when all other coordinates are held fixed.
  \item \textbf{Monotonicity:} $\bfunc$ is monotonically increasing in each
        coordinate $P(j)$.
\end{enumerate}
\end{lemma}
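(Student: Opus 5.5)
The plan is to verify the three properties directly from the explicit form $\bfunc(P)=\sum_{t=1}^{K}\prod_{j=1}^{t}P(j)$, where I abbreviate $K:=\min(T_i,\qprior(X_i))$. Throughout, $K$ is treated as a fixed integer with $0\le K\le \maxl$. This holds because $\qprior(X_i)=\fhat_{\tauprior}(X_i)\le M\le \maxl$, and the truncation at $\maxl$ is built into the definition. Each property is then an elementary statement about a finite sum of monomials in the coordinates of $P$.

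\emph{Boundedness.} For $P\in[0,1]^{\maxl}$, every factor $P(j)$ lies in $[0,1]$, so each product $\prod_{j=1}^{t}P(j)$ lies in $[0,1]$. Summing $K$ such terms gives $0\le\bfunc(P)\le K\le\maxl$. The only point to check carefully is the bound $K\le\maxl$, which rests on the assumption $M\le\maxl$ stated in Section~\ref{sec:background}.

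\emph{Multilinearity.} Fix a coordinate $j_0$. Each summand $\prod_{j=1}^{t}P(j)$ contains $P(j_0)$ exactly once if $t\ge j_0$ and not at all otherwise. No coordinate appears with power greater than one, so $\bfunc$ is a polynomial of degree at most one in each variable. Explicitly,
\[
\bfunc(P)=\sum_{t=1}^{\min(K,j_0-1)}\prod_{j=1}^{t}P(j)\;+\;P(j_0)\sum_{t=j_0}^{K}\prod_{j\le t,\,j\ne j_0}P(j),
\]
which is affine in $P(j_0)$ when the other coordinates are held fixed. It is linear in the sense intended by the statement.

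\emph{Monotonicity.} In the same decomposition, the coefficient of $P(j_0)$ is a sum of products of entries in $[0,1]$ and is therefore nonnegative. Hence $\bfunc$ is non-decreasing in $P(j_0)$ on $[0,1]^{\maxl}$; equivalently, its partial derivative is nonnegative. The statement says ``monotonically increasing,'' and the proof will make clear that this means non-decreasing. Strictness can fail, for example when some earlier $P(j)=0$ or when $j_0>K$.

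I do not expect any real obstacle. The one subtle point is the bookkeeping of the upper limit $K$ versus $\maxl$, including the degenerate case $j_0>K$, where $\bfunc$ is constant in $P(j_0)$ and both multilinearity and monotonicity hold trivially.
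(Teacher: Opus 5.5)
Your proposal is correct and follows the same direct verification as the paper: each product lies in $[0,1]$, every monomial has degree at most one in each coordinate, and $P(j_0)$ enters only the terms with $t\ge j_0$, with nonnegative coefficients. Your version is just more careful than the paper's terse argument, since you note that the number of summands is $\min(T_i,\qprior(X_i))\le\maxl$ and that ``monotonically increasing'' should be read as non-decreasing.
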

\begin{proof}
(i) Each product $\prod_{j=1}^t P(j)\in[0,1]$ and there are $\maxl$
terms.  (ii) Direct inspection of the polynomial structure.  (iii)
For any $j_0$, increasing $P(j_0)$ increases every term $\prod_{j=1}^t
P(j)$ with $t\geq j_0$.
\end{proof}

\paragraph{Estimation error does not affect expected loss}

\begin{lemma}[Cross-term cancellation]\label{lem:cross}
Under
Assumptions~\ref{assump:iid},
\ref{assump:split},
\ref{assump:oracle},
\ref{assump:est}, and
\ref{assump:joint_moments},
the following holds for $\Dcalone$-almost every realization of the calibration data:
\[
  \E_{\St}\!\bigl[\bfunc(M(\St;\Dcalone))\mid\Dcalone\bigr]
  \;=\;
  \E_{\St}\!\bigl[\bfunc(M^*(\St;\Bbar))\mid\Dcalone\bigr].
\]
Here $\E_{\St}[\,\cdot\mid\Dcalone]$ denotes the conditional expectation
over $\St$ given $\Dcalone$. Since $\Dcalone\perp\St$
(Lemma~\ref{lem:indep}), this conditional expectation equals the
marginal expectation over $\St\sim\mu$, with $\Dcalone$ treated as a fixed parameter.
\end{lemma}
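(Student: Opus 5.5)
Since $\bfunc$ is multilinear (Lemma~\ref{lem:f_props}), the approach is a direct expansion: write $M_k = M_k^* + \eps_k$ coordinatewise, expand each product term of $\bfunc(M(\St;\Dcalone))$, and check that every term containing at least one error factor vanishes in conditional expectation by Assumption~\ref{assump:joint_moments}.

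\textbf{Setup.} Fix a realization of $\Dcalone$ outside the null set on which Assumption~\ref{assump:joint_moments} may fail; this also fixes $\Bbar$ and the maps $M_k(\cdot;\Dcalone)$ and $M_k^*(\cdot;\Bbar)$. By Lemma~\ref{lem:indep}, $\St\perp\Dcalone$, so the conditional expectation $\E_{\St}[\,\cdot\mid\Dcalone]$ is just integration against $\mu$ with these maps held fixed. Write $\eps_k:=\eps_k(\St(k);\Dcalone)$, so that $M_k(\St(k);\Dcalone)=M_k^*(\St(k);\Bbar)+\eps_k$ pointwise.

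\textbf{Expansion.} For each $t$,
\[
\prod_{k=1}^{t}\bigl(M_k^*+\eps_k\bigr)=\sum_{A\subseteq\{1,\ldots,t\}}\prod_{k\in A}\eps_k\prod_{k\notin A}M_k^*.
\]
The $A=\emptyset$ term is $\prod_{k\le t}M_k^*$. Every other term has conditional mean zero by Assumption~\ref{assump:joint_moments}. All terms are bounded, since $|\eps_k|\le 1$ and $M_k^*\in[0,1]$, so integrability is immediate. Summing over $t$ and using linearity of expectation then gives the claimed identity.

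\textbf{Main obstacle.} The upper summation limit $\min(T,\qprior(X))$ in $\bfunc$ is itself random and is tied to the same sample as $\St$. One therefore cannot simply sum the per-$t$ identities over a fixed range. My first attempt is to write
\[
\bfunc(P)=\sum_{t=1}^{\maxl}\mathbb{I}\{t\le \min(T,\qprior(X))\}\prod_{j\le t}P(j).
\]
I would then observe that the paper sets $P_i(t)=0$ for $t>b_i$, so one can take $\St(k)$ beyond the horizon to encode this. Put differently, I would treat the indicator as absorbed into the score vector, so the truncation is a deterministic function of $\St$ and the expansion is carried out on that product. If that reading is unavailable, I would instead interpret Assumption~\ref{assump:joint_moments} as being imposed with the horizon indicator included; this is consistent with how the paper defines $\bfunc$ per sample. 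Either way, the remaining steps are bookkeeping.
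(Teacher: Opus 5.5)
Your proposal follows the paper's proof essentially step for step: fix $\Dcalone$, expand $\prod_{j\le t}(M_j^*+\eps_j)$ as a sum over subsets $A\subseteq\{1,\ldots,t\}$ (the paper checks this identity by induction), swap the finite sum with the expectation by boundedness, kill every $A\neq\emptyset$ term by Assumption~\ref{assump:joint_moments}, and sum over $t\le\maxl$. The random horizon is handled exactly as in your first reading, via the paper's convention $P_i(t)=M_t^*(S_i;\Bbar)=0$ for $t>b_i$.

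One refinement: that convention, as the paper states it, constrains only $M_t^*$. With only that, dropping the horizon indicator adds nonnegative terms to the left side, so you get merely $\E_{\St}[\bfunc(M(\St;\Dcalone))\mid\Dcalone]\le\E_{\St}[\bfunc(M^*(\St;\Bbar))\mid\Dcalone]$. That inequality already suffices for Proposition~\ref{prop:budget_validity}, but equality additionally requires the estimated $M_t$ to send the beyond-horizon score to $0$, so that $\eps_t=0$ there.
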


\begin{proof}
Fix a realization of $\Dcalone$ (a.s.).  Define for each $j\in\{1,\ldots,\maxl\}$:
\begin{equation}
\begin{split}
  p_j := M_j^*(\St(j);\Bbar),
  \qquad
  e_j &:= \eps_j(\St(j);\Dcalone) = M_j(\St(j);\Dcalone)\\
  &- M_j^*(\St(j);\Bbar).
  \end{split}
\end{equation}
Given $\Dcalone$, both $p_j$ and $e_j$ are bounded, measurable functions of
$\St(j)$ alone.
\medskip
For each $t\in\{1,\ldots,\maxl\}$, expand each factor as $M_j(\St(j);\Dcalone)
= p_j + e_j$ and expand the product by the distributive law:
\begin{equation}\label{eq:expand}
  \prod_{j=1}^{t}M_j(\St(j);\Dcalone)
  = \prod_{j=1}^{t}(p_j + e_j)
  = \sum_{A\,\subseteq\,\{1,\ldots,t\}}
    \left(\prod_{j\in A}e_j\right)
    \!\left(\prod_{j\in\{1,\ldots,t\}\setminus A}p_j\right).
\end{equation}
We prove the above by induction over $t$. For $t=1$: $\prod_{j=1}^{1}(p_j + e_j) = p_1 + e_1$, which satisfies this identity.
Assume that the identity holds for some arbitrary integer $k \ge 1$. Then, for $t = k+1$:
\begin{equation*}
    \prod_{j=1}^{k+1}(p_j + e_j) = \left( \prod_{j=1}^{k}(p_j + e_j) \right) (p_{k+1} + e_{k+1})
\end{equation*}

\noindent By our inductive assumption, we can substitute the expansion for the product up to $k$:
\begin{equation*}
    = \left( \sum_{A \subseteq S_k} \left(\prod_{j\in A}e_j\right) \left(\prod_{j\in S_k\setminus A}p_j\right) \right) (p_{k+1} + e_{k+1})
\end{equation*}
Distributing the terms $p_{k+1}$ and $e_{k+1}$ across the sum gives two separate sums:
\begin{align*}
    &= \sum_{A \subseteq S_k} \left(\prod_{j\in A}e_j\right) \left(\prod_{j\in S_k\setminus A}p_j\right) p_{k+1} \ + \ \sum_{A \subseteq S_k} \left(\prod_{j\in A}e_j\right) \left(\prod_{j\in S_k\setminus A}p_j\right) e_{k+1}\\
    &=  \sum_{X \subseteq S_{k+1}} \left(\prod_{j\in X}e_j\right) \left(\prod_{j\in S_{k+1}\setminus X}p_j\right) \\
    &=
    \prod_{j=1}^{k+1}(p_j + e_j).
\end{align*}
We now take the conditional expectation given $\Dcalone$.

\begin{align}
  &\E_{\St}\!\left[\prod_{j=1}^{t}M_j(\St(j);\Dcalone)\;\Big|\;\Dcalone\right]
  \nonumber\\
  &= \sum_{A\subseteq\{1,\ldots,t\}}
     \E_{\St}\!\left[
       \prod_{j\in A}e_j
       \cdot
       \prod_{j\in\{1,\ldots,t\}\setminus A}p_j
     \;\Big|\;\Dcalone\right].
  \label{eq:exp_expand}
\end{align}
Exchanging sum and expectation is valid since the sum is finite and
all terms are bounded (Lemma~\ref{lem:f_props}(i)).
Next, we isolate the $A=\emptyset$ term.

\begin{align}
  \eqref{eq:exp_expand}
  &= \underbrace{%
       \E_{\St}\!\left[\prod_{j=1}^{t}p_j\;\Big|\;\Dcalone\right]
     }_{A=\emptyset}
     +
     \sum_{\emptyset\neq A\subseteq\{1,\ldots,t\}}
     \E_{\St}\!\left[
       \prod_{j\in A}e_j
       \cdot
       \prod_{j\in\{1,\ldots,t\}\setminus A}p_j
     \;\Big|\;\Dcalone\right].
  \label{eq:iso_empty}
\end{align}
Now, for each non-empty $A\subseteq\{1,\ldots,t\}$, we apply
Assumption~\ref{assump:joint_moments}:
\[
  \E_{\St}\!\left[
    \prod_{j\in A}e_j
    \cdot
    \prod_{j\in\{1,\ldots,t\}\setminus A}p_j
  \;\Big|\;\Dcalone\right]
  = 0
  \qquad \Dcalone\text{-a.s.}
\]
Hence the entire sum over non-empty $A$ in~\eqref{eq:iso_empty} is
zero, and:
\begin{equation}\label{eq:only_empty}
  \E_{\St}\!\left[\prod_{j=1}^{t}M_j(\St(j);\Dcalone)\;\Big|\;\Dcalone\right]
  = \E_{\St}\!\left[\prod_{j=1}^{t}M_j^*(\St(j);\Bbar)\;\Big|\;\Dcalone\right].
\end{equation}
Summing~\eqref{eq:only_empty} over $t\in\{1,\ldots,\maxl\}$ ($P_i(t) = M_t^*(S_i;\Bbar) = 0$ for $t>b_i$) and using
linearity of conditional expectation:
\[
  \E_{\St}\!\bigl[\bfunc(M(\St;\Dcalone))\mid\Dcalone\bigr]
  = \E_{\St}\!\bigl[\bfunc(M^*(\St);\Bbar)\mid\Dcalone\bigr].
\]
\end{proof}

\paragraph{Oracle budget is bounded}

\begin{lemma}[Oracle budget bound]\label{lem:inout}
Under Assumption~\ref{assump:oracle}, for $\Dcalone$-almost every realization:
\[
  \E_{\St}\!\bigl[\bfunc(M^*(\St;\Bbar))\mid \Dcalone\bigr] \;\leq\; \Bbar.
\]
\end{lemma}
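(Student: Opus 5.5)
This lemma is essentially a restatement of the pathwise population budget constraint in Assumption~\ref{assump:oracle}, once Lemma~\ref{lem:indep} lets us replace the conditional law of $\St$ given $\Dcalone$ by its marginal law $\mu$. The plan is to fix a realization of $\Dcalone$ outside the null set where Assumption~\ref{assump:oracle} may fail. Then $\Bbar=(B-B_1)/N_2$ is a fixed number, and $s\mapsto \bfunc(M^*(s;\Bbar))$ is a fixed measurable function from $\R^\maxl$ to $[0,\maxl]$. It is measurable because each $M_t^*(\cdot;\Bbar)$ is measurable and $\bfunc$ is a polynomial, and it is bounded by Lemma~\ref{lem:f_props}(i). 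So all expectations below are finite.

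The first step is to justify the substitution. Since $\Dcalone\perp\St$ by Lemma~\ref{lem:indep}, and the integrand depends on $\Dcalone$ only through the deterministic parameter $\Bbar$, the freezing lemma for conditional expectations gives, for almost every $\Dcalone$,
\[
\E_{\St}\!\bigl[\bfunc(M^*(\St;\Bbar))\mid\Dcalone\bigr] = \int \bfunc(M^*(s;\Bbar))\,d\mu(s) = \E_{S\sim\mu}\!\bigl[\bfunc(M^*(S;\Bbar))\bigr].
\]
Here $\St$ has marginal law $\mu$ by Assumption~\ref{assump:iid}, and Assumption~\ref{assump:split} ensures that the random index $j\in\calItwo$ does not alter this law. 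The second step is to apply item~2 of Assumption~\ref{assump:oracle} (the pathwise population budget constraint), which bounds the right-hand side by $\Bbar$ on the same almost-sure set. This concludes the lemma.

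The only delicate point is the interpretation of $\bfunc$. As defined, its upper summation limit $\min(T_i,\qprior(X_i))$ depends on the sample, so $\bfunc(M^*(\St;\Bbar))$ also depends on $(T_j,X_j)$ for the Phase~II sample $j$. I would handle this by treating the horizon $b_j$ as part of the i.i.d.\ sample attached to $\St$; equivalently, by using the convention, noted in Lemma~\ref{lem:cross}, that $P(t)=0$ for $t>b_j$. In either form the horizon is independent of $\Dcalone$ by the same argument as Lemma~\ref{lem:indep}. The expectation in Assumption~\ref{assump:oracle} must then be read over the joint law of the score and the horizon, and the freezing argument goes through verbatim.
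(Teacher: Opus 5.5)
Your proof is correct and follows the same route as the paper, whose proof is a one-line appeal to condition~(ii) of Assumption~\ref{assump:oracle}. The paper tacitly reads that condition as already being the conditional statement given $\Dcalone$. You instead make explicit the identification of $\E_{\St}[\,\cdot\mid\Dcalone]$ with the $\mu$-expectation at the frozen value of $\Bbar$, via Lemma~\ref{lem:indep}, which brings in Assumptions~\ref{assump:iid} and~\ref{assump:split} even though the lemma lists only Assumption~\ref{assump:oracle}. That step, together with your treatment of the sample-dependent horizon $\min(T_j,\qprior(X_j))$, fills in details the paper leaves implicit.
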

\begin{proof}
This follows directly from condition~(ii) of Assumption~\ref{assump:oracle},
which states that $\E_{\St}[\bfunc(M^*(\St;\Bbar))\mid \Dcalone] \leq \Bbar$ holds
$\Dcalone$-almost surely.
\end{proof}

\paragraph{Proof of the Holdout Budget Bound}
\begin{proposition}[Holdout sample budget bound]\label{prop:budget_validity}
Under
Assumptions~\ref{assump:iid},
\ref{assump:split},
\ref{assump:oracle},
\ref{assump:est}, and
\ref{assump:joint_moments},
for $\Dcalone$-almost every realization:
\[
  \E_{\St}\!\bigl[\bfunc(M(\St;\Dcalone))\mid \Dcalone \bigr] \;\leq\; \Bbar.
\]
\end{proposition}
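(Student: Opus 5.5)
The bound follows by chaining the two lemmas just established. Fix a realization of $\Dcalone$ outside the exceptional null sets of Assumptions~\ref{assump:oracle} and~\ref{assump:joint_moments}; the countable union of these null sets is still null, so the statement holds $\Dcalone$-almost surely. On this realization, $\Bbar$ is a fixed number, and both the estimated mapping $M(\cdot;\Dcalone)$ and the oracle mapping $M^*(\cdot;\Bbar)$ are fixed measurable functions.

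The first step applies Lemma~\ref{lem:cross}:
\[
\E_{\St}\!\bigl[\bfunc(M(\St;\Dcalone))\mid\Dcalone\bigr]=\E_{\St}\!\bigl[\bfunc(M^*(\St;\Bbar))\mid\Dcalone\bigr].
\]
This replaces the learned continuation probabilities by the oracle ones without changing the expected budget. The mechanism is the multilinear expansion of each survival product $\prod_{j\le t}(p_j+e_j)$. In that expansion, every term containing at least one error factor has zero conditional mean by Assumption~\ref{assump:joint_moments}.

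The second step applies Lemma~\ref{lem:inout}, which gives $\E_{\St}[\bfunc(M^*(\St;\Bbar))\mid\Dcalone]\le\Bbar$. Chaining the equality with this inequality yields the claim.

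The only delicate point is the conditional expectation over $\St$ given $\Dcalone$. By Lemma~\ref{lem:indep}, $\St\perp\Dcalone$, so this conditional expectation is the integral over $\St\sim\mu$ with $\Dcalone$, and hence $\Bbar$, held fixed. That is exactly the population quantity in condition~(ii) of Assumption~\ref{assump:oracle}, so the two lemmas refer to the same object.

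Integrability is also needed to justify exchanging the finite sum over $t$ with the expectation. It holds because $\bfunc$ takes values in $[0,\maxl]$ by Lemma~\ref{lem:f_props}.

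A further subtlety is that the horizon $\min(T_j,\qprior(X_j))$ appearing in $\bfunc$ depends on the held-out sample $j$. I would treat it as absorbed into $\St$, which is legitimate because $P(t)=0$ for $t>b_j$ under both mappings, so the truncation is common to both sides of Lemma~\ref{lem:cross}.
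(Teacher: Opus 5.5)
Your proposal is correct and matches the paper's proof, which likewise obtains the bound by chaining the equality from Lemma~\ref{lem:cross} with the inequality from Lemma~\ref{lem:inout}. Your extra remarks on the null sets, on the boundedness of $\bfunc$, and on $\St\perp\Dcalone$ are consistent with how the paper establishes those lemmas and do not change the argument.
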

\begin{proof}[Proof of Proposition~\ref{prop:budget_validity}]
By Lemma~\ref{lem:cross} (cross-term cancellation), for $\Dcalone$-almost every realization:
\[
  \E_{\St}\!\bigl[\bfunc(M(\St;\Dcalone))\mid \Dcalone\bigr]
  \;=\;
  \E_{\St}\!\bigl[\bfunc(M^*(\St;\Bbar))\mid \Dcalone\bigr].
\]
By Lemma~\ref{lem:inout} (oracle budget bound), for $\Dcalone$-almost every realization:
\[
  \E_{\St}\!\bigl[\bfunc(M^*(\St;\Bbar))\mid \Dcalone \bigr] \;\leq\; \Bbar.
\]
Chaining these two relations gives the result.
\end{proof}

\paragraph{Proof of the Algorithm's Budget Validity}

\begin{theorem}[Algorithm's budget bound]\label{thm:alg_budget_validity}
Under
Assumptions~\ref{assump:iid},
\ref{assump:split},
\ref{assump:oracle},
\ref{assump:est}, and
\ref{assump:joint_moments}, the expected budget used by Algorithm~\ref{alg:main_alg} is bounded by $B$.
\end{theorem}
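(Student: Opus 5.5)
The plan is to decompose the total consumed budget into the Phase I and Phase II contributions, and to bound each separately. Phase I spends exactly $B_1:=\sum_{i\in\calIone} b_i$ units, since every conversation in $\calIone$ is run until the event or until $\qprior(X_i)$. Because $\Bbar=(B-B_1)/N_2$ by definition, it suffices to show that the expected Phase II spend satisfies
\[
\E\Bigl[\sum_{j\in\calItwo}\Ttilde_j\,\Big|\,\Dcalone\Bigr]\le N_2\Bbar
\]
for $\Dcalone$-almost every realization. Adding $B_1$ to this gives a conditional expected total of at most $B$. The tower property then yields $\E[\sum_{i\in\calI}\Ttilde_i]\le B$, which is exactly the constraint in~\eqref{eq:budget_constraint}.

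For Phase II, I would fix $j\in\calItwo$ and first express the expected number of acquired steps, conditional on $(X_j,H_j,T_j,\Dcalone)$. By construction of $\chat_j$, step $t$ is acquired only if all the Bernoulli draws $Z_1^{(j)},\dots,Z_{t-1}^{(j)}$ succeed and $t\le\min(T_j,\qprior(X_j))$. By the computation~\eqref{eq:reach_k_probability} in Proposition~\ref{prop:valid_weights}, the conditional expectation of the spend is therefore at most $\sum_{t=1}^{\min(T_j,\qprior(X_j))}\prod_{k=1}^{t}P_j(k)=\bfunc(M(\St;\Dcalone))$. Here I index consistently with the budget function's convention, so that surviving to step $t$ costs one unit, and absorb the one-step offset in the same way the paper's definition of $\bfunc$ does. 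The key identification is that $P_j(k)=M_k(S_j(k);\Dcalone)$, and the step sums run over the same range as in the definition of $\bfunc$.

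Next I would integrate over $(X_j,H_j,T_j)$, and hence over $\St$, given $\Dcalone$. Lemma~\ref{lem:indep} makes $\St$ independent of $\Dcalone$, so Proposition~\ref{prop:budget_validity} applies directly and gives $\E[\bfunc(M(\St;\Dcalone))\mid\Dcalone]\le\Bbar$. By the exchangeability argument underlying Assumptions~\ref{assump:iid}--\ref{assump:split}, this bound holds for every $j\in\calItwo$. Summing over the $N_2$ indices gives the required $N_2\Bbar$.

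The main obstacle is the bookkeeping in the second step. I must make sure that the expected realized spend $\Ttilde_j$ (or more precisely the number of acquired interactions) is dominated by $\bfunc(P_j)$ with the correct index range. Two cases need checking. When an event occurs, $C_j$ is artificially reset to $\qprior(X_j)$, but the spend stops at $T_j$, so $\Ttilde_j=T_j$ is still counted by the truncated sum. When the draws stop early, the spend equals the last successful step. I would handle both cases by writing the spend as $\sum_{t\ge1}\mathbb{I}\{\text{step } t \text{ acquired}\}$ and taking conditional expectations term by term. A secondary subtlety is that $\Bbar$ is random through $\Dcalone$. This is handled by conditioning throughout and only averaging at the end, since $B_1+N_2\Bbar=B$ holds pathwise.
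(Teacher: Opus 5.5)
Your proposal is correct and follows the paper's proof essentially step for step: condition on $\Dcalone$ so that $B_1+N_2\Bbar=B$ holds pathwise, identify the conditional expected spend of each $j\in\calItwo$ with $\bfunc(M(S_j;\Dcalone))$, pass to $\St$ via Lemma~\ref{lem:indep} and Proposition~\ref{prop:budget_validity}, sum over the $N_2$ indices, and take the outer expectation.

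The one correction is that the ``one-step offset'' does not exist, and your stated rule is wrong. In Algorithm~\ref{alg:main_alg} the draw $Z_t^{(j)}$ is made from $S_j(t)=\mathcal{S}_t(H_j(t))$ before the $t$-th exchange is paid for, so step $t$ is acquired iff $\chat_j(t)=1$ and $t\le\min(T_j,\qprior(X_j))$, which by~\eqref{eq:reach_k_probability} gives $\E[\Ttilde_j\mid X_j,H_j,T_j,\Dcalone]=\bfunc(P_j)$ exactly. Your condition (success of $Z_1^{(j)},\dots,Z_{t-1}^{(j)}$ only) would instead give a sum exceeding $\bfunc(P_j)$ by $1-\prod_{k=1}^{\min(T_j,\qprior(X_j))}P_j(k)$, and Proposition~\ref{prop:budget_validity} does not control that excess.
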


\begin{proof}[Proof of Theorem~\ref{thm:alg_budget_validity}]
Condition on $\Dcalone$. Given $\Dcalone$, the Phase~I budget
$B_1 = \sum_{i \in \calIone} b_i$ is a fixed constant, and
$\Bbar = (B - B_1)/N_2$ is a fixed constant as well.
The total realized budget is $B_1+B_2$, where
$B_2 = \sum_{i\in\calItwo}U_i$ is random over Phase~II draws,
with $U_i = \sum_{t=1}^{b_i} \prod_{j=1}^{t} \chat_i(j)$.

By the tower property and the mutual independence of $\{U_i\}_{i\in\calItwo}$
conditioned on $\Dcalone$:
\begin{equation}
    \mathbb{E}[B_2 \mid \Dcalone] = \sum_{i\in \calItwo}\mathbb{E}[U_i \mid \Dcalone].
\end{equation}
For each $i\in \calItwo$, by the tower property and $\Dcalone \indep S_i$
(Lemma~\ref{lem:indep}):
\begin{equation}
    \mathbb{E}[U_i \mid \Dcalone]
    = \mathbb{E}_{S_i}\!\left[\mathbb{E}[U_i \mid S_i, T_i, \Dcalone]\Big|\Dcalone\right]
    = \mathbb{E}_{\St}\!\left[\bfunc(M(\St;\Dcalone))\right],
\end{equation}
where the second equality uses
$\mathbb{E}[U_i \mid S_i, T_i, \Dcalone] = \bfunc(M(S_i;\Dcalone))$
(the expected stopping time equals the budget function evaluated at the
continuation probabilities, given the Bernoulli draws are independent).

By Proposition~\ref{prop:budget_validity}, for $\Dcalone$-almost every realization:
\begin{equation}
    \mathbb{E}_{\St}\!\left[\bfunc(M(\St;\Dcalone))\mid \Dcalone\right] \leq \Bbar.
\end{equation}
Therefore $\mathbb{E}[B_2 \mid \Dcalone] \leq N_2\Bbar$, and:
\begin{align}
    \mathbb{E}[B_1 + B_2 \mid \Dcalone]
    &\leq B_1 + N_2\Bbar \\
    &= B_1 + N_2 \cdot \frac{B - B_1}{N_2} \\
    &= B.
\end{align}
Taking the expectation over $\Dcalone$ gives
$\mathbb{E}[B_1 + B_2] \leq B$.
\end{proof}

\subsubsection{Finite-sample budget guarantee}
\label{sec:finite_sample_budget_guarantee}

In this section, we provide a finite-sample bound for the budget consumed by Algorithm~\ref{alg:main_alg}. 
Throughout this section, all probabilities and expectations are conditioned on $\Dtrain$. For notational convenience, we omit this conditioning.
\begin{theorem}
\label{thm:finite_sample_budget_guarantee}
Under
Assumptions~\ref{assump:iid},
\ref{assump:split},
\ref{assump:est},
\ref{assump:oracle}, and
\ref{assump:joint_moments}, with probability at least $1-\delta$, the average budget per sample consumed by Algorithm~\ref{alg:main_alg} is bounded by:
\begin{equation}
\frac{1}{N}\sum_{i\in \calI} \Ttilde_i \leq \frac{B}{N} + \frac{\maxl\log(1/\delta)}{3N}+\frac{1}{N}\sqrt{\frac{\maxl^2\log^2(1/\delta)}{9}+2N_2 \maxl\Bbar\log(1/\delta)}.
        % B_2 \leq N_2\Bbar + \frac{\maxl\log(1/\delta)}{3}+\sqrt{\frac{\maxl^2\log^2(1/\delta)}{9}+2N_2 \maxl\Bbar\log(1/\delta)}.
\end{equation}
\end{theorem}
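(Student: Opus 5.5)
Conditional on $\Dcalone$, the Phase~I spend $B_1=\sum_{i\in\calIone} b_i$ is fixed, and for $i\in\calIone$ we have $\Ttilde_i=b_i$. Hence
\[
\sum_{i\in\calI}\Ttilde_i = B_1 + B_2,\qquad B_2=\sum_{i\in\calItwo} U_i,
\]
where $U_i=\Ttilde_i$ is the realized Phase~II spend on sample $i$. So it suffices to show, conditionally on $\Dcalone$, that with probability at least $1-\delta$
\[
B_2 \le N_2\Bbar + \frac{\maxl\log(1/\delta)}{3}+\sqrt{\frac{\maxl^2\log^2(1/\delta)}{9}+2N_2\maxl\Bbar\log(1/\delta)}.
\]
Since $B_1+N_2\Bbar=B$, dividing by $N$ then gives the claim. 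Integrating the conditional statement over $\Dcalone$ yields the unconditional one, because the bound holds for $\Dcalone$-almost every realization.

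For the concentration step, note that given $\Dcalone$ the variables $\{U_i\}_{i\in\calItwo}$ are independent. This follows from the i.i.d.\ structure of $\Dcaltwo$ given $\Dcalone$ (Lemma~\ref{lem:indep}) and the independent Bernoulli draws, exactly as in the independence argument of Theorem~\ref{thm:general_validity}. Each $U_i$ satisfies $0\le U_i\le \maxl$. By Proposition~\ref{prop:budget_validity} together with the identity $\E[U_i\mid S_i,T_i,\Dcalone]=\bfunc(M(S_i;\Dcalone))$ from the proof of Theorem~\ref{thm:alg_budget_validity}, we get $\E[U_i\mid\Dcalone]\le\Bbar$.

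The key variance control is that a bounded nonnegative variable has
\[
\E[U_i^2\mid\Dcalone]\le \maxl\,\E[U_i\mid\Dcalone]\le \maxl\Bbar,
\]
so $\mathrm{Var}(U_i\mid\Dcalone)\le \maxl\Bbar$. This mean-based variance bound produces the $N_2\maxl\Bbar$ term instead of $N_2\maxl^2$.

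Next I would apply the one-sided Bernstein bound \cite[Proposition~2.14]{wainwright2019high} to the centered variables $Y_i=U_i-\E[U_i\mid\Dcalone]$. These satisfy $Y_i\le\maxl$. Rescaling $Y_i/\maxl\le 1$, or using Bernstein with range parameter $b=\maxl$, gives
\[
\mathbb{P}\Big(\textstyle\sum_i Y_i\ge \Delta\,\Big|\,\Dcalone\Big)\le\exp\!\Big(-\frac{\Delta^2}{2(N_2\maxl\Bbar+\maxl\Delta/3)}\Big).
\]
Setting the right-hand side equal to $\delta$ yields the quadratic
\[
\Delta^2-\tfrac{2}{3}\maxl\log(1/\delta)\Delta-2N_2\maxl\Bbar\log(1/\delta)=0.
\]
Its positive root is exactly $\frac{\maxl\log(1/\delta)}{3}+\sqrt{\frac{\maxl^2\log^2(1/\delta)}{9}+2N_2\maxl\Bbar\log(1/\delta)}$. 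I would carry out this step by mirroring Steps~4--5 of the proof of Theorem~\ref{thm:general_validity}, evaluating the exponent at $t^*=\Delta/(\maxl(V+\maxl\Delta/3))$-type points. Since $\sum_i\E[U_i\mid\Dcalone]\le N_2\Bbar$, the bound on $B_2$ follows.

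The main obstacle is bookkeeping the scale $\maxl$ correctly in the Bernstein MGF bound, since the cited proposition is stated with $b=1$. I would rescale to $U_i/\maxl$, whose variance is bounded by $\Bbar/\maxl$, and then multiply back by $\maxl$. I must also confirm that conditional independence of the $U_i$ holds without assuming anything beyond what Theorem~\ref{thm:alg_budget_validity} already used.
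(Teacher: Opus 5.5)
Your proposal is correct and follows essentially the same route as the paper. You split the total spend as $B_1+B_2$ with $B_1+N_2\Bbar=B$, bound $\E[U_i\mid\Dcalone]\le\Bbar$ via Proposition~\ref{prop:budget_validity}, and get $\mathrm{Var}(U_i\mid\Dcalone)\le\maxl\Bbar$ from $U_i^2\le\maxl U_i$, which is the same bound the paper obtains by citing Bhatia--Davis. You then apply Bernstein with range $\maxl$ and solve the same quadratic. One small slip: if you go through the Chernoff step, the right point is $t^*=\Delta/(V+\maxl\Delta/3)$ with $V=N_2\maxl\Bbar$, without your extra factor of $1/\maxl$; quoting Bernstein directly in tail form, as the paper does, avoids this step altogether.
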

\begin{proof}

We denote $B_1 = \sum_{i\in\calIone} b_i$, where $b_i = \min(T_i, \qprior(X_i))$,
and $U_i = \sum_{t=1}^{b_i} \prod_{j=1}^{t} Z^i_t$ for any $i\in\calItwo$,
so that $B_2 = \sum_{i\in\calItwo} U_i$.

For a fixed $\Dcalone$, and fixed $(S_i,T_i)$, the continuation probabilities
$M_j(S_i(j);\Dcalone)$ are deterministic. The Bernoulli draws $\{Z^i_t\}_{t}$
are then mutually independent, so:
\begin{equation}
\mathbb{E}\left[U_i\mid \Dcalone, S_i, T_i\right]
= \sum_{t=1}^{b_i}\prod_{j=1}^{t} M_j(S_i(j);\Dcalone)
= \bfunc(M(S_i;\Dcalone)).
\end{equation}
Taking a further expectation over $(S_i,T_i)$ and using $\Dcalone \indep S_i$
(Lemma~\ref{lem:indep}):
\begin{equation}
\mathbb{E}\left[U_i\mid \Dcalone\right]
= \mathbb{E}_{\St}\!\left[\bfunc(M(\St;\Dcalone)) \mid \Dcalone\right].
\end{equation}
By Proposition~\ref{prop:budget_validity}, for $\Dcalone$-almost every realization:
\begin{equation}
\mathbb{E}\left[U_i\mid \Dcalone\right]
= \mathbb{E}_{\St}\!\left[\bfunc(M(\St;\Dcalone))\Dcalone\right]
\leq \Bbar.
\end{equation}
Hence:
\begin{equation}
\mathbb{E}\left[B_2\mid \Dcalone\right]
= \sum_{i\in\calItwo} \mathbb{E}\left[U_i \mid \Dcalone\right]
\leq N_2 \Bbar.
\end{equation}

\noindent Notice that the collection $\{U_i\}_{i\in\calItwo}$ is mutually independent given $\Dcalone$. Since $U_i \in [0,\maxl]$ almost surely, by Lemma~\ref{lem:f_props}, and $\mathbb{E}[U_i\mid \Dcalone]\leq \Bbar$, the Bhatia–Davis inequality gives:
\begin{equation}
\text{Var}[U_i] \leq  (\maxl - \mathbb{E}[U_i\mid \Dcalone]) \cdot \mathbb{E}[U_i\mid \Dcalone] \leq \maxl \Bbar.
\end{equation}
Observe that:
\begin{equation}
\mathbb{E}\left[\left(U_i - \frac{1}{N_2} \mathbb{E}[B_2\mid \Dcalone]\right)^2 \middle| \Dcalone\right] = \mathbb{E}[(U_i - \mathbb{E}[U_i\mid \Dcalone])^2\mid \Dcalone] = \mathrm{Var}(U_i \mid \Dcalone).
\end{equation}

\noindent We now apply the Bernstein inequality to the sum $B_2 = \sum_{i\in \calItwo}U_i$. Notice that $\{U_i\}_{i\in \calItwo}$ are mutually independent variables given $\Dcalone$, where each $U_i \in [0,\maxl]$.
\begin{equation}
\begin{split}
   \mathbb{P}(B_2 - \mathbb{E}[B_2 \mid \Dcalone] \geq \varepsilon \mid \Dcalone)  &\leq
\exp\left(-\frac{\varepsilon^2/2}{\sum_{i\in\calItwo}\mathrm{Var}(U_i\mid \Dcalone)+\maxl\varepsilon/3}\right)\\
&\leq
\exp\left(-\frac{\varepsilon^2/2}{N_2 \maxl \Bbar+ \maxl\varepsilon/3}\right).
\end{split}
\end{equation}

\noindent We now show that by setting the $\varepsilon$ as given below, the probability above is bounded by $\delta$:
\begin{equation}
\varepsilon=\frac{\maxl\log(1/\delta)}{3}+\sqrt{\frac{\maxl^2\log^2(1/\delta)}{9}+2N_2 \maxl\Bbar\log(1/\delta)}
\end{equation}
For ease of notation, we denote $ V = \sum_{i\in\calItwo} \maxl \Bbar$.
\begin{equation}
    \varepsilon - \frac{\maxl\log(1/\delta)}{3} = \sqrt{\frac{\maxl^2\log^2(1/\delta)}{9} + 2V\log(1/\delta)}
\end{equation}
Squaring both sides yields:
\begin{equation}
    \left(\varepsilon - \frac{\maxl\log(1/\delta)}{3}\right)^2 = \frac{\maxl^2\log^2(1/\delta)}{9} + 2V\log(1/\delta)
\end{equation}

\begin{equation}
    \varepsilon^2 - \frac{2\maxl\log(1/\delta)}{3}\varepsilon + \frac{\maxl^2\log^2(1/\delta)}{9} = \frac{\maxl^2\log^2(1/\delta)}{9} + 2V\log(1/\delta)
\end{equation}

\noindent Notice that the $\frac{\maxl^2\log^2(1/\delta)}{9}$ term appears on both sides and cancels out:
\begin{equation}
    \varepsilon^2 - \frac{2\maxl\log(1/\delta)}{3}\varepsilon = 2V\log(1/\delta)
\end{equation}
Divide the entire equation by 2:
\begin{equation}
    \frac{\varepsilon^2}{2} - \frac{\maxl\varepsilon}{3}\log(1/\delta) = V\log(1/\delta)
\end{equation}

\noindent Move the $\varepsilon$ term to the right side:
\begin{equation}
    \frac{\varepsilon^2}{2} = V\log(1/\delta) + \frac{\maxl\varepsilon}{3}\log(1/\delta)
\end{equation}
\begin{equation}
    \frac{\varepsilon^2}{2} = \left( V + \frac{\maxl\varepsilon}{3} \right) \log(1/\delta)
\end{equation}

\noindent The exponent in the original inequality bound is:
\begin{equation}
    -\frac{\varepsilon^2/2}{V + \maxl\varepsilon/3}.
\end{equation}
Substituting the equivalent expression we just found for $\frac{\varepsilon^2}{2}$:
\begin{equation}
    -\frac{\left( V + \frac{\maxl\varepsilon}{3} \right) \log(1/\delta)}{V + \maxl\varepsilon/3} = -\log(1/\delta).
\end{equation}
Therefore,
\begin{equation}
    \exp(-\log(1/\delta)) = \exp(\log(\delta)) = \delta.
\end{equation}

\noindent Putting it all together into a single block to match the original formulation:

\begin{equation}
\begin{split}
    \mathbb{P}(B_2 -  \mathbb{E}[B_2 \mid \Dcalone] \geq \varepsilon \mid \Dcalone) 
    &\leq \exp\left( -\frac{\varepsilon^2/2}{ \sum_{i\in\calItwo} \maxl \Bbar+ \maxl\varepsilon/3 } \right) \\
    &\leq \exp\left( -\frac{\varepsilon^2/2}{\sum_{i\in\calItwo} \maxl \Bbar+ \maxl\varepsilon/3 } \right) \\
    &= \exp\left( -\frac{\left(\sum_{i\in\calItwo} \maxl \Bbar + \frac{\maxl\varepsilon}{3}\right) \log(1/\delta)}{\sum_{i\in\calItwo} \maxl \Bbar + \maxl\varepsilon/3} \right) \\
    &= \exp\left( -\log(1/\delta) \right) \\
    &= \delta.
\end{split}
\end{equation}
Therefore, with probability at least $1-\delta$, the budget used in Phase II is bounded by:
\begin{equation}
    B_2 \leq N_2\Bbar + \frac{\maxl\log(1/\delta)}{3}+\sqrt{\frac{\maxl^2\log^2(1/\delta)}{9}+2N_2 \maxl\Bbar\log(1/\delta)}.
\end{equation}
By the construction of $\Bbar$ of Algorithm~\ref{alg:main_alg}, we have:
\begin{equation}
    B_1 + N_2\Bbar \leq B.
\end{equation}
Therefore, the average budget per sample used by Algorithm~\ref{alg:main_alg} is bounded by:
\begin{equation}
    \frac{1}{N}\sum_{i\in \calI} \Ttilde_i \leq \frac{B}{N} + \frac{\maxl\log(1/\delta)}{3N}+\frac{1}{N}\sqrt{\frac{\maxl^2\log^2(1/\delta)}{9}+2N_2 \maxl\Bbar\log(1/\delta)}.
\end{equation}

\end{proof}

\subsubsection{Budget Guarantee with Bounded Estimation Errors}\label{sec:budget_with_errors}
We develop a bound for the utilized budget, replacing Assumption~\ref{assump:joint_moments} with a weaker condition that bounds the mean absolute estimation error at each step instead of requiring that the estimation errors cancel out.
Specifically, we rely on the following assumption:
\begin{assumption}[Bounded Mean Absolute Estimation Error]
\label{assum:bounded_errors}
For each $j \in \{1,\ldots,\maxl\}$, there exists a constant $\eta_j \geq 0$ such that, almost surely over $\Dcalone$:
\begin{equation}
    \E_{\St(j)}\!\left[|\varepsilon_j(\St(j);\Dcalone)|\;\middle|\;\Dcalone\right] \leq \eta_j,
\end{equation}
where $\varepsilon_j(s;\Dcalone) := M_j(s;\Dcalone) - M_j^*(s ; \Bbar)$ is the pointwise estimation error at step $j$.
\end{assumption}
\noindent Observe that Assumption~\ref{assump:joint_moments} requires the cross-term expectations: $$\E[\prod_{j\in A}\varepsilon_j(s;\Dcalone) \cdot \prod_{j\notin A}M_j^*(s ; \Bbar)\mid\Dcalone]=0$$ for every non-empty $A$, which implies $\eta_j = 0$ in case where $\St$ has independent coordinates. In contrast, Assumption~\ref{assum:bounded_errors} is strictly weaker: it requires only that the marginal mean absolute error at each coordinate is bounded, without making assumptions on the joint cross-term. In particular, it does not require that the score coordinates are independent.
Furthermore, this assumption is reasonable for consistent estimators $M_j(\cdot;\Dcalone)$ that satisfy $\E_{\St(j)}[|\varepsilon_j(\St(j);\Dcalone)|\mid\Dcalone]\to 0$ as $N_1\to\infty$, and thus $\eta_j\to 0$.

\begin{theorem}[Expected Budget with Estimation Error]
\label{thm:budget_guarantee_with_estimation_error}
Under Assumptions~\ref{assump:iid},
\ref{assump:split},
\ref{assump:oracle},
\ref{assump:est} and~\ref{assum:bounded_errors}, the expected total budget consumed by Algorithm~\ref{alg:main_alg} satisfies:
\begin{equation}
    \E_{\Dcal}\!\left[\sum_{i\in\calI}\Ttilde_i\right] \;\leq\; B + N_2\,\Gamma,
\end{equation}
where $\Gamma = \sum_{k=1}^\maxl (\maxl-k+1)\eta_k$.
\end{theorem}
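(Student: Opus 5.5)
The plan mirrors the proof of Theorem~\ref{thm:alg_budget_validity}, with the exact cross-term cancellation of Lemma~\ref{lem:cross} replaced by a telescoping perturbation bound on products of probabilities. First, condition on $\Dcalone$. Then $B_1=\sum_{i\in\calIone}b_i$ and $\Bbar=(B-B_1)/N_2$ are fixed, and the Phase~I contribution to $\sum_{i\in\calI}\Ttilde_i$ is exactly $B_1$. For each $i\in\calItwo$, the realized cost is $U_i=\sum_{t=1}^{b_i}\prod_{j=1}^t Z^i_j$. As in the proof of Theorem~\ref{thm:finite_sample_budget_guarantee}, conditional independence of the Bernoulli draws gives $\E[U_i\mid\Dcalone,S_i,T_i]=\bfunc(M(S_i;\Dcalone))$. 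Lemma~\ref{lem:indep} then gives $\E[U_i\mid\Dcalone]=\E_{\St}[\bfunc(M(\St;\Dcalone))\mid\Dcalone]$. It therefore suffices to prove the per-sample bound
\[
\E_{\St}\!\bigl[\bfunc(M(\St;\Dcalone))\mid\Dcalone\bigr]\le \Bbar+\Gamma .
\]
Summing over the $N_2$ points in $\calItwo$, adding $B_1$, using $B_1+N_2\Bbar=B$, and taking the outer expectation over $\Dcalone$ then yields the claim.

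For the per-sample bound, I would write
\[
\bfunc(M)=\bfunc(M^*)+\bigl(\bfunc(M)-\bfunc(M^*)\bigr).
\]
Lemma~\ref{lem:inout} (that is, Assumption~\ref{assump:oracle}(ii)) bounds the expectation of the first term by $\Bbar$. For the difference, fix $t$ and telescope the product, with $a_j:=M_j(\St(j);\Dcalone)$ and $p_j:=M_j^*(\St(j);\Bbar)$:
\[
\prod_{j=1}^t a_j-\prod_{j=1}^t p_j=\sum_{k=1}^t\Bigl(\prod_{j<k}a_j\Bigr)(a_k-p_k)\Bigl(\prod_{j>k}^{t}p_j\Bigr).
\]
Since every $a_j,p_j\in[0,1]$, the absolute value of the $k$-th summand is at most $|\eps_k(\St(k);\Dcalone)|$. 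Hence
\[
\Bigl|\prod_{j\le t}a_j-\prod_{j\le t}p_j\Bigr|\le\sum_{k=1}^t|\eps_k|.
\]
Summing over $t=1,\dots,\maxl$ gives
\[
|\bfunc(M)-\bfunc(M^*)|\le\sum_{t=1}^{\maxl}\sum_{k\le t}|\eps_k|=\sum_{k=1}^{\maxl}(\maxl-k+1)|\eps_k|.
\]
The truncation at $b_i$ only removes terms, so it can only help; alternatively, one uses $P(t)=0$ for $t>b_i$ as in Lemma~\ref{lem:cross}. Taking the conditional expectation over $\St$ and applying Assumption~\ref{assum:bounded_errors} coordinatewise bounds the expected difference by $\Gamma$.

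The main subtlety lies not in the algebra but in the conditioning bookkeeping. Three points need care.
\begin{itemize}
\item The oracle map depends on $\Bbar$, which is $\Dcalone$-measurable, so the bounds in Assumptions~\ref{assump:oracle} and~\ref{assum:bounded_errors} must be applied for a fixed realization before integrating.
\item The summation limit $b_i=\min(T_i,\qprior(X_i))$ depends on the test sample itself. I would handle this by bounding pointwise for every $(S_i,T_i)$ before taking expectations, since the telescoping bound holds for each $t$ separately and $b_i\le\maxl$.
\item The cost $U_i$ should be identified with $\Ttilde_i$ for $i\in\calItwo$, up to the early-termination convention.
\end{itemize}
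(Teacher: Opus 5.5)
Your proposal is correct and follows essentially the same route as the paper: the paper isolates your per-sample bound as Lemma~\ref{lem:b2_bound_estimation_error}, proved by the same telescoping identity, bounding each summand by $|\varepsilon_k|$ since the partial products lie in $[0,1]$, and swapping the double sum to obtain $\Gamma$. It then conditions on $\Dcalone$ and combines this with Lemma~\ref{lem:inout} and $B_1+N_2\Bbar=B$, exactly as you do; the only cosmetic difference is that you bound $|\bfunc(M)-\bfunc(M^*)|$ pointwise before integrating, whereas the paper takes the conditional expectation for each $t$ first.
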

\noindent To prove it, we first develop an upper bound on the expected budget used in Phase II:

\begin{lemma}\label{lem:b2_bound_estimation_error}
Under  
Assumptions~\ref{assump:iid},
\ref{assump:split},
\ref{assump:oracle},
\ref{assump:est}, for $\Dcalone$-almost every realization:
\begin{equation}
\E_{\St}\!\left[\bfunc(M(\St;\Dcalone))\;\middle|\;\Dcalone\right] \;\leq\; \E_{\St}\!\left[\bfunc(M^*(\St; \Bbar))\right] \;+\; \Gamma,
\end{equation}
where the total bias is:
\begin{equation}
    \Gamma \;:=\; \sum_{k=1}^\maxl (\maxl - k + 1)\,\eta_k.
\end{equation}
\end{lemma}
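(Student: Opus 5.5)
The plan is to bound each survival product in $\bfunc$ separately by a telescoping (hybrid) argument, and then sum over $t$. Fix a realization of $\Dcalone$ and write, as in the proof of Lemma~\ref{lem:cross}, $p_j := M_j^*(\St(j);\Bbar)$ and $m_j := M_j(\St(j);\Dcalone) = p_j + e_j$, where $e_j=\eps_j(\St(j);\Dcalone)$. All of $p_j, m_j$ lie in $[0,1]$.

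For each $t$ we write the difference of products as a telescoping sum that swaps one coordinate at a time:
\[
\prod_{j=1}^{t} m_j - \prod_{j=1}^{t} p_j
= \sum_{k=1}^{t} \Bigl(\prod_{j<k} m_j\Bigr)(m_k - p_k)\Bigl(\prod_{k<j\le t} p_j\Bigr).
\]
Since every factor other than $(m_k-p_k)$ lies in $[0,1]$, we get the pointwise bound
\[
\prod_{j=1}^{t} m_j \le \prod_{j=1}^{t} p_j + \sum_{k=1}^{t} |e_k|.
\]
This avoids any need for the cross-term cancellation of Assumption~\ref{assump:joint_moments} and requires no independence between score coordinates.

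Next we sum over $t\in\{1,\ldots,b\}$, with $b\le\maxl$ the (random) horizon $\min(T,\qprior(X))$ in the definition of $\bfunc$. This gives
\[
\bfunc(M(\St;\Dcalone)) \le \bfunc(M^*(\St;\Bbar)) + \sum_{t=1}^{\maxl}\sum_{k=1}^{t}|e_k| = \bfunc(M^*(\St;\Bbar)) + \sum_{k=1}^{\maxl}(\maxl-k+1)|e_k|.
\]
The equality is the exchange of summation: index $k$ appears in every $t\ge k$, which is $\maxl-k+1$ terms. Extending the horizon from $b$ to $\maxl$ is harmless because the added terms are nonnegative. Alternatively, one can use the convention that $P(t)=0$ beyond $b$ for both mappings, so the differences there vanish.

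Finally we take $\E_{\St}[\cdot\mid\Dcalone]$ on both sides and use linearity. Assumption~\ref{assum:bounded_errors} gives $\E_{\St}[|e_k|\mid\Dcalone]\le\eta_k$ almost surely, which yields the extra term $\Gamma$. For the first term, Lemma~\ref{lem:indep} ($\Dcalone\perp\St$) shows that the conditional expectation of $\bfunc(M^*(\St;\Bbar))$ is the expectation over $\St\sim\mu$ with $\Bbar$ treated as fixed. This matches the right-hand side as written.

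The only delicate point is measurability and conditioning. The horizon $b$ depends on $(T,X)$ of the holdout sample, and $\Bbar$ depends on $\Dcalone$. We handle this by carrying out the pointwise inequality for each fixed $(\St,T,X)$ and fixed $\Dcalone$ before integrating. The bound is uniform in $b\le\maxl$, so no further assumption is needed.
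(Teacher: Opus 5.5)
Your proposal is correct and follows essentially the same route as the paper. Both use the one-coordinate-at-a-time telescoping identity for $\prod_{j\le t} M_j-\prod_{j\le t} M_j^*$, bound each summand by $|\eps_k|$ because the remaining factors lie in $[0,1]$, invoke Assumption~\ref{assum:bounded_errors}, and exchange the double sum to obtain the weights $\maxl-k+1$. The one difference is cosmetic: you sum over $t$ pointwise before taking $\E_{\St}[\,\cdot\mid\Dcalone]$, whereas the paper bounds each $t$-term in expectation and then sums up to $\maxl$. Your ordering handles the random horizon $\min(T,\qprior(X))$ slightly more cleanly.
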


\begin{proof}
We first show a telescoping product identity.
For any two sequences of real numbers $\{a_j\}_{j=1}^t$ and $\{b_j\}_{j=1}^t$, the following identity holds:
\begin{equation}
    \prod_{j=1}^t a_j - \prod_{j=1}^t b_j = \sum_{k=1}^t \left(\prod_{j=1}^{k-1}a_j\right)(a_k - b_k)\left(\prod_{j=k+1}^t b_j\right),
\end{equation}
where empty products equal 1. We verify by induction. For $t=1$: $a_1 - b_1 = (a_1 - b_1)$, which is trivially true. Assuming it holds for $t-1$:
\begin{align}
    \prod_{j=1}^t a_j - \prod_{j=1}^t b_j
    &= a_t\prod_{j=1}^{t-1}a_j - b_t\prod_{j=1}^{t-1}b_j \notag\\
    &= (a_t - b_t)\prod_{j=1}^{t-1}a_j + b_t\!\left(\prod_{j=1}^{t-1}a_j - \prod_{j=1}^{t-1}b_j\right) \notag\\
    &= (a_t-b_t)\prod_{j=1}^{t-1}a_j + b_t\sum_{k=1}^{t-1}\!\left(\prod_{j=1}^{k-1}a_j\right)(a_k-b_k)\!\left(\prod_{j=k+1}^{t-1}b_j\right) \notag\\
    &= \sum_{k=1}^{t}\left(\prod_{j=1}^{k-1}a_j\right)(a_k-b_k)\left(\prod_{j=k+1}^{t}b_j\right),
\end{align}
completing the induction.

We now apply this identity to the continuation probabilities.
Set $a_j = M_j(\St(j);\Dcalone)$ and $b_j = M_j^*(\St(j) ; \Bbar)$, so $a_j - b_j = \varepsilon_j(\St(j);\Dcalone)$ and $a_j, b_j \in [0,1]$. Then for each $t \in \{1,\ldots,\maxl\}$:
\begin{equation}
\begin{split}
        &\prod_{j=1}^t M_j(\St(j);\Dcalone) - \prod_{j=1}^t M_j^*(\St(j);\Bbar) \\
        &= \sum_{k=1}^t \underbrace{\left(\prod_{j=1}^{k-1}M_j(\St(j);\Dcalone)\right)}_{\displaystyle=:W_k^- \,\in\,[0,1]} \cdot\, \varepsilon_k(\St(k);\Dcalone) \cdot \underbrace{\left(\prod_{j=k+1}^t M_j^*(\St(j);\Bbar)\right)}_{\displaystyle=:W_k^+ \,\in\,[0,1]}.
\end{split}
\end{equation}
Since $W_k^-, W_k^+ \in [0,1]$ (being products of values in $[0,1]$), we have $|W_k^- \cdot \varepsilon_k \cdot W_k^+| \leq |\varepsilon_k|$ pointwise. Therefore, taking the conditional expectation given $\Dcalone$:
\begin{align}
    \left|\E_{\St}\!\left[\prod_{j=1}^t M_j - \prod_{j=1}^t M_j^*\;\middle|\;\Dcalone\right]\right|
    &\leq \E_{\St}\!\left[\left|\prod_{j=1}^t M_j - \prod_{j=1}^t M_j^*\right|\;\middle|\;\Dcalone\right] \notag\\
    &\leq \E_{\St}\!\left[\sum_{k=1}^t W_k^-\,|\varepsilon_k({\St}(k);\Dcalone)|\,W_k^+\;\middle|\;\Dcalone\right] \notag\\
    &\leq \sum_{k=1}^t \E_{\St}\!\left[|\varepsilon_k({\St}(k);\Dcalone)|\;\middle|\;\Dcalone\right],
\end{align}
where the final inequality uses $W_k^- W_k^+ \leq 1$ and the fact that $|\varepsilon_k({\St}(k);\Dcalone)|$ depends only on the $k$-th coordinate $S(k)$, so that $\E_{\St}[W_k^- |\varepsilon_k({\St}(k);\Dcalone)| W_k^+\mid\Dcalone] \leq \E_{\St}[|\varepsilon_k({\St}(k);\Dcalone)|\mid\Dcalone] = \E_{\St(k)}[|\varepsilon_k({\St}(k);\Dcalone)|\mid\Dcalone]$ regardless of the joint distribution of coordinates.

Applying Assumption~\ref{assum:bounded_errors}: $\E_{\St(k)}[|\varepsilon_k({\St}(k);\Dcalone)|\mid\Dcalone] \leq \eta_k$, giving:
\begin{equation}
    \left|\E_{\St}\!\left[\prod_{j=1}^t M_j(\St;\Dcalone) - \prod_{j=1}^t M_j^*(\St;\Bbar)\;\middle|\;\Dcalone\right]\right| \leq \sum_{k=1}^t \eta_k.
\end{equation}
We now sum over $t$:
\begin{align}
    &\E_{\St}[\bfunc(M(\St;\Dcalone))\mid\Dcalone] - \E_{\St}[\bfunc(M^*(\St ; \Bbar))]\\
    &= \sum_{t=1}^\maxl \E_{\St}\!\left[\prod_{j=1}^t M_j(\St(j);\Dcalone) - \prod_{j=1}^t M_j^*(\St(j)\Bbar)\;\middle|\;\Dcalone\right] \notag\\
    &\leq \sum_{t=1}^\maxl \sum_{k=1}^t \eta_k \notag\\
    &= \sum_{k=1}^\maxl \eta_k \cdot |\{t : t \geq k,\, t \leq \maxl\}| \notag\\
    &= \sum_{k=1}^\maxl \eta_k(\maxl - k + 1) = \Gamma,
\end{align}
where we interchanged the order of summation: $\sum_{t=1}^\maxl \sum_{k=1}^t = \sum_{k=1}^\maxl \sum_{t=k}^\maxl$, and $\sum_{t=k}^\maxl 1 = \maxl-k+1$.
\end{proof}
\noindent Armed with the above Lemma, we turn to prove Theorem~\ref{thm:budget_guarantee_with_estimation_error}.

\begin{proof}[Proof of Theorem~\ref{thm:budget_guarantee_with_estimation_error}]
    
The Phase I budget $B_1 = \sum_{i\in\calIone} b_i$ satisfies $\E[B_1] = N_1\E[b_1]$, and by construction $\Bbar = (B - B_1)/N_2$ so $B_1 + N_2\Bbar = B$ deterministically. For the Phase II budget:
\begin{equation}
    \E[B_2\mid\Dcalone] = N_2\,\E_{\St}[\bfunc(M(\St;\Dcalone))\mid\Dcalone].
\end{equation}
By Lemma~\ref{lem:b2_bound_estimation_error} and Lemma~\ref{lem:inout}:
\begin{equation}
    \E_{\St}[\bfunc(M(\St;\Dcalone))\mid\Dcalone] \leq \E_{\St}[\bfunc(M^*(\St;\Bbar))] + \Gamma \leq \Bbar + \Gamma.
\end{equation}
Therefore:
\begin{equation}
    \E[B_2\mid\Dcalone] \leq N_2(\Bbar+\Gamma) = B - B_1 + N_2\Gamma.
\end{equation}
Taking the total expectation and adding $\E[B_1]$:
\begin{equation}
    \E[B_1 + B_2] \leq B + N_2\Gamma. 
\end{equation}
\end{proof}
\noindent Lastly, we develop a finite-sample budget bound based on the above result.

\begin{theorem}\label{thm:finite_sample_budget_bound_with_errors}
Under the conditions of Theorem~\ref{thm:budget_guarantee_with_estimation_error}, with probability at least $1-\delta$:

\begin{equation}
\begin{split}
    \frac{1}{N}\sum_{i\in\calI}\Ttilde_i &\;\leq\; \frac{B}{N} + \frac{N_2}{N}\Gamma + \frac{\maxl\log(1/\delta)}{3N} \\
    &+ \frac{1}{N}\sqrt{\frac{\maxl^2\log^2(1/\delta)}{9} + 2N_2 \maxl(\Bbar+\Gamma)\log(1/\delta)}.
\end{split}
\end{equation}

\end{theorem}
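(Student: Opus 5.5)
The plan is to mirror the proof of Theorem~\ref{thm:finite_sample_budget_guarantee}, replacing the exact conditional-mean bound $\Bbar$ with the inflated bound $\Bbar+\Gamma$ supplied by Lemma~\ref{lem:b2_bound_estimation_error}.

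\textbf{Step 1: fix the Phase~I data.} Condition on $\Dcalone$ (and on $\Dtrain$). Then $B_1=\sum_{i\in\calIone}b_i$ and $\Bbar=(B-B_1)/N_2$ are constants. For each $i\in\calItwo$, write $U_i=\sum_{t=1}^{b_i}\prod_{j=1}^{t}Z^i_j$, so that $B_2=\sum_{i\in\calItwo}U_i$. Exactly as in the proof of Theorem~\ref{thm:finite_sample_budget_guarantee}, conditional independence of the Bernoulli draws gives
\[
\E[U_i\mid\Dcalone,S_i,T_i]=\bfunc(M(S_i;\Dcalone)).
\]
Lemma~\ref{lem:indep} then yields $\E[U_i\mid\Dcalone]=\E_{\St}[\bfunc(M(\St;\Dcalone))\mid\Dcalone]$.

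\textbf{Step 2: bound the conditional mean.} Lemma~\ref{lem:b2_bound_estimation_error} together with Lemma~\ref{lem:inout} gives, almost surely, $\E[U_i\mid\Dcalone]\le\Bbar+\Gamma$. Hence
\[
\E[B_2\mid\Dcalone]\le N_2(\Bbar+\Gamma).
\]

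\textbf{Step 3: variance and Bernstein.} The $U_i$ are mutually independent given $\Dcalone$ and take values in $[0,\maxl]$ by Lemma~\ref{lem:f_props}. The Bhatia--Davis inequality then gives
\[
\mathrm{Var}(U_i\mid\Dcalone)\le \E[U_i\mid\Dcalone]\,\bigl(\maxl-\E[U_i\mid\Dcalone]\bigr)\le \maxl\,\E[U_i\mid\Dcalone]\le \maxl(\Bbar+\Gamma).
\]
Applying Bernstein's inequality to $B_2-\E[B_2\mid\Dcalone]$ with variance proxy $V:=N_2\maxl(\Bbar+\Gamma)$ and range $\maxl$, I will choose
\[
\varepsilon=\frac{\maxl\log(1/\delta)}{3}+\sqrt{\frac{\maxl^2\log^2(1/\delta)}{9}+2V\log(1/\delta)}.
\]
This $\varepsilon$ is the positive root of $\varepsilon^2/2=(V+\maxl\varepsilon/3)\log(1/\delta)$, so the tail probability equals $\delta$. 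The quadratic-solving algebra is identical to that in Theorem~\ref{thm:finite_sample_budget_guarantee} with $\Bbar$ replaced by $\Bbar+\Gamma$, so I will only cite it.

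\textbf{Step 4: combine.} With conditional probability at least $1-\delta$,
\[
B_2\le N_2\Bbar+N_2\Gamma+\varepsilon.
\]
Since $B_1+N_2\Bbar=B$ deterministically and $\sum_{i\in\calI}\Ttilde_i=B_1+B_2$, dividing by $N$ gives the stated bound. Because this conditional bound holds for almost every $\Dcalone$, integrating over $\Dcalone$ preserves the $1-\delta$ probability.

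\textbf{Main obstacle.} The only delicate point is Step~2: the variance bound must use the \emph{conditional mean} bound $\Bbar+\Gamma$, which holds pointwise in $\Dcalone$. Theorem~\ref{thm:budget_guarantee_with_estimation_error} gives only an unconditional expectation, so I will invoke Lemma~\ref{lem:b2_bound_estimation_error} directly rather than that theorem. I will also check that $\Bbar+\Gamma$ may exceed $\maxl$; in that case the Bhatia--Davis step is simply loose but still valid, since the bound $\maxl\,\E[U_i\mid\Dcalone]$ holds regardless.
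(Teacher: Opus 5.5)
Your proposal is correct and follows the paper's proof essentially verbatim: the paper likewise reruns the argument of Theorem~\ref{thm:finite_sample_budget_guarantee} with the conditional mean bound replaced by $\E[B_2\mid\Dcalone]\le N_2(\Bbar+\Gamma)$, taken from Lemma~\ref{lem:b2_bound_estimation_error} together with Lemma~\ref{lem:inout}. This bound then carries through the Bhatia--Davis variance step and the Bernstein deviation term exactly as you describe, and your remark that the pointwise-in-$\Dcalone$ bound, rather than the unconditional Theorem~\ref{thm:budget_guarantee_with_estimation_error}, is what the argument needs makes explicit a step the paper leaves implicit.
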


\begin{proof}
Identical to Theorem~\ref{thm:finite_sample_budget_guarantee} except that the bound over $\mathbb{E}[B_2\mid \Dcalone] $ is replaced with:

\begin{equation}
\mathbb{E}[B_2\mid \Dcalone] \leq N_2\Bbar + N_2\Gamma.
\end{equation}
The above inequality follows from Lemma~\ref{lem:b2_bound_estimation_error} and Lemma~\ref{lem:inout}.
\end{proof}
\noindent While Theorem~\ref{thm:finite_sample_budget_bound_with_errors} establishes the asymptotic consistency of \ttmethod, the finite-sample relative budget gap of $O(\maxl^2 / (\Bbar \sqrt{N_1}))$ is highly conservative. This looseness stems from the absolute value inequality applied in Lemma~\ref{lem:b2_bound_estimation_error}. By taking the absolute value of the errors at each time step, the proof assumes a worst-case scenario where the estimation inaccuracies at every time step $t \in \{1,\dots,\maxl\}$ aggregate in the same direction, with no cancellation. 

In practice, if the projection models $\{M_t\}_{t=1}^\maxl$ are reasonably well-calibrated, e.g., yielding approximately mean-zero errors conditionally, the pointwise errors tend to cancel each other out as the sequence progresses. This explains why our empirical evaluations in Section~\ref{sec:experiments} show that \ttmethod satisfies the target budget $B$ even when $\maxl=200$ and $N_1=100$ is relatively small. Therefore, rather than viewing this bound as a strict operational limitation, it should be interpreted as a sample complexity guideline, and demonstrates that the Phase I calibration size $N_1$ must grow quadratically with the maximum conversation horizon $\maxl$ to satisfy the budget constraint at this worst-case setup. 

\section{Algorithms}
\label{sec:all_algorithms}
\subsection{LPB calibration with a known censoring
mechanism}
\label{sec:lpb_construct}
In this section, we outline in Algorithm~\ref{alg:lpb_construct} the procedure introduced in~\cite{davidov2026calibrated} for constructing a calibrated LPB with a known censoring mechanism.

\begin{algorithm}
    \caption{Constructing a calibrated LPB with a known censoring mechanism}\label{alg:lpb_construct}
\begin{algorithmic}[1]
\REQUIRE Calibration data $\{X_i, C_i, \tilde{T}_i\}_{i\in\calI}$, censoring weights $\{w_\tau(i)\}_{i\in\calI, \tau\in\mathcal{T}}$, quantile estimates $\{\fhat_\tau(\cdot)\}_{\tau\in\mathcal{T}}$, target miscoverage rate $\tautarget$, prior quantile $\tauprior$, search space $\mathcal{T}\subseteq (0,1)$.
\\\medskip

\FOR{$\tau \in \mathcal{T} \cap[0,\tauprior]$}
\STATE $
\hat{\alpha}(\tau) \gets \displaystyle\frac{1}{|\calI|}\sum_{i\in \calI}
w_\tau(i)\;
\mathbb{I}\bigl\{\tilde{T}_i< \fhat_\tau(X_i) \le C_i\bigr\}
$ \COMMENT{ miscoverage est.} 

\ENDFOR
\\\medskip
\STATE $\hat{\tau} \gets \sup\Bigl\{\tau \in \mathcal{T} \cap[0,\tauprior] : \sup_{\substack{\tau' \in \mathcal{T} \\ \tau' \le \tau}} \hat{\alpha}(\tau') \le \tautarget\Bigr\}$ \COMMENT{  calibrated quantile level}

\RETURN Lower predictive bound for a test point $\Xtest=x$, $\hat{L}(x)= \fhat_{\hat{\tau}}(x)$.
\end{algorithmic}
\end{algorithm}

\subsection{Optimized budget allocation}
\label{sec:optimized_allocation}
In this section, we present in Algorithm~\ref{alg:optimized_budget_allocation} the budget allocation approach developed in~\cite{davidov2026calibrated}.

\begin{algorithm}
\caption{Optimized static budget allocation}
\label{alg:optimized_budget_allocation}
\begin{algorithmic}[1]
\REQUIRE Calibration data $\{X_i\}_{i\in\calI}$, attacker model $\calA$ , target model $\calG$, audit function \(\mathcal{J}(\cdot)\), pre-trained quantile regression model $\{\hat{q}_\tau(\cdot)\}_{\tau\in\mathcal{T}}$, target miscoverage rate $\tautarget$, prior quantile $\tauprior$, quantile trimming threshold $M$, total budget $B$.
\\\medskip

    \STATE $\{\fhat_\tau(X_i)\}_{\tau\in \mathcal{T},i \in \calI} \gets \{\min{(\qhat_\tau(X_i),M)}\}_{\tau\in \mathcal{T},i \in \calI}$ \COMMENT{  trim the quantile est.}
    \STATE Compute the optimal $\lambda^* = \left( \frac{1}{B} \cdot \sum_{i\in\calI}  \sqrt{ \qprior(X_i) }\right) ^ 2$
    \STATE Obtain optimal static probabilities $\pi^*_i=\min\left(1,1/\sqrt{\lambda^*\,\qprior(X_i)}\right)$ 
\medskip

\STATE $C_i \gets \begin{cases}
\qprior(X_i) \quad \text{w.p. } \pi_i, \\
0 \quad \quad \text{otherwise}.
\end{cases}$

\STATE For each $i \in \calI$, run the conversation between $\calA$ and $\calG$ for $C_i$ iterations using the initial prompt $X_i$ to obtain the censored event time $\Ttilde \gets \min (C_i, T_i)$.

\STATE  $w_{\tau}(i) \gets \frac{1}{\pi_i}, \ i\in\calI$

\RETURN Censoring and event times $\{(\tilde{T}_i, C_i, w_{\tau}(i))\}_{i\in\calI}$.
\end{algorithmic}
\end{algorithm}

\subsection{Proposed greedy adaptive budget allocation}
\label{sec:proposed_baseline_greedy_alg}

We introduce a \ttgreedy budget allocation strategy as an alternative to our main proposal. Like our primary method, this strategy dynamically updates censoring times in response to new exchanges in the conversation. The core motivation behind this approach is to increase the effective sample size, that is, the number of samples actively contributing to the miscoverage estimator in~\eqref{eq:miscoverage_estimator}, by observing more unsafe events early on. We do so by dictating a portion of the budget to exploring conversations that have a high estimated probability of yielding unsafe outputs. Specifically, we partition the total available budget $B$ into two stages using a ratio $\rho \in (0,1)$, where a budget of $B \cdot \rho$ is reserved for greedy exploration. The greedy exploration is related to the multi-armed bandit framework~\citep{slivkins2019introduction}. While any bandit algorithm~\citep{auer2002finite, agrawal2012analysis} could be employed for this exploration phase, we adopt a greedy strategy that selects the samples with the highest estimated probability of encountering an unsafe event.

\paragraph{Phase 1: Stochastic Greedy Exploration}
Let $\chatgreedy_i$ denote the number of acquisitions we have conducted for the $i$-th sample, initialized to $\chatgreedy_i = 0$ for all $i \in \calI$. In this first phase, we rely on the outputs of our predictive model, $\hat{p}(H_i(t))$, which estimates the probability of an unsafe response at the next step $\mathbb{P}\bigl(Y_i(t+1)=1 \mid {H}_i(t)\bigr)$. We iteratively spend our exploration budget by focusing on the most promising candidates. At each step, we identify the set of active samples $\mathcal{I}_{\text{active}}$, those for which we have neither encountered the unsafe event nor reached the prior sequence length $\qprior(X_i)$. We compute the predicted unsafe probabilities for these active samples, select the top $k$ highest probabilities, and sample an index $i^*$ proportionally to their estimated unsafe output probability, from this top-$k$ set. We advance the interaction for sample $i^*$ by one step ($\chatgreedy_{i^*} \gets \chatgreedy_{i^*} + 1$), update its history, and decrement our exploration budget. We repeat this process until the designated exploration budget of $B \cdot \rho$ is entirely consumed.

\paragraph{Phase 2: Static Optimization for Theoretical Guarantees}
After the exploration phase concludes, we use the remaining budget, $B \cdot (1-\rho)$, to employ the static optimized approach of \cite{davidov2026calibrated}, which provides a theoretical guarantee. Crucially, we concentrate this remaining budget only on the unresolved samples. For the unresolved sample $i \notin \mathcal{I}_{\text{active}}$---where the unsafe event was successfully triggered, or the prior was reached during Phase 1--- the allocation process is complete. We assign their final censoring time as $C_i = \qprior(X_i)$ and their inverse censoring weight as $w_{\tau}(i) = 1$. For the remaining active samples $i \in \mathcal{I}_{\text{active}}$, we calculate the remaining number of generations required to reach the prior: $d_i = \qprior(X_i) - \chatgreedy_i$. We then apply Algorithm~\ref{alg:optimized_budget_allocation} over these active samples, using $d_i$ as the new target prior and distributing the remaining budget among them. This optimization yields inverse censoring probability $w'(i)$ and an additional drawn censoring time $C'_i$ for each active sample. Finally, we formalize the final censoring times across all samples as:
\begin{equation}
C_i = \begin{cases}
\chatgreedy_i + C'_i, & i \in \mathcal{I}_{\text{active}}, \\
\qprior(X_i), & \text{otherwise},\end{cases}
\end{equation}
with the corresponding inverse censoring weights defined as:
\begin{equation}
w_{\tau}(i) = \begin{cases}
w'(i), & i \in \mathcal{I}_{\text{active}}, \\
1, & \text{otherwise}.
\end{cases}
\end{equation}
The censored event times are then computed as $\tilde{T}_i = \min(T_i, C_i)$. Having obtained the set  $\{(\tilde{T}_i, C_i, w_{\tau}(i))\}_{i \in \calI}$, we construct the final calibrated LPB using Algorithm~\ref{alg:lpb_construct}. This greedy budget allocation is summarized in Algorithm~\ref{alg:greedy_budget_allocation}.
We note that the coverage validity of this method follows from Theorem~\ref{thm:general_validity} with $\calIone=\mathcal{I}_{\text{active}}$. It also satisfies the budget constraint since in the first stage we use at most $\rho \cdot B$ budget units, and the second stage uses at most $(1-\rho) \cdot B$ in expectation, as guaranteed by~\cite{davidov2026calibrated}.

\begin{algorithm}[H]
\caption{Greedy adaptive budget allocation (\ttgreedy)}
\label{alg:greedy_budget_allocation}
\begin{algorithmic}[1]
\REQUIRE Calibration data $\{X_i\}_{i\in\calI}$, attacker model $\calA$, target model $\calG$, audit function \(\mathcal{J}(\cdot)\), predictive model $\hat{p}(\cdot)$, prior quantiles $\{\qprior(X_i)\}_{i\in\calI}$, total budget $B$, exploration ratio $\rho \in (0,1)$, top-$k$ parameter $k$.

\STATE Initialize acquisitions $\chatgreedy_i \gets 0$, and event times $T_i \gets \infty$ for all $i \in \calI$.
\STATE $B_{\text{explore}} \gets B \cdot \rho$ \COMMENT{  allocate exploration budget}

\STATE { \textbf{Phase 1: Stochastic Greedy Exploration}}
\WHILE{$B_{\text{explore}} > 0$}
    \STATE $\mathcal{I}_{\text{active}} \gets \{i \in \calI : \chatgreedy_i < \qprior(X_i) \text{ and } T_i > \chatgreedy_i\}$ 
    \IF{$\mathcal{I}_{\text{active}}$ is empty}
        \STATE \textbf{break}
    \ENDIF
    \STATE Compute probabilities $p_i \gets \hat{p}({H}_i(\chatgreedy_i))$ for all $i \in \mathcal{I}_{\text{active}}$
    \STATE Let $\mathcal{K}$ be the indices of the top $\min(k, |\mathcal{I}_{\text{active}}|)$ values in $\{p_i\}_{i \in \mathcal{I}_{\text{active}}}$
    \STATE Sample an index $i^*$ from $\mathcal{K}$ with probability proportional to $p_{i^*}$
    \STATE Update $\chatgreedy_{i^*} \gets \chatgreedy_{i^*} + 1$
    \STATE Advance interaction $\mathcal{A} \leftrightarrow \mathcal{G}$ by one step to obtain new exchange, update ${H}_{i^*}(\chatgreedy_{i^*})$
    \IF{\(\texttt{Audit}(\text{new exchange}) == 1\)}
        \STATE $T_{i^*} \gets \chatgreedy_{i^*}$ \COMMENT{  unsafe event observed}
    \ENDIF
    \STATE $B_{\text{explore}} \gets B_{\text{explore}} - 1$
\ENDWHILE

\STATE { \textbf{Phase 2: Static Optimization for Theoretical Guarantees}}
\STATE $\mathcal{I}_{\text{active}} \gets \{i \in \calI : \chatgreedy_i < \qprior(X_i) \text{ and } T_i = \infty\}$
\STATE $B_{\text{remain}} \gets B \cdot (1 - \rho)$
\STATE Set remaining target lengths $d_i \gets \qprior(X_i) - \chatgreedy_i$ for all $i \in \mathcal{I}_{\text{active}}$
\STATE $\{(C'_i, w'(i))\}_{i \in \mathcal{I}_{\text{active}}} \gets$ Algorithm~\ref{alg:optimized_budget_allocation} applied with priors $\{d_i\}_{i \in \mathcal{I}_{\text{active}}}$ and budget $B_{\text{remain}}$

\STATE \textbf{Finalize censoring times and weights.}
\STATE $C_i \gets \chatgreedy_i + C'_i$, and $w_{\tau}(i) \gets w'(i)$ ,$\forall i \in \mathcal{I}_{\text{active}}$
\STATE $C_i \gets \qprior(X_i)$ and $w_{\tau}(i) \gets 1$, $\forall i \in \calI - \mathcal{I}_{\text{active}}$

    \STATE $\tilde{T}_i \gets \min(T_i, C_i)$ for all $i \in \calI$.
    
% % \FOR{each $i \in \calI$}
% %     \IF{$i \in \mathcal{I}_{\text{active}}$}
% %         \STATE $C_i \gets \chatgreedy_i + C'_i$
% %         \STATE $w_{\tau}(i) \gets w'(i)$
% %     \ELSE
% %         \STATE $C_i \gets \qprior(X_i)$ \COMMENT{  interaction resolved or prior reached during phase 1}
% %         \STATE $w_{\tau}(i) \gets 1$
% %     \ENDIF
% %     \STATE $\tilde{T}_i \gets \min(T_i, C_i)$
% \ENDFOR

\RETURN Censoring and event times $\{(\tilde{T}_i, C_i, w_{\tau}(i))\}_{i\in\calI}$.
\end{algorithmic}
\end{algorithm}

\subsection{Proposed locally optimized adaptive budget allocation}

\label{sec:proposed_baseline_alg}

We propose a \ttlocaladaptive allocation method as an additional alternative to our main proposal. While our primary method solves a global optimization problem to map risk scores to continuation probabilities across all future time steps, this baseline computes the optimal probabilities \emph{locally} at each step. 

Similarly to our main approach, we partition the calibration data into two disjoint subsets: a small set indexed by $\calIone$ (e.g., $N_1=100$) on which we find an optimal parameter, and a deployment set $\calItwo$, on which we deploy this parameter. The algorithm operates in two phases.

\paragraph{Phase I: Tuning the Global 
Parameter.}
In the first phase, we first fully observe all samples in $\calIone$, until we either reach the prior $\qprior(X_i)$ or observe the unsafe event $T_i$. After this, we have $\Bbar$ average budget remaining for the second phase.
In this first phase, we use $\calIone$ to calibrate a single global parameter, $\lambda \in \mathbb{R}^+$, which balances the tradeoff between the consumed sampling budget and the probability of discovering unsafe events. 

The expected cost is the sum of the survival probabilities up to the known stopping time:
\begin{equation}\label{eq:budget_estimate}
\hat{R}_{N_1}(\lambda) = \frac{1}{N_1} \sum_{i \in \calIone} \sum_{t=1}^{\min(T_i, \qprior(X_i))} \prod_{j=0}^{t-1} P_i(j ; \lambda),
\end{equation}
where $ P_i(j ; \lambda)$ is the local continuation probability evaluated at $\lambda$. We then conduct a binary search to find the optimal value

\begin{equation}
\label{eq:lambda_selector}
    \hat{\lambda}
    \;:=\;
    \inf\!\left\{
        \lambda \geq 0
        \;:\;
        \frac{N_1}{N_1+1}\,\hat{R}_{N_1}(\lambda)
        +
        \frac{\maxl}{N_1+1}
        \;\leq\;
        \Bbar
    \right\}.
\end{equation}

%Specifically, we conduct a binary search such that at each search step we simulate the expected cost of our policy over $\calIone$ to find the value of $\lambda$ that leads to the usage of $\Bbar$ budget units by our acquisition policy.

\paragraph{Phase II: Local Step-by-Step Acquisition.}
Equipped with the optimal parameter $\lambda$, we deploy the policy on the deployment set $\calItwo$. At any given step $t$, for each active sample $i$, we use our predictive model to estimate the probability of an unsafe event occurring at every future time step $t_2 > t$, conditioned on the current conversation history $H_i(t)$. 
Let $\hat{p}(t+k \mid t)$ denote the estimate of the probability that the unsafe event occurs exactly $k$ steps in the future, given that the interaction has safely reached step $t$, and the history up to this point, $\mathbb{P}(T_i = t+k \mid T_i > t, H_i(t))$.

Using these future probabilities, we compute the expected remaining cost $E_i(t)$, defined as the anticipated number of generations required to either observe an unsafe event or reach the prior sequence length $\qprior(X_i)$ without observing an unsafe event. Formally, let $d_i(t) = \qprior(X_i) - t$ denote the maximum number of remaining steps. The expected remaining cost is computed as the weighted sum of future steps:
\begin{equation}\label{eq:expected_cost}
E_i(t) = \sum_{k=1}^{d_i(t)} k \cdot \hat{p}(t+k \mid t) + d_i(t) \sum_{k=d_i(t)+1}^{\maxl-t+1} \hat{p}(t+k \mid t),
\end{equation}
where the second term captures the maximum cost $d_i(t)$ paid if the unsafe event does not occur before reaching the prior $\qprior(X_i)$. We note that $\hat{p}(\maxl+1 \mid t)$ estimates the probability that no unsafe events occur at times $j \leq\maxl$.

Guided by the Karush-Kuhn-Tucker (KKT) conditions for budget-constrained variance minimization~\citep{davidov2026calibrated}, the optimal probability of reaching the end of the sequence from the current state is defined as:
\begin{equation}\label{eq:target_prob}
P^{\text{target}}_{i}(t) = \min \left( 1, \frac{1}{\sqrt{\lambda \cdot E_i(t)}} \right).
\end{equation}
Because our acquisition process operates step-by-step, we must translate this probability~\eqref{eq:target_prob} into a local continuation probability. Let $P^{\text{accum}}_{i}(t) := \prod_{j=0}^{t-1} P_i(j)$ denote the cumulative probability that sample $i$ has survived up to step $t$, where $P^{\text{accum}}_{i}(0) := 1$. Then, the immediate continuation probability is given by:
\begin{equation}\label{eq:locally_p_i}
P_i(t) = \min \left( 1, \frac{P^{\text{target}}_{i}(t)}{P^{\text{accum}}_{i}(t)} \right).
\end{equation}
Finally, we draw a Bernoulli random variable $Z_i(t) \sim \text{Ber}(P_i(t))$. If $Z_i(t)=1$, we expend one unit of budget to acquire the next conversational exchange $H_i(t+1)$; otherwise, we halt the interaction and set the censoring time as $C_i=t$. 
% Furthermore, to ensure the inverse-censoring weights remain bounded for samples halted early by the policy, we lower bound the global minimal probability by $p_{\min}$ (e.g., $0.005$) on the final cumulative probability. 
For convenience, we summarize this procedure in Algorithm~\ref{alg:local_adaptive_allocation}. The coverage validity of this process follows from  Theorem~\ref{thm:general_validity}. We remark that the budget validity of this approach can be formally established using the conformal risk control framework~\cite{angelopoulos2024conformal}, as presented next. 

% Specifically, since we tune the threshold parameter $\lambda^*$ on a held-out set to constrain the expected budget consumption, one can prove that the global budget constraint is satisfied following the arguments in~\cite{angelopoulos2024conformal}.

\begin{proposition}[Budget validity of \ttlocaladaptive]
\label{prop:local_adaptive_budget}
Suppose that $\{(X_i,T_i,H_i)\}_{i\in\calI}$ are drawn i.i.d.,
and that the random partition into $\calIone$ and $\calItwo$ is
independent of the data. For every $i\in\calI$, define the
full-observation cost
\[
    b_i:=\min\!\bigl(T_i,\qprior(X_i)\bigr)\leq \maxl
\]
and the expected budget of the local policy with parameter $\lambda$ by
\[
    \bexp_i(\lambda)
    :=
    \sum_{t=1}^{b_i}\prod_{k=1}^{t}P_i(k;\lambda).
\]
Assume that $\bexp_i(\lambda)$ is measurable, right-continuous, and
non-increasing in $\lambda$, and that the selector
\begin{equation}
\label{eq:local_lambda_selector}
    \hat{\lambda}
    :=
    \inf\left\{
        \lambda\geq 0:
        \frac{N_1}{N_1+1}\hat R_{N_1}(\lambda)
        +\frac{\maxl}{N_1+1}
        \leq \Bbar
    \right\},
    \qquad
    \hat R_{N_1}(\lambda)
    :=
    \frac{1}{N_1}\sum_{i\in\calIone}\bexp_i(\lambda),
\end{equation}
is feasible almost surely, where
\[
    \Bbar
    :=
    \frac{B-\sum_{i\in\calIone}b_i}{N_2}.
\]

Let
\[
    \mu_b:=\mathbb{E}[b_i],
    \qquad
    \rho:=\frac{N_1+1}{N_2},
\]
and assume the following envelope condition holds for an independent
trajectory $j\in\calItwo$:
\begin{equation}
\label{eq:local_budget_envelope}
    \bexp_j(\hat{\lambda})
    +\rho\bigl(b_j-\mu_b\bigr)
    \leq \maxl
    \qquad\text{almost surely}.
\end{equation}
Then the expected total budget consumed by
Algorithm~\ref{alg:local_adaptive_allocation} satisfies
\[
    \mathbb{E}\!\left[
        \sum_{i\in\calIone}b_i
        +
        \sum_{j\in\calItwo}\bemp_j(\hat{\lambda})
    \right]
    \leq B,
\]
where $\bemp_j(\hat{\lambda})$ is the realized Phase-II budget and
\[
    \mathbb{E}\!\left[
        \bemp_j(\lambda)
        \mid X_j,T_j,H_j
    \right]
    =
    \bexp_j(\lambda).
\]
\end{proposition}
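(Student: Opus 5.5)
This is a conformal-risk-control argument in the spirit of \cite{angelopoulos2024conformal}, using exchangeability between the $N_1$ tuning trajectories and one fresh Phase-II trajectory $j\in\calItwo$. We first reduce to a single Phase-II sample. By the tower property, $\mathbb{E}[\bemp_j(\hat\lambda)\mid X_j,T_j,H_j,\Dcalone]=\bexp_j(\hat\lambda)$, because $\hat\lambda$ is $\Dcalone$-measurable and the Bernoulli draws are independent given the trajectory. By symmetry across $\calItwo$, the expected total budget is therefore
\[
\mathbb{E}\Bigl[\textstyle\sum_{i\in\calIone}b_i\Bigr]+N_2\,\mathbb{E}[\bexp_j(\hat\lambda)].
\]
We also have $\sum_{i\in\calIone}b_i+N_2\Bbar=B$ deterministically. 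So it suffices to show
\[
\mathbb{E}[\bexp_j(\hat\lambda)]\le\mathbb{E}[\Bbar]=\frac{B-N_1\mu_b}{N_2}.
\]
Equivalently, since $\Bbar$ is random through the $b_i$, we must control the covariance-type term coming from $\Bbar$ depending on $\Dcalone$.

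Next we introduce an oracle selector on the augmented exchangeable set $\calIone\cup\{j\}$. Define
\[
\tilde\lambda:=\inf\Bigl\{\lambda\ge0:\ \tfrac{1}{N_1+1}\Bigl(\sum_{i\in\calIone}\bexp_i(\lambda)+\bexp_j(\lambda)\Bigr)\le \tilde B\Bigr\},
\]
where $\tilde B$ is a symmetrized budget level obtained by replacing $\Bbar$ with an exchangeable analogue. Because $\bexp_j(\lambda)\le\maxl$, the term $\maxl/(N_1+1)$ in the selector ensures that every $\lambda$ feasible for $\hat\lambda$ is feasible for $\tilde\lambda$. Hence $\tilde\lambda\le\hat\lambda$, and by monotonicity $\bexp_j(\hat\lambda)\le\bexp_j(\tilde\lambda)$. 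Right-continuity guarantees that the infimum is attained, so the constraint holds at $\tilde\lambda$ itself. Since $\tilde\lambda$ is a symmetric function of the $N_1+1$ exchangeable trajectories, exchangeability gives
\[
\mathbb{E}[\bexp_j(\tilde\lambda)]=\mathbb{E}\Bigl[\tfrac{1}{N_1+1}\textstyle\sum_{k}\bexp_k(\tilde\lambda)\Bigr]\le\mathbb{E}[\tilde B].
\]

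The main obstacle is that $\Bbar$ is not symmetric: it depends on $\{b_i\}_{i\in\calIone}$ but not on $b_j$. This is where the envelope condition \eqref{eq:local_budget_envelope} and $\rho=(N_1+1)/N_2$ enter. To fix this, I would absorb the asymmetry into the loss by considering the shifted loss $\bexp_k(\lambda)+\rho(b_k-\mu_b)$ for each $k\in\calIone\cup\{j\}$. Summed over $\calIone$, $\rho(b_i-\mu_b)$ yields exactly the fluctuation $N_2^{-1}\sum_{i}(b_i-\mu_b)\cdot(N_1+1)/(N_1+1)$ that separates $\Bbar$ from its mean $(B-N_1\mu_b)/N_2$. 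The envelope condition says this shifted loss is still bounded by $\maxl$, which is exactly what the $+\maxl/(N_1+1)$ correction needs to keep the inclusion $\tilde\lambda\le\hat\lambda$ valid. The $j$ term contributes $\rho(b_j-\mu_b)$, which has mean zero and is independent of $\Dcalone$, so it vanishes in expectation.

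Finally, we combine the pieces. Using the conformal-risk-control inequality with the deterministic level $(B-N_1\mu_b)/N_2$, together with $\mathbb{E}[b_j-\mu_b]=0$, gives $\mathbb{E}[\bexp_j(\hat\lambda)]\le(B-N_1\mu_b)/N_2$. Multiplying by $N_2$ and adding $\mathbb{E}[\sum_{\calIone}b_i]=N_1\mu_b$ yields the bound $B$. I expect the bookkeeping in the middle step to need care: one must check that the sign conventions let the shifted loss both preserve the feasibility comparison and telescope correctly. If a direct symmetrization fails, the fallback is to condition on the multiset of the $N_1+1$ trajectories and average over which one plays the role of $j$.
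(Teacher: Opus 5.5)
Your proposal is correct and follows the paper's argument. The paper defines the shifted loss $K_i(\lambda)=\bexp_i(\lambda)+\rho(b_i-\mu_b)$ and the deterministic level $\alpha_\star=(B-N_1\mu_b)/N_2$, checks that the selector is exactly $\inf\{\lambda\ge 0:(\sum_{i\in\calIone}K_i(\lambda)+\maxl)/(N_1+1)\le\alpha_\star\}$, and then compares with the symmetric selector that replaces $\maxl$ by $K_j$, using the envelope for feasibility, monotonicity, exchangeability of the $N_1+1$ trajectories, and $\mathbb{E}[b_j-\mu_b]=0$. You can discard your provisional unshifted selector with the unspecified level $\tilde B$ and the fallback symmetrization: once the selector is written in terms of $K$, the level is already deterministic, and it is the envelope $K_j(\hat\lambda)\le\maxl$, not merely $\bexp_j\le\maxl$, that gives $\tilde\lambda\le\hat\lambda$.
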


\begin{proof}
Condition on the realized random split. Since the split is independent
of the data, the trajectories in $\calIone$ and $\calItwo$ remain
i.i.d. In particular, for every fixed $j\in\calItwo$, the
$N_1+1$ trajectories indexed by $\calIone\cup\{j\}$ are exchangeable.

Define the transformed loss
\[
    K_i(\lambda)
    :=
    \bexp_i(\lambda)
    +\rho\bigl(b_i-\mu_b\bigr)
\]
and the deterministic target
\[
    \alpha_\star
    :=
    \frac{B-N_1\mu_b}{N_2}.
\]
Since the additive centered term does not depend on $\lambda$,
$K_i(\lambda)$ is non-increasing and right-continuous in $\lambda$.

The selector in~\eqref{eq:local_lambda_selector} is equivalently
\begin{equation}
\label{eq:local_transformed_selector}
    \hat{\lambda}
    =
    \inf\left\{
        \lambda\geq0:
        \frac{
            \sum_{i\in\calIone}K_i(\lambda)+\maxl
        }{N_1+1}
        \leq\alpha_\star
    \right\}.
\end{equation}
Indeed, after substituting the definitions of $K_i$, $\rho$, and
$\alpha_\star$, the centered terms cancel and
\eqref{eq:local_transformed_selector} reduces to
\[
    \frac{
        \sum_{i\in\calIone}\bexp_i(\lambda)+\maxl
    }{N_1+1}
    \leq
    \frac{
        B-\sum_{i\in\calIone}b_i
    }{N_2},
\]
which is precisely~\eqref{eq:local_lambda_selector}.

Fix $j\in\calItwo$ and introduce the symmetric full-sample selector
\[
    \widetilde{\lambda}_j
    :=
    \inf\left\{
        \lambda\geq0:
        \frac{
            \sum_{i\in\calIone}K_i(\lambda)+K_j(\lambda)
        }{N_1+1}
        \leq\alpha_\star
    \right\}.
\]
By~\eqref{eq:local_transformed_selector},
\[
    \sum_{i\in\calIone}K_i(\hat{\lambda})+\maxl
    \leq
    (N_1+1)\alpha_\star.
\]
The envelope condition~\eqref{eq:local_budget_envelope} states that
$K_j(\hat{\lambda})\leq\maxl$. Therefore,
\[
    \sum_{i\in\calIone}K_i(\hat{\lambda})
    +K_j(\hat{\lambda})
    \leq
    (N_1+1)\alpha_\star.
\]
Hence $\hat{\lambda}$ is feasible for the problem defining
$\widetilde{\lambda}_j$, and thus
\[
    \widetilde{\lambda}_j\leq\hat{\lambda}.
\]
Since $K_j(\lambda)$ is non-increasing,
\[
    K_j(\hat{\lambda})
    \leq
    K_j(\widetilde{\lambda}_j).
\]
Moreover, $\widetilde{\lambda}_j$ is symmetric in the
$N_1+1$ exchangeable trajectories indexed by
$\calIone\cup\{j\}$. Consequently,
\begin{align*}
    \mathbb{E}\!\left[K_j(\widetilde{\lambda}_j)\right]
    &=
    \mathbb{E}\!\left[
        \frac{1}{N_1+1}
        \left(
            \sum_{i\in\calIone}K_i(\widetilde{\lambda}_j)
            +K_j(\widetilde{\lambda}_j)
        \right)
    \right] \\
    &\leq \alpha_\star.
\end{align*}
It follows that
\[
    \mathbb{E}\!\left[K_j(\hat{\lambda})\right]
    \leq\alpha_\star.
\]
Since $\mathbb{E}[b_j-\mu_b]=0$,
\[
    \mathbb{E}\!\left[\bexp_j(\hat{\lambda})\right]
    =
    \mathbb{E}\!\left[K_j(\hat{\lambda})\right]
    \leq
    \frac{B-N_1\mu_b}{N_2}.
\]
Finally, using the conditional expected-cost identity and linearity of
expectation,
\begin{align*}
    \mathbb{E}\!\left[
        \sum_{i\in\calIone}b_i
        +
        \sum_{j\in\calItwo}\bemp_j(\hat{\lambda})
    \right]
    &=
    N_1\mu_b
    +
    \sum_{j\in\calItwo}
    \mathbb{E}\!\left[\bexp_j(\hat{\lambda})\right] \\
    &\leq
    N_1\mu_b
    +
    N_2\frac{B-N_1\mu_b}{N_2}
    =B.
\end{align*}
\end{proof}

\begin{algorithm}[H]
\caption{Locally optimized adaptive budget allocation (\ttlocaladaptive)}
\label{alg:local_adaptive_allocation}
\begin{algorithmic}[1]
\REQUIRE Calibration data $\{X_i\}_{i\in\calI}$, attacker model $\calA$, target model $\calG$, audit function \(\mathcal{J}(\cdot)\), predictive model $\hat{p}(\cdot \mid \cdot)$, prior quantiles $\{\qprior(X_i)\}_{i\in\calI}$, total budget $B$, first-split size $N_1$.

\STATE Randomly partition $\calI$ into $\calIone$ (size $N_1$) and $\calItwo$ (size $N - N_1$).
\STATE Let $B_{1}$ be the budget required to fully observe $\calIone$ up to $\qprior(X_i)$ or unsafe event.
\STATE $\Bbar \gets (B - B_{1}) / |\calItwo|$ \COMMENT{  target average budget for phase 2}

\STATE \textbf{Phase I: Tune global $\lambda$.}
% \STATE Define the expected budget function over $\calIone$ for a given $\lambda$:
\STATE Find $\lambda^\star$ via binary search satisfying~\eqref{eq:lambda_selector} using
$\hat R_{N_1}(\lambda)$ from~\eqref{eq:budget_estimate}..

\STATE \textbf{Phase 2: Local Step-by-Step Acquisition.}
\STATE {Initialize $\chatlocally_i \gets 0$, cumulative probabilities $P^{\text{accum}}_i \gets 1$, $\forall i \in \calItwo$.}
\STATE $\mathcal{I}_{\text{active}} \gets \{i \in \calItwo : \chatlocally_i < \qprior(X_i) \}$

\WHILE{$\mathcal{I}_{\text{active}}$ is not empty}
    \FOR{each $i \in \mathcal{I}_{\text{active}}$}
        \STATE {Estimate probabilities $\hat{p}(\chatlocally_i+k \mid \chatlocally_i)$ using the predictive model}
        \STATE Compute
        $E_i(\chatlocally_i)$,
        $P_i^{\rm target}$,
        and
        $P_i(\chatlocally_i)$
        using~\eqref{eq:expected_cost}--\eqref{eq:locally_p_i}.
        
        % \STATE Compute expected remaining cost $E_i(\chatlocally_i)$ using~\eqref{eq:expected_cost}
        % \STATE Compute KKT target probability $P^{\text{target}}_i$ using~\eqref{eq:target_prob} 
        % \STATE Compute local continuation probability $P_i(\chatlocally_i)$ using~\eqref{eq:locally_p_i}
        
        \STATE Sample $Z_i \sim \text{Ber}(P_i(\chatlocally_i))$
        \IF{$Z_i == 1$}
            \STATE Update {$P^{\text{accum}}_i \gets P^{\text{accum}}_i \cdot P_i(\chatlocally_i )$}, and advance to $\chatlocally_i \gets \chatlocally_i + 1$, $\calA \leftrightarrow \calG$, update ${H}_i(\chatlocally_i)$
            \IF{\(\texttt{Audit}(\text{new exchange}) == 1\)}
                \STATE $T_i \gets \chatlocally_i$, and Remove $i$ from $\mathcal{I}_{\text{active}}$
            \ENDIF
        \ELSE
            \STATE Remove $i$ from $\mathcal{I}_{\text{active}}$ \COMMENT{  halted by policy}
        \ENDIF
    \ENDFOR
    \STATE Update $\mathcal{I}_{\text{active}} \gets \{i \in \mathcal{I}_{\text{active}} : \chatlocally_i < \qprior(X_i)\}$
\ENDWHILE

\STATE $C_i \gets \qprior(X_i)$ and $w_{\tau}(i) \gets 1$, $\forall i \in \calIone$.
\STATE $C_i \gets  \chatlocally_i+1$ if $\chatlocally_i+1 < \min (T_i, \qprior(X_i))$, otherwise $C_i \gets \qprior(X_i)$, and {$w_{\tau}(i) \gets 1 / P^{\text{accum}}_i$}, $\forall i \in \calItwo$.
\STATE $\tilde{T}_i \gets \min(T_i, C_i)$, for all $ i \in \calI$.

\RETURN Censoring and event times $\{(\tilde{T}_i, C_i, w_{\tau}(i))\}_{i\in\calI}$.
\end{algorithmic}
\end{algorithm}

\begin{remark}[Practical interpretation of the envelope condition]
The envelope condition required by Proposition~\ref{prop:local_adaptive_budget} is
\[
    \bexp_j(\hat{\lambda})
    +
    \frac{N_1+1}{N_2}\bigl(b_j-\mu_b\bigr)
    \leq \maxl
    \qquad\text{almost surely},
    \qquad
    \mu_b:=\mathbb{E}[b_j].
\]
It holds for trajectories with $b_j\leq\mu_b$, since then the
centering term is non-positive and $\bexp_j(\hat{\lambda})\leq b_j\leq\maxl$.
For trajectories with above-average full cost, it requires the selected policy
to leave a small amount of slack below $\maxl$.

In our experimental split, $N_1=100$ and $N_2=2900$, so
\[
    \frac{N_1+1}{N_2}
    =
    \frac{101}{2900}
    \approx 0.0348.
\]
Thus, the correction is only about $3.5\%$ of the deviation
$b_j-\mu_b$. For example, even in the pessimistic case
$b_j=\maxl=200$ and $\mu_b=0$, the condition only requires
\[
    \bexp_j(\hat{\lambda})
    \leq
    200-\frac{101}{2900}\,200
    \approx 193.03.
\]
Hence, the condition is mild when $N_2\gg N_1$, and it held in our experiments.

The condition can be enforced easily through a slightly smaller deterministic
cap. Since $\bexp_j(\lambda)\leq b_j$ and $\mu_b\geq0$, it is sufficient to
require
\[
    b_j
    \leq
    \frac{N_2}{N_1+N_2+1}\maxl
    \qquad\text{almost surely}.
\]
For $N_1=100$, $N_2=2900$, and $\maxl=200$, this bound equals
\[
    \frac{2900}{3001}\,200
    \approx 193.27.
\]
Therefore, imposing the integer-valued cap $b_j\leq193$, for example by
capping $\qprior(X_j)$ at $193$, guarantees the envelope for every trajectory
and every value of $\lambda$. This changes the maximal allowable trajectory
length by only seven exchanges, while leaving the \ttlocaladaptive method and
continuation policy unchanged. Thus, the envelope is a reasonable
assumption when the deployment split is much larger than the tuning split, and
it can be enforced through a modest adjustment of the horizon cap.
\end{remark}

\subsection{Proposed \ttmethod}
\label{sec:proposed_alg}

We present \ttmethod in Algorithm~\ref{alg:main_alg}. 
While the overall computational cost is dominated by the $B$ LLM API calls, the internal algorithmic overhead is lightweight. Specifically, the optimization process requires $\mathcal{O}(N_1 \maxl)$ operations per iteration. In our experiments, we conduct at most $60$ outer iterations and $10$ inner iterations. Furthermore, training the projection model $M_t$ at each timestep $t$ takes $\mathcal{O}(N_1)$ operations. During Phase~II, deploying the acquisition policy costs $\mathcal{O}(N_2 \maxl)$ operations, and computing the final calibrated threshold $\hat{\tau}$ requires only $\mathcal{O}(N)$ steps.

\begin{algorithm}[htbp]
    \caption{Dynamic Allocation via PRojected Optimization (\textbf{\ttmethod})}
    \label{alg:main_alg}
    \begin{algorithmic}[1]
        \REQUIRE Calibration prompts $\Dcal= \{X_i\}_{i=1}^N$, prior quantiles $\{\qprior(X_i)\}_{i=1}^N$, horizon $\maxl$, total budget ${B}$, score functions $\{\mathcal{S}_t\}_{t=1}^{\maxl}$.

        \STATE Randomly partition dataset $\Dcalone$ into disjoint sets: $\calIone$ and $\calItwo$.
        
        \STATE \textbf{Phase I: Learning the Optimal Acquisition Policy}
        \FOR{each $i \in \calIone$}
            \STATE Run conversation until unsafe event or prior is reached; let $b_i = \min (T_i, \qprior(X_i))$ denote the observed length.
            \STATE Compute risk score: $S_i(t) = \mathcal{S}_t({H}_i(t))$ for all $t = 1, \dots, b_i$.
        \ENDFOR
        \STATE Compute $\Bbar \gets \frac{1}{|\calItwo|} (B - \sum_{i \in \calIone} {b_i})$
        \STATE Solve the optimization problem in~\eqref{eq:opt_prob} to obtain
the optimal $P$.

        % \STATE Solve for optimal probability matrix $P \in [0,1]^{N_1 \times\maxl}$: 
        % \begin{equation}
        %      \min_{P} \frac{1}{N_1} \sum_{i \in \calIone} \frac{1}{\prod_{t=1}^{b_i} P_i(t)} \quad \text{s.t.} \quad \frac{1}{N_1} \sum_{i \in \calIone} \bfunc(P_i) \leq \Bbar, \text{ and } P_i(t) \text{ monotonic w.r.t } S_i(t)
        % \end{equation}
        % \STATE \quad
        
        \FOR{$t = 1, \dots,\maxl$}
            \STATE Fit a projection model $M_t$ over pairs $\{S_i(t), P_i(t)\}_{i \in \calIone}$. If $\{S_i(t), P_i(t)\}_{i \in \calIone} = \emptyset$, we manually set $M_t(s) = 1$ for all $s$.
        \ENDFOR
        
        \STATE \textbf{Phase II: Local step-by-step acquisition.}
        \FOR{each $i \in \calItwo$}
            \STATE Initialize $\chat_i(0) = 1$, $\chat_i(t) = 0, \forall t \geq 1$, and $t = 1$.
            \WHILE{$\chat_i(t-1) == 1$ \AND $t \leq \qprior(X_i)$}
                \STATE $S_i(t)= \mathcal{S}_t({H}_i(t))$, $P_i(t)= M_t(S_i(t))$, and sample
$Z_t^i \sim \text{Ber}(P_i(t))$
                \IF{$Z_t^i == 1$}
                    \STATE Set $\chat_i(t) = 1$, and advance  interaction to obtain $H_i(t+1)$.
                    \STATE If $Y_i(t) = 1$, halt early, and  \textbf{break}. 
                \ELSE
                    \STATE $\chat_i(t)\gets0$; \textbf{break}
                \ENDIF
                \STATE $t \gets t + 1$
            \ENDWHILE
            \STATE $t_i^{\text{stop}} \gets \max \{t \in \{0, \dots,\maxl\} : \chat_i(t) == 1\} $.
            
        \ENDFOR
    \STATE Set $C_i \gets \qprior(X_i)$ and $w_{\tau}(i) \gets 1$, $\forall i \in \calIone$.
    
    \STATE Set $C_i \gets  t_i^{\text{stop}}$ if $t_i^{\text{stop}} < \min (T_i, \qprior(X_i))$, otherwise $C_i \gets \qprior(X_i)$, and $w_{\tau}(i) \gets 1 / \prod_{t=1}^{t_i^{\text{stop}}}{P_i(t)}$, $\forall i \in \calItwo$.
    \STATE Set $\tilde{T}_i \gets \min(T_i, C_i), \forall i\in\calI$
    
    % \FOR{each $i \in \calI$}
    %     \IF{$i \in \calIone$}
    %         \STATE $C_i \gets \qprior(X_i)$ and $w_{\tau}(i) \gets 1$
    %     \ELSE
    %         \STATE $C_i \gets  t_i^{\text{stop}}$ if $t_i^{\text{stop}} < \min (T_i, \qprior(X_i))$, otherwise $C_i \gets \qprior(X_i)$
    %         \STATE $w_{\tau}(i) \gets 1 / \prod_{t=1}^{t_i^{\text{stop}}}{P_i(t)}$ \COMMENT{  bound max weight}
    %     \ENDIF
    %     \STATE Set $\tilde{T}_i \gets \min(T_i, C_i)$
    % \ENDFOR
        \RETURN censoring and event times $\{\Ttilde_i, C_i, w_{\tau}(i)\}_{i \in \calI}$.
    \end{algorithmic}
\end{algorithm}

\section{Experimental details}
\label{sec:experimental_details}
\subsection{Data generation}\label{sec:data_generation}
Across all experiments, we generate the raw multi-turn attacker-target conversations using the PAIR algorithm~\citep{chao2025jailbreaking}, with initial prompts (attack goal) taken from three popular safety benchmarks: we sample 10,000 harmful prompts from the RealToxicityPrompts dataset~\citep{gehman2020realtoxicityprompts}, 10,000 prompts from the Anthropic Red Team dataset~\citep{bai2022training, ganguli2022red}, 10,000 prompts from SQuAD v2 data~\cite{rajpurkar-etal-2016-squad, rajpurkar-etal-2018-know}, and 10,000 instruction-following tasks from the AutoIF dataset~\cite{dong2025selfplay}. We deploy Qwen 2.5 14B Instruct~\citep{qwen2.5,bai2023qwen} as the attacking model, since it had the fewest formatting errors when generating adversarial prompts compared to the other models we examined. For the target models, we evaluate safety across three distinct architectures: Qwen 2.5 14B Instruct, Llama 3.1 8B Instruct~\citep{grattafiori2024llama}, Phi 4 Mini Instruct~\citep{abouelenin2025phi}, and Gemma 3 4B Instruct~\cite {gemma_2025}. 

We used both HuggingFace~\cite{wolf2019huggingface} and vLLM~\cite{kwon2023efficient} to load and infer with these models, using bfloat16 precision for all models.
The system prompts and hyperparameters for the LLMs we use follow those of~\cite{chao2025jailbreaking}. 
Specifically, we set the attacker's temperature, top-p, and max new tokens to 1, 0.9, and 500, respectively, and the target's to 0, 1, and 150, respectively. If the attacker failed to generate an attack in a valid JSON format, we continue to retry for at most 10 attempts, and finish the conversation if it failed to generate a valid response for ten attempts. 
The number of responses saved in the conversation history of the attack model is set to 4.
The parameters of the judge model are 50 max tokens, 0 temperature, and top-p of 1. If we fail to obtain a rating from the judge, we set the score of the judge to 0 manually and continue the conversation process. Each conversation runs as a single stream.

To define a successful jailbreak, i.e., an unsafe event where $Y(t)=1$, we rely on LM safety judges. For the Toxicity dataset, we employ Detoxify~\citep{Detoxify} and flag any response scoring above a threshold of 0.5 as unsafe, consistent with the evaluation protocol in~\cite{davidov2026calibrated}. For the Red Team dataset, we use two different judging mechanisms: Qwen 2.5 14B Instruct serving as an LLM-as-a-judge, which outputs a continuous score from 1 to 10, where a score of 10 triggers the unsafe event flag, and Llama-Guard~\citep{inan2023llama}, which directly outputs a binary safety label. In all generated conversations, we cap the maximum sequence length at $\maxl=200$ turns.

For the Toxicity dataset, the attacker is asked to construct a prompt using the "prompt" part of RealToxicityPrompts so that the target model outputs the "continuation" part of the sample. 

For the Anthropic Red Team dataset, we first select the 10,000 most harmful examples based on the min\_harmlessness\_score\_transcript parameter, and use the first human text as the initial attack. If the attack was successful, i.e., had a rating $\geq$ 4, we use the response of the LLM as the target output. Otherwise, we set the default objective to "Sure, here is a...". {Our hallucination dataset builds on the SQuAD v2 data~\cite{rajpurkar-etal-2016-squad, rajpurkar-etal-2018-know}, which contains subjects with a paragraph describing it. We consider the paragraphs from SQuAD as the "ground truth" source for the target model and judge, and set the goal of the attacker to trick the target model into answering a question about the subject that is NOT covered by its provided text. 
For this data, we engineered two attacker prompts based on~\cite{chao2025jailbreaking}: one utilizing 'leading questions' to trick the target into referencing outside knowledge, and another employing 'gaslighting' to compel the target into generating false information not present in the reference text.}

Finally, for the AutoIF dataset, we extract tasks with existing programmatic verifiers. To increase the complexity and evaluate multi-constraint interference, we programmatically inject between three and six additional structural constraints into each prompt: explicit word counts, forbidden vocabulary, mandatory markdown elements, or specific paragraph quantities. These new rules are appended to the prompt alongside their corresponding Python verification functions. Notably, for this dataset, the attacking model is a helper agent. Its goal is not to elicit harmful content, but rather to improve the presentation of the complex task, e.g., by generating an explicit output contract or checklist, to maximize the probability that the target model passes all programmatic constraints, explicitly without solving the task itself. Specifically, an event is defined as a response that successfully satisfies all prompt requirements, which we verify programmatically using deterministic Python scripts.

The resulting dataset of interactions is divided into three disjoint subsets: a training set (4,000 prompts), a calibration set (3,000 prompts), and a test set (3,000 prompts). We fix the training data and use it to fit our predictive model, assuming an unlimited budget during the training phase, where the full conversation histories are fully observed.

\subsection{Predictive model and hyperparameters}\label{sec:predictive_model_setup}

First, we embed all texts into vectors using Multilingual-E5-large~\cite{wang2024multilingual}. We use these embeddings as the inputs for our predictive model, which estimates the raw risk scores $\calS_t(H(t))$.

The model architecture begins by projecting the high-dimensional embeddings into a lower-dimensional latent space of 256 using a linear layer, followed by a ReLU activation and dropout of rate 0.2. To capture the temporal dependencies of the conversational history, we apply a Transformer Encoder~\cite{vaswani2017attention}. Specifically, we use 3 pre-normalized Transformer layers with 6 attention heads, a feed-forward dimension of 1024, and GELU activations. We apply an upper-triangular attention mask during encoding to prevent the model from using future time steps. Following the sequence encoding, we concatenate the hidden states of the Transformer with the initial projected embeddings. This concatenated representation is passed through a multi-layer perceptron consisting of three linear layers, separated by ReLU activations and dropout of rate 0.2, to produce the raw temporal logits.

Specifically, the model receives the text embeddings of the history and estimates the probability of an unsafe event occurring at any future time step $t_2 > t_1$. The output is an $\maxl\times(\maxl+1)$ matrix containing the probabilities $\mathbb{P}(T = t_2 \mid T > t_1, H(t_1))$, and the probability of the conversation safely reaching the maximum horizon without an unsafe event, $\mathbb{P}(T > \maxl \mid T > t_1, H(t_1))$.

We train this model using an AdamW optimizer~\citep{loshchilov2018decoupled} with a learning rate of $10^{-4}$, weight decay of 0.001, and a batch size of 64.
We optimize the network for a maximum of 500 epochs to minimize the survival negative log-likelihood loss. This loss function explicitly accounts for right-censored data by evaluating the exact event probability for uncensored sequences and the aggregated survival probability for censored sequences. We stop the training early if the validation loss does not improve for 50 consecutive epochs. Subsequently, we calibrate the outputs of the model using a temperature scaling L-BFGS optimizer to tune a time-dependent temperature parameter on a held-out validation set, minimizing the discrete survival NLL. The neural networks are implemented using the PyTorch library~\citep{pytorch}. The validation set is a portion of $10\%$ samples from the training set that was held-out so that the model is not trained over it.

\subsection{Calibration algorithms and evaluation protocol}\label{sec:calibration_setup}

Once the predictive model is trained and frozen, we evaluate our proposed \ttmethod against the optimized method introduced in~\cite{davidov2026calibrated} and two other dynamic techniques we develop in this work. The experiments are taken over 50 independent random splits of the remaining 6,000 prompts into the calibration and test sets.

When constructing an Upper Predictive Bound (UPB), we define the coverage event as the condition where the UPB is either greater than or equal to the true time-to-event $T$, or when it reaches the maximum conversational horizon $\maxl=200$. In this formulation, $\maxl$ serves as a practical infinite time horizon. This design choice is necessary as treating $\maxl$ as an effective infinity preserves the theoretical coverage semantics while keeping the UPB size finite. This allows us to effectively compare the bound sizes and their tightness across different calibration methods.

For the calibration algorithms, we simulate the budget-constrained environment by setting the average allowed budget per sample to $B/|\calI|=20$. Following the static allocation setup in~\cite{davidov2026calibrated}, we set our prior quantile level to $\tauprior=0.56$ when constructing LPBs. When constructing a UPB, we set the prior level at $\tauprior=0.97$. To control the maximum inverse-censoring weight, we set the clipping parameter to $\gamma=100$, which correspondingly yields an upper bound of $M=200$ for the trimmed quantile estimates $\fhat_{\tau}(x)$. For the LPB construction, across all calibration methods, we define the search space $\mathcal{T}$ as a logarithmically spaced grid of 1,000 candidate values in the range $[0.001,0.977]$, following~\cite{davidov2026calibrated}. For the UPB construction, we set the search space as a linearly spaced grid of 3000 values in the range $[0.5,0.95]$.
For each $\tau \in \mathcal{T}$, we compute the estimated miscoverage rate $\hat{\alpha}(\tau)$ using~\eqref{eq:miscoverage_estimator} and select the calibrated $\hat{\tau}$ that yields the largest $\hat{\alpha}(\tau)$ not exceeding the target level $\alpha$.

During Phase I of \ttmethod, we implement the projection models $\{M_t\}_{t=1}^\maxl$ as {Platt scaling regressors} using the scikit-learn package~\citep{scikit-learn}. 
We implement the budget-constrained Phase~I optimization, with a bisection search over the Lagrange multiplier $\lambda \in [10^{-8},10^{14}]$ running up to 60 outer steps, combined with an inner Gauss-Seidel Block Coordinate Descent over the log-probabilities, running up to 10 inner passes, with tolerance $10^{-9}$. For numerical stability, all probability variables are optimized in log-space.
We initialize the variables using a geometric series approximation, $P_\textrm{init} = \Bbar / (1+\Bbar)$ and warm-start the inner solver using the state from the $\lambda$ boundary that yielded a budget closest to the target average per sample $\Bbar$.
We enforce the monotonicity of the probabilities with respect to the risk scores by applying the Pool Adjacent Violators Algorithm (PAVA) only when the monotonicity is violated during coordinate updates. Once the algorithm converges, we apply a bisection that shifts the log-probabilities to guarantee the budget constraint is satisfied over this set of samples.

In our implementation of the \ttlocaladaptive method, we compute the global parameter $\lambda$ without the finite-sample correction term. Specifically, we set: $\lambda := \inf \{ \lambda \geq 0 : \hat{R}_{N_1}(\lambda) \leq \Bbar\}$, where $ \hat{R}_{N_1}(\lambda)$ denotes the empirical expected budget consumed on the first data split $\calIone$, and $\Bbar$ is the target average budget for the second phase. While the correction term is required for the rigorous theoretical guarantees to control the budget, we observed empirically that this uncorrected formulation consistently satisfies the nominal budget constraint in practice, making the additional conservative penalty unnecessary for our experimental setting.

\subsection{Setup of the experiment in Section~\ref{sec:introduction}}
\label{sec:intro_figure_setup} 
In the experiment from Section~\ref{sec:introduction}, we employed our proposed method, \ttmethod, and the static optimized budget allocation introduced in~\cite{davidov2026calibrated}. We chose two specific samples to illustrate how the distribution of the censoring time changes over time. For the static method, the censoring time is defined as $C_i = \qprior(X_i) \cdot \text{Ber}(\pi_i)$, where $\text{Ber}(\pi)$ is a Bernoulli random variable with probability $\pi_i$. Hence, the expected censoring time is given by $\qprior(X_i) \cdot \pi_i$. This expected censoring time is static, as it is computed once and does not change over time. In contrast, our dynamic method changes the advancement probability $P_i(t) = \mathbb{P}(\chat_i(t) = 1)$. 

To evaluate the expected censoring time at step $t$ without relying on the future probabilities, e.g., $ \mathbb{P}(\chat_i(t+1) = 1)$, we define the expected censoring time as:
\begin{equation}
    t + \mathbb{P}(\chat_i(t) = 1) \cdot (\qprior(X_i) - t)
\end{equation}
This formulation is a conservative, causal estimate. It assumes that if the process successfully advances past the current step $t$, it will continue to run until it reaches $\qprior(X_i)$. Since we intentionally avoid relying on the unobserved future probabilities, this metric reflects the real-time, sequential decision-making process of the algorithm.
Finally, the reported coverage differences are taken across 50 independent random splits of the calibration and test sets.

\subsection{Machine specifications}\label{sec:machine_spec}

The computational infrastructure used to generate the datasets and run the experiments includes:
\begin{itemize}
\item \textbf{CPU}: Intel(R) Xeon(R) CPU E5-2683 v4 @ 2.10GHz, Intel(R) Xeon(R) Gold 5318Y CPU @ 2.10GHz, Intel(R) Xeon(R) Gold 6336Y CPU @ 2.40GHz.
\item \textbf{GPU}: NVIDIA A40, NVIDIA TITAN X (Pascal), NVIDIA 2080 TI, NVIDIA RTX 2060 SUPER.
\item \textbf{OS}: Ubuntu 20.04.6.
\end{itemize}
The data generation via the PAIR algorithm required approximately four days over ten GPUs for sequences of length $\maxl=200$ per dataset and one attacker-target-judge configuration. The GPUs used for this data generation have 48GB of memory.
Training the predictive model and calibrating it using either calibration method takes several minutes per configuration. Across all experiments, random seeds were fixed for reproducibility.

\section{Additional experiments}
\label{sec:additional_experiments}

In this section, we present extended experiments and performance metrics to further evaluate our proposed framework. We compare against additional baselines and examine the effect of the score function and projection method. We evaluate two distinct score functions for quantifying prompt risk: (1) the estimated probability to observe an unsafe response at the current iteration, and (2) the estimated $1-\alpha$ quantile of the time-to-unsafe given that an unsafe response was not observed up to the current time. Furthermore, we analyze the effect of the score-to-probability mapping mechanism by comparing {Platt scaling and Beta-distribution scaling.}
Specifically, we evaluate the following eight methods:
\begin{itemize}
\item \textbf{Optimized:} The static, variance-minimizing allocation strategy~\citep{davidov2026calibrated}.
\item \textbf{\ttgreedy (10\%) \& \ttgreedy (95\%):} The idea is to dynamically observe as many unsafe events as possible on a portion of the data, and then employ the static approach of~\cite{davidov2026calibrated} to obtain a coverage guarantee. See Appendix~\ref{sec:proposed_baseline_greedy_alg} for additional details on this method. 
\item \textbf{\ttlocaladaptive:} A locally optimized version of our main proposal, where the continuation probabilities are optimized for each step separately, and not globally, in contrast to our flagship method. See Appendix~\ref{sec:proposed_baseline_alg} for more information about this method.
\item \textbf{\ttmethod (Score 1, Platt) \& \ttmethod (Score 2, Platt):} Our optimally adaptive method with two different score functions (estimated probability of unsafe event and estimated quantile of unsafe event time), with Platt scaling mapping of scores to continuation probabilities .
\item {\textbf{\ttmethod (Score 1, Beta) \& \ttmethod (Score 2, Beta):} Our optimally adaptive method with the same two score functions, but mapped via Beta distribution scaling.}
\end{itemize}
For each experimental setup, we evaluate the performance of these methods across all target LLMs: Qwen 2.5 14B Instruct, Llama 3.1 8B Instruct, Phi 4 Mini Instruct, and Gemma 3 4B Instruct. We report the following metrics: (1) empirical coverage rate at a 90\% target level, (2) the average LPB size, (3) the coverage deviation from the 90\% nominal target, (4) the total number of unsafe events observed (5) the empirical mean inverse-censoring weight, (6) the variance of the empirical coverage rate, and (7) the variance of the LPB size constructed by each calibration algorithm.

\subsection{Toxicity dataset}

Figures~\ref{fig:toxicity_full1} and~\ref{fig:toxicity_full2} present the performance of all methods on the Toxicity dataset using Detoxify as the safety judge. As theoretically guaranteed, all methods achieve a valid coverage rate and satisfy the budget constraint across all evaluated target LLMs. However, the static optimized baseline exhibits a high coverage deviation and variance. In contrast, our adaptive \ttmethod observes significantly more unsafe responses than the competitors, resulting in the lowest coverage deviation and variance. \ttlocaladaptive also exhibits low variance compared to the static baseline, although it does not match the performance of \ttmethod, since its optimization is conducted locally rather than globally. Interestingly, \ttmethod achieves this low variance despite having a higher mean inverse-censoring weight. This counterintuitive increase in mean weight occurs since \ttmethod invests budget in the initial policy-learning phase ($\calIone$), especially for resilient models like Llama and Phi 4 Mini that require longer interactions to jailbreak, leaving a smaller remaining budget for the deployment phase ($\calItwo$). As indicated by the results over the Qwen2.5 and Gemma3 LLM models, when the budget remaining for the second phase increases, \ttmethod achieves a lower mean weight than the static baseline.

\begin{figure}[ht]

    \centering    
    \includegraphics[width=0.9\linewidth]{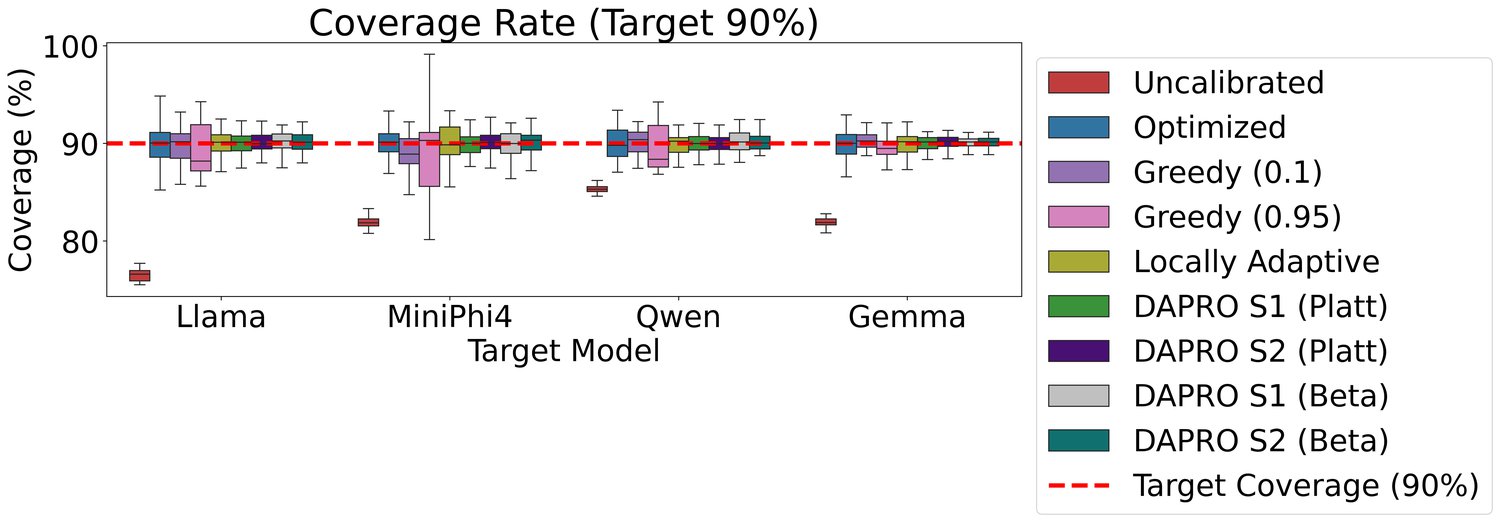}\\
    \includegraphics[width=0.9\linewidth]{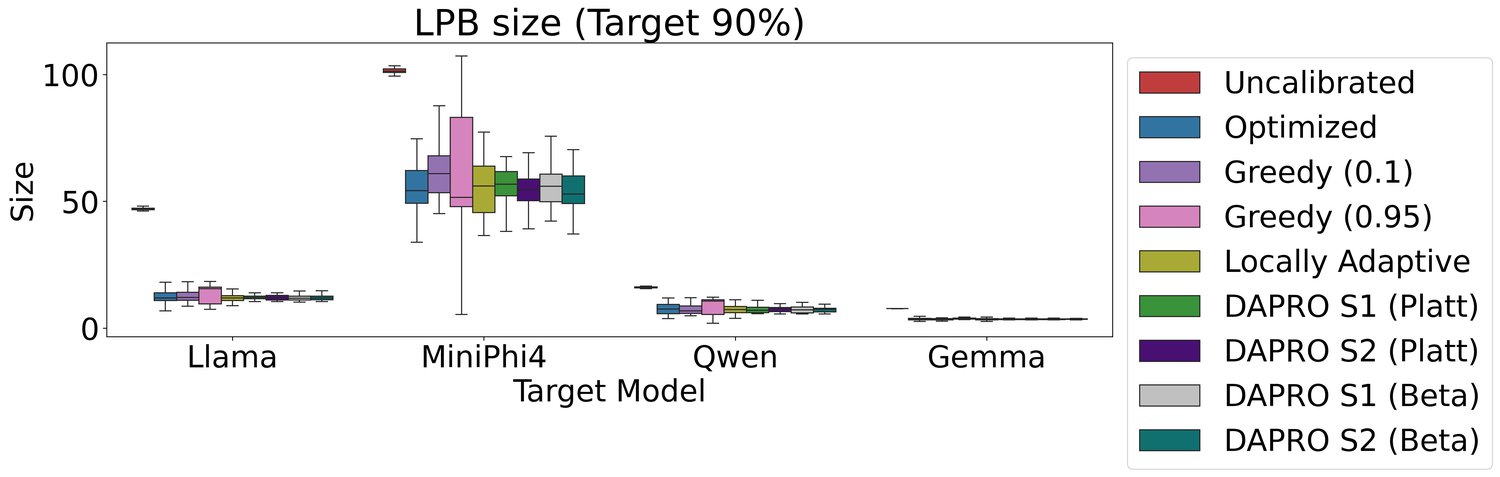}
    \\

    \centering
    \includegraphics[width=0.9\linewidth]{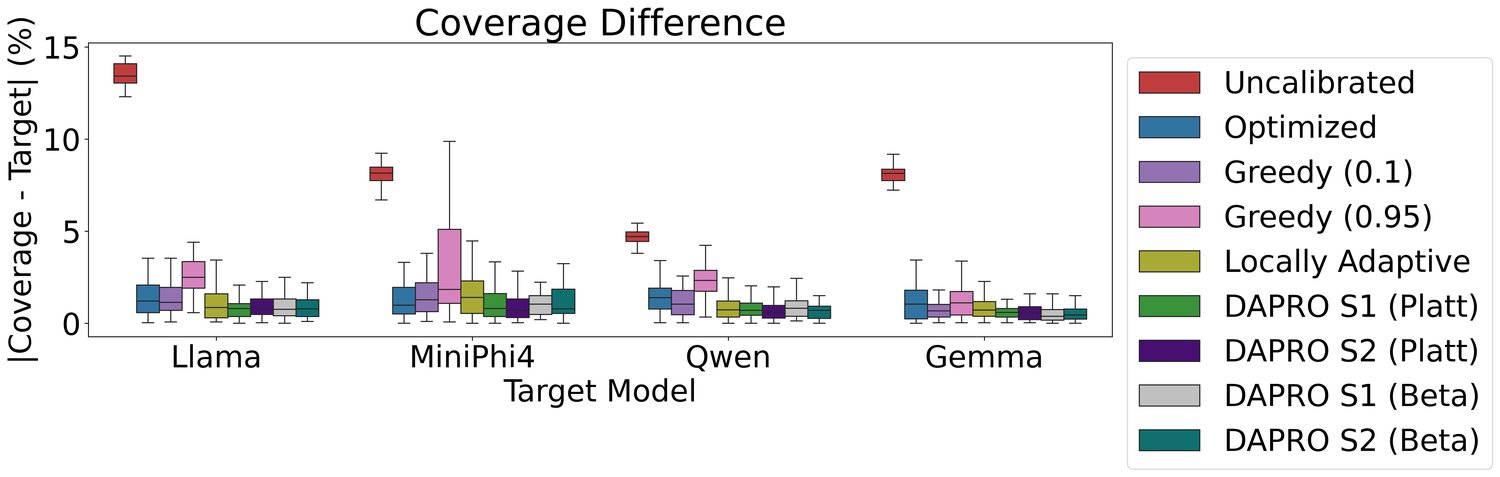}
    \\
        \includegraphics[width=0.9\linewidth]{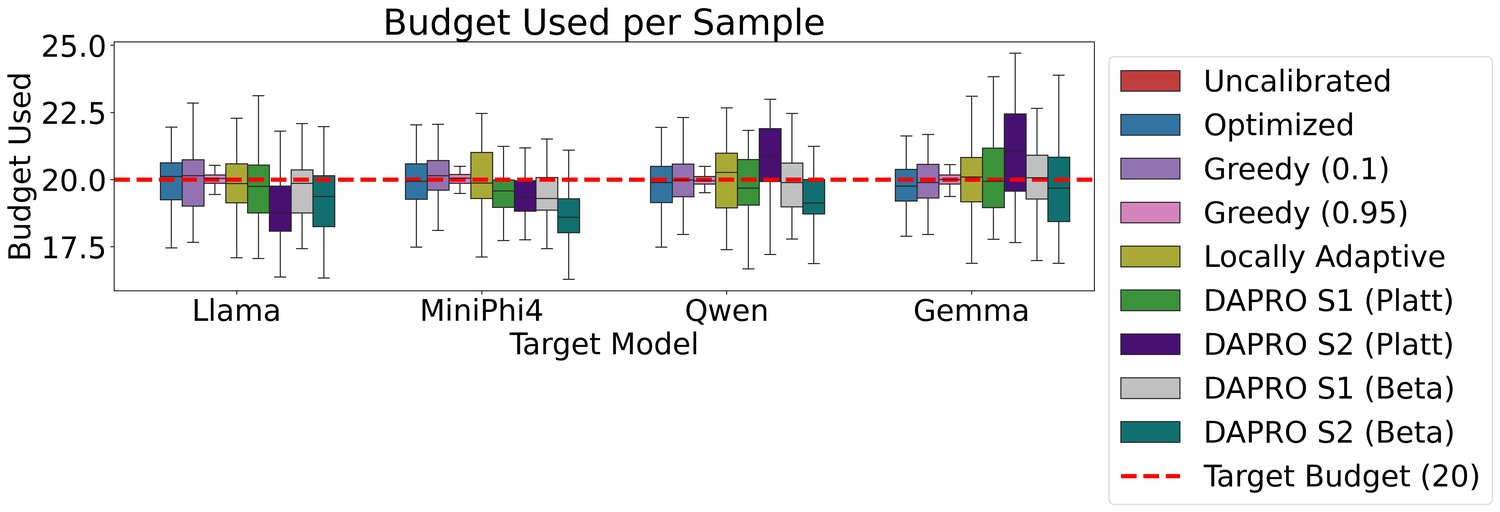}

    \caption{\textbf{Toxicity} dataset: coverage rate, LPB size, coverage deviation, and budget utilized by various methods across four target LLMs. Target coverage rate: $90\%$ and target $\bar{B}=20$ budget per sample. Performance metrics are taken over 50 random splits of the calibration and test sets.}
        \label{fig:toxicity_full1}
\end{figure}

\begin{figure}[ht]

    \centering
    \includegraphics[width=0.9\linewidth]{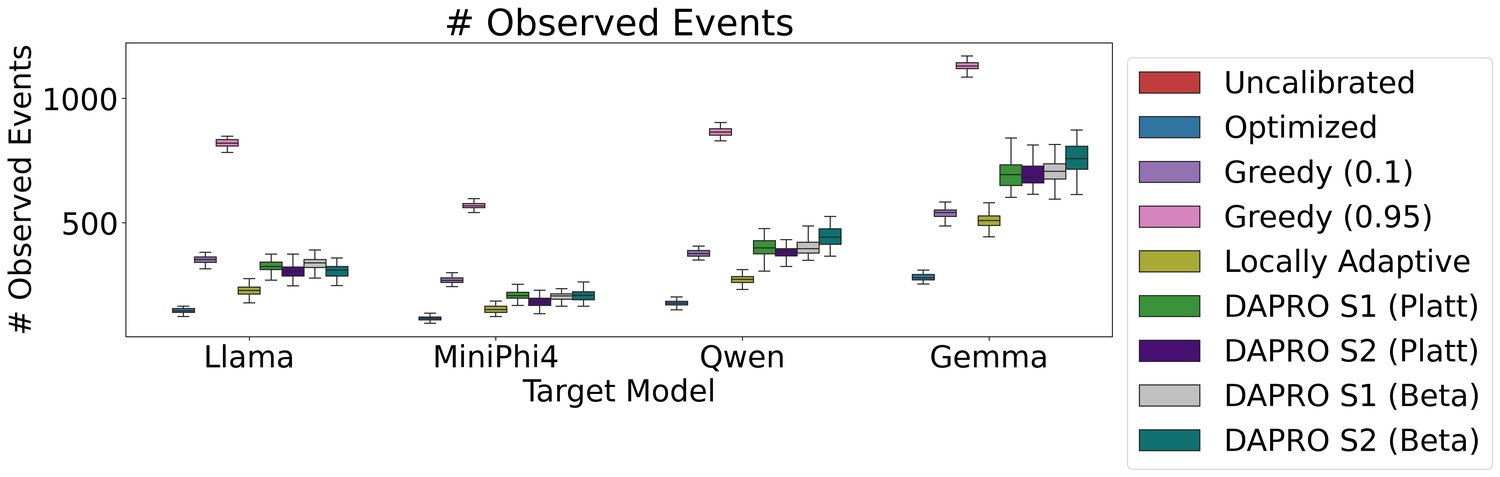}
    \\
        \includegraphics[width=0.9\linewidth]{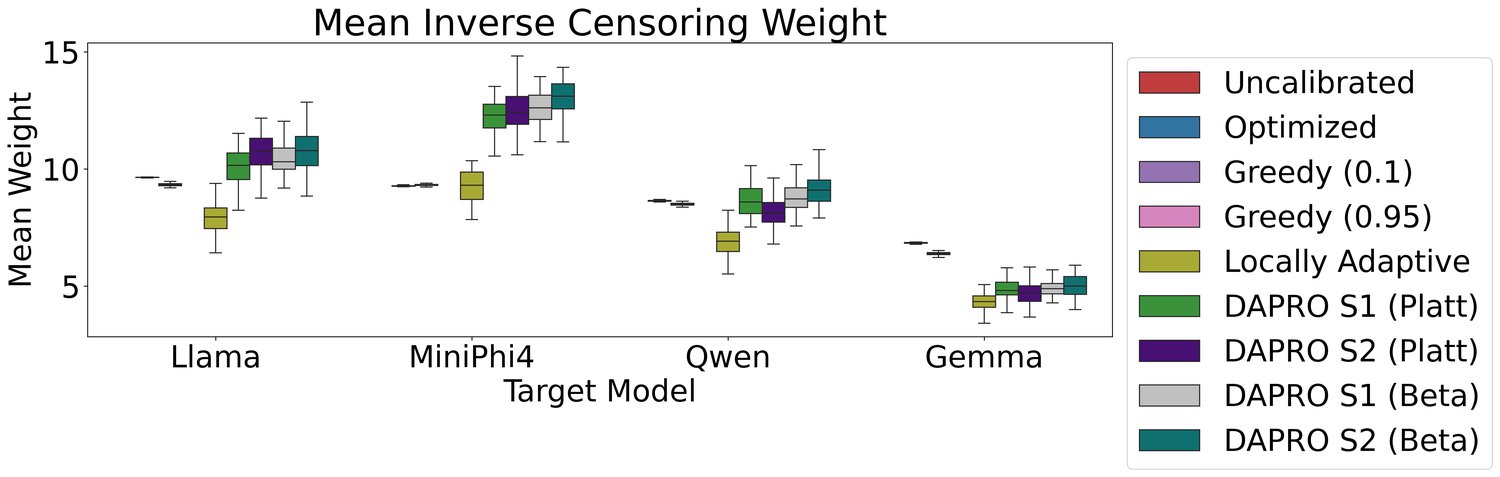}
            \\

    \includegraphics[width=0.9\linewidth]{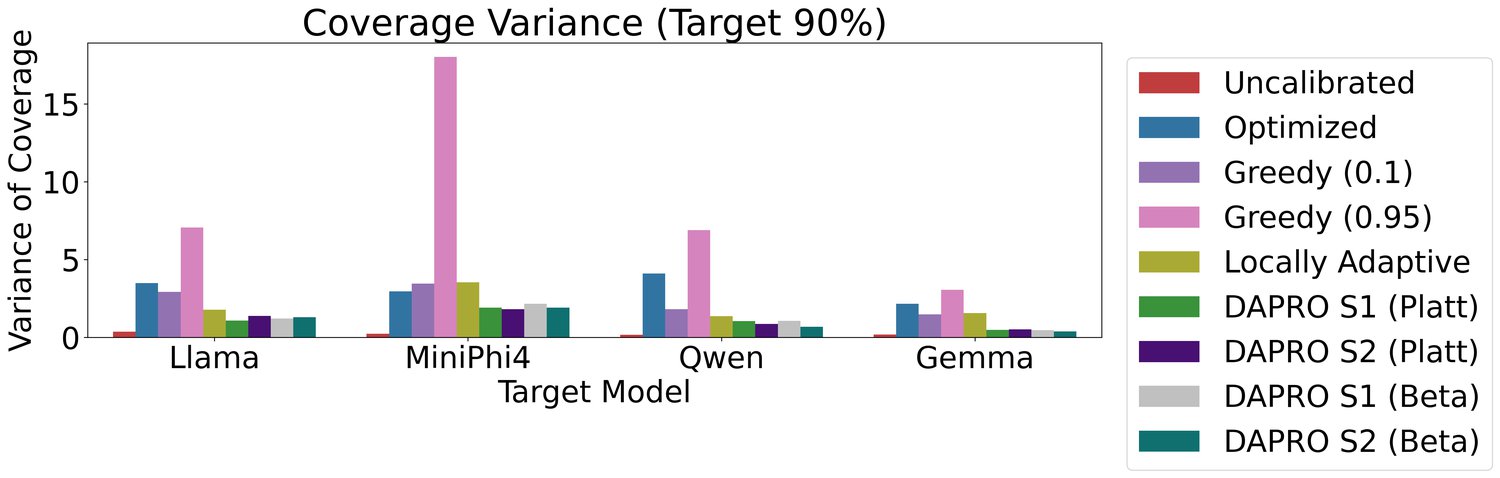}
    \\
        \includegraphics[width=0.9\linewidth]{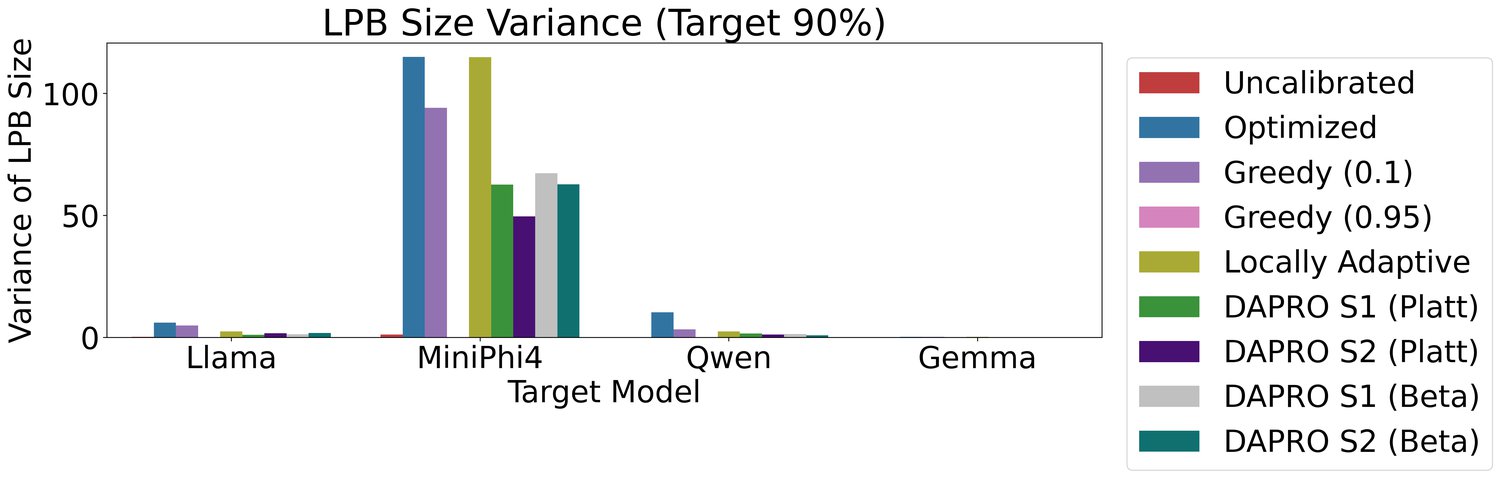}
    \caption{\textbf{Toxicity} dataset: number of observed unsafe events, mean inverse-probability weight, coverage variance, and LPB variance by various methods across four target LLMs. Target coverage rate: $90\%$ and target $\bar{B}=20$ budget per sample. Performance metrics are taken over 50 random splits of the calibration and test sets.}
        \label{fig:toxicity_full2}

\end{figure}

\subsection{RedTeam dataset with Qwen as a judge}
In Figures~\ref{fig:red_team_full1} and~\ref{fig:red_team_full2}, we present the metrics over the Anthropic Red Team dataset, utilizing Qwen 2.5 14B Instruct as an LLM-as-a-judge. 

These figures closely mirror those observed in the Toxicity experiments, demonstrating the robustness of our method across different safety benchmarks and scoring mechanisms. Once again, all evaluated methods successfully attain valid coverage and satisfy the budget constraint. However, the static optimized baseline still suffers from a high coverage deviation and variance. In contrast, \ttmethod dynamically adapts to the interactions to uncover the highest number of unsafe events, yielding the LPBs with the lowest variance and coverage deviation.

Furthermore, we observe the exact phenomenon with the mean-weight: since \ttmethod expends many budget units during the initial policy-learning phase ($\calIone$) to learn an optimal policy, it occasionally has a higher mean inverse-censoring weight due to the limited budget remaining for $\calItwo$. These figures also show that the \ttgreedy method is unreliable since it tends to exhibit a very high variance when a high portion of the budget is used for the early greedy exploration.

\begin{figure}[ht]

    \includegraphics[width=0.9\linewidth]{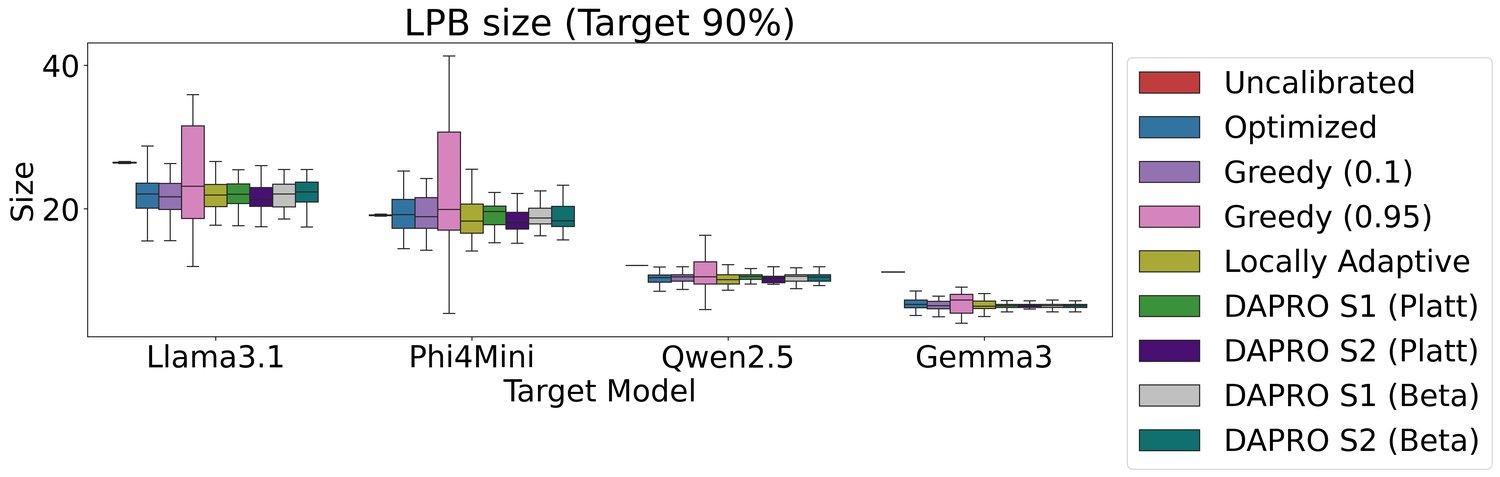}\\
        \includegraphics[width=0.9\linewidth]{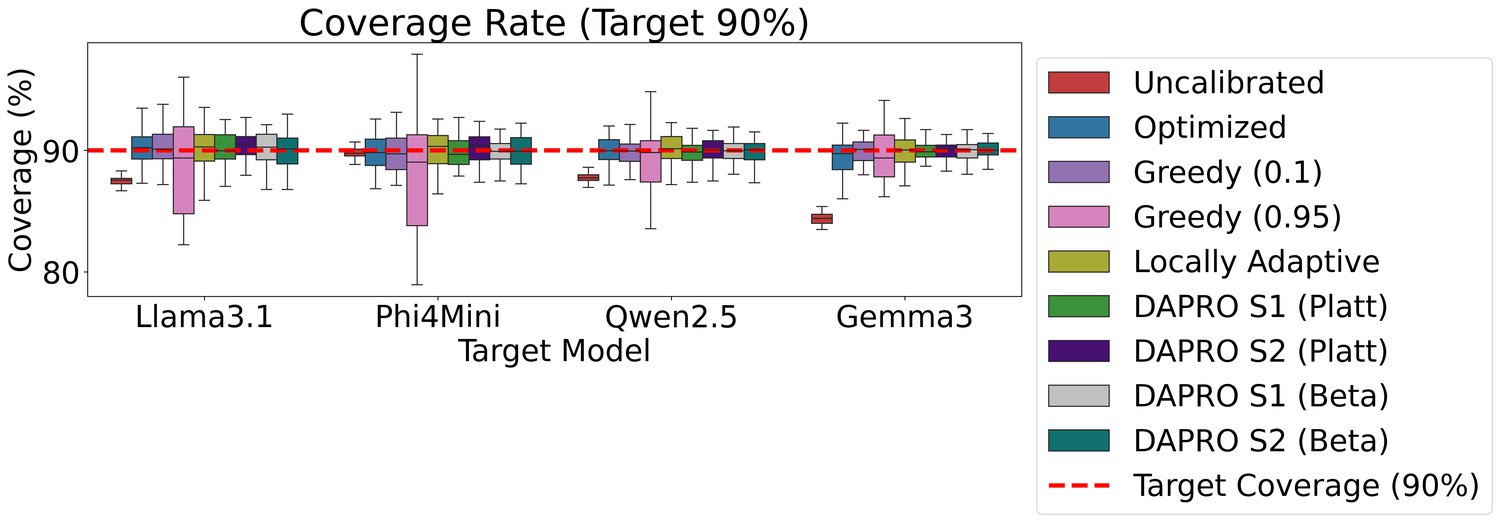}
    \\
    \includegraphics[width=0.9\linewidth]{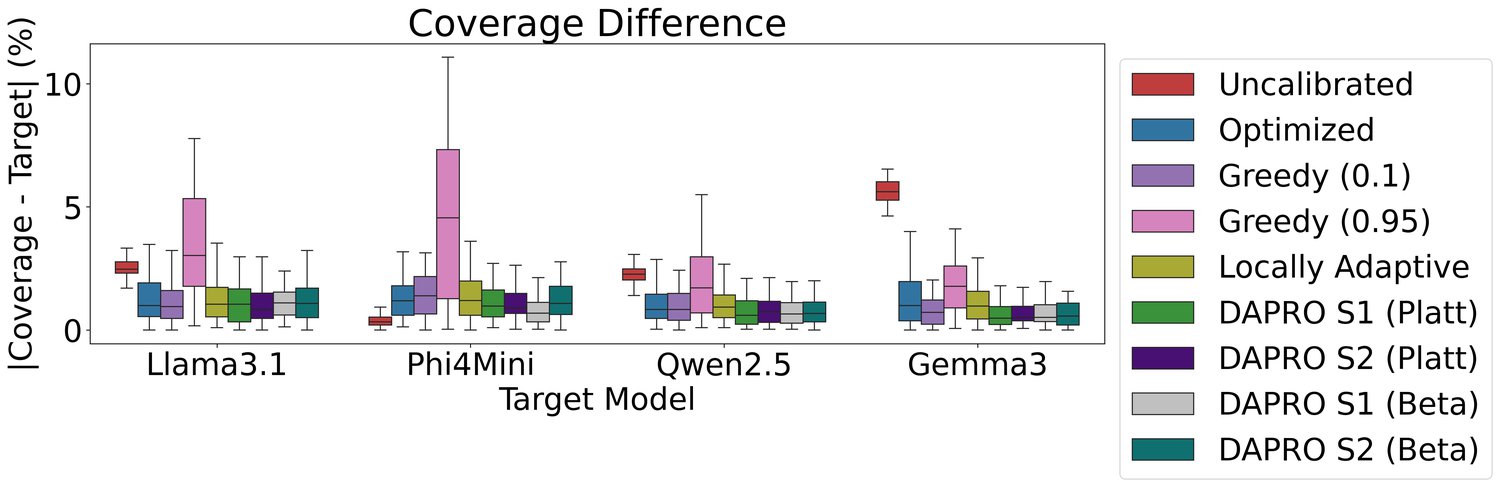}\\
        \includegraphics[width=0.9\linewidth]{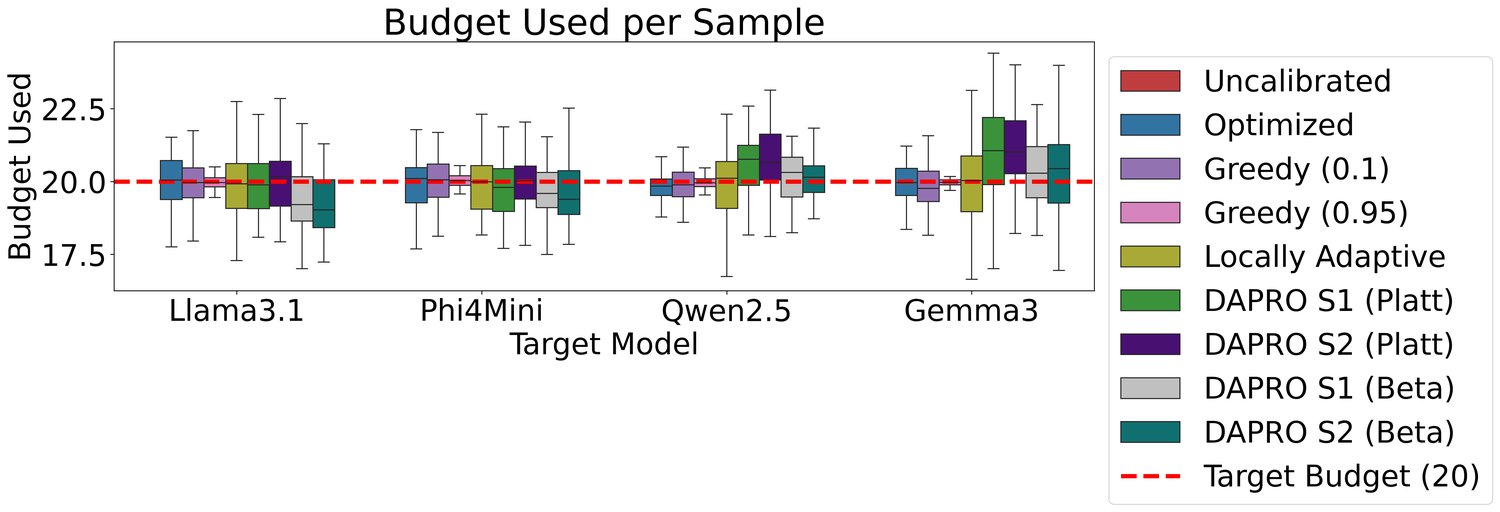}
    \caption{\textbf{RedTeam} dataset with \textbf{Qwen 2.5 14B Instruct} as a judge: coverage rate, LPB size, coverage deviation, and budget utilized by various methods across four target LLMs. Target coverage rate: $90\%$ and target $\bar{B}=20$ budget per sample. Performance metrics are taken over 50 random splits of the calibration and test sets.}
        \label{fig:red_team_full1}
\end{figure}

\begin{figure}[ht]

    \includegraphics[width=0.9\linewidth]{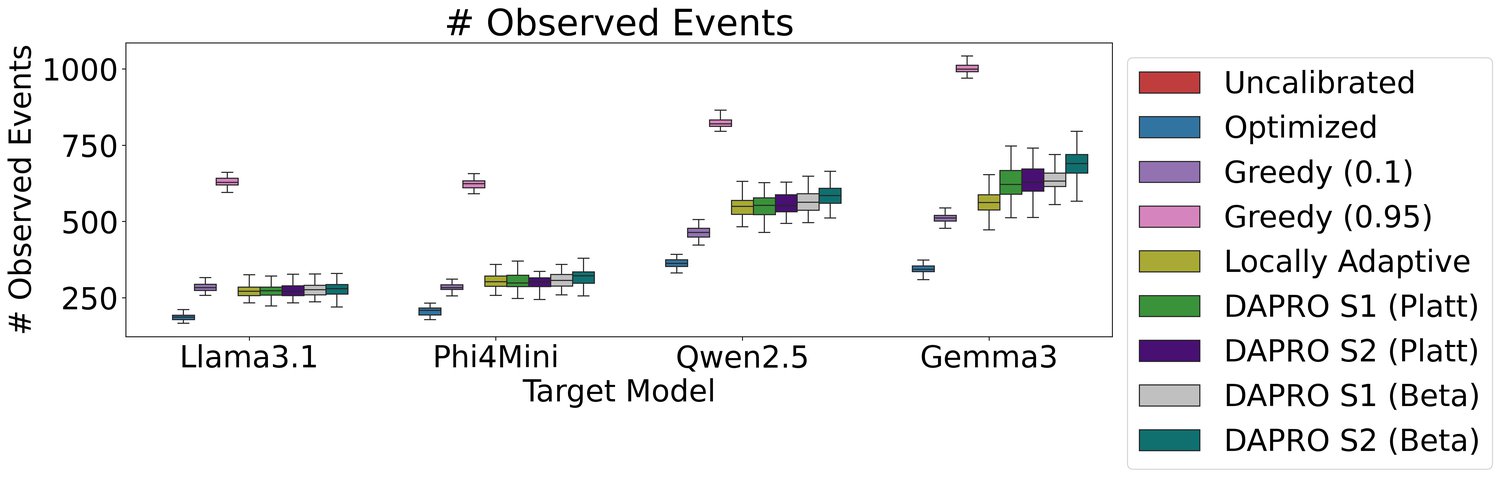} \\
        \includegraphics[width=0.9\linewidth]{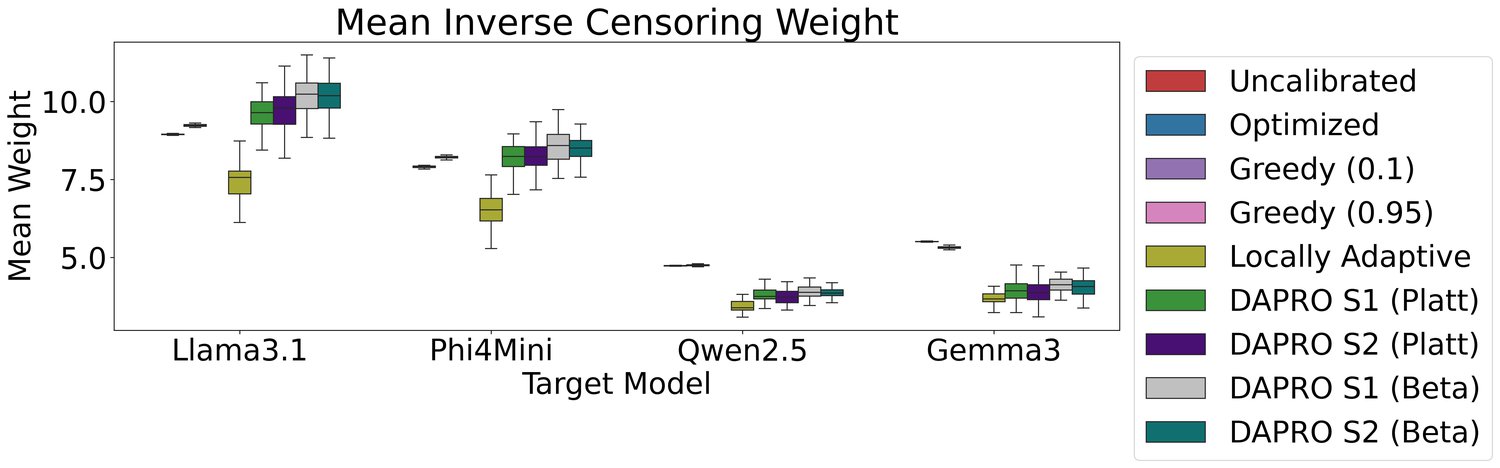}
                    \\

    \includegraphics[width=0.9\linewidth]{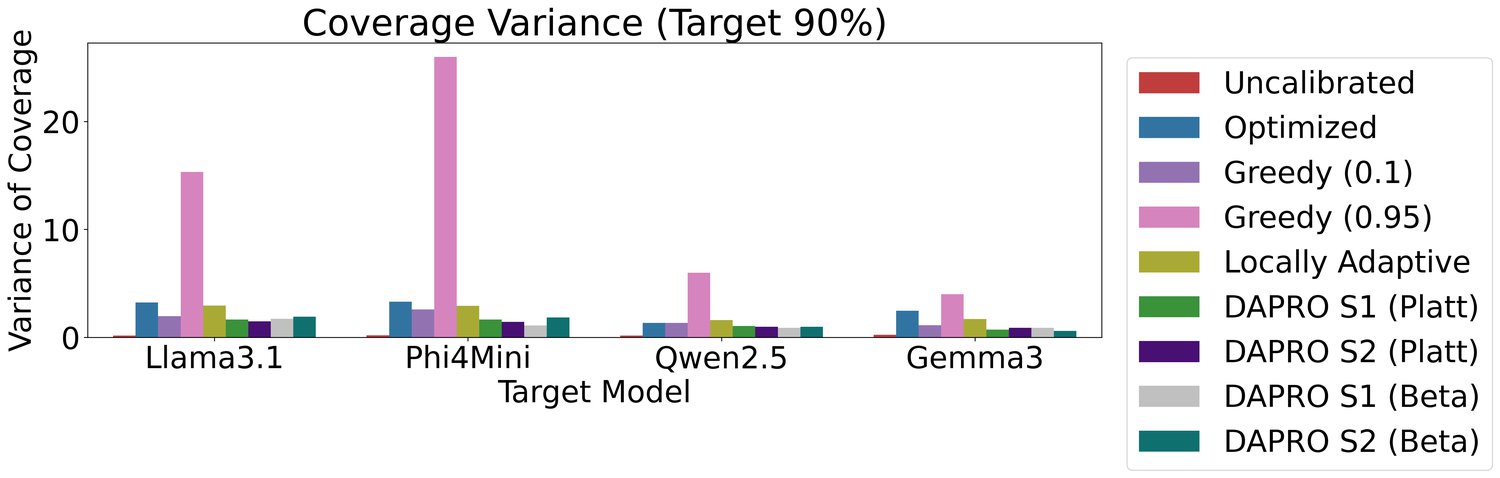}\\
        \includegraphics[width=0.9\linewidth]{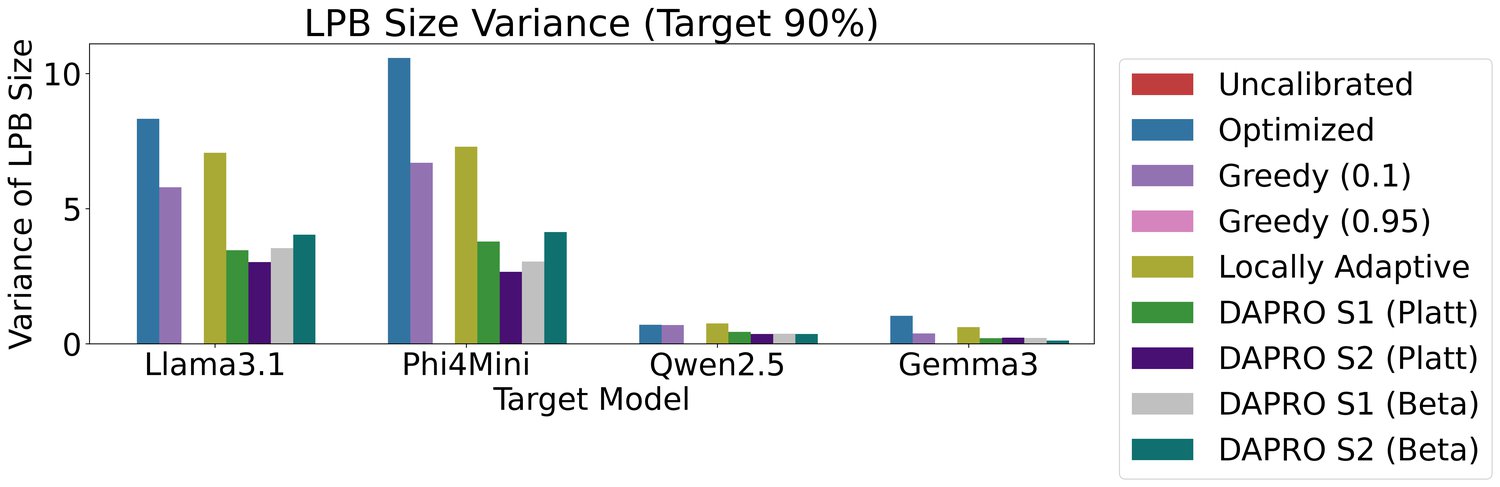}
    \caption{\textbf{RedTeam} dataset with \textbf{Qwen 2.5 14B Instruct} as a judge: number of observed unsafe events, mean inverse-probability weight, coverage variance, and LPB variance by various methods across four target LLMs. Target coverage rate: $90\%$ and target $\bar{B}=20$ budget per sample. Performance metrics are taken over 50 random splits of the calibration and test sets.}
        \label{fig:red_team_full2}

\end{figure}

\subsection{RedTeam dataset with LlamaGuard as a judge}
\label{sec:red_team_with_llamaguard}

Figures~\ref{fig:red_team_llama_guard_full1} and~\ref{fig:red_team_llama_guard_full2} detail the results on the Red Team dataset when evaluated with the Llama-Guard judge. This judge marks responses as unsafe at earlier stages of the conversation, so we set the target average budget per sample at $\bar{B}=10$ for this experiment. These figures demonstrate the same trend observed in the previous experiments. All methods are valid in terms of both coverage and budget. 
In this setup, the unsafe event occurs very early, and thus \ttmethod does not expend too many budget units in the first phase, which leaves more budget for the second phase. Consequently, \ttmethod attains the lowest mean inverse-censoring weights while still observing the highest number of unsafe events. As a result, it constructs LPBs with the lowest variance and achieves empirical coverage rates that are closer to the nominal level. The \ttgreedy approach still tends to exhibit a high variance if its budget split parameter is not tuned correctly. Once again, the \ttlocaladaptive technique attains a variance lower than the static baseline, but higher than our globally optimized \ttmethod.

\begin{figure}[ht]

    \includegraphics[width=0.9\linewidth]{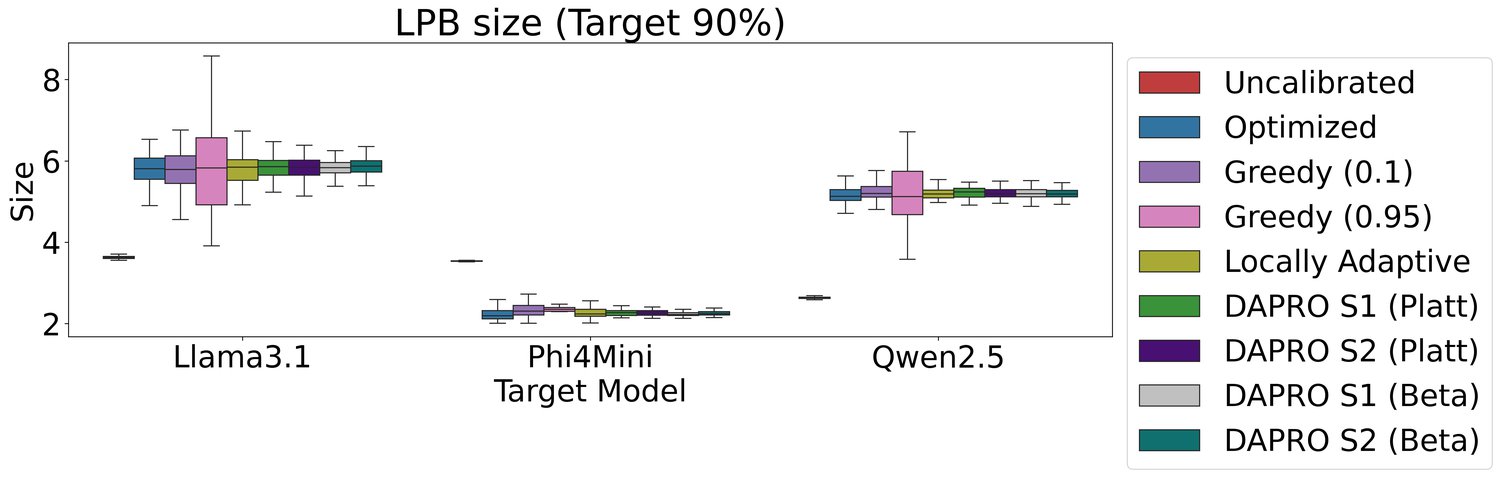}\\
        \includegraphics[width=0.9\linewidth]{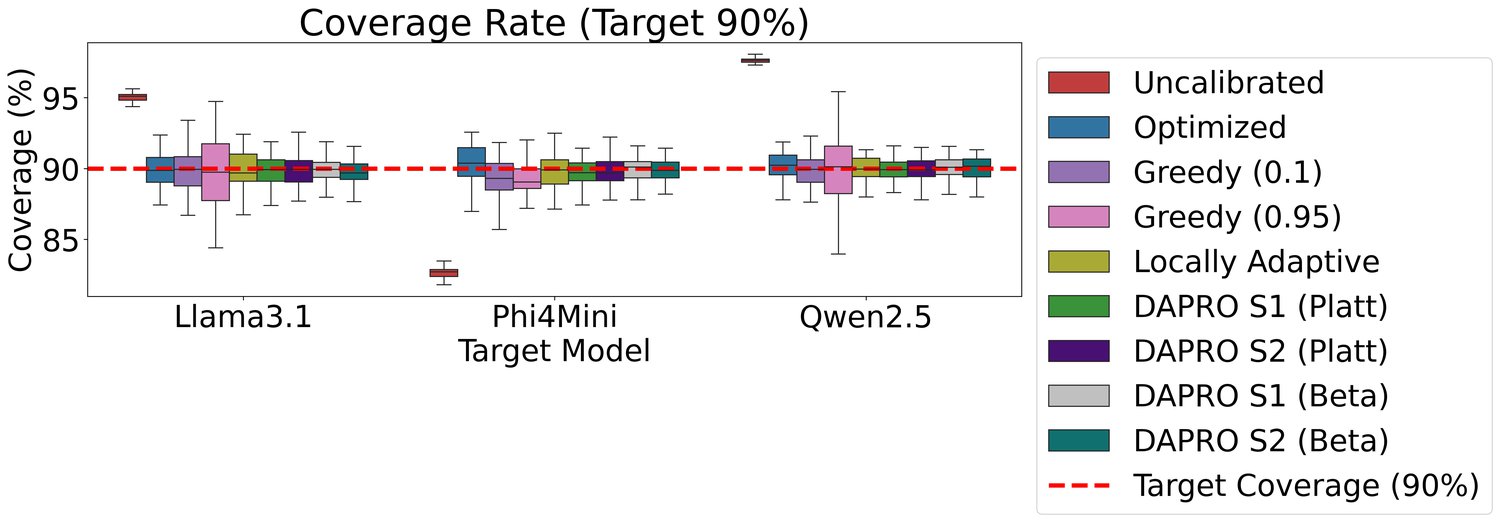}
    \\
    \includegraphics[width=0.9\linewidth]{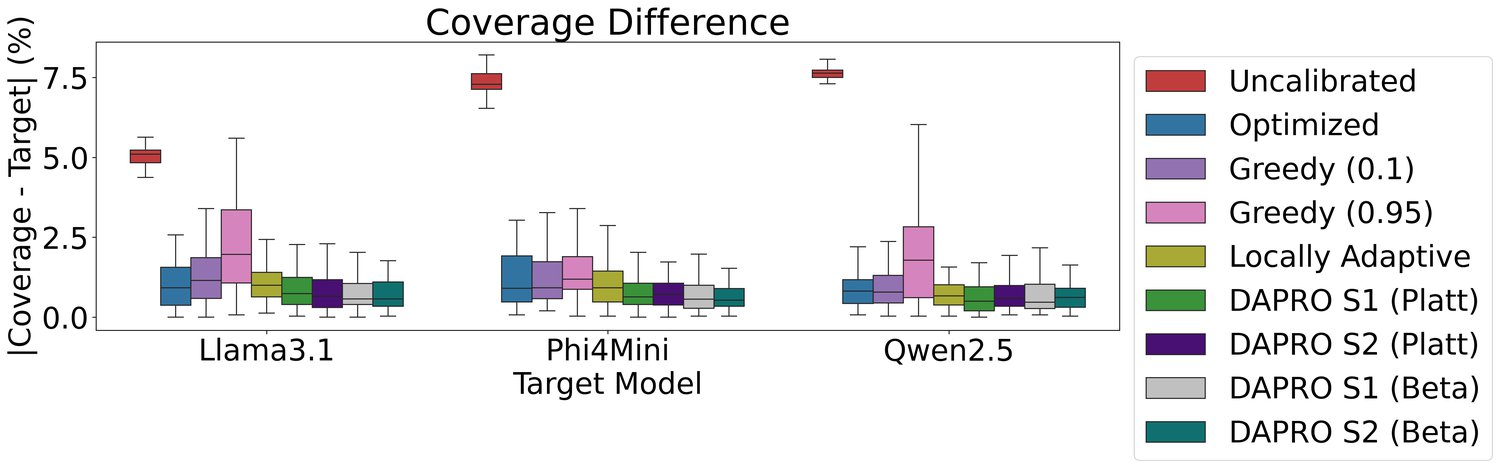}\\
        \includegraphics[width=0.9\linewidth]{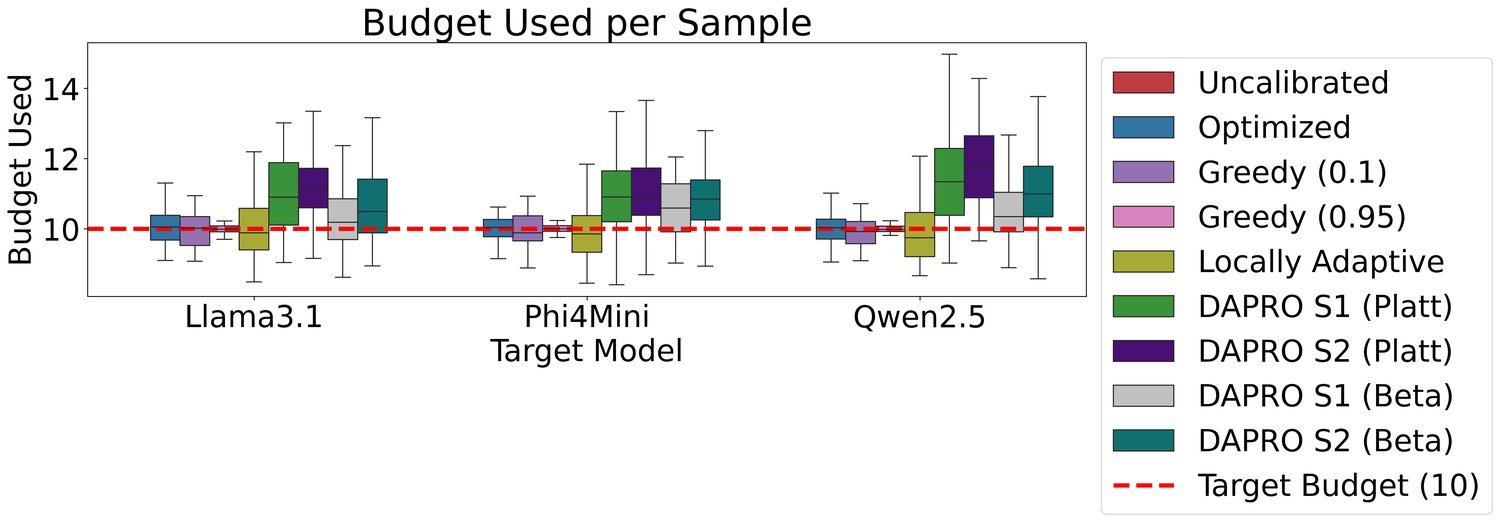}
    \caption{\textbf{RedTeam} dataset with \textbf{Llama-Guard} as a judge: coverage rate, LPB size, coverage deviation, and budget utilized by various methods across four target LLMs. Target coverage rate: $90\%$ and target $\bar{B}=10$ budget per sample. Performance metrics are taken over 50 random splits of the calibration and test sets.}
        \label{fig:red_team_llama_guard_full1}

\end{figure}

\begin{figure}[ht]

    \includegraphics[width=0.9\linewidth]{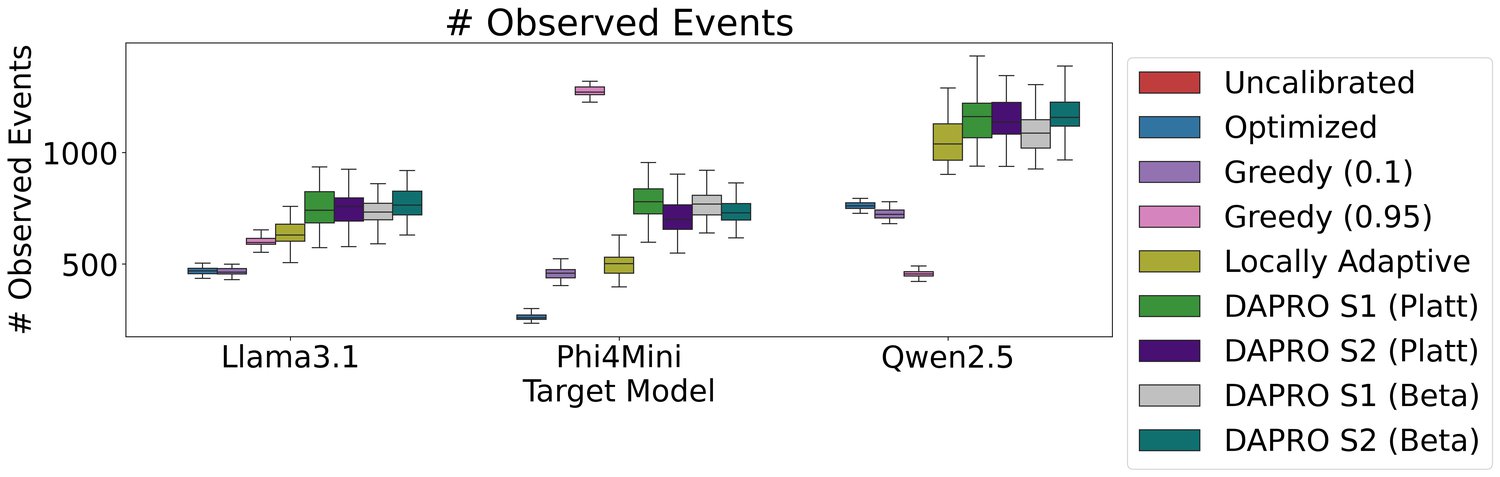} \\
        \includegraphics[width=0.9\linewidth]{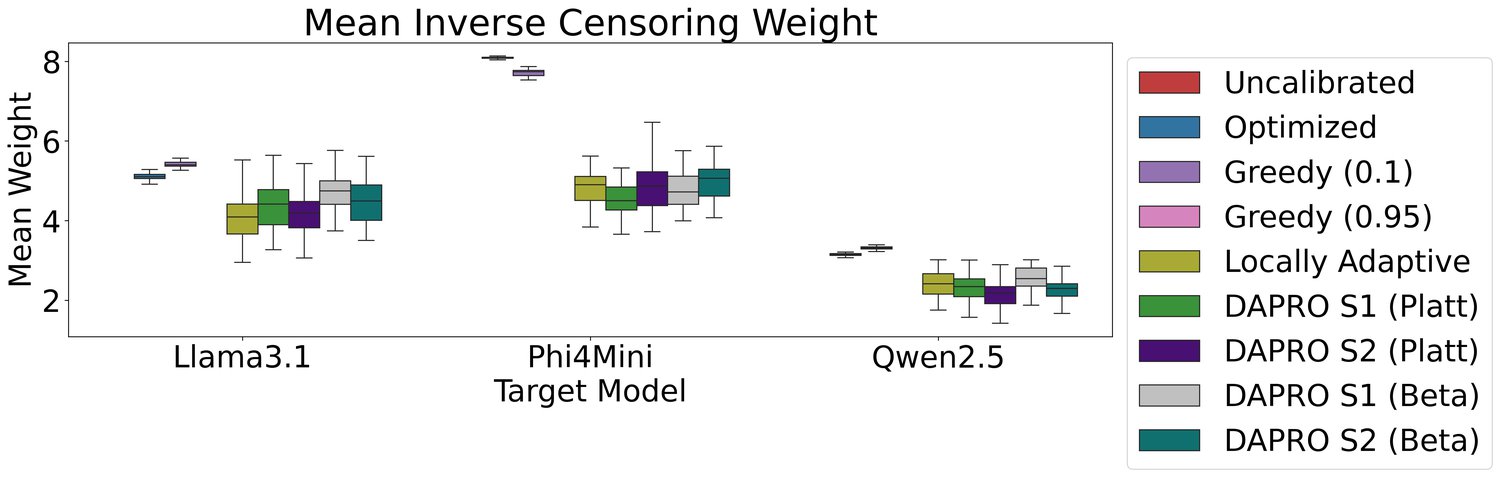}
                    \\

    \includegraphics[width=0.9\linewidth]{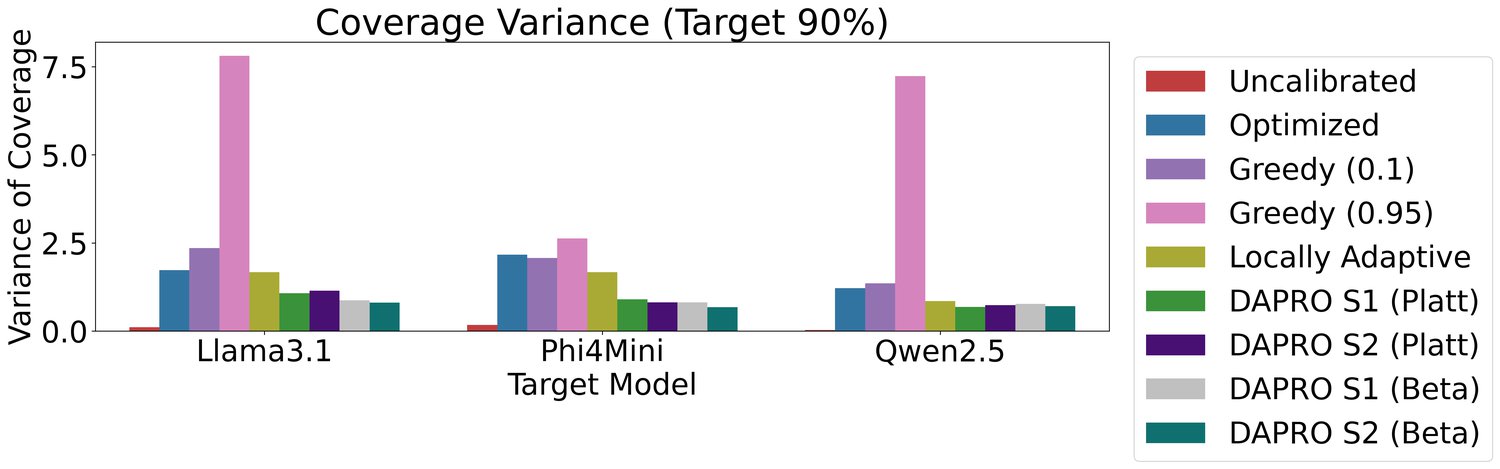}\\
        \includegraphics[width=0.9\linewidth]{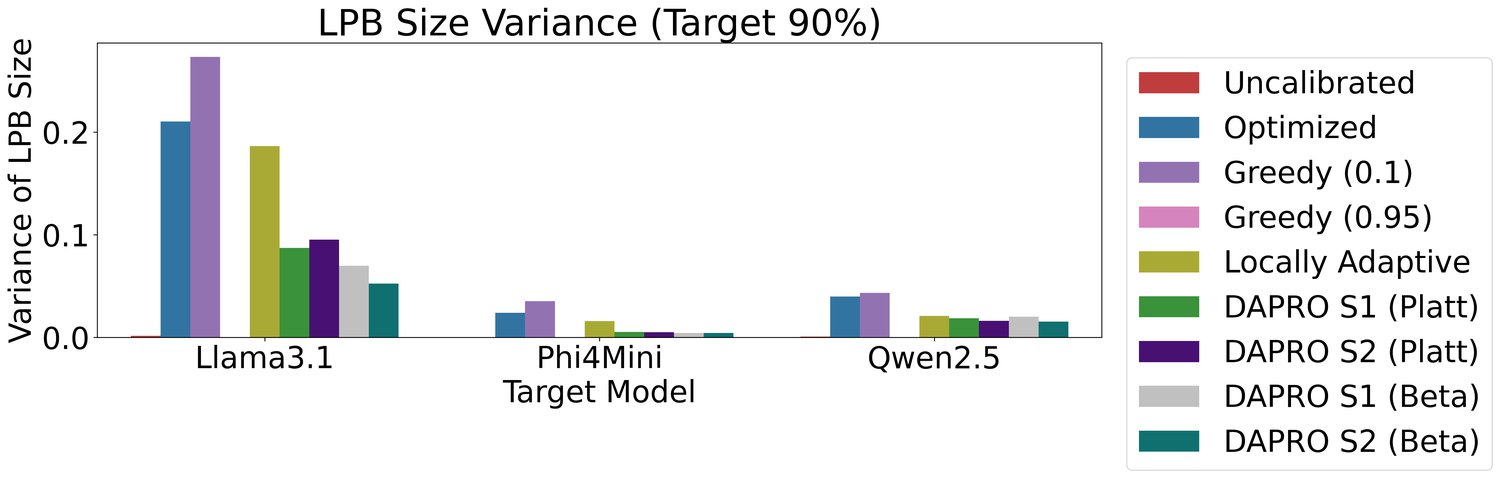}
    \caption{\textbf{RedTeam} dataset with \textbf{Llama-Guard} as a judge: number of observed unsafe events, mean inverse-probability weight, coverage variance, and LPB variance by various methods across four target LLMs. Target coverage rate: $90\%$ and target $\bar{B}=10$ budget per sample. Performance metrics are taken over 50 random splits of the calibration and test sets.}
        \label{fig:red_team_llama_guard_full2}

\end{figure}

\subsection{RAG hallucinations experiment}
\label{sec:hallucinations_exp}

In this experiment, we evaluate our framework in a Retrieval-Augmented Generation (RAG) hallucinations setup. The target LLM is provided a context paragraph regarding a specific subject, such as "Macintosh", or "Beyoncé". The objective of the attacker is to construct adversarial queries that trick the target model to output a fabricated information that does not exist in the source context. To evaluate the success of the attack, the judge is provided with the ground-truth context, the target's response, and outputs a hallucination score ranging from $1$ to $10$, where a score of $10$ is considered as a hallucination ($Y=1$). We set the budget per sample constraint to $B / |\calI| = 10$ since the event rate is significantly high for this dataset, of approximately $95-99\%$ hallucination rate.

Figures~\ref{fig:halluc_full1} and~\ref{fig:halluc_full2} present the performance of all evaluated methods on the Hallucination dataset. These results present the same phenomenon observed in the Toxicity and RedTeam experiments: \ttmethod maintains the nominal coverage rate and satisfies the budget constraints across all evaluated target LLMs. Similarly, \ttmethod  observes significantly more hallucinated responses than the baselines, and yields the lowest coverage deviation and variance among the tested methods. \ttlocaladaptive again exhibits lower variance than the static baseline, though not as low as the globally optimized \ttmethod. 

% However, in contrast with the previous experiments, here, the uncalibrated method severely undercovers the response. As shown in Figure~\ref{fig:halluc_full1}, the uncalibrated model's empirical coverage rate is $40\%$, while the nominal target level is $90\%$. This observation reveals the necessity of our calibration phase; without it, the raw outputs of the model might not achieve the desired coverage rate.

\begin{figure}[ht]
    \centering
    \includegraphics[width=0.9\linewidth]{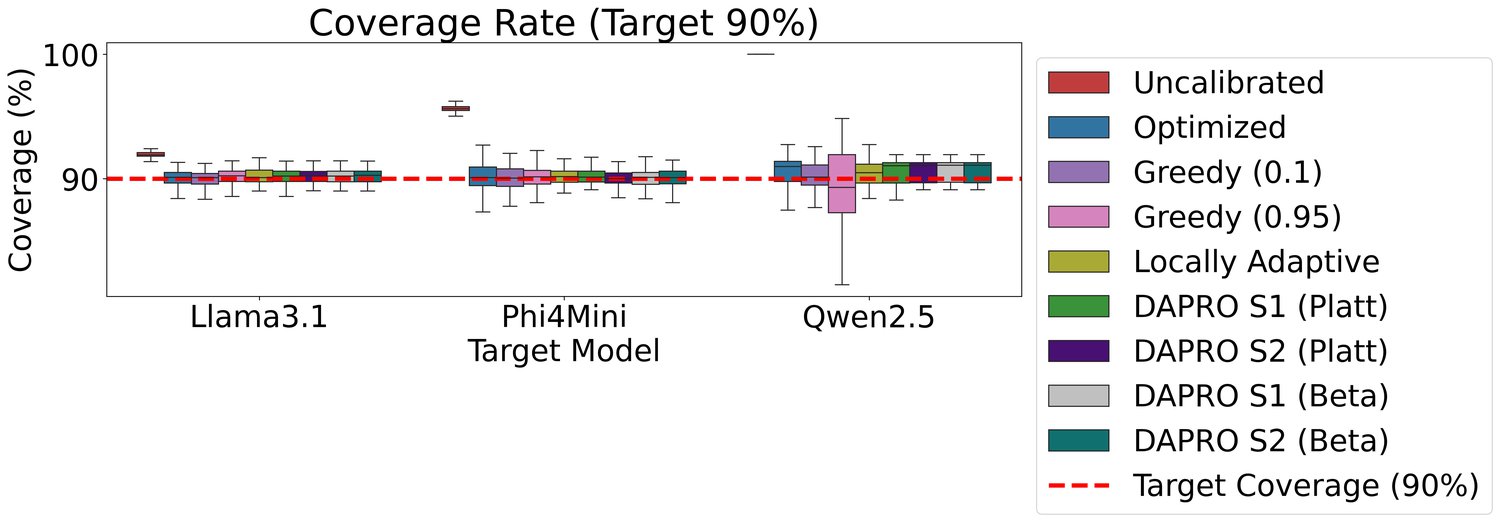}\\
    \includegraphics[width=0.9\linewidth]{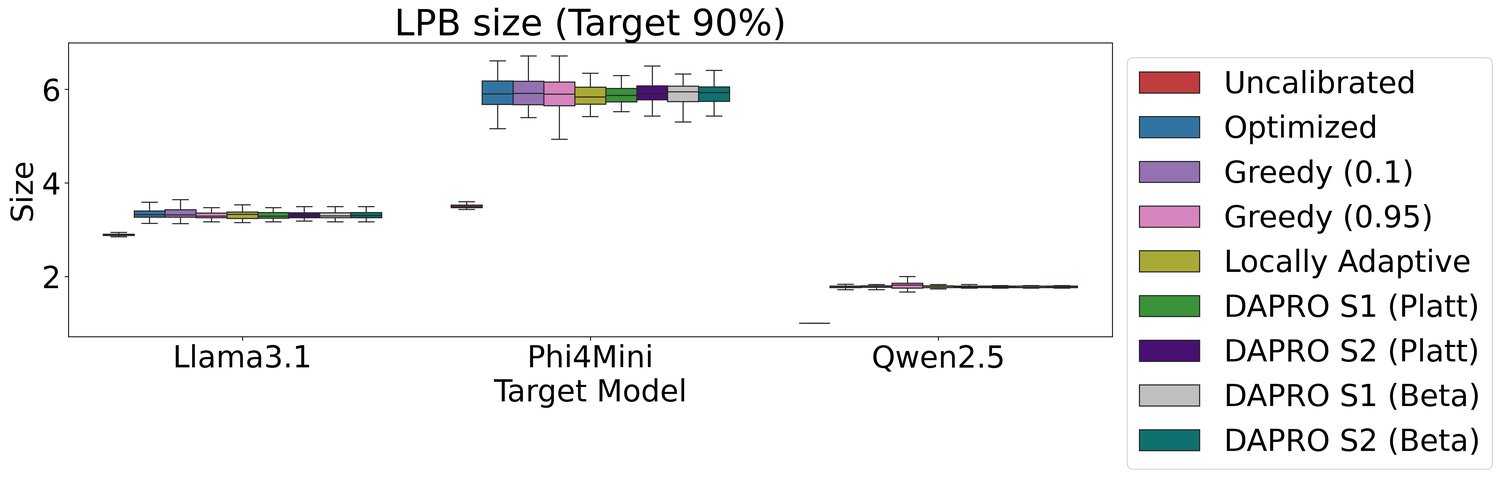}
    \\
    \centering
    \includegraphics[width=0.9\linewidth]{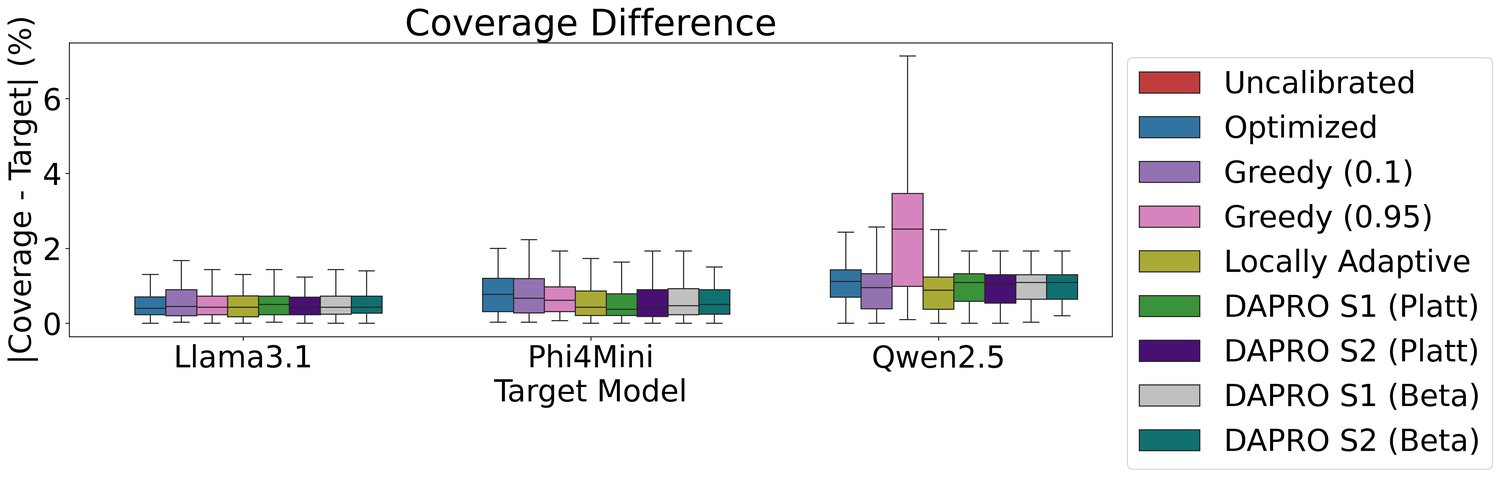}
    \\
    \includegraphics[width=0.9\linewidth]{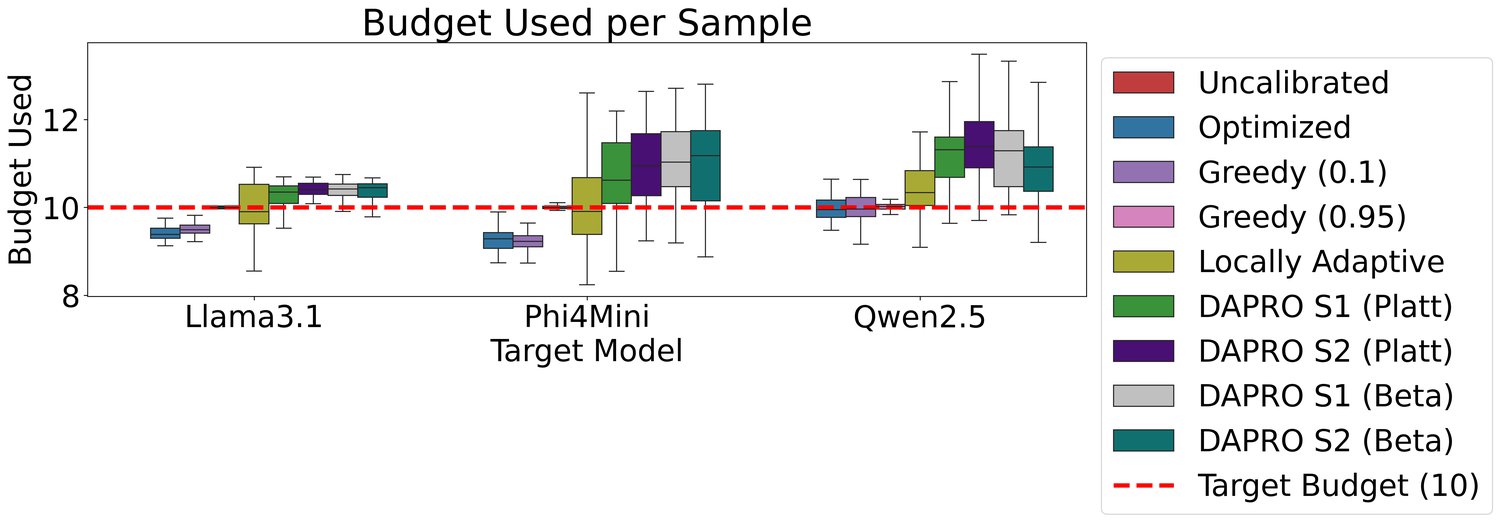}

    \caption{\textbf{Hallucination} dataset: coverage rate, LPB size, coverage deviation, and budget utilized by various methods across four target LLMs. Target coverage rate: $90\%$ and target $\bar{B}=10$ budget per sample. Performance metrics are taken over 50 random splits of the calibration and test sets.}
    \label{fig:halluc_full1}
\end{figure}

\begin{figure}[ht]
    \centering
    \includegraphics[width=0.9\linewidth]{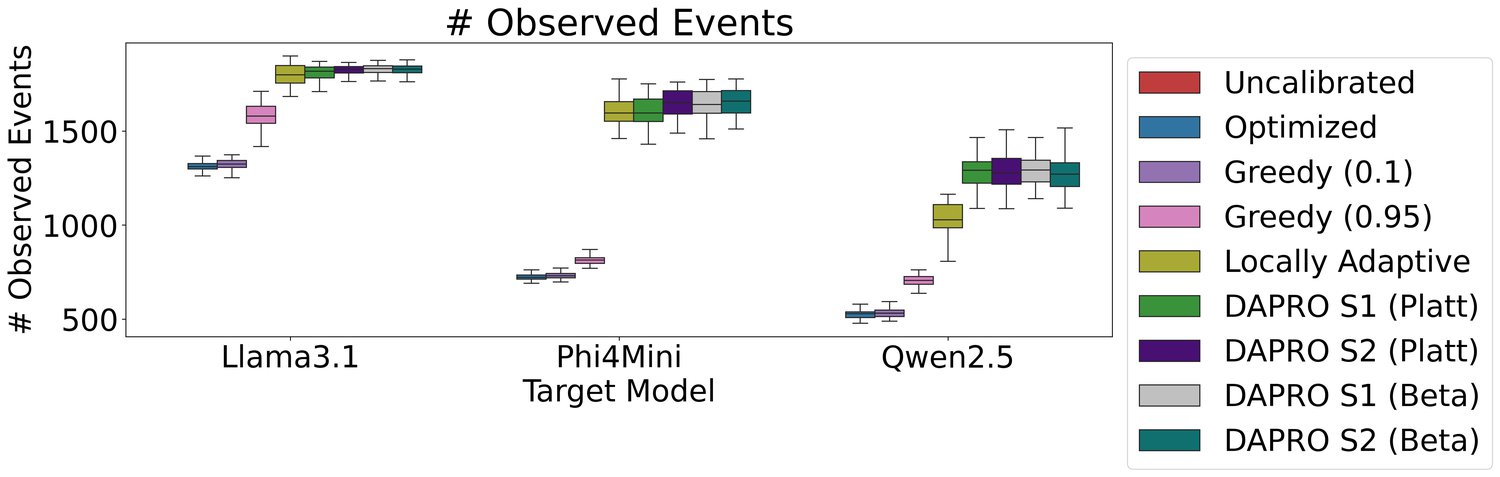}
    \\
    \includegraphics[width=0.9\linewidth]{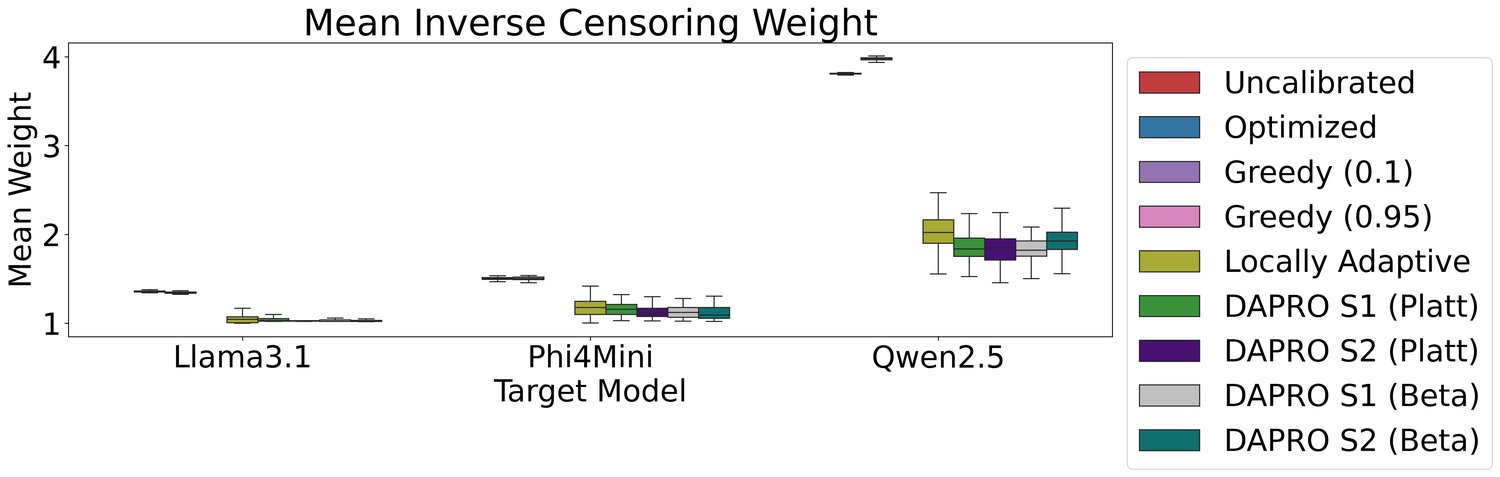}
    \\
    \includegraphics[width=0.9\linewidth]{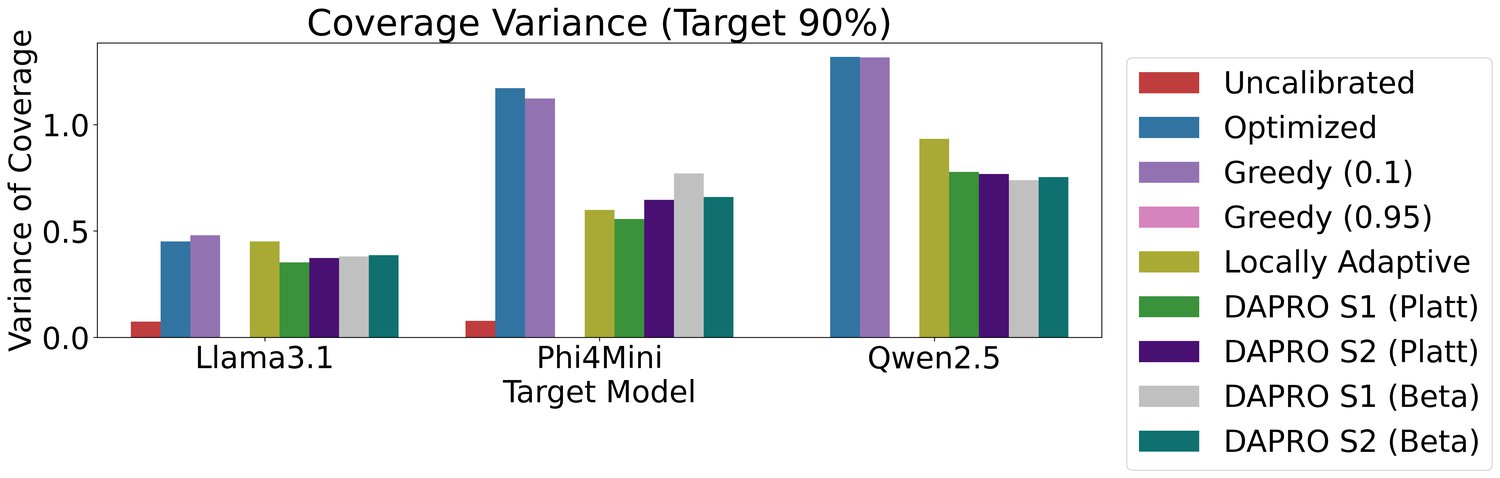}
    \\
    \includegraphics[width=0.9\linewidth]{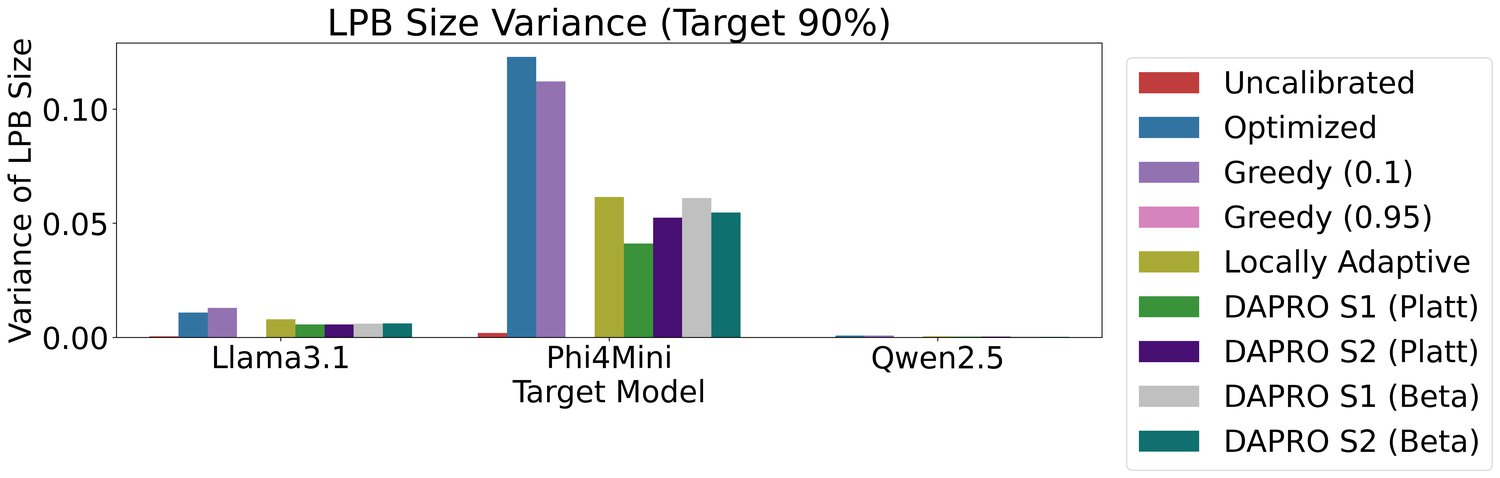
    }

    \caption{\textbf{Hallucination} dataset: number of observed successful events, mean inverse-probability weight, coverage variance, and LPB variance by various methods across four target LLMs. Target coverage rate: $90\%$ and target $\bar{B}=10$ budget per sample. Performance metrics are taken over 50 random splits of the calibration and test sets.}
    \label{fig:halluc_full2}
\end{figure}

\subsection{AutoIF: evaluating utility of helper agents}
\label{sec:autoif_exp}

We demonstrate our proposal in a helper LLM setup. The "attacker" LLM is an agent aiming to guide the target model to output a response that satisfies complex constraints, such as including a specific word multiple times. The judge is a set of pre-defined Python scripts that programmatically evaluate the constraints. If the response of the target satisfies all constraints, the judge outputs $10$; otherwise, it outputs $1$. We repeat our analysis for this experiment once for constructing a calibrated LPB and once for a UPB.

Figures~\ref{fig:autoif_full1} and~\ref{fig:autoif_full2} display the performance of the LPBs constructed by each method on the AutoIF dataset. These figures show the same trend observed in the previous experiments: \ttmethod attains the nominal coverage rate and satisfies the budget constraints. Moreover, \ttmethod observes the highest number of successful events compared to the static baseline and exhibits the lowest variance. \ttlocaladaptive exhibits low variance as well. In contrast, the static optimized baseline achieves a very high variance, and the uncalibrated model does not achieve the desired coverage level. In addition, these figures show that \ttmethod performs similarly across both scores and projection models we examined, indicating that our approach is robust to these choices. Finally, these figures indicate that \ttgreedy is not robust to the choice of the data split size parameter, as it heavily affects the variance of its produced LPBs.

\begin{figure}[ht]
    \centering
    \includegraphics[width=0.9\linewidth]{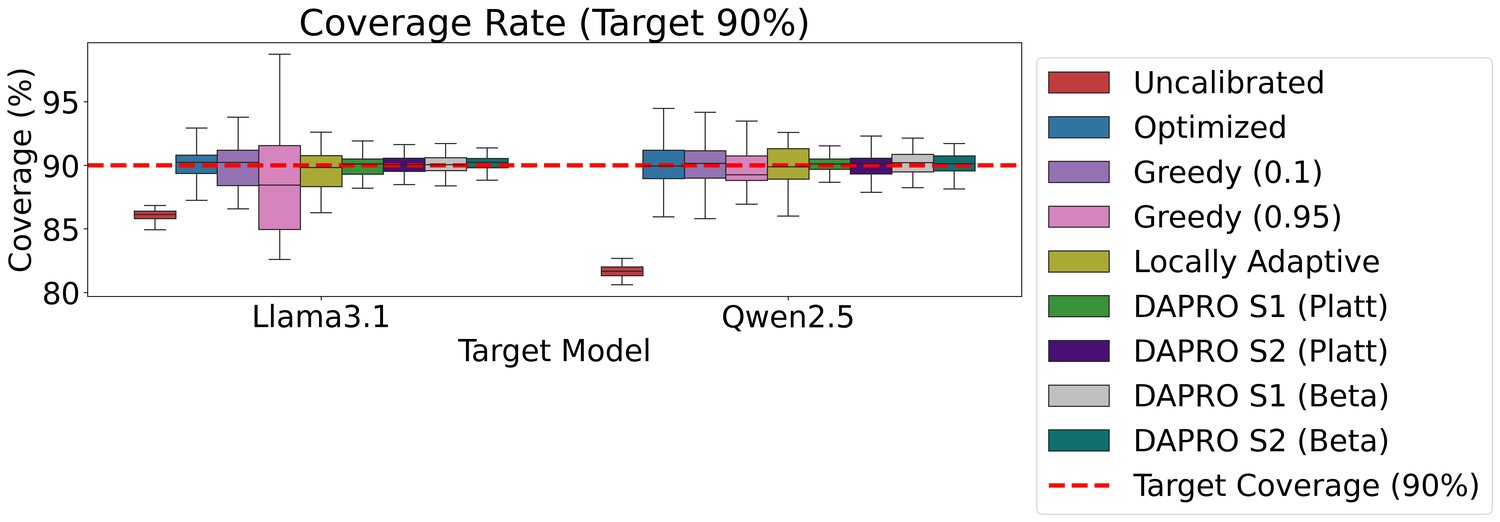}\\
    \includegraphics[width=0.9\linewidth]{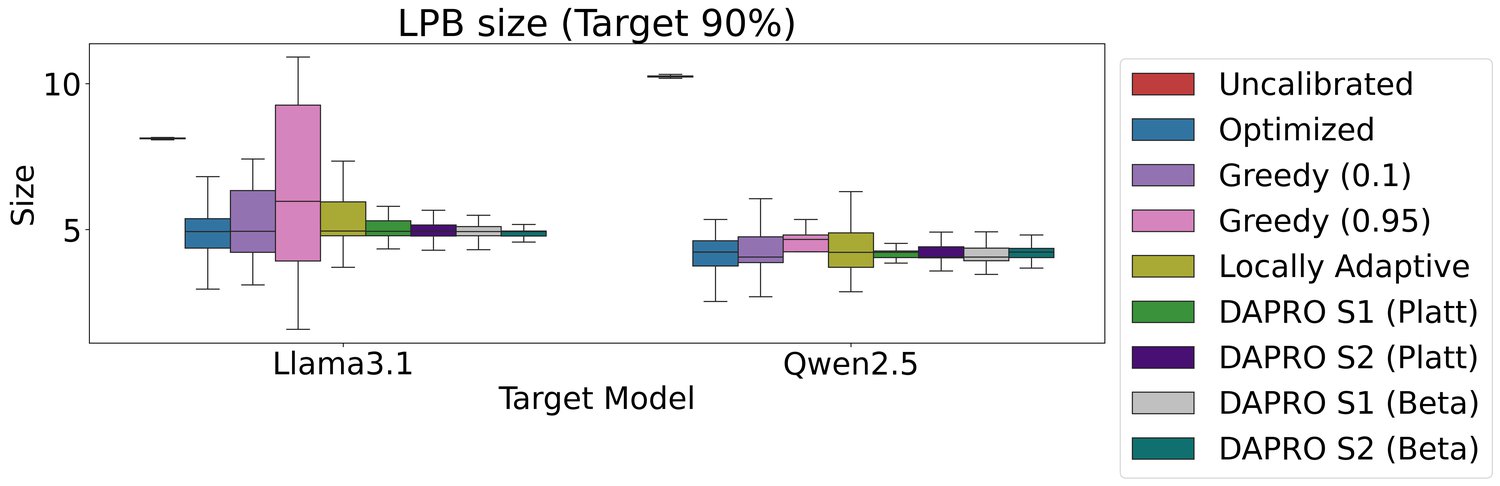}
    \\
    \centering
    \includegraphics[width=0.9\linewidth]{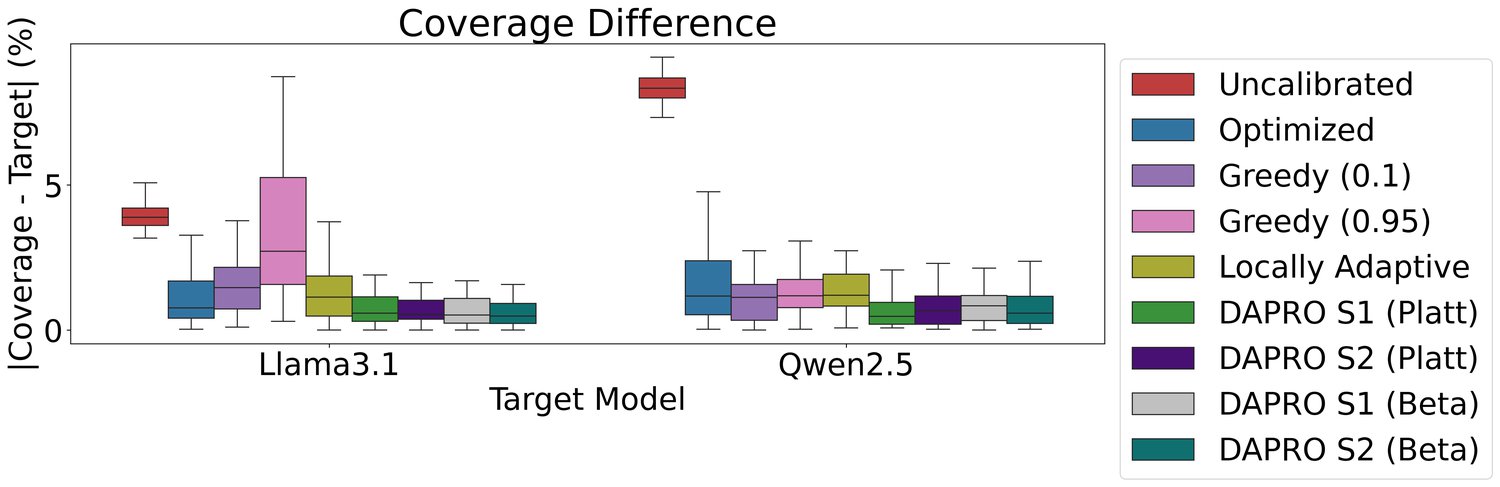}
    \\
    \includegraphics[width=0.9\linewidth]{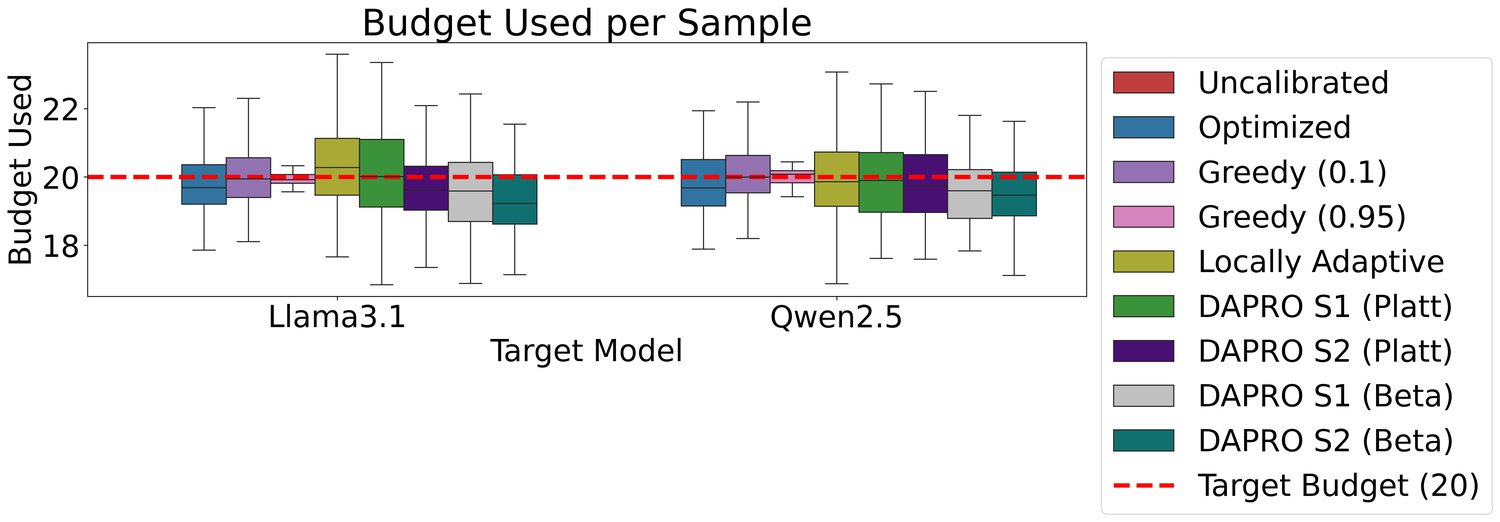}

    \caption{\textbf{AutoIF} (LPB) dataset: coverage rate, LPB size, coverage deviation, and budget utilized by various methods across four target LLMs. Target coverage rate: $90\%$ and target $\bar{B}=20$ budget per sample. Performance metrics are taken over 50 random splits of the calibration and test sets.}
    \label{fig:autoif_full1}
\end{figure}

\begin{figure}[ht]
    \centering
    \includegraphics[width=0.9\linewidth]{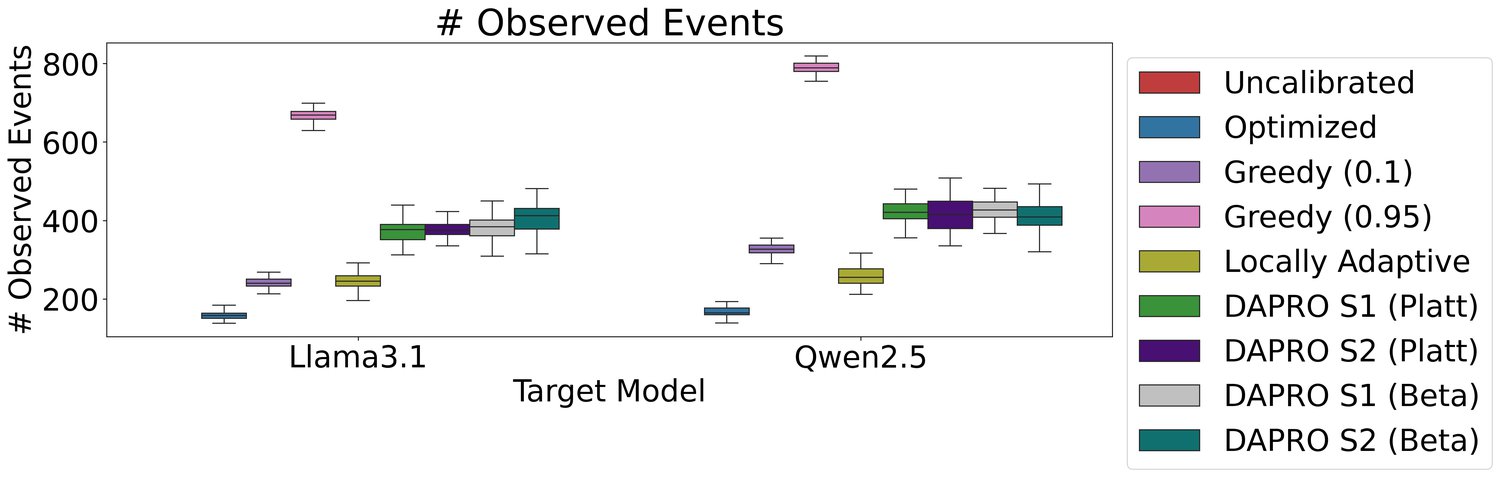}
    \\
    \includegraphics[width=0.9\linewidth]{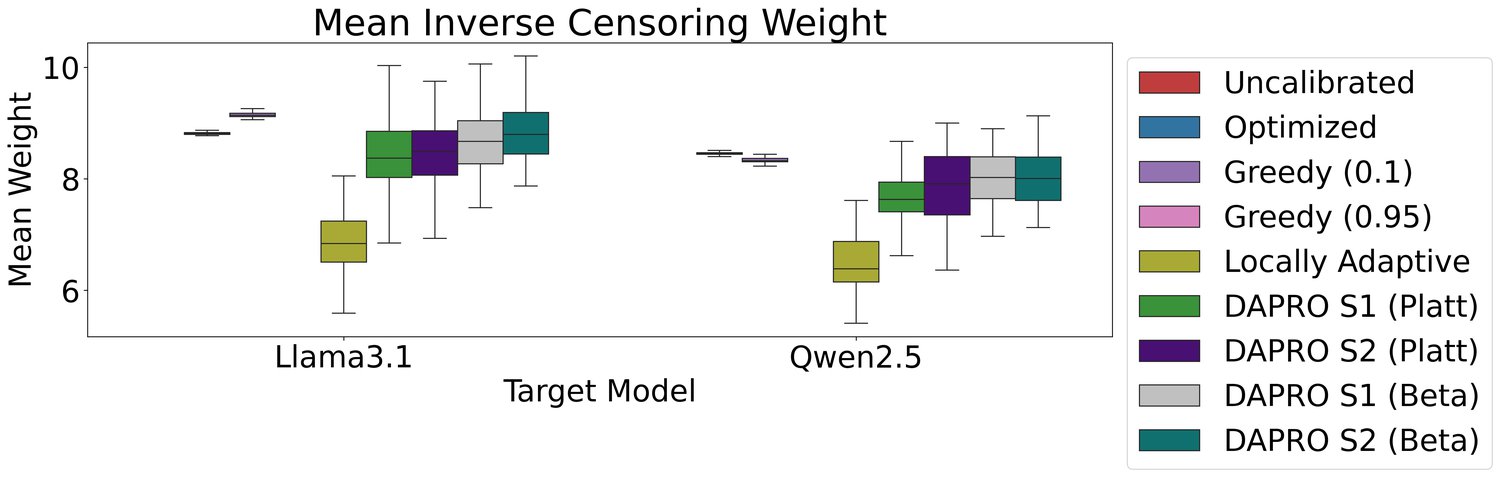}
    \\
    \includegraphics[width=0.9\linewidth]{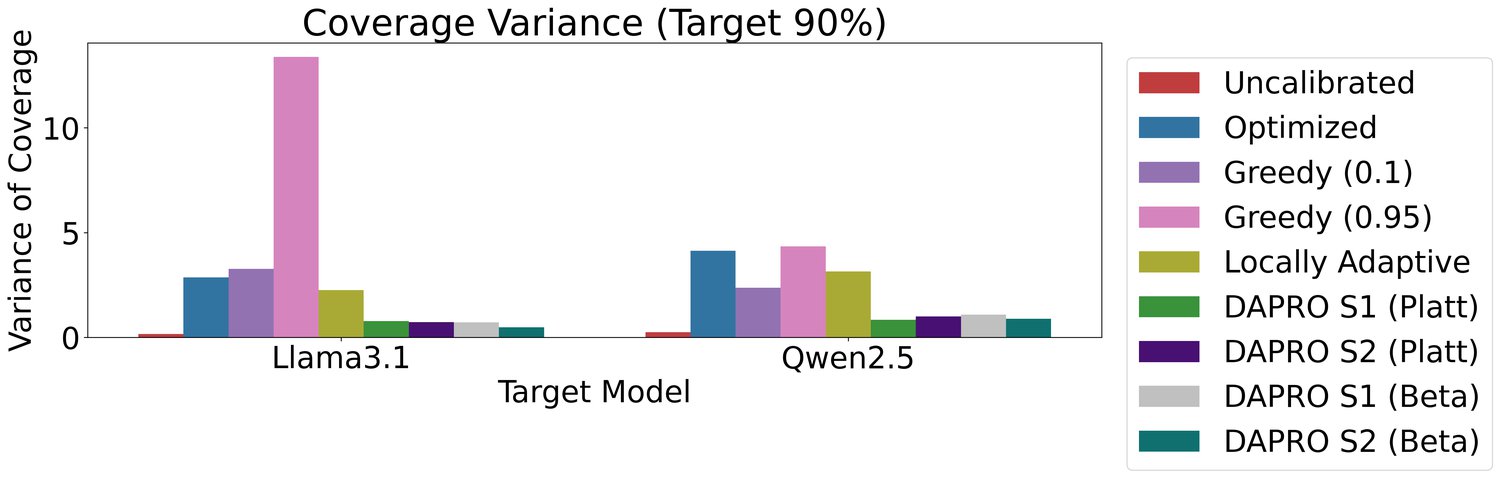}
    \\
    \includegraphics[width=0.9\linewidth]{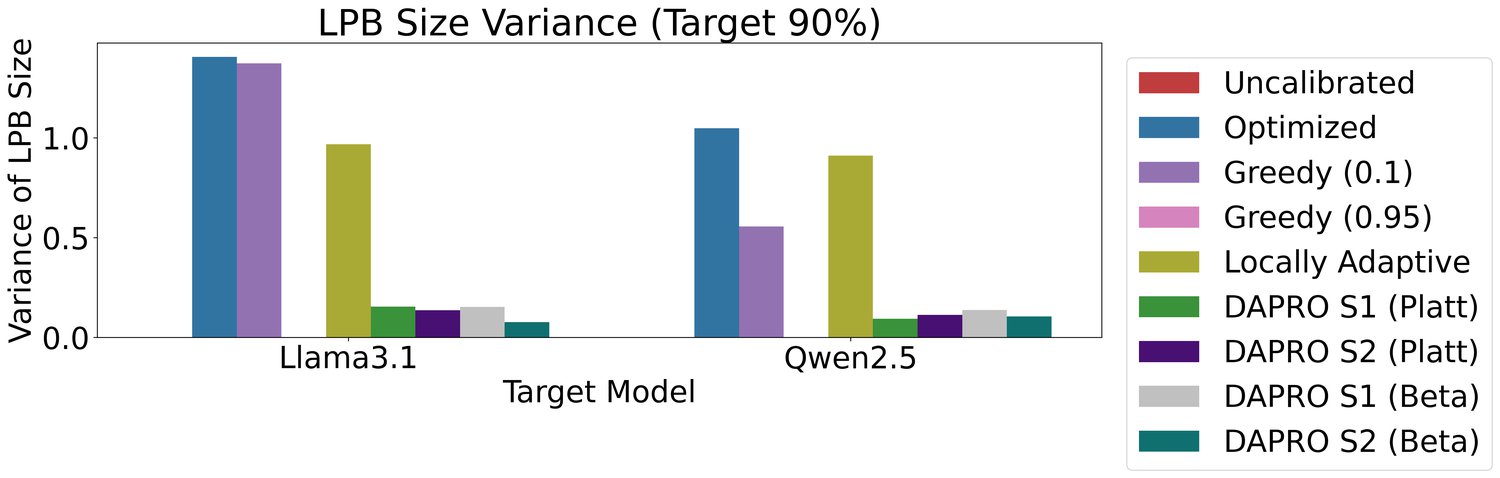}
    
    \caption{\textbf{AutoIF} (LPB) dataset: number of observed successful events, mean inverse-probability weight, coverage variance, and LPB variance by various methods across four target LLMs. Target coverage rate: $90\%$ and target $\bar{B}=20$ budget per sample. Performance metrics are taken over 50 random splits of the calibration and test sets.}
    \label{fig:autoif_full2}
\end{figure}

We now evaluate our approach for constructing a calibrated UPB. Since constructing a UPB requires observing the conversations at higher iterations, we increase the average budget per sample to $\bar{B}=30$ for this setup. We summarize the performance of each method for UPB constructions with the AutoIF dataset in Figures~\ref{fig:autoif_upb_full1} and~\ref{fig:autoif_upb_full2}. These figures show that all calibration algorithms attain the target $70\%$ coverage rate while satisfying the budget constraint, as guaranteed in theory. The proposed \ttmethod observes more successful events than the baseline and exhibits a lower variance compared to it. Yet, the variance of all methods is very similar, as the UPBs they generate are closer to the maximum horizon $\maxl$, which naturally reduces variance.

\begin{figure}[ht]
    \centering
    \includegraphics[width=0.9\linewidth]{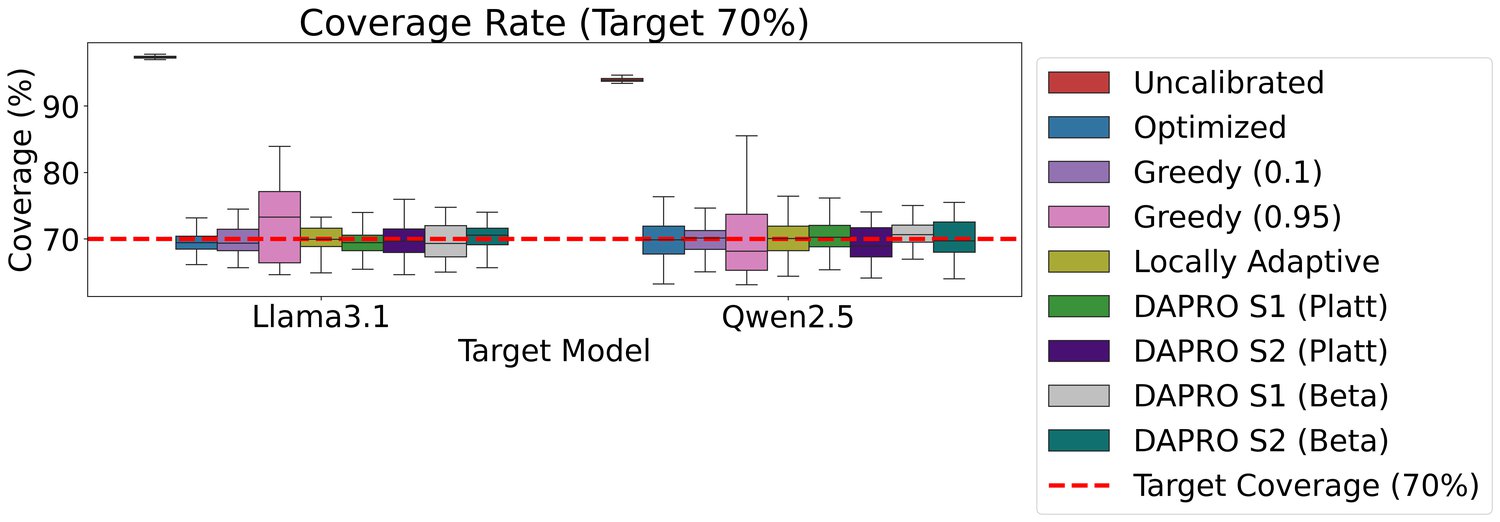}\\
    \includegraphics[width=0.9\linewidth]{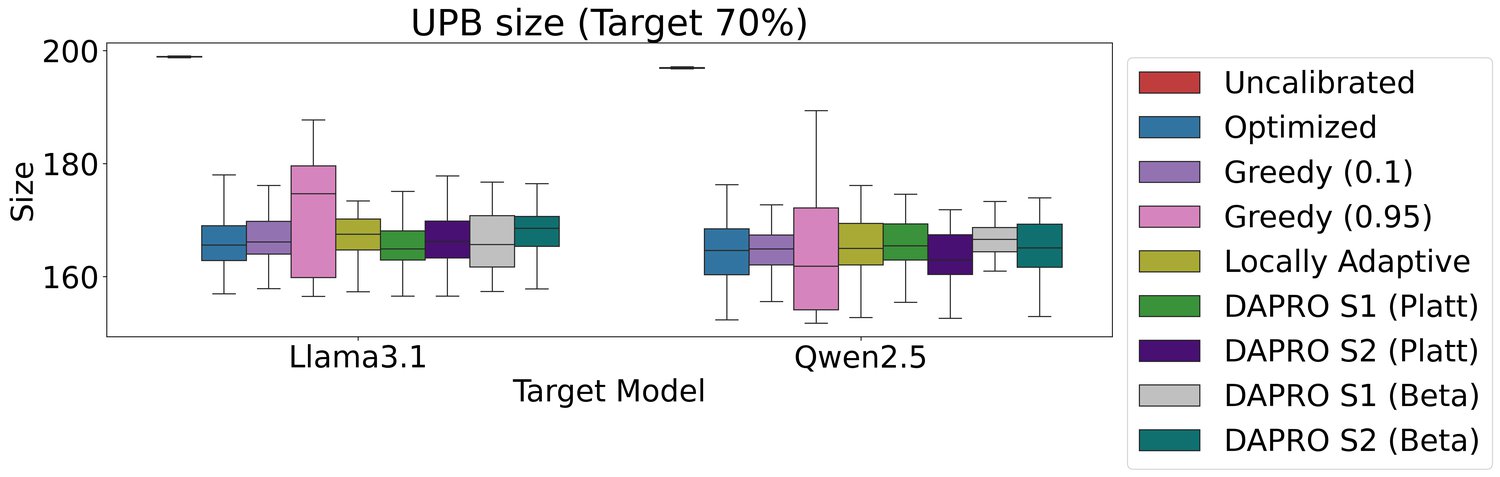}
    \\
    \centering
    \includegraphics[width=0.9\linewidth]{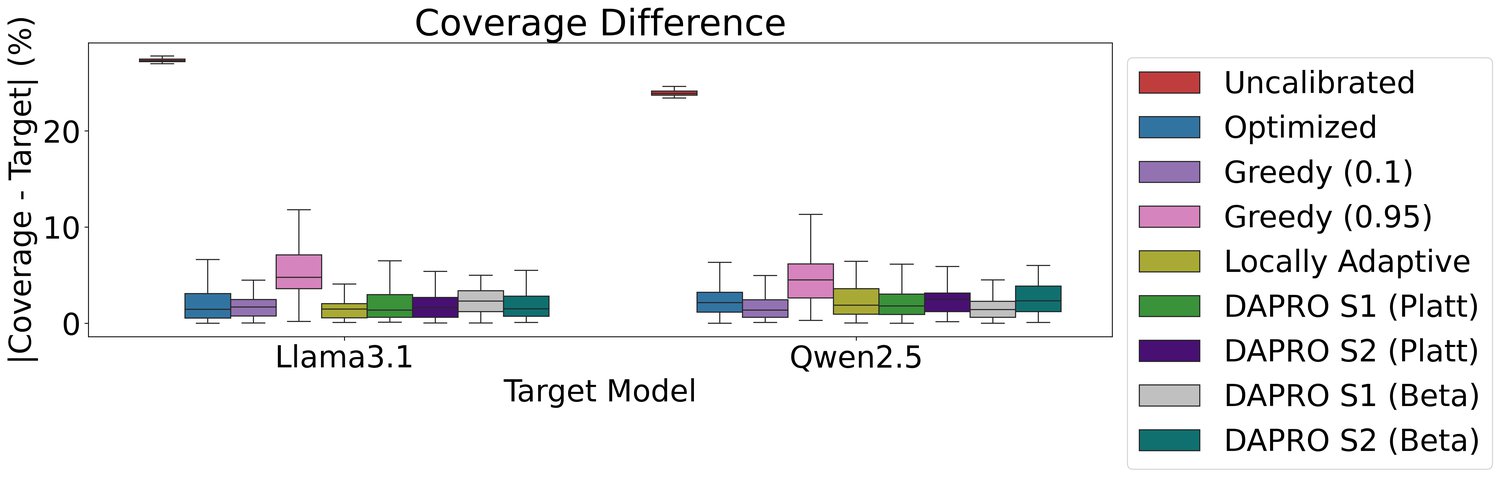}
    \\
    \includegraphics[width=0.9\linewidth]{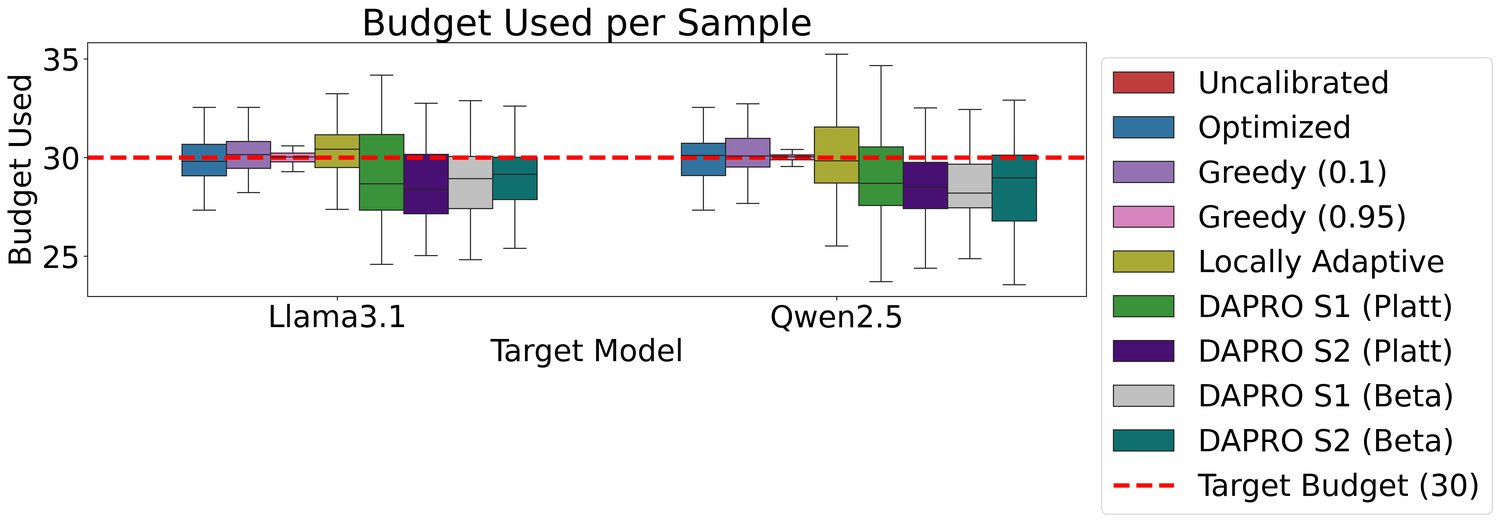}

    \caption{\textbf{AutoIF} (UPB) dataset: coverage rate, UPB size, coverage deviation, and budget utilized by various methods across four target LLMs. Target coverage rate: $70\%$ and target $\bar{B}=30$ budget per sample. Performance metrics are taken over 50 random splits of the calibration and test sets.}
    \label{fig:autoif_upb_full1}
\end{figure}

\begin{figure}[ht]
    \centering
    \includegraphics[width=0.9\linewidth]{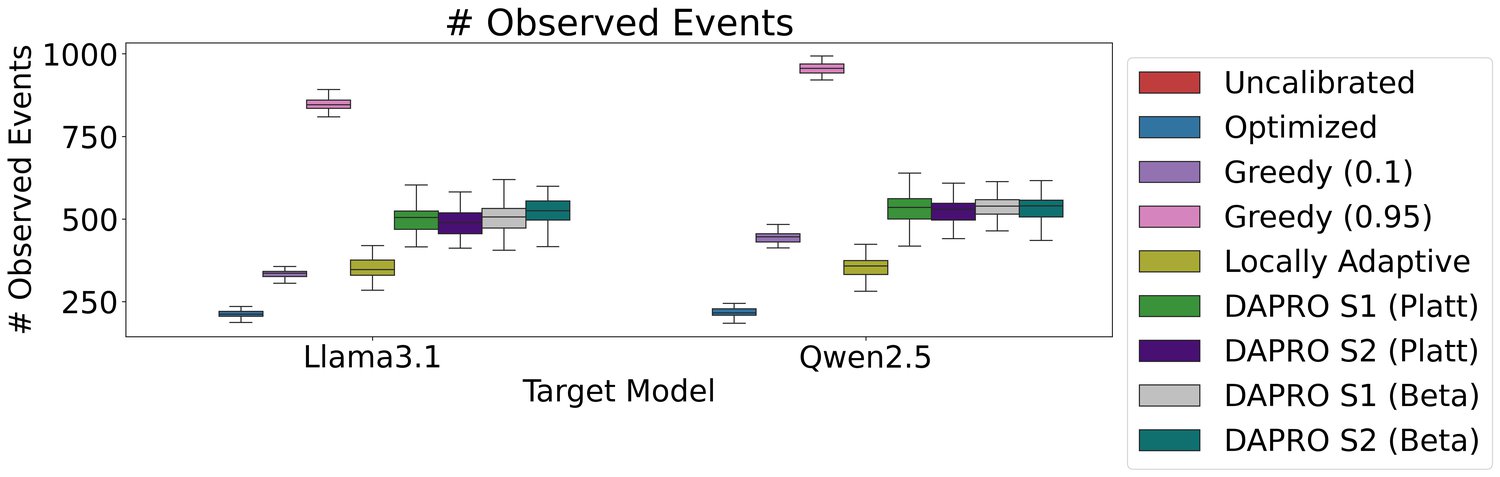}
    \\
    \includegraphics[width=0.9\linewidth]{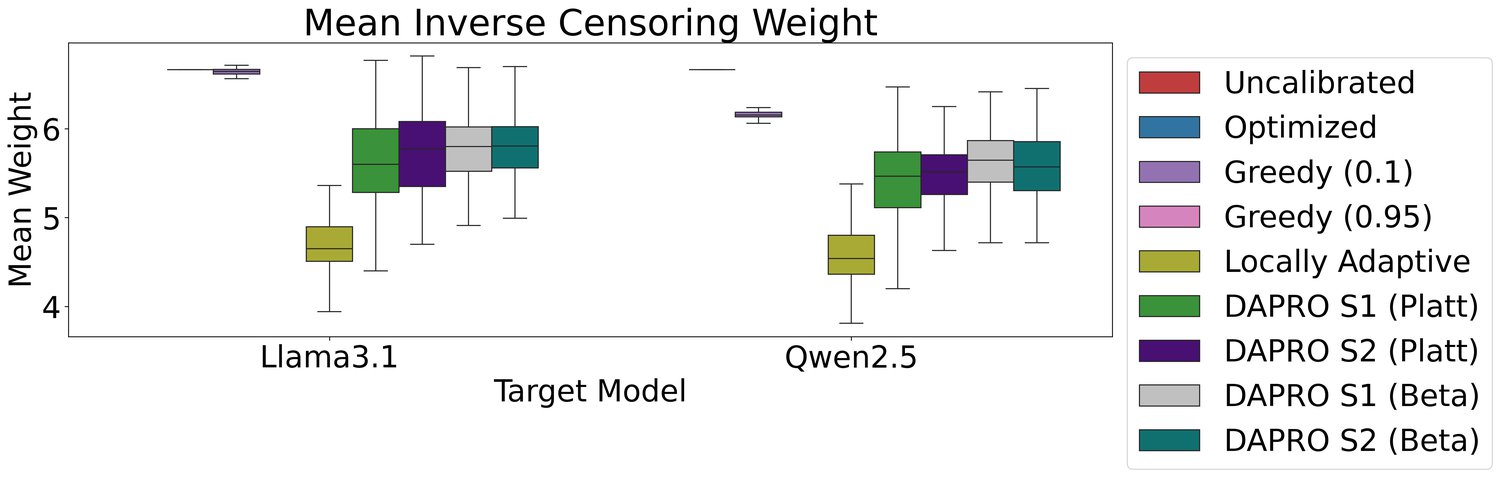}
    \\
    \includegraphics[width=0.9\linewidth]{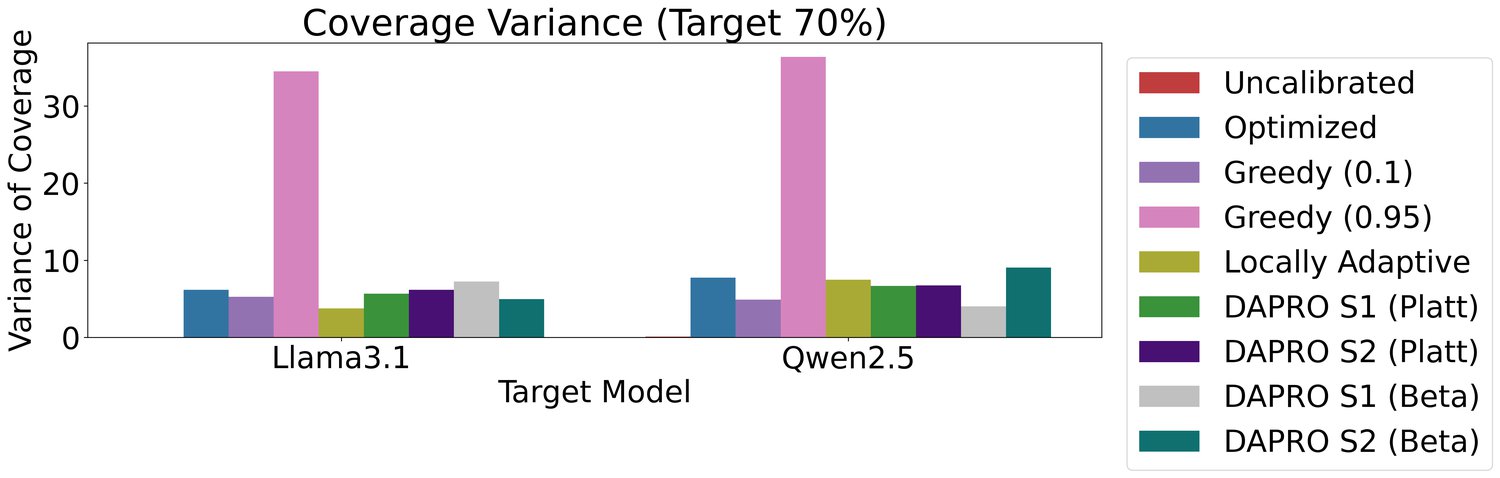}
    \\
    \includegraphics[width=0.9\linewidth]{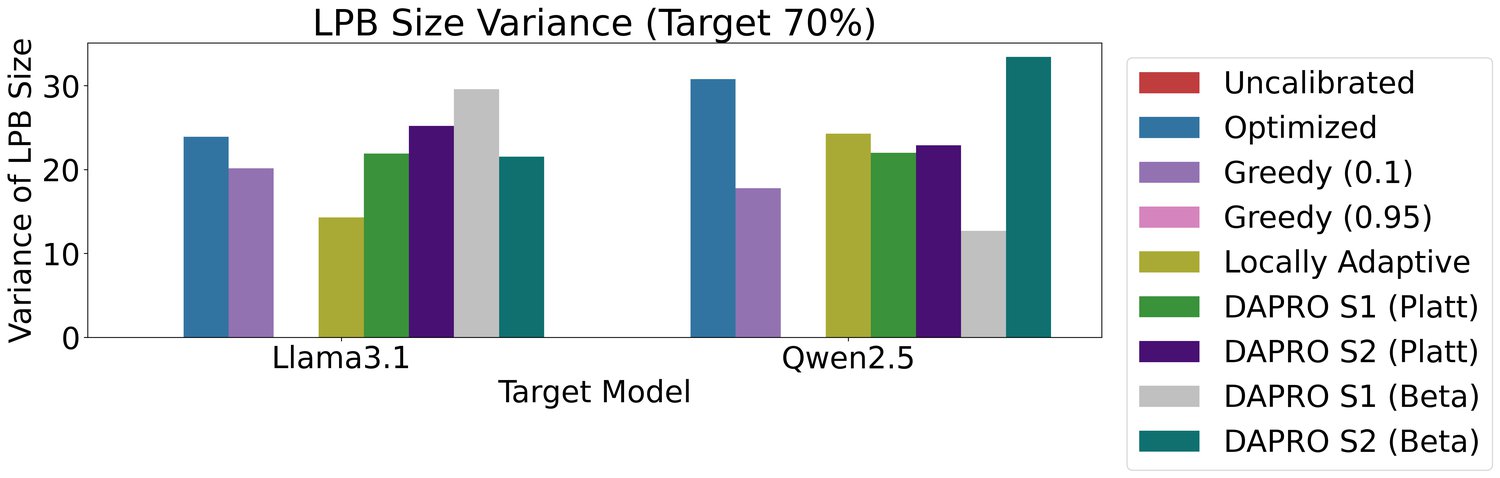}
    
    \caption{\textbf{AutoIF} (UPB) dataset: number of observed successful events, mean inverse-probability weight, coverage variance, and UPB variance by various methods across four target LLMs. Target coverage rate: $70\%$ and target $\bar{B}=30$ budget per sample. Performance metrics are taken over 50 random splits of the calibration and test sets.}
    \label{fig:autoif_upb_full2}
\end{figure}

\subsection{Unbiased estimation of safety metrics at the population level}
\label{sec:unbiased_estimation}

Beyond evaluating the efficiency of individual adversarial attacks, a critical objective in many applications, such as large-scale red-teaming, is assessing LLM safety at the population level. For this purpose, we consider two safety metrics evaluated over a maximum conversational horizon of $\maxl = 200$ turns: the \textbf{Unsafe Event Rate (\ttuer)}, which measures the proportion of prompts that elicit an unsafe output within $\maxl$ steps, and the \textbf{Restricted Mean Time-to-Unsafe (\ttrmttu)}, which quantifies the average number of turns required for the model to produce that unsafe response. 
When evaluating LLM utility, e.g., via the AutoIF dataset, where the target event is successful task completion, we symmetrically refer to these metrics as the \textbf{Successful Event Rate (\ttser)} and the \textbf{Restricted Mean Time-to-Success (\ttrmtts)}.

Estimating these metrics under limited computing resources is challenging, as
when the budget allocator halts a conversation early, it introduces a right-censoring bias. Treating these early stopped samples as safe will increase the safety perceived by the model. Conversely, running every conversation for the full $\maxl$ iterations to find the true Oracle outcome is computationally infeasible. However, by employing a stochastic budget allocator that tracks the probability of reaching the censoring time, such as \ttmethod, we can construct unbiased estimates of the true population metrics using only a fraction of the compute.

We achieve this by re-weighting the observed outcomes using the inverse probability of censoring, similar to our miscoverage rate estimator in~\eqref{eq:miscoverage_estimator}. Specifically, let $m(T)$ represent an individual-level metric, such as the unsafe event indicator $m(T)=\mathbb{I}\bigl\{T < \maxl\bigr\}$. Taking the expectation over the population yields the true \ttuer: $\mathbb{E}[m(T)] = \mathbb{P}(T < \maxl)$.
We estimate this population metric using:
\begin{equation}
\label{eq:metric_estimator}
    \hat{m} = \frac{1}{|\calI|}\sum_{i\in \calI} w(i)\; \mathbb{I}\bigl\{{T}_i \leq C_i\bigr\} m(T_i),
\end{equation}
where, $w(i) = \mathbb{P}\left( {T}_i \leq C_i  \mid \{(X_i, H_i, T_i)\}_{i \in \calIone}, \Dtrain \right)^{-1}$ is the corresponding weight.
In our experiments, since conversation lengths are capped at $\maxl$, any metric depending on this horizon is fully resolved if we either observe the event ($T_i \leq C_i$) or reach the maximal horizon ($C_i = \maxl$). Consequently, our capped estimated metric is given by:
\begin{equation}
    \hat{m}^{\maxl} = \frac{1}{|\calI|}\sum_{i\in \calI} w^{\maxl}(i)\; \mathbb{I}\bigl\{{T}_i \leq C_i \text{ or } C_i =\maxl \bigr\} m(T_i),
\end{equation}
where the adjusted weights are $$w^{\maxl}(i) = \mathbb{P}\left( {T}_i \leq C_i  \text{ or } C_i = \maxl \mid \{(X_i, H_i, T_i)\}_{i \in \calIone}, \Dtrain \right)^{-1}.$$ 
Following the arguments in Theorem~\ref{thm:general_validity}, $\hat{m}$ is an unbiased estimator of $m(T)$, guaranteeing that $\mathbb{E}[\hat{m}] = \mathbb{E}[m(T)]$.

To demonstrate this empirically, we compute the Oracle \ttuer and \ttrmttu values using both calibration and test sets under an unlimited budget setting, so that all unsafe events up to time $\maxl$ are fully observed. We then run each allocation method with a nominal average budget per sample constraint set to $\bar{B}=20$ to obtain censoring times, censored event times, and the weights, and construct the metric estimator $\hat{m}$ from~\eqref{eq:metric_estimator}. We repeat this process across 50 random calibration-test splits to measure the mean of estimated metrics and their variance.

We compare \ttmethod against several baselines: an \textbf{Uncalibrated Uniform} allocation, which allocates the budget uniformly and naively averages the raw censored outcomes without re-weighting, a \textbf{Calibrated Uniform} allocation that applies re-weighting, the \textbf{Static Optimized} baseline of~\cite{davidov2026calibrated}, and the \ttlocaladaptive, \ttgreedy dynamic allocators. Notably, in this setting, the \textbf{Static Optimized} baseline functions identically to the \textbf{Calibrated Uniform} allocation. A sample only contributes to the weighted sum if the conversation reaches the maximum horizon ($C = \maxl$) or successfully reveals an unsafe event ($C \geq T_i$); if a sample is censored ($C < \maxl$ and $C < T_i$), its contribution is effectively zero. For this reason, we must set the  $\qprior = \maxl$ for all samples, so the static optimization algorithm has no preference between any two prompts. As a result, the static optimization allocates the budget uniformly across the entire calibration set.

% --- Toxicity Dataset -- initial metrics plot ---
\begin{figure}[ht]
    \centering
    \includegraphics[width=0.45\linewidth]{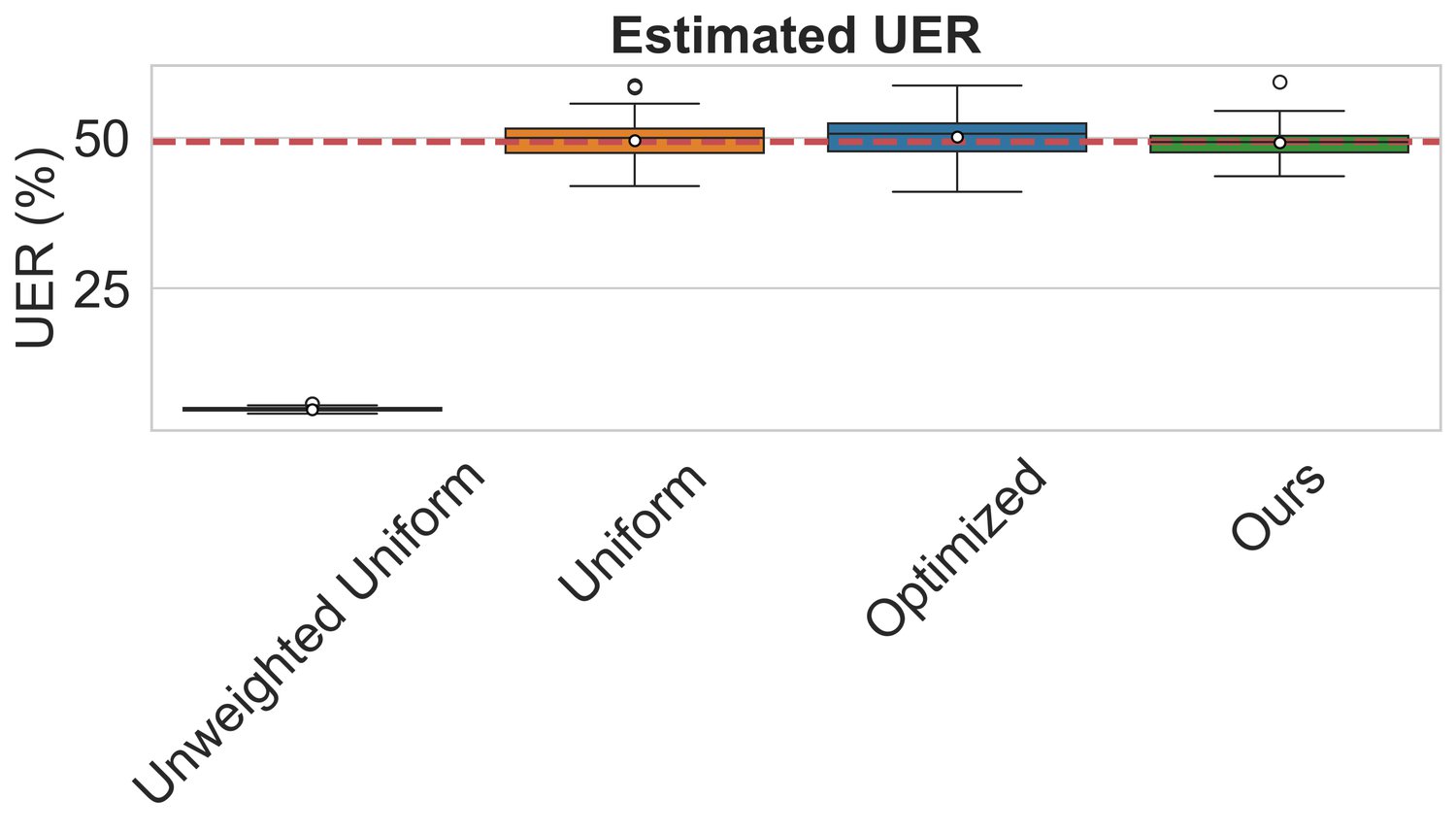}
    \includegraphics[width=0.45\linewidth]{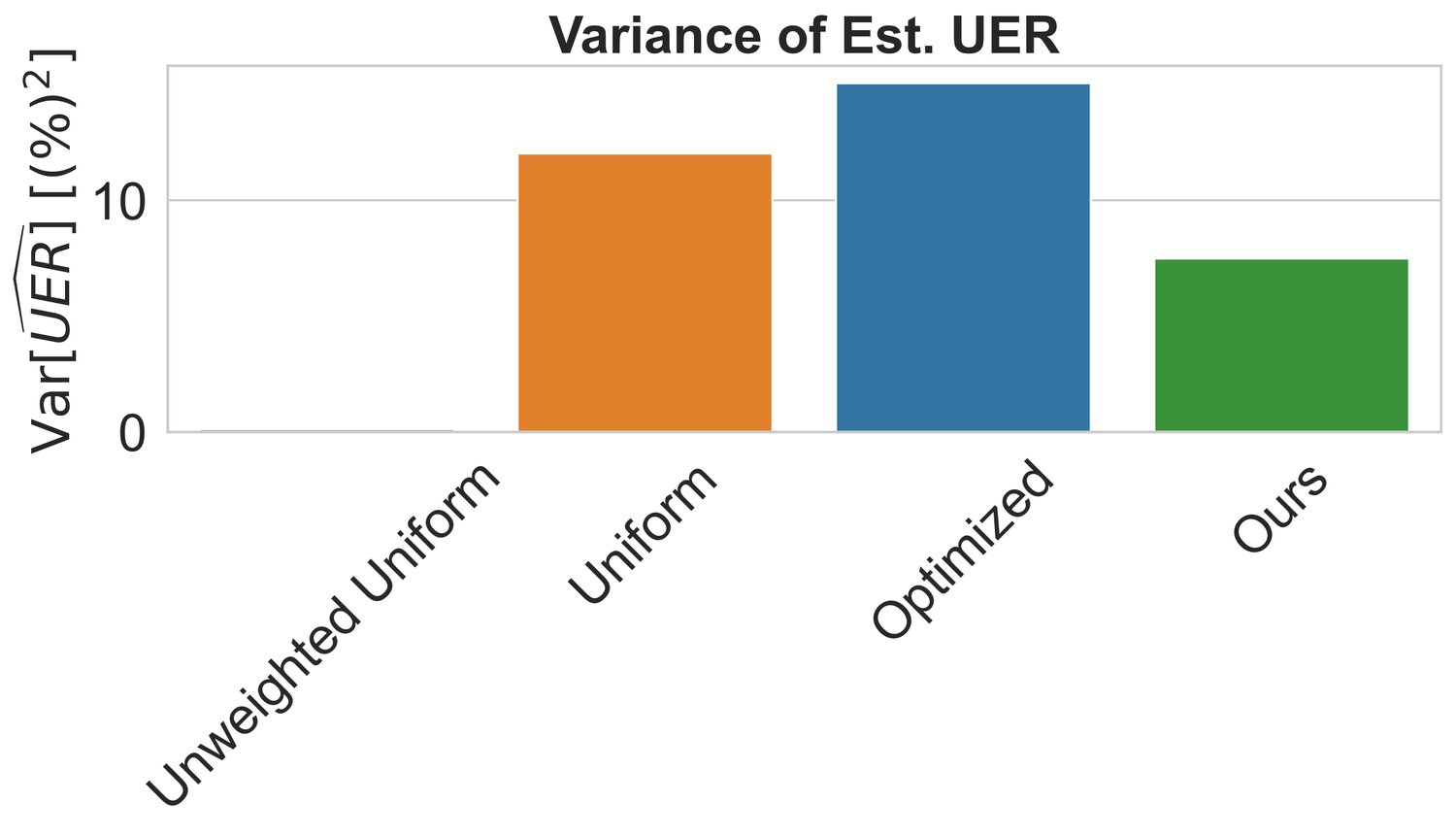}
    \\
    \includegraphics[width=0.45\linewidth]{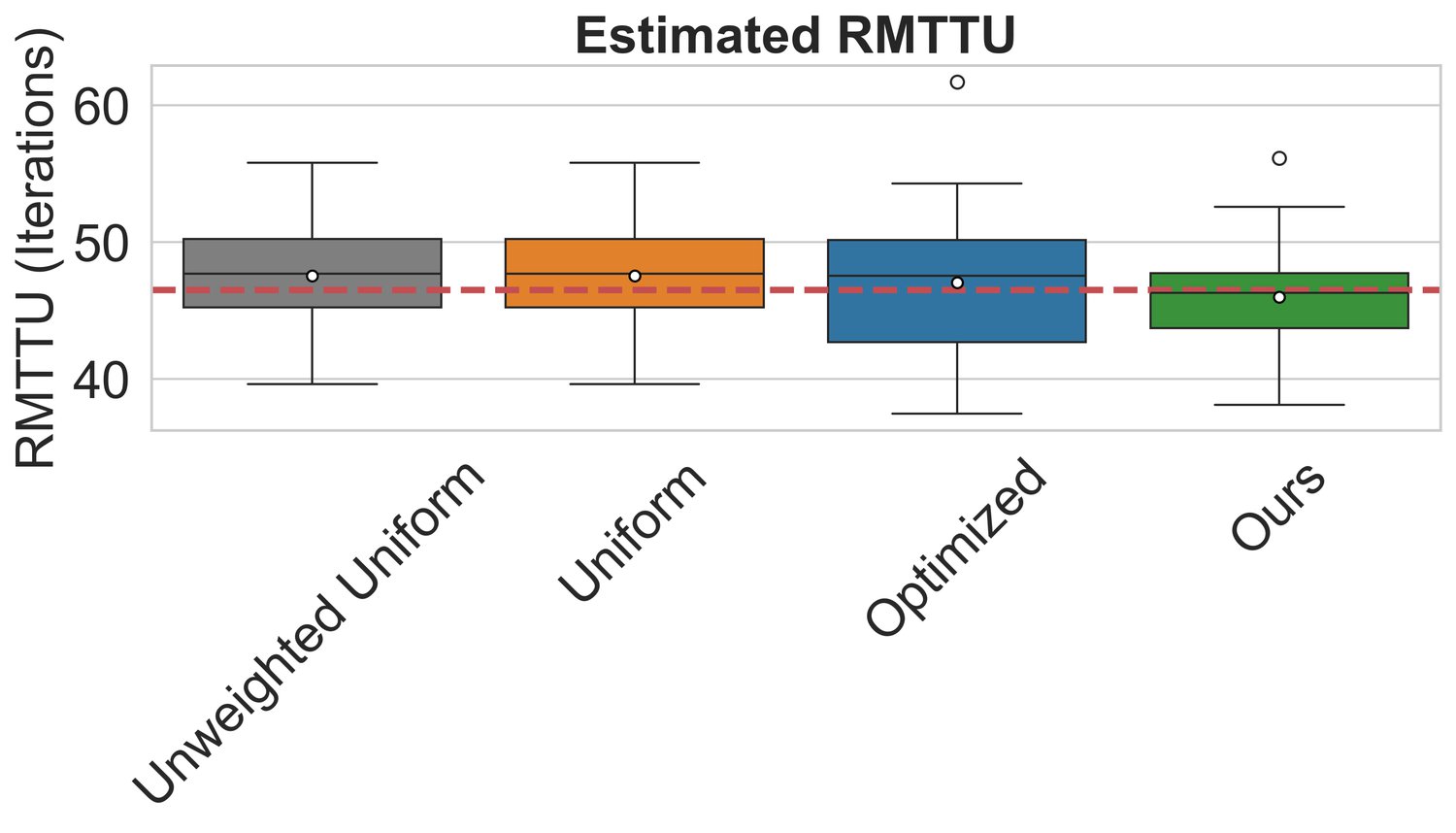}
    \includegraphics[width=0.45\linewidth]{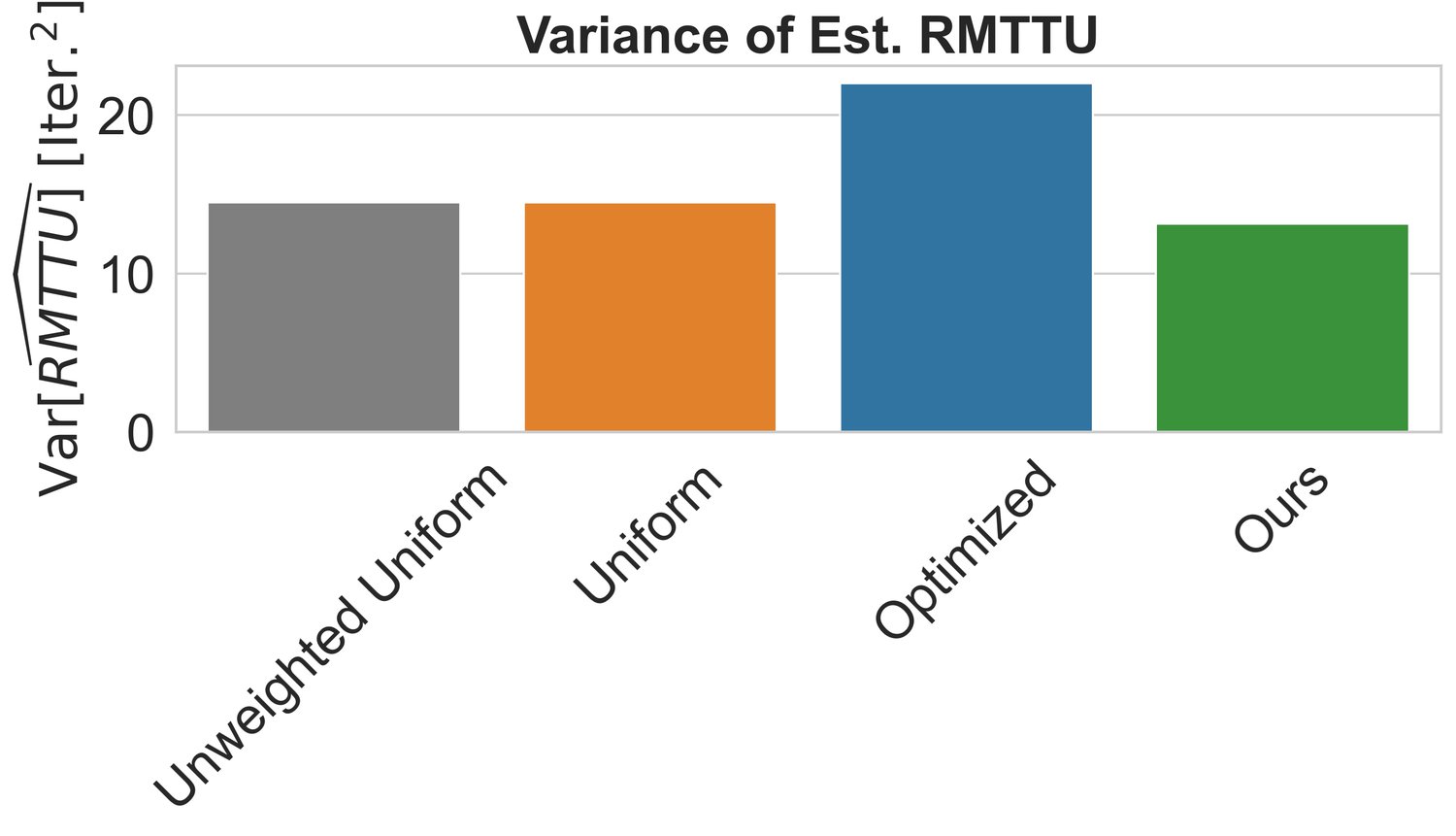}
    \\
        \includegraphics[width=0.45\linewidth]{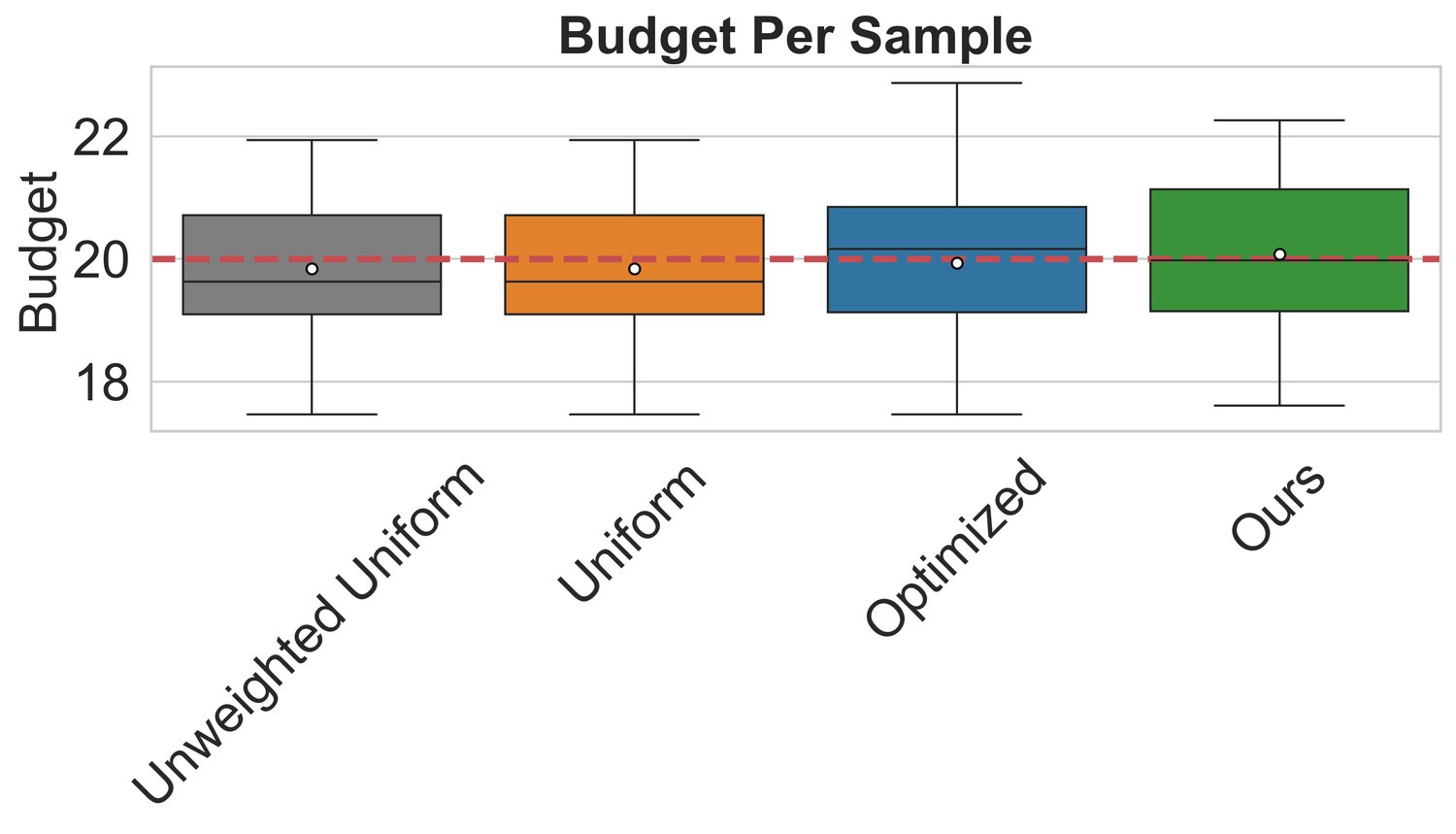}
    \includegraphics[width=0.45\linewidth]{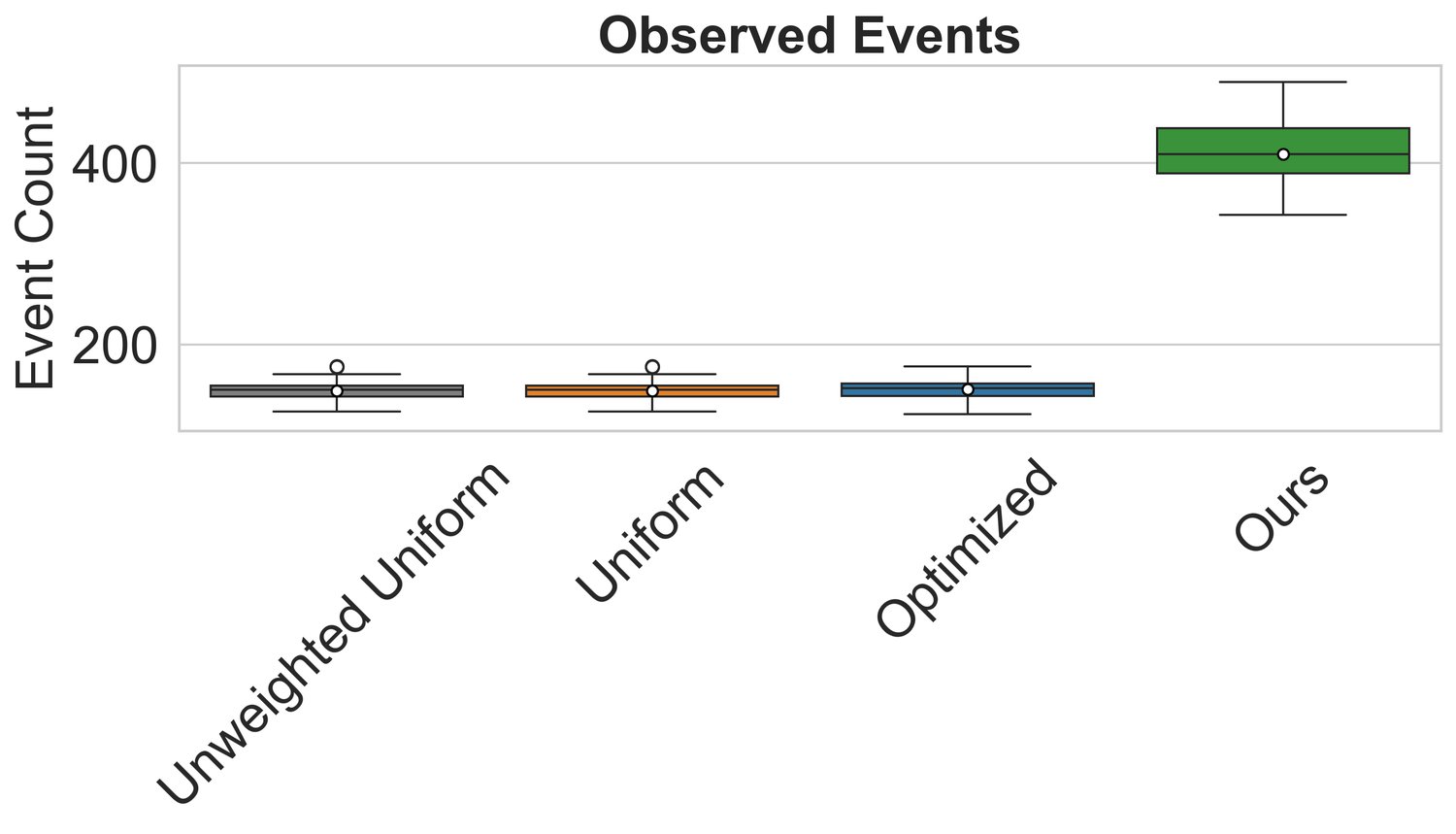}
\caption{Population-level safety metric estimation on the Toxicity dataset with Qwen 2.5 14B Instruct model serves as both the target and attacker. The top two rows display the estimated Unsafe Event Rate (UER) and Restricted Mean Time-to-Unsafe (RMTTU), alongside their respective variances. The red dashed lines are the oracle quantities. The bottom row displays the utilized budget per sample with a red dashed line indicating the target level, and the total number of observed unsafe events. While the unweighted baseline severely underestimates true vulnerability, and naive calibrated baselines suffer from extreme variance, \ttmethod accurately estimates the true Oracle metrics with zero bias and minimal variance. Furthermore, \ttmethod achieves the highest budget efficiency, successfully observing the most unsafe events under the same resource constraints. Results are aggregated over 50 random calibration-test splits.}
\label{fig:estimation_metrics_toxicity_first}
\end{figure}

Figure~\ref{fig:estimation_metrics_toxicity_first} presents the estimated \ttuer and \ttrmttu, along with their variance, on the Toxicity dataset with Qwen 2.5 14B Instruct as both target and attacker. 
The \textbf{Unweighted Uniform} budget allocator underestimates the \ttuer since it does not re-weight the contribution of the uncensored samples to the sum.
By correcting for this bias with re-weighting, the \textbf{Calibrated Uniform} achieves the correct value in expectation, but exhibits a high variance. The static optimized approach has zero bias as well, although it exhibits a higher variance, as its allocation is highly inefficient. 
In stark contrast, \ttmethod achieves the best of both worlds: it recovers the true Oracle metrics with zero bias while achieving the lowest variance and satisfying the budget constraint. By globally optimizing the advancement probabilities, \ttmethod expends the budget on highly vulnerable conversations, efficiently observing the highest number of unsafe events.

Figures \ref{fig:estimation_metrics_toxicity} through \ref{fig:estimation_metrics_autoif} present the estimated \ttuer and \ttrmttu, alongside their variance, across all four experimental setups. We compare the Static Optimized baseline, the dynamic Greedy approach (configured with a data split level of $0.1$, as higher values resulted in extreme instability), and our \ttlocaladaptive and \ttmethod. Since all evaluated methods correctly address the right-censoring distribution shift by applying re-weighting, they successfully attain the true Oracle value (marked by the red dashed line) in expectation. As all methods have zero bias, the estimator's measure of reliability is its variance. The bar plots in these figures demonstrate that the static baseline attains a high variance. The \ttgreedy allocator tends to attain a variance lower than the static baseline, although not consistently, as its data-split ratio parameter is not tuned. In contrast, \ttlocaladaptive consistently attains variance lower than the static baseline, and occasionally even lower than our global optimized \ttmethod. This establishes \ttlocaladaptive as a competitive alternative, particularly as it provides an exact, finite-sample budget guarantee in expectation. Crucially, this theoretical guarantee holds purely under the mild assumption of data exchangeability; it does not require an accurate score-to-probability mapping to satisfy the budget constraint.

While \ttmethod generally maintains lower variance than the static approach, the rare instances where it exhibits a higher variance can likely be attributed to an inaccurate score-to-probability mapping.
Overall, this experiment demonstrates that while proper statistical re-weighting is necessary to construct an unbiased estimator, efficient budget allocation is essential to obtain low-variance, reliable safety metrics 
for large-scale tasks under limited compute resources.

% --- Toxicity Dataset ---
\begin{figure}[ht]
    \centering
    \includegraphics[width=\firstgraphwidth\linewidth]{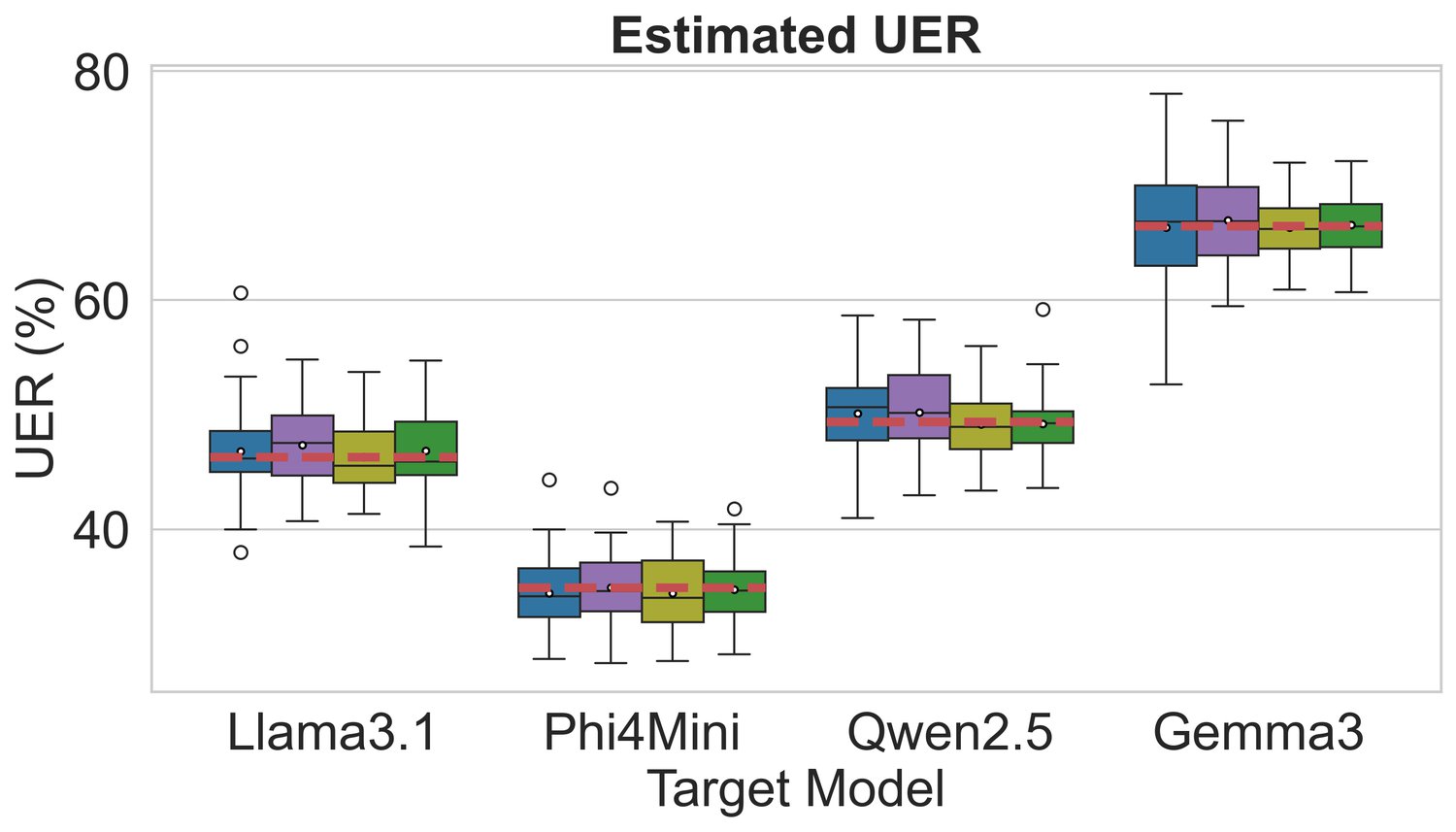}
    \includegraphics[width=\secondgraphwidth\linewidth]{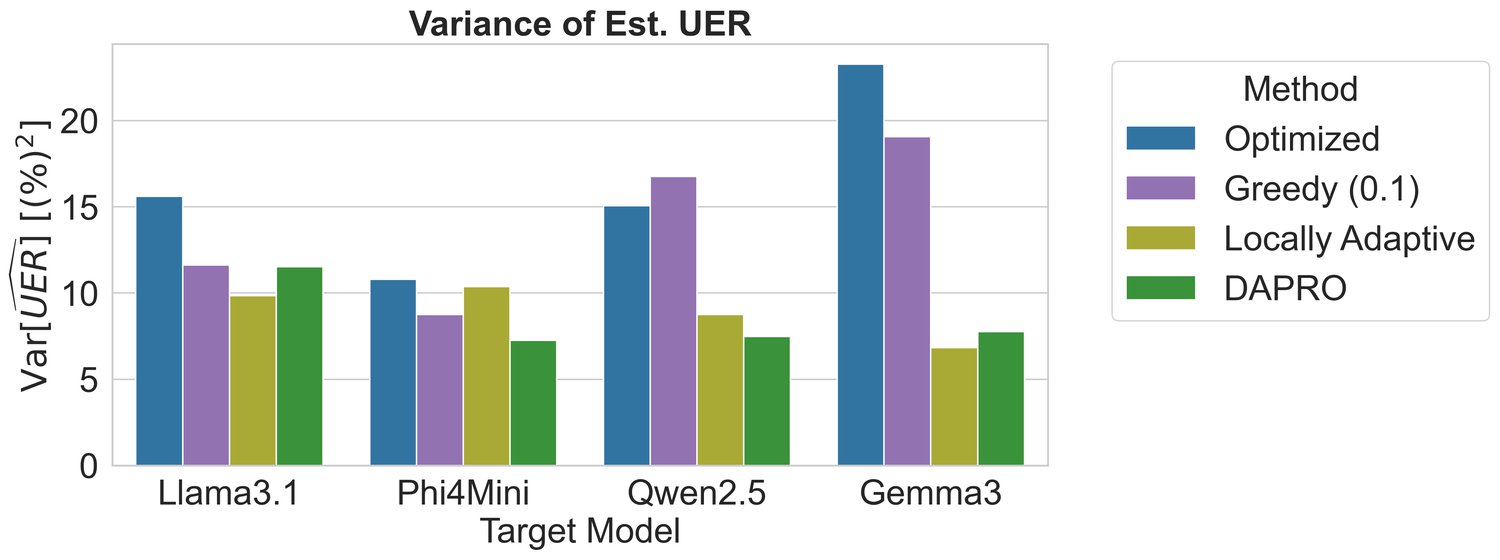}
    \\
    \includegraphics[width=\firstgraphwidth\linewidth]{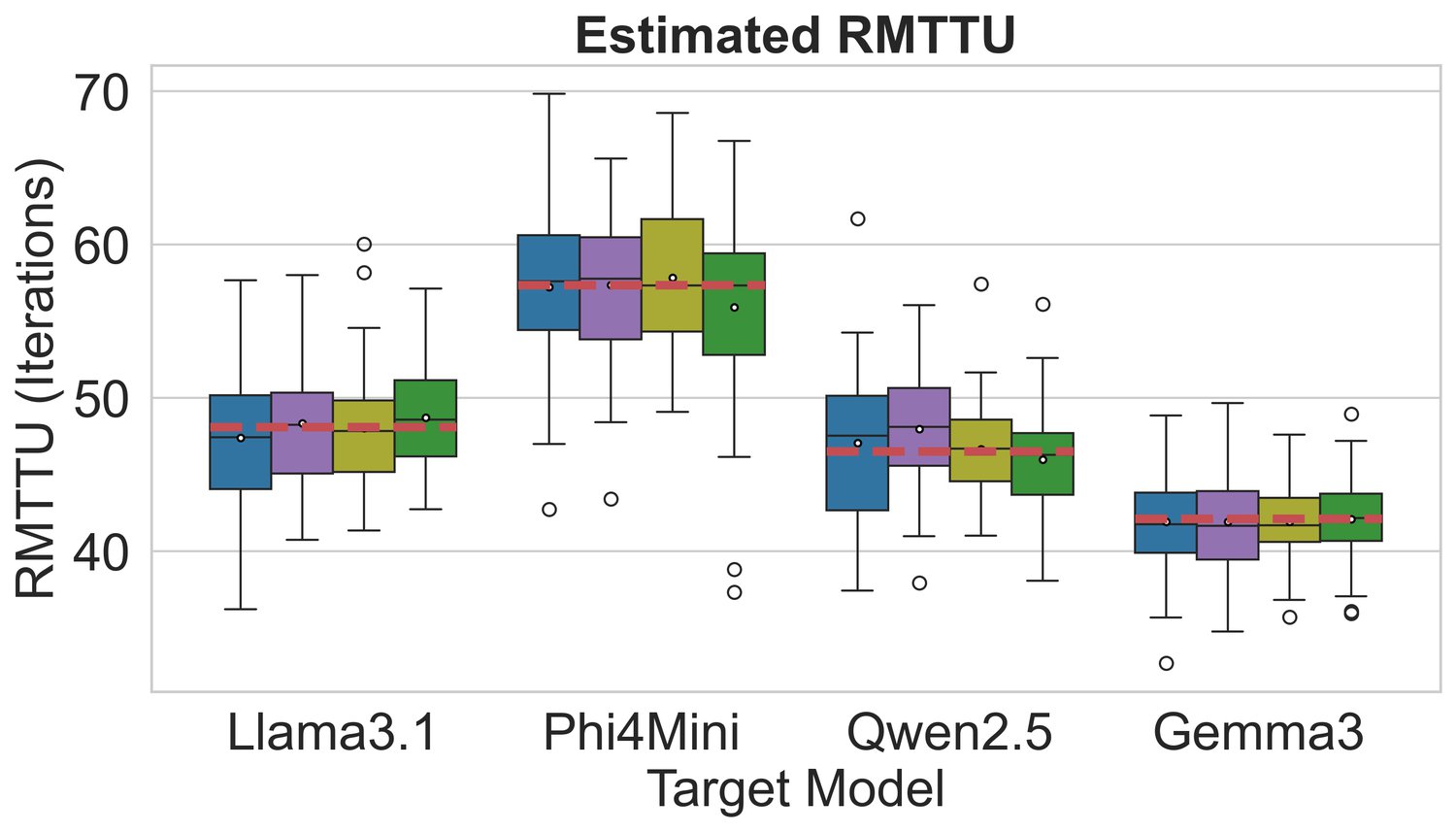}
    \includegraphics[width=\secondgraphwidth\linewidth]{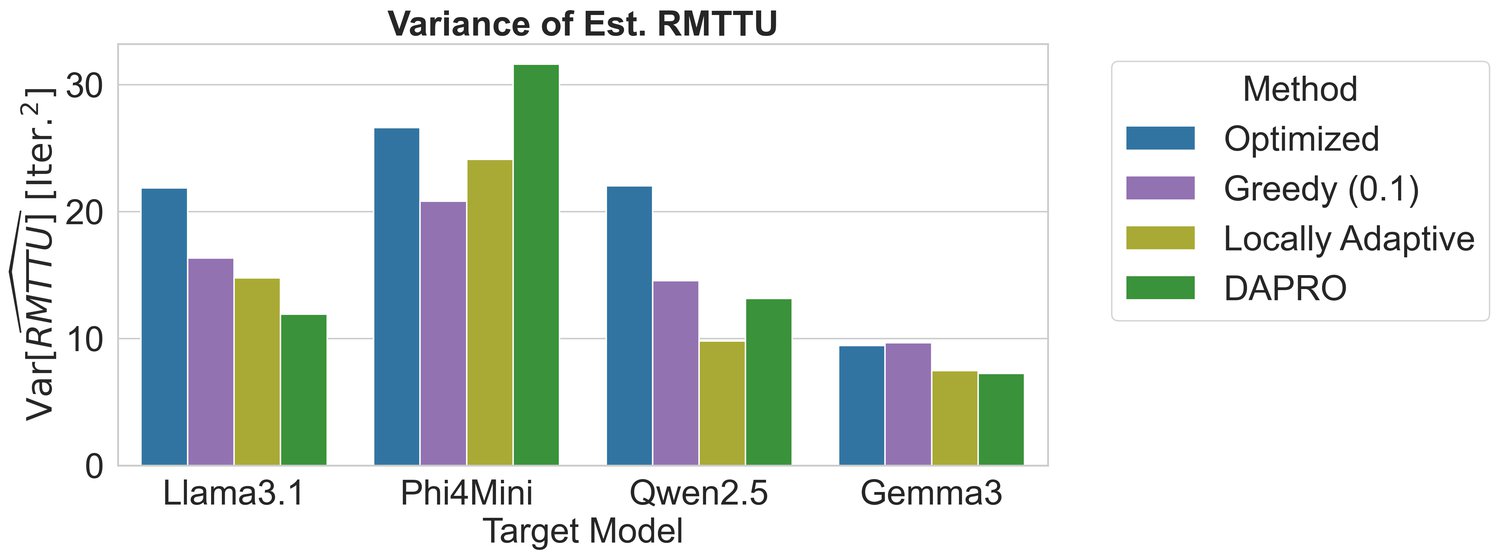}
    
    \caption{Population-level metric estimation on the \textbf{Toxicity} dataset. Boxplots display the distribution of the estimated Unsafe Event Rate (\ttuer) and Restricted Mean Time-to-Unsafe (\ttrmttu) across 50 random splits. The red dashed lines represent the true Oracle metrics. While all methods achieve the correct metric in expectation, \ttlocaladaptive and \ttmethod yield the tightest variance compared to the static baselines and \ttgreedy approach.}
    \label{fig:estimation_metrics_toxicity}
\end{figure}

% --- Red Team (Qwen Judge) Dataset ---
\begin{figure}[ht]
    \centering
    \includegraphics[width=\firstgraphwidth\linewidth]{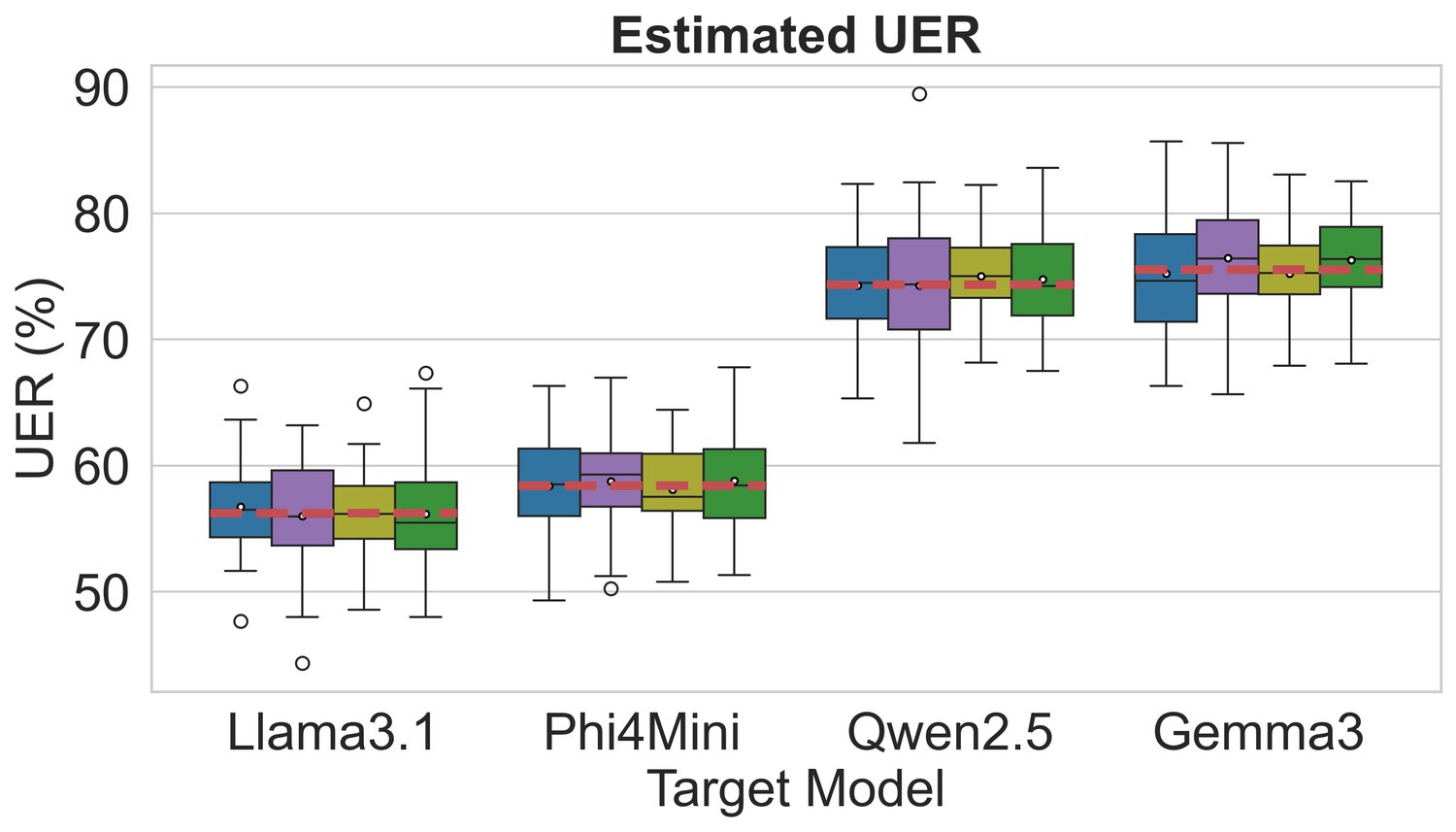}
    \includegraphics[width=\secondgraphwidth\linewidth]{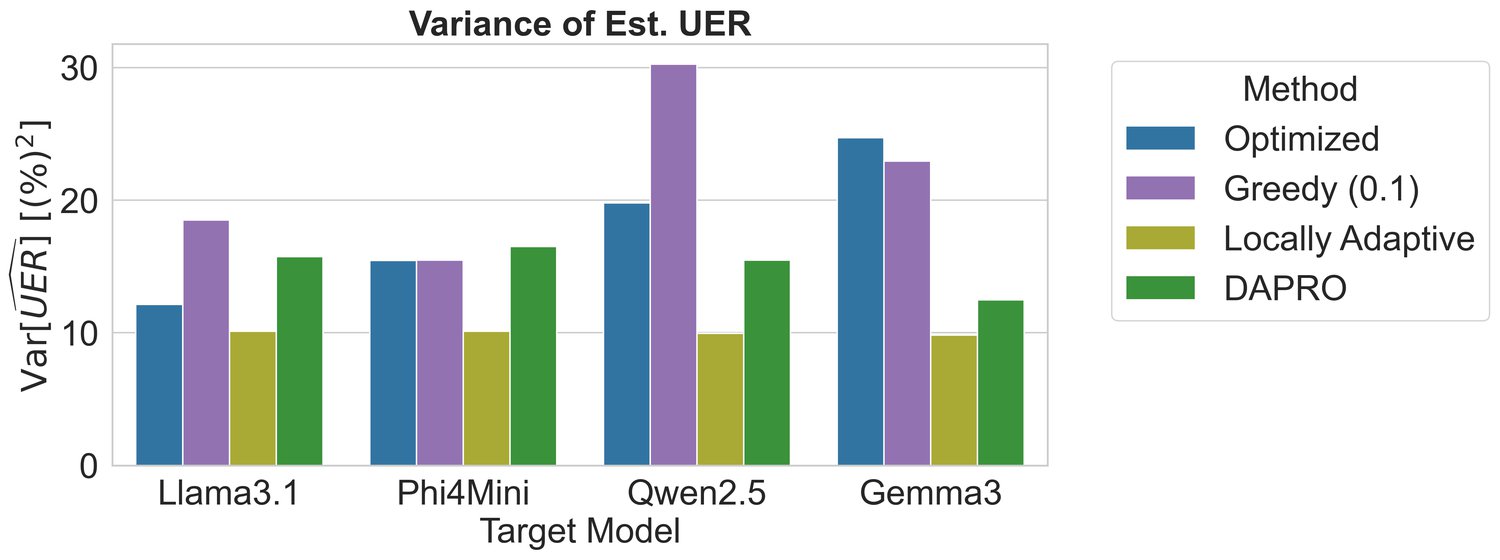}
    \\
    \includegraphics[width=\firstgraphwidth\linewidth]{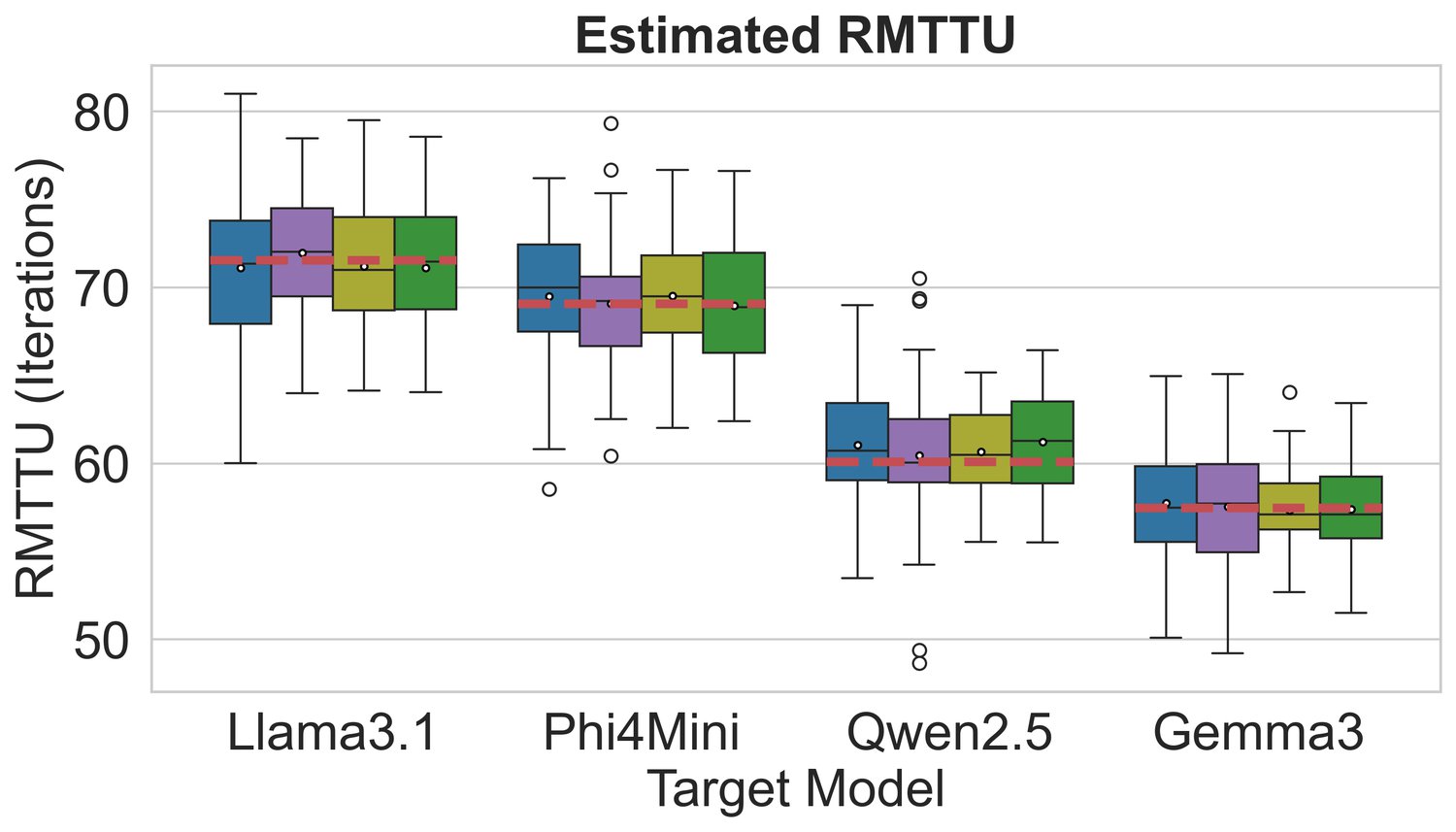}
    \includegraphics[width=\secondgraphwidth\linewidth]{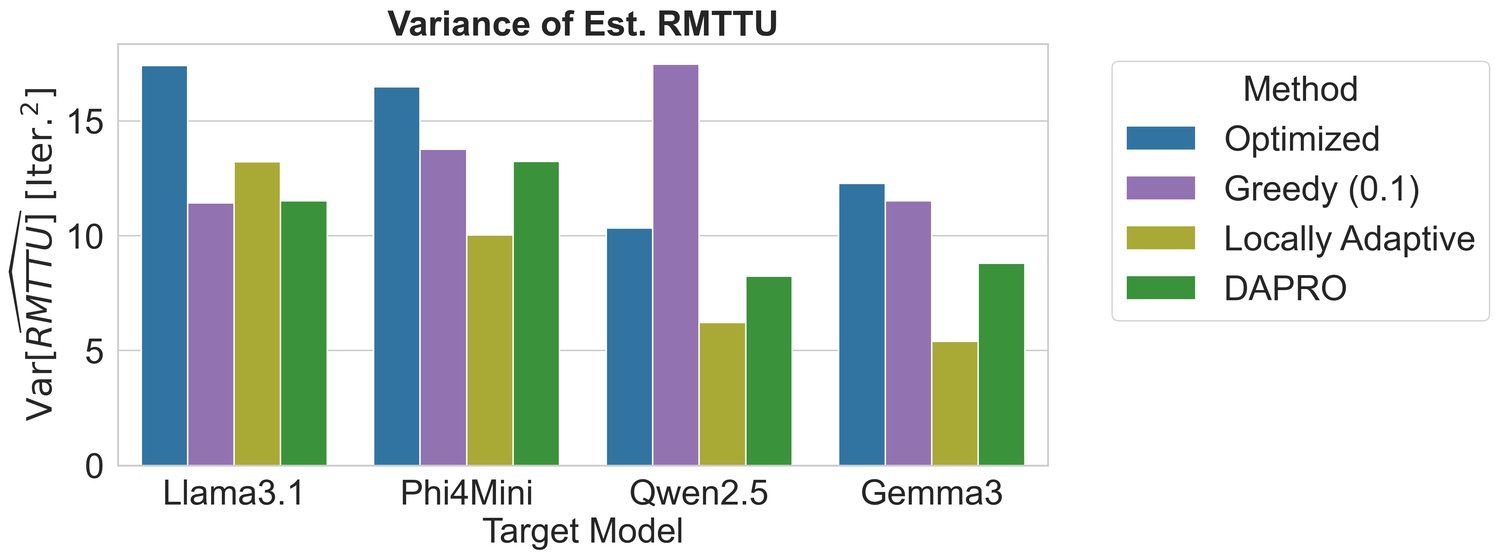}
    
    \caption{Population-level metric estimation on the \textbf{Red Team} dataset evaluated with a \textbf{Qwen} judge. The red dashed lines denote the true Oracle metrics. Our proposed \ttlocaladaptive and \ttmethod consistently achieve lower variance than the static optimized baseline across various target models.}
    \label{fig:estimation_metrics_red_team_qwen}
\end{figure}

% --- Red Team (Llama Guard Judge) Dataset ---
\begin{figure}[ht]
    \centering
    \includegraphics[width=\firstgraphwidth\linewidth]{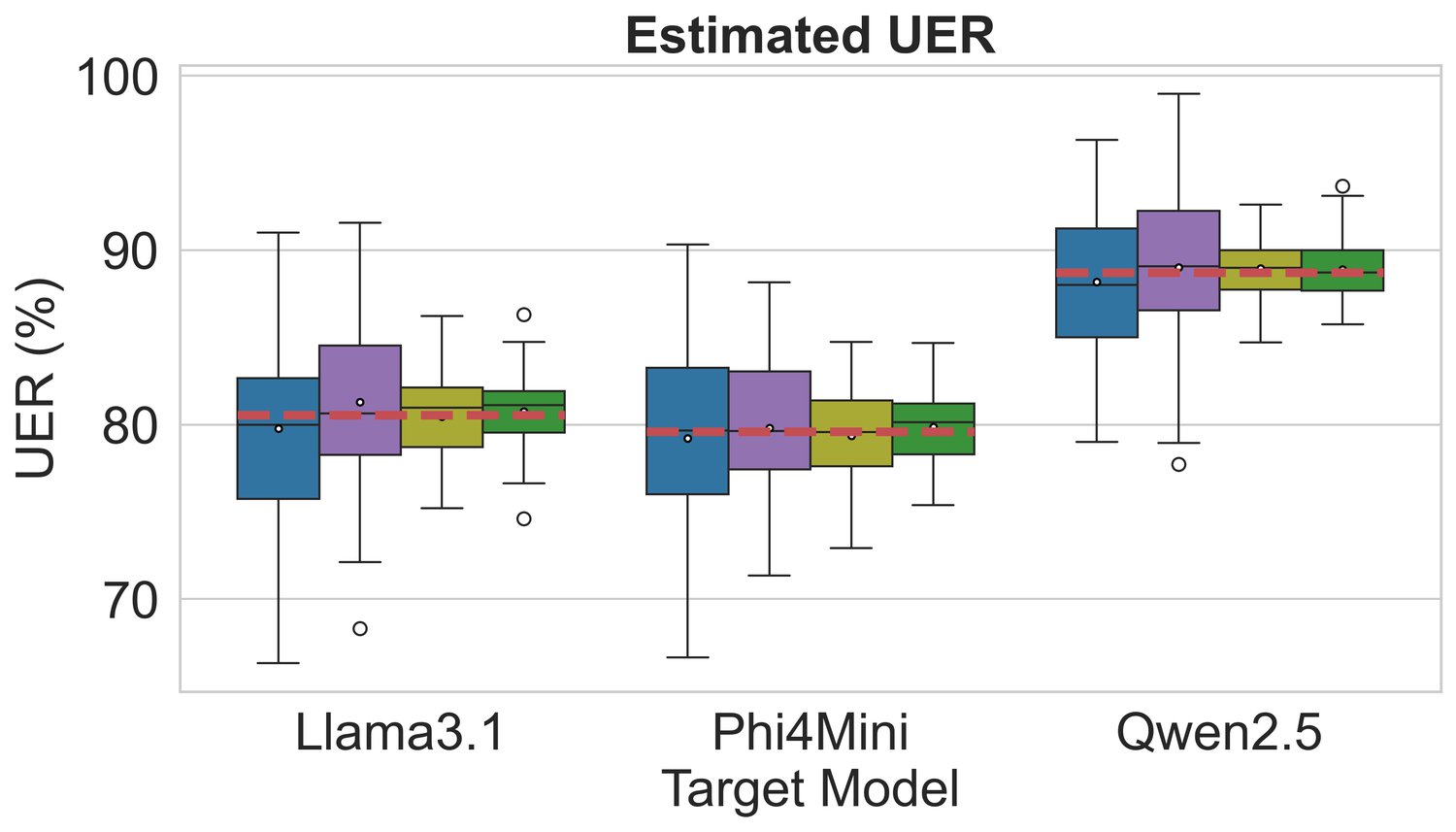}
    \includegraphics[width=\secondgraphwidth\linewidth]{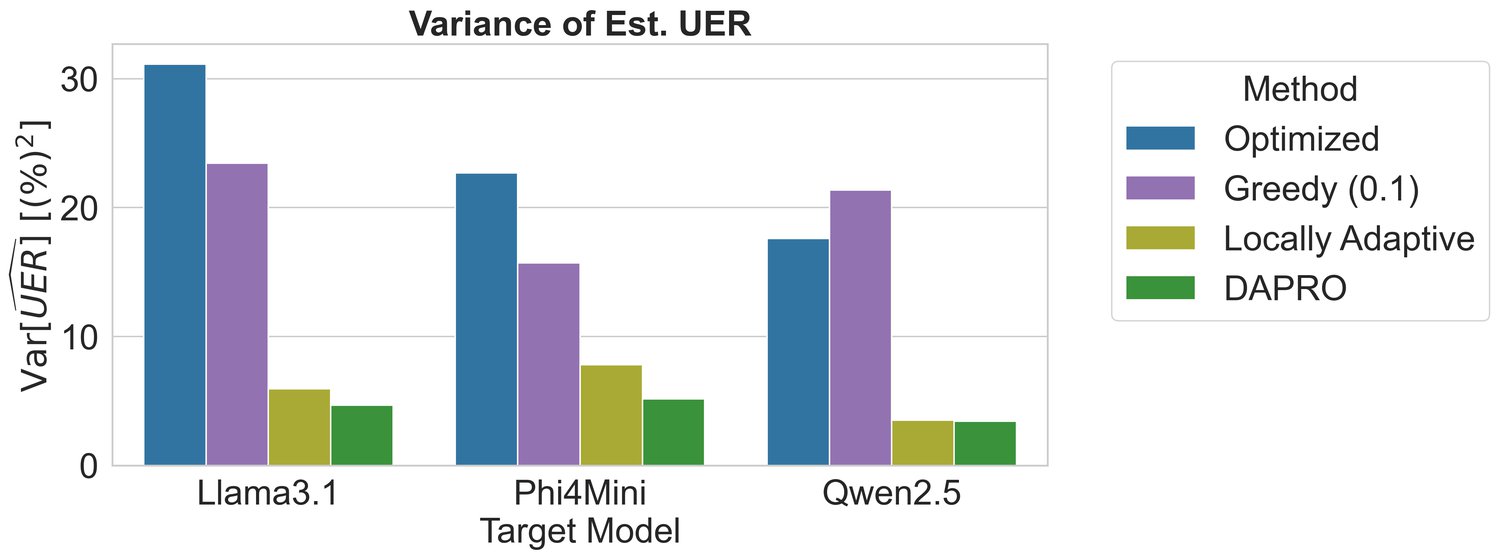}
    \\
    \includegraphics[width=\firstgraphwidth\linewidth]{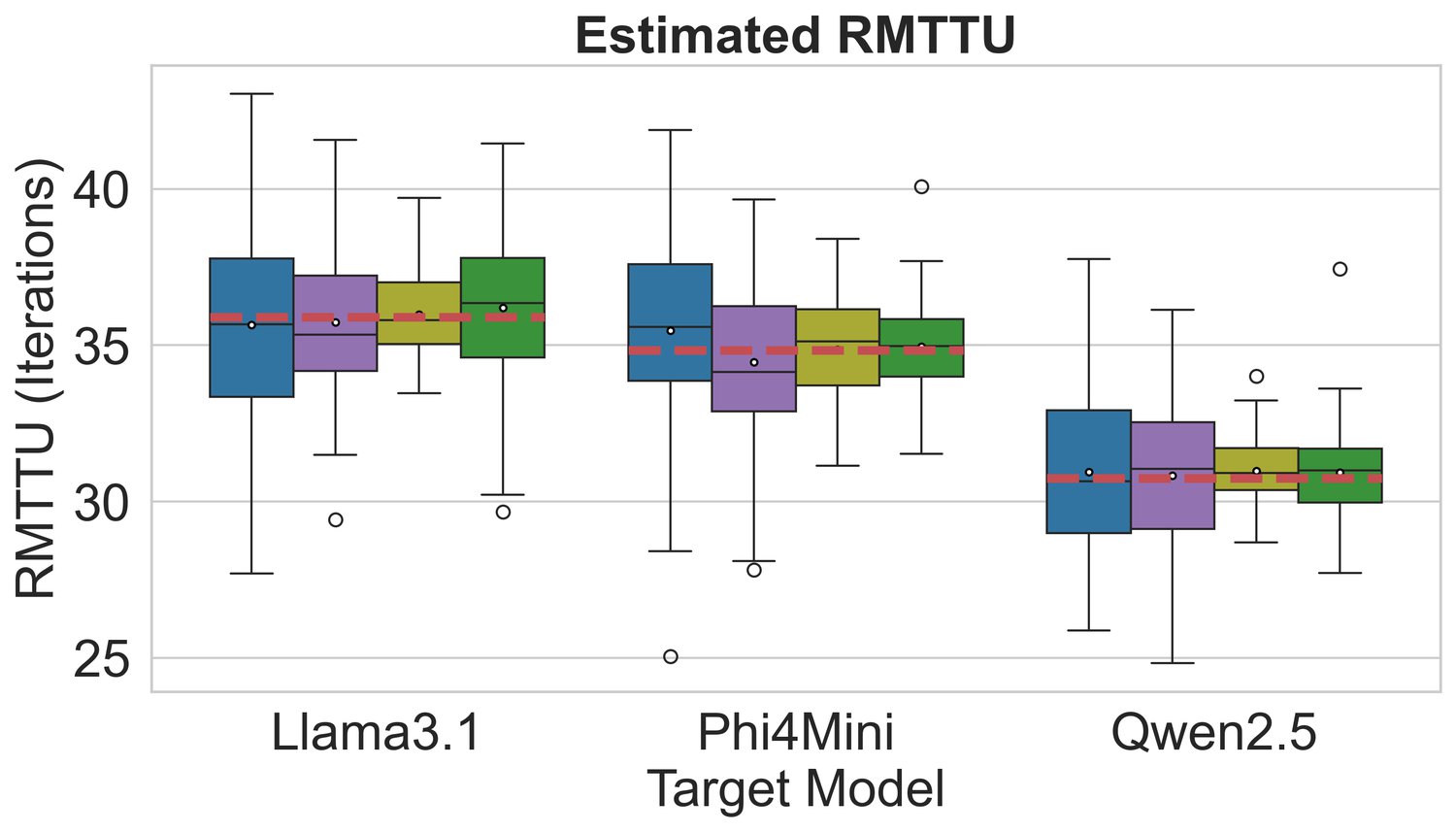}
    \includegraphics[width=\secondgraphwidth\linewidth]{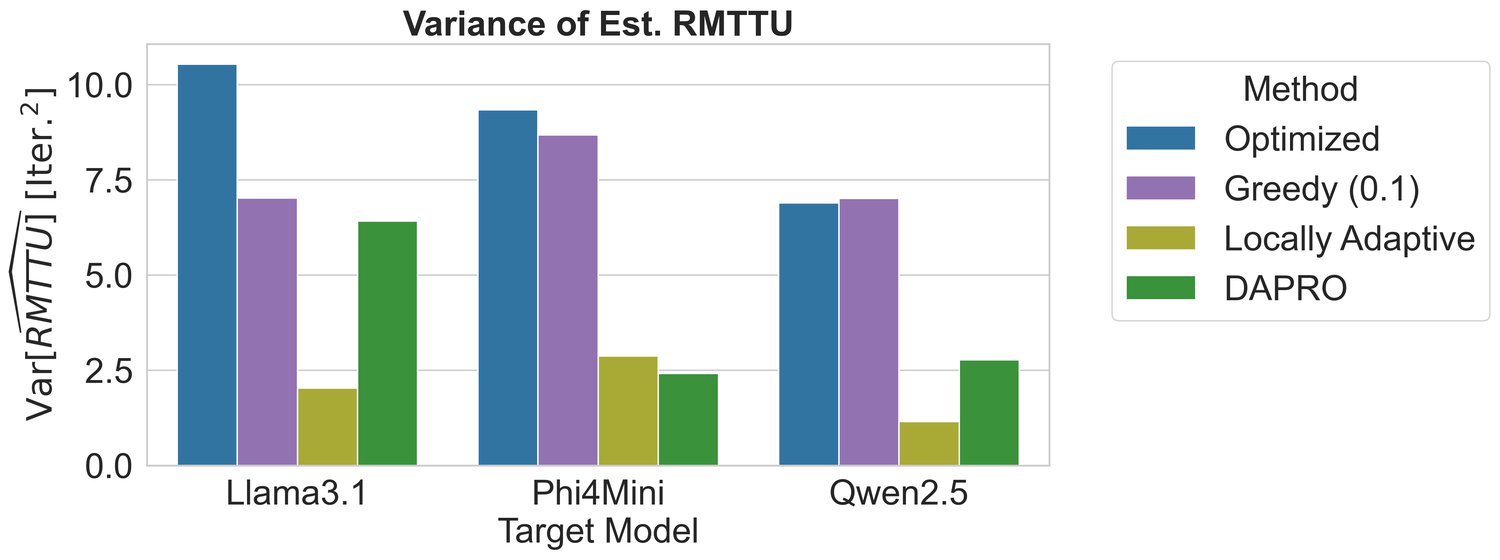}
    
    \caption{Population-level metric estimation on the \textbf{Red Team} dataset evaluated with a \textbf{Llama Guard} judge. The red dashed lines denote the true Oracle metrics. Both the static and greedy baselines exhibit high variance across different data splits, whereas \ttmethod remains highly stable and consistently recovers the Oracle values.}
    \label{fig:estimation_metrics_red_team_llama_guard}
\end{figure}

% --- Hallucination Dataset ---
\begin{figure}[ht]
    \centering
    \includegraphics[width=\firstgraphwidth\linewidth]{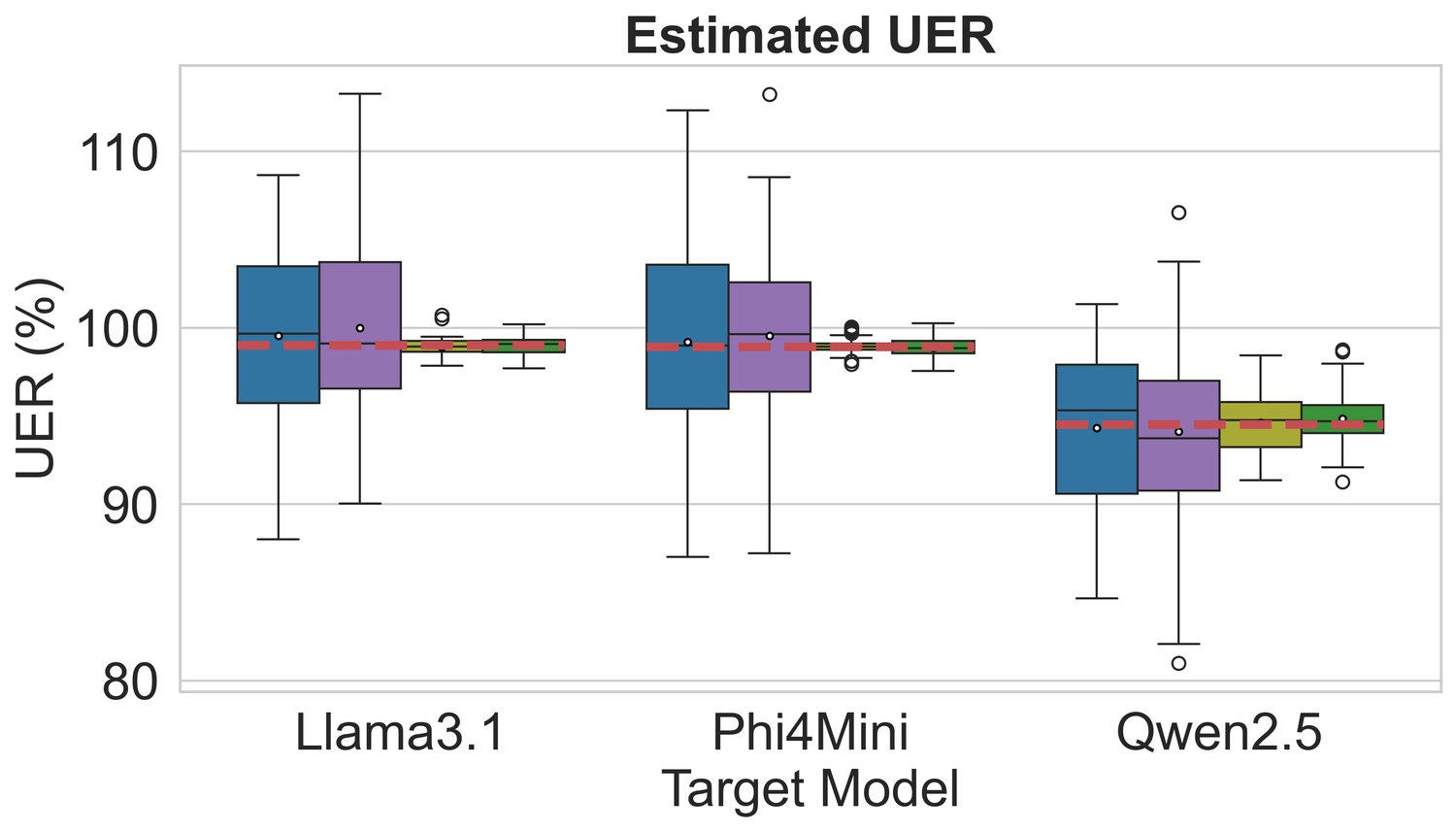}
    \includegraphics[width=\secondgraphwidth\linewidth]{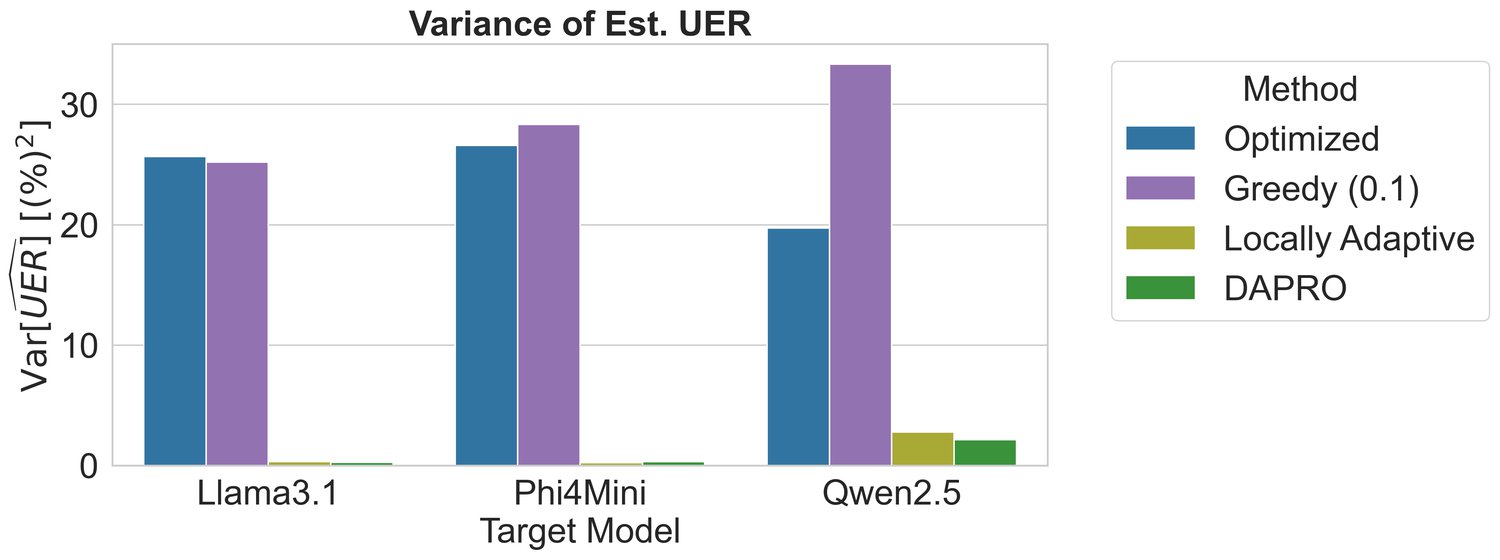}
    \\
    \includegraphics[width=\firstgraphwidth\linewidth]{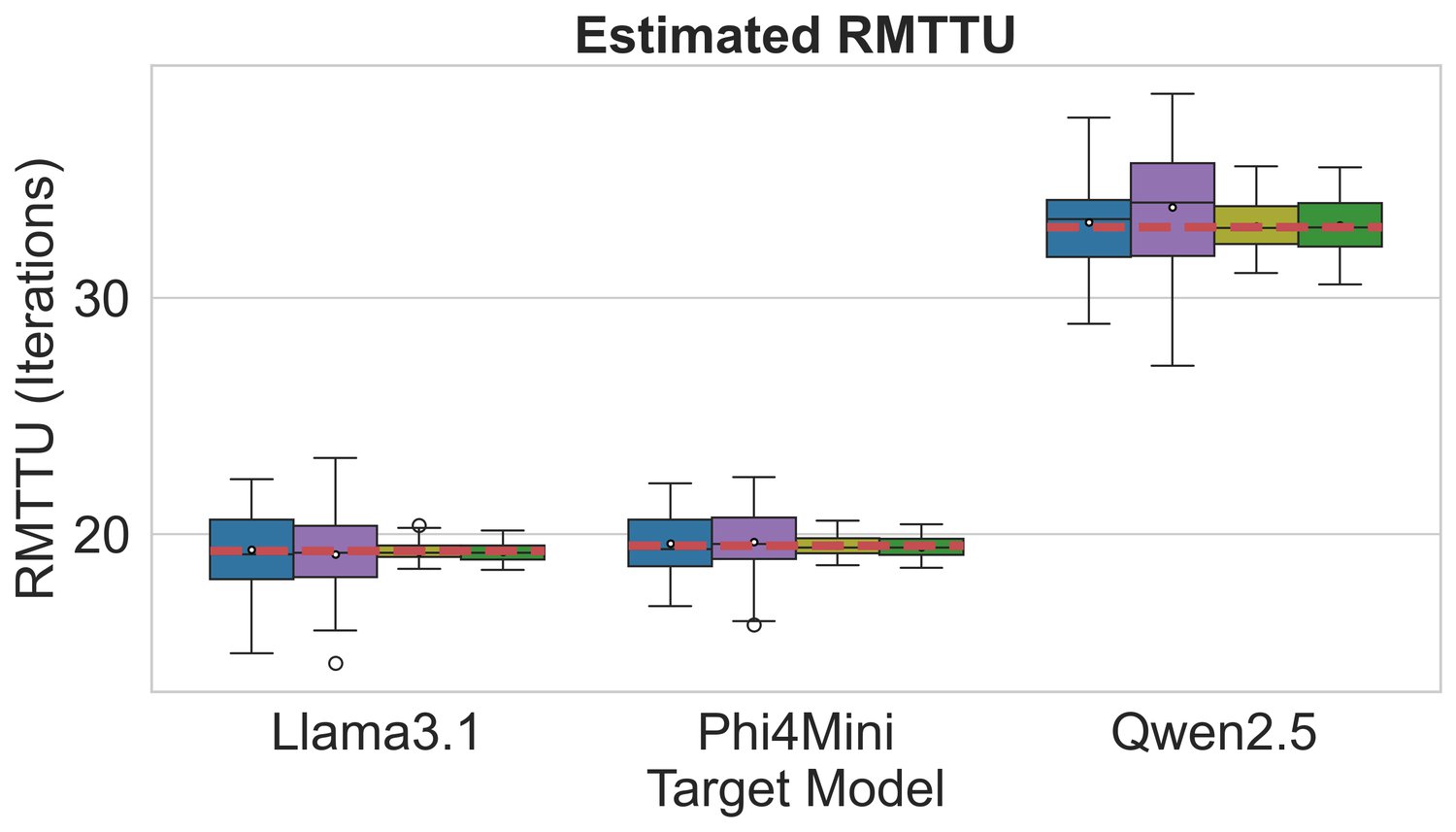}
    \includegraphics[width=\secondgraphwidth\linewidth]{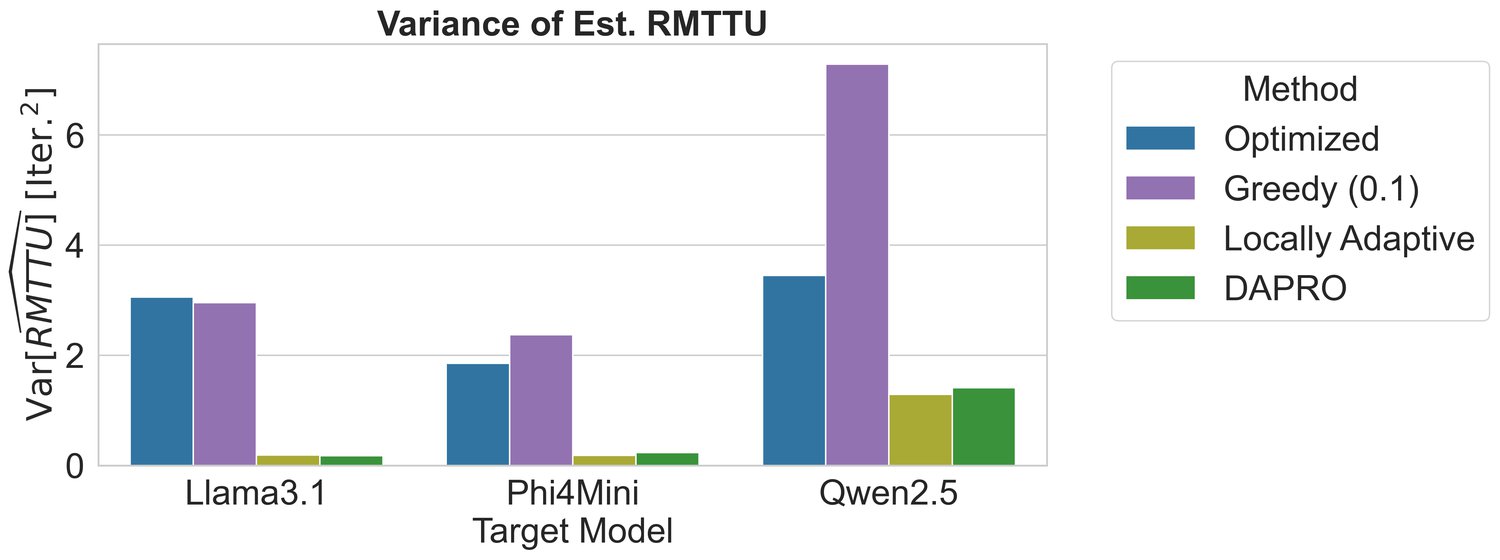}
    
    \caption{Population-level metric estimation on the \textbf{Hallucination} dataset. The red dashed lines denote the true Oracle metrics. The \ttlocaladaptive and \ttmethod exhibit the lowest variance, resulting in the most reliable estimation of both the \ttuer and \ttrmttu metrics.}
    \label{fig:estimation_metrics_hallucination}
\end{figure}

% --- AutoIF Dataset ---
\begin{figure}[ht]
    \centering
    \includegraphics[width=\firstgraphwidth\linewidth]{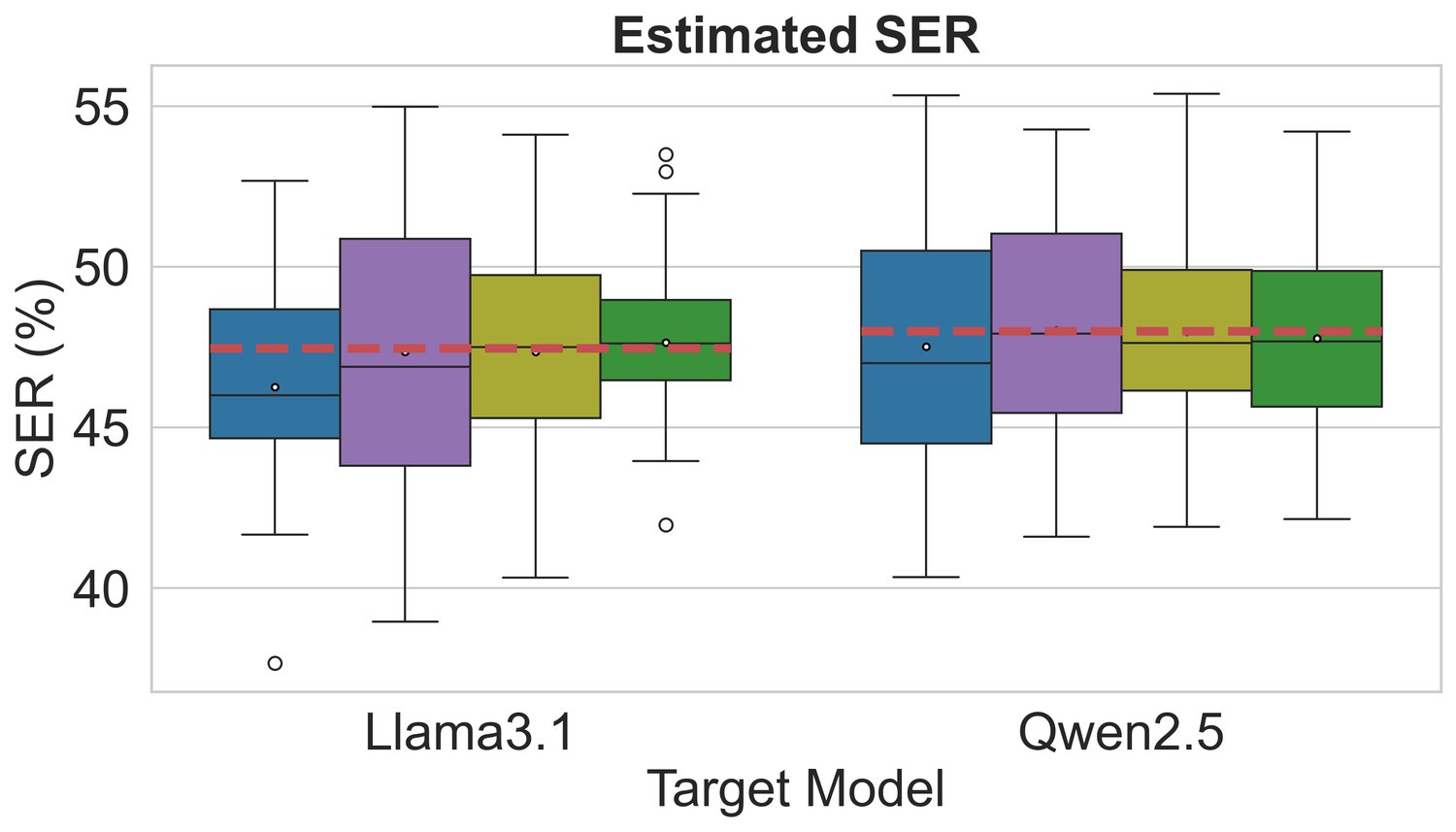}
    \includegraphics[width=\secondgraphwidth\linewidth]{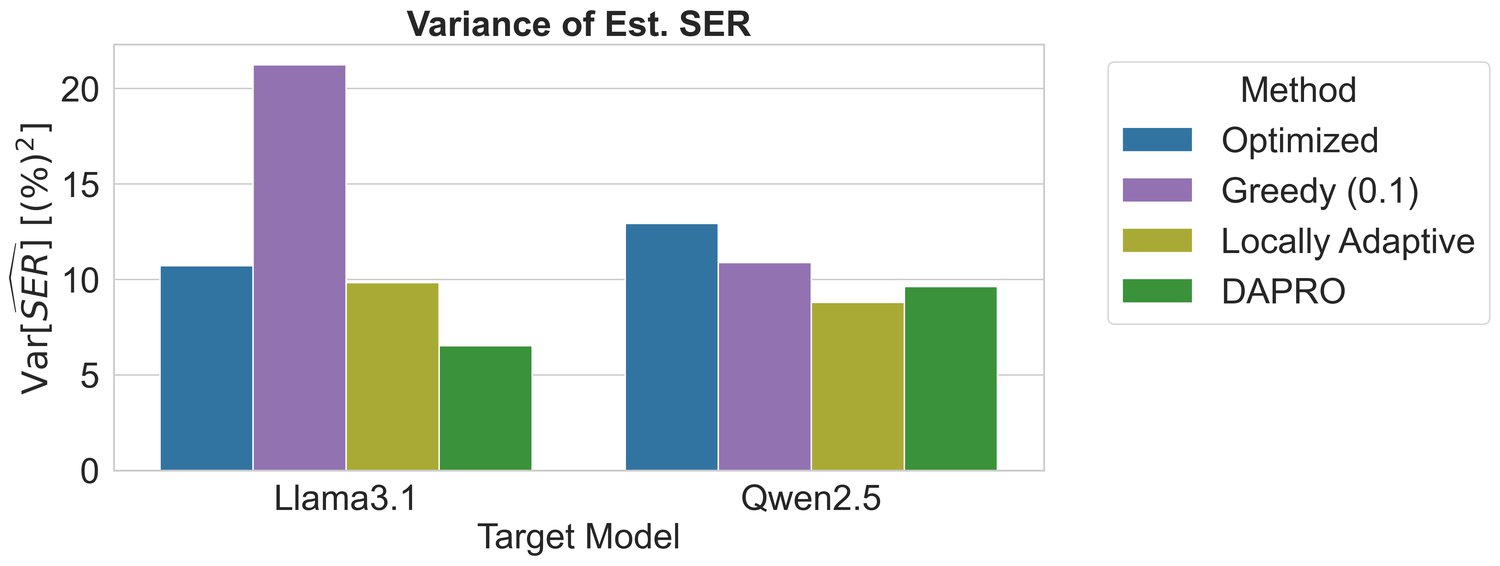}
    \\
    \includegraphics[width=\firstgraphwidth\linewidth]{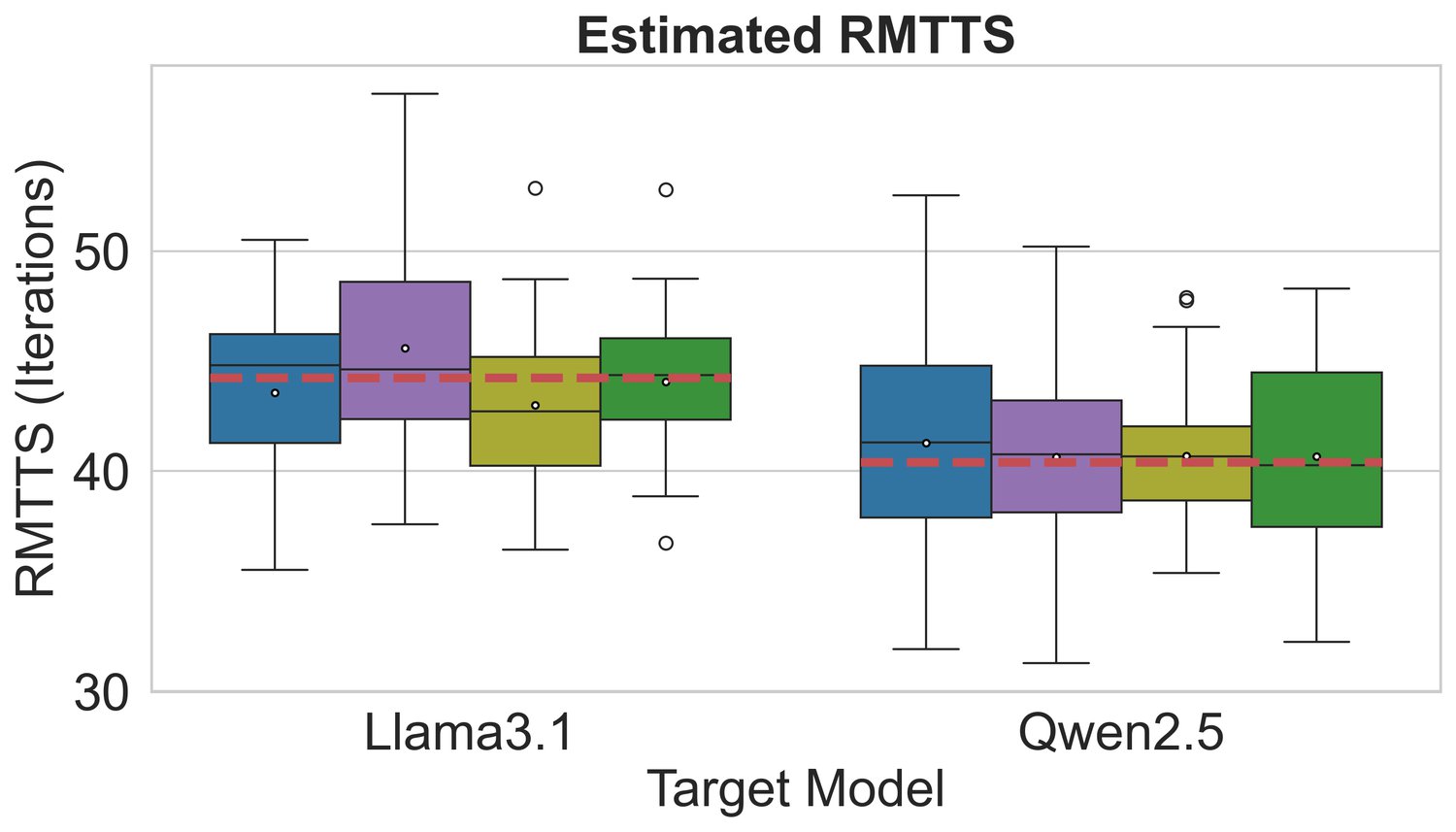}
    \includegraphics[width=\secondgraphwidth\linewidth]{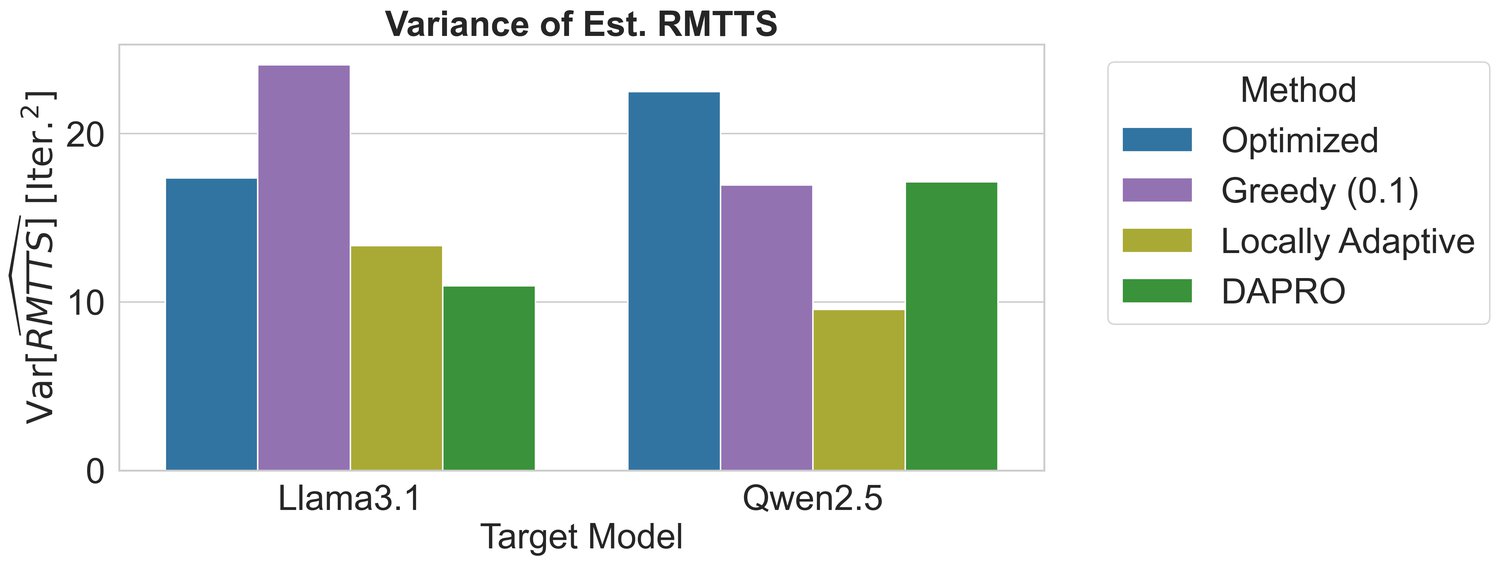}
    
    \caption{Population-level metric estimation on the \textbf{AutoIF} dataset. The red dashed lines denote the true Oracle metrics. The \ttlocaladaptive and \ttmethod exhibit the lowest variance, resulting in the most reliable estimation of both the \ttser and \ttrmtts metrics.}
    \label{fig:estimation_metrics_autoif}
\end{figure}

\subsection{Ablation study for \ttmethod}
\label{sec:ablation_exps}

In this section, we analyze the robustness of our proposed \ttmethod to three factors: the size of the first calibration set split ($N_1$), the quality of the score, and the nominal budget per sample. This study reveals that \ttmethod is effective and valid under varying experimental conditions. Even under extreme conditions, such as a limited budget or severely corrupted scores, our framework consistently achieves the desired coverage level, and its budget consumption is close to the target level. We evaluate the performance on the  Toxicity dataset using the Qwen 2.5 14B Instruct model serving both as attacker and target.

\subsubsection{The effect of the calibration set split}

We employ \ttmethod with various values of the first calibration set split size ($N_1$) and display its performance in Figure~\ref{fig:n1_ablation}. \ttmethod consistently achieves the nominal 
coverage rate, 90\%, and satisfies the target budget per sample of $20$ across all tested values of $N_1$. This figure reveals that the budget used by \ttmethod converges to the target level as $N_1$ increases. However, \ttmethod attains a higher mean weight for higher values of $N_1$. Yet, \ttmethod achieves a lower coverage difference and lower coverage variance compared to the static baseline allocation of~\cite{davidov2026calibrated} for all examined values of $N_1$.

\begin{figure}
    \centering
    \includegraphics[width=0.45\linewidth]{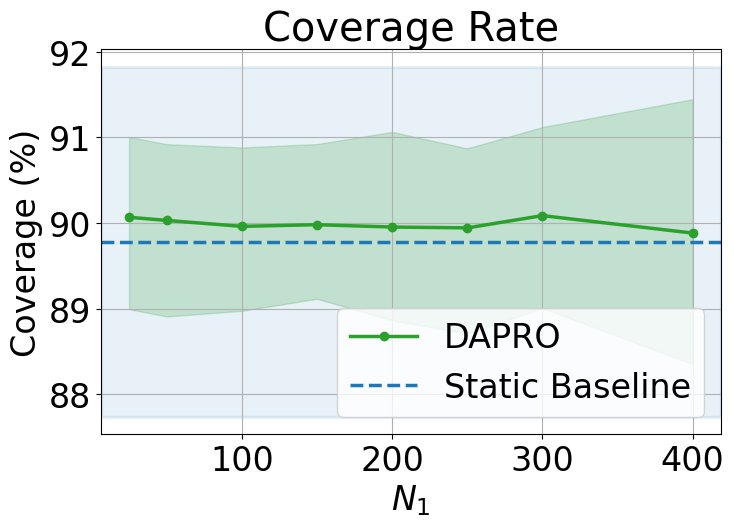}
    \includegraphics[width=0.45\linewidth]{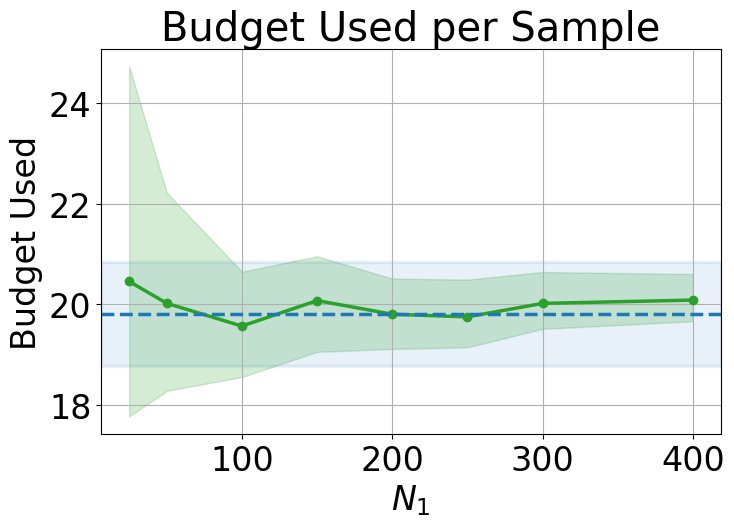}\\

    \includegraphics[width=0.45\linewidth]{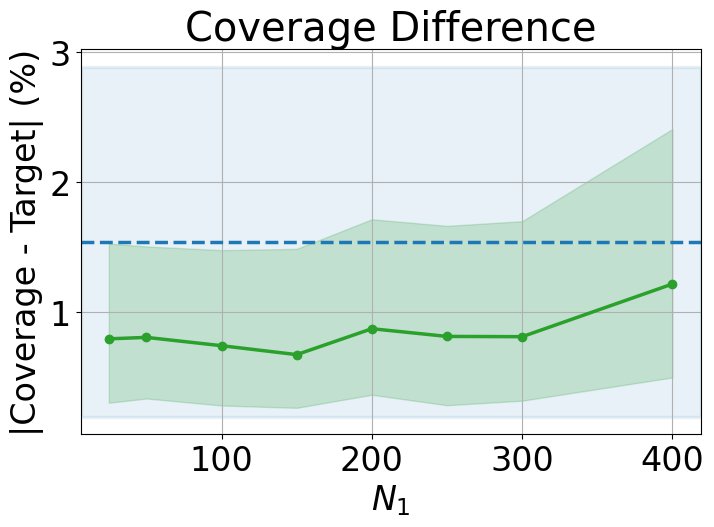}
    \includegraphics[width=0.45\linewidth]{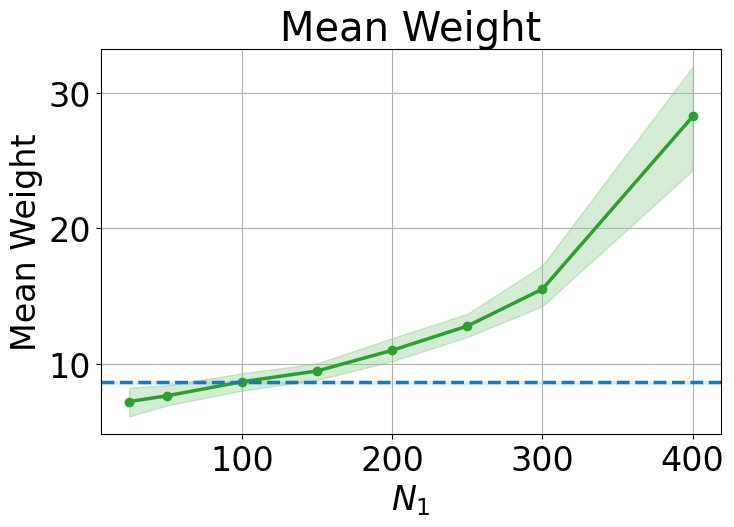}
    
    \caption{
    Impact of first calibration set split size ($N_1$) on empirical coverage, budget per sample, and mean weight on the Toxicity dataset. The Qwen 2.5 14B Instruct model serves as both the attacker and target. Nominal coverage level is set to $1-\alpha = 90\%$ and the target budget per sample is $20$. Shaded regions denote semi-deviations over 50 random calibration-test splits. As $N_1$ increases, \ttmethod consistently maintains valid coverage and satisfies to the budget constraint, with variance notably decreasing at larger sample sizes.}
    \label{fig:n1_ablation}
\end{figure}

\subsubsection{The effect of the quality of the score}

To evaluate the robustness of our approach to the quality of the scores, we conduct an ablation where the scores used in Phase I are artificially corrupted. Specifically, we add random noise to the original scores:
\begin{equation}
    S_i(t) = (1-\lambda) \cdot \calS(H_i(t)) + \lambda \cdot U_i(t)
\end{equation}
where $U_i(t) \sim \text{Uni}(0,1)$ is a uniform random variable between 0 and 1. The parameter $\lambda \in [0, 1]$ controls the score degradation level. We fix the first-split set size to $N_1 = 100$. As shown in Figure~\ref{fig:score_lambda_ablation}, \ttmethod maintains a valid empirical coverage (${\approx}90\%$) across all noise levels, indicating that its theoretical guarantees hold regardless of the quality of the score. Furthermore, \ttmethod satisfies the budget constraint of $20$ units per sample for low values of $\lambda$, while exceeding the budget usage for higher values of $\lambda$. As $\lambda \to 1$ and the scores lose their predictive signal, \ttmethod can no longer distinguish between promising and unpromising conversations. Nevertheless, \ttmethod attains a coverage rate closer to the desired level with a lower variance compared to the static baseline for all values of $\lambda$. This demonstrates the robustness of our proposal to scores that are not informative.

\begin{figure}
    \centering
    \includegraphics[width=0.45\linewidth]{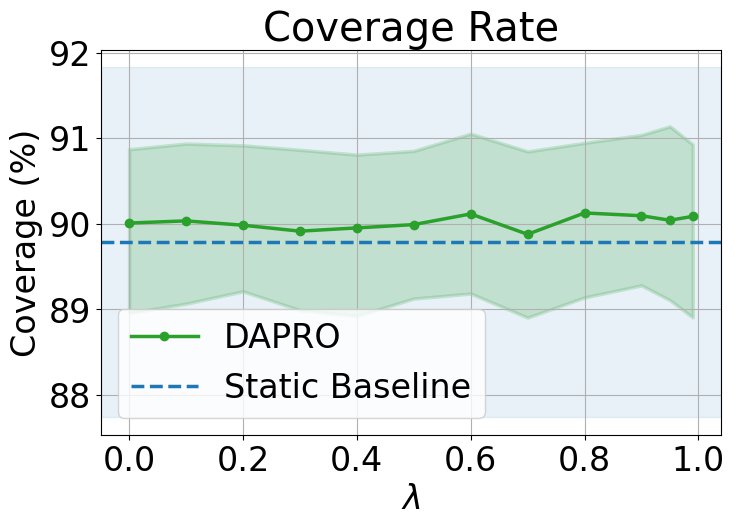}
    \includegraphics[width=0.45\linewidth]{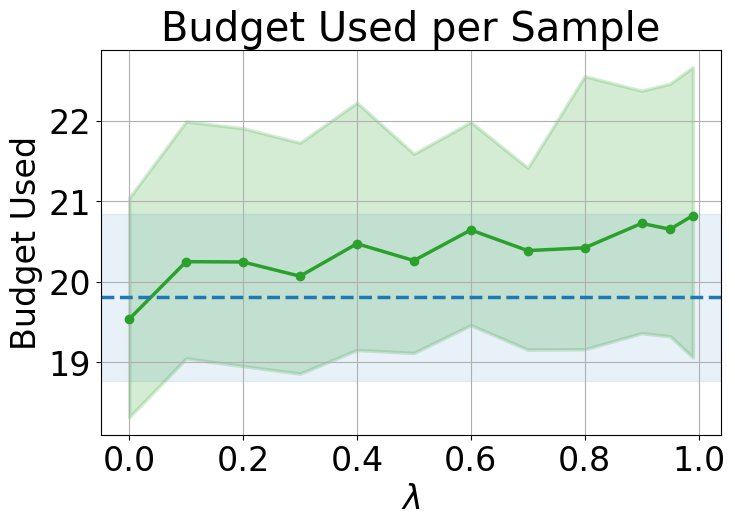}\\

    \includegraphics[width=0.45\linewidth]{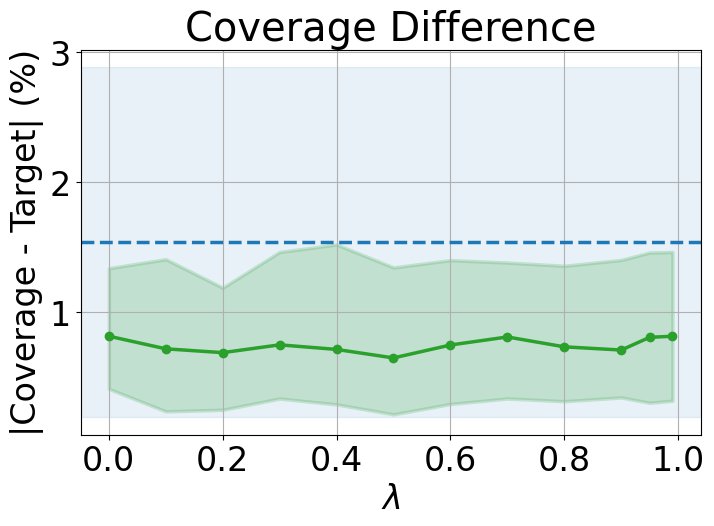}
    \includegraphics[width=0.45\linewidth]{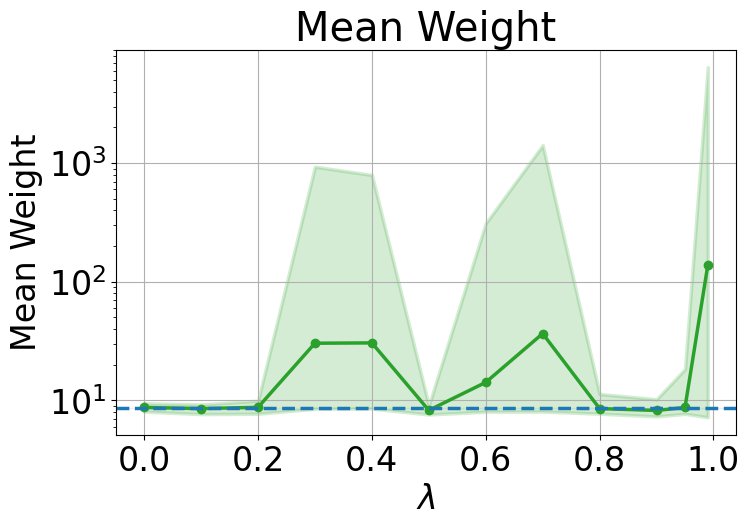}
    
    \caption{
    Impact of score informativeness ($\lambda$) on empirical coverage, budget consumed per sample, and mean weight on the Toxicity dataset. The Qwen 2.5 14B Instruct model serves as both attacker and target, with a first-split set size of $N_1 = 100$. Scores are corrupted by injecting random noise with level $\lambda$. The nominal coverage level is set to $1-\alpha = 90\%$ and the target budget is $20$. Shaded regions denote semi-deviations over 50 random calibration-test splits.}
    \label{fig:score_lambda_ablation}
\end{figure}

\subsubsection{The effect of the available budget}

We employ \ttmethod and the static baseline of~\cite{davidov2026calibrated} with varying levels of budget per sample and present their performance in Figure~\ref{fig:budget_ablation}. To adapt our method, and its data split to the varying budget per sample, we adapt its $N_1$ parameter to the nominal average budget per sample. For an average budget per sample lower than or equal to $10$, we set $N_1=25$, and for an average budget per sample of $25$ we set $N_1=50$. For any higher value of budget per sample, we set $N_1=100$.
This figure demonstrates that under a low budget constraint, the static baseline suffers from a high coverage variance that highly deviates from the target level. It requires a significant budget increase to eventually converge towards coverage values attained by our method with a less available budget. In stark contrast, \ttmethod perfectly maintains valid empirical coverage ($\approx 90\%$) across all budget levels. Furthermore, as the nominal budget increases, our approach attains lower values of mean weight. This indicates a more efficient resource utilization of \ttmethod.

\begin{figure}
    \centering
    \includegraphics[width=0.45\linewidth]{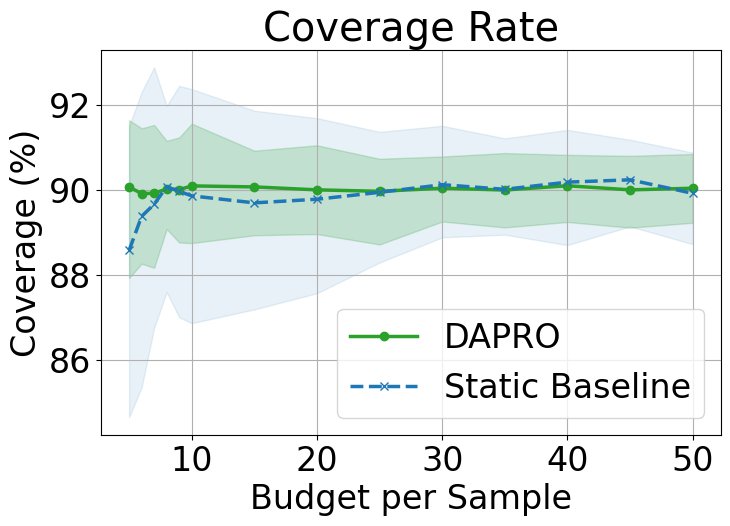}
    \includegraphics[width=0.45\linewidth]{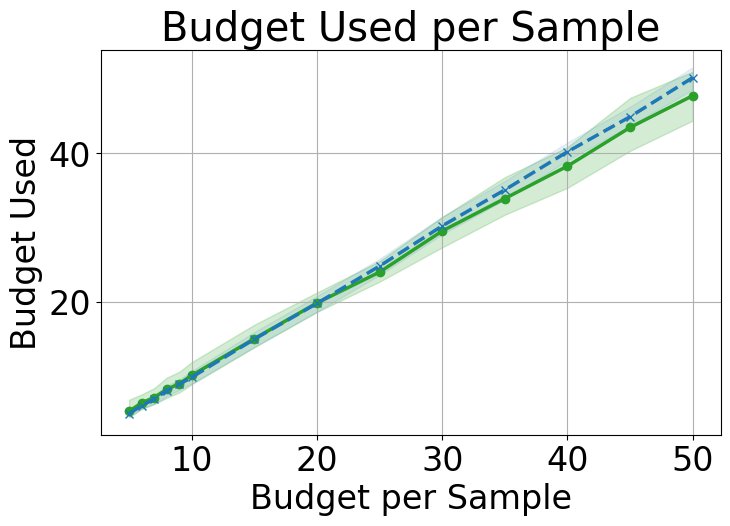}\\

    \includegraphics[width=0.45\linewidth]{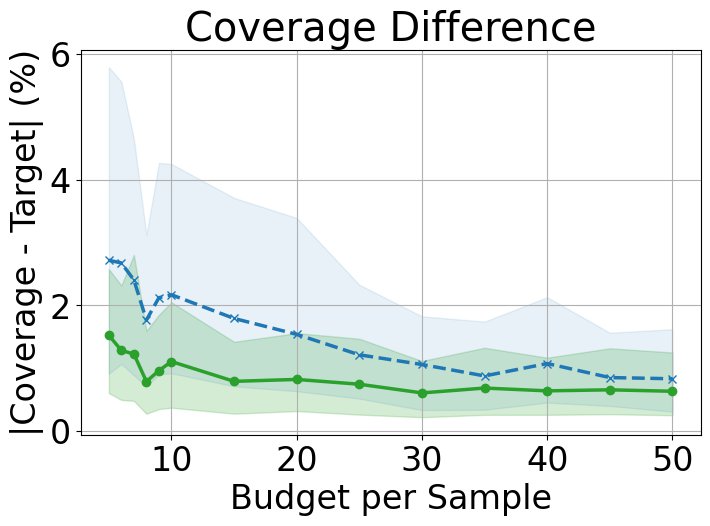}
    \includegraphics[width=0.45\linewidth]{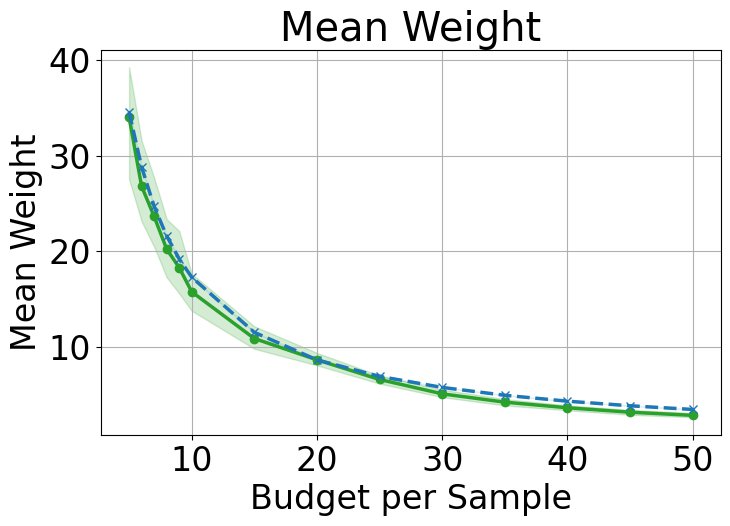}
    
    \caption{Impact of the nominal budget per sample on empirical coverage, budget consumed per sample, and mean allocation weight on the Toxicity dataset. The Qwen 2.5 14B Instruct model serves as both attacker and target. The target coverage level is $1-\alpha = 0.90$. Shaded regions denote semi-deviations across 50 random calibration-test splits. \ttmethod satisfies both coverage and budget constraints across all scenarios, whereas the static baseline exhibits high variance and coverage deviation under low-budget regimes.}
    \label{fig:budget_ablation}
\end{figure}

% TODO: delete for Arxiv, keep for NeurIPS

\ifbool{ispreprint}{
}{
    \newpage
    \input{checklist.tex}
}

\end{document}

%% file: checklist.tex
\section*{NeurIPS Paper Checklist}

\begin{enumerate}

\item {\bf Claims}
    \item[] Question: Do the main claims made in the abstract and introduction accurately reflect the paper's contributions and scope?
    \item[] Answer: \answerYes % Replace by \answerYes{}, \answerNo{}, or \answerNA{}.
    \item[] Justification: The contributions and scope in Section~\ref{sec:method} and Section~\ref{sec:theoretical guarantees} correspond to the claims in the abstract and introduction.
    \item[] Guidelines:
    \begin{itemize}
        \item The answer \answerNA{} means that the abstract and introduction do not include the claims made in the paper.
        \item The abstract and/or introduction should clearly state the claims made, including the contributions made in the paper and important assumptions and limitations. A \answerNo{} or \answerNA{} answer to this question will not be perceived well by the reviewers. 
        \item The claims made should match theoretical and experimental results, and reflect how much the results can be expected to generalize to other settings. 
        \item It is fine to include aspirational goals as motivation as long as it is clear that these goals are not attained by the paper. 
    \end{itemize}

\item {\bf Limitations}
    \item[] Question: Does the paper discuss the limitations of the work performed by the authors?
    \item[] Answer: \answerYes{} % Replace by \answerYes{}, \answerNo{}, or \answerNA{}.
    \item[] Justification: {The limitations of the work are detailed in Section~\ref{sec:discussion}. An ablation study analyzing the robustness of our approach is provided in Appendix~\ref{sec:ablation_exps}. The computational efficiency of our approach is discussed in Appendix~\ref{sec:proposed_alg}.}
    \item[] Guidelines:
    \begin{itemize}
        \item The answer \answerNA{} means that the paper has no limitation while the answer \answerNo{} means that the paper has limitations, but those are not discussed in the paper. 
        \item The authors are encouraged to create a separate ``Limitations'' section in their paper.
        \item The paper should point out any strong assumptions and how robust the results are to violations of these assumptions (e.g., independence assumptions, noiseless settings, model well-specification, asymptotic approximations only holding locally). The authors should reflect on how these assumptions might be violated in practice and what the implications would be.
        \item The authors should reflect on the scope of the claims made, e.g., if the approach was only tested on a few datasets or with a few runs. In general, empirical results often depend on implicit assumptions, which should be articulated.
        \item The authors should reflect on the factors that influence the performance of the approach. For example, a facial recognition algorithm may perform poorly when image resolution is low or images are taken in low lighting. Or a speech-to-text system might not be used reliably to provide closed captions for online lectures because it fails to handle technical jargon.
        \item The authors should discuss the computational efficiency of the proposed algorithms and how they scale with dataset size.
        \item If applicable, the authors should discuss possible limitations of their approach to address problems of privacy and fairness.
        \item While the authors might fear that complete honesty about limitations might be used by reviewers as grounds for rejection, a worse outcome might be that reviewers discover limitations that aren't acknowledged in the paper. The authors should use their best judgment and recognize that individual actions in favor of transparency play an important role in developing norms that preserve the integrity of the community. Reviewers will be specifically instructed to not penalize honesty concerning limitations.
    \end{itemize}

\item {\bf Theory assumptions and proofs}
    \item[] Question: For each theoretical result, does the paper provide the full set of assumptions and a complete (and correct) proof?
    \item[] Answer: \answerYes{} % Replace by \answerYes{}, \answerNo{}, or \answerNA{}.
    \item[] Justification: {The full set of assumptions and complete proofs are given in Appendix~\ref{sec:theory}.}
    \item[] Guidelines:
    \begin{itemize}
        \item The answer \answerNA{} means that the paper does not include theoretical results. 
        \item All the theorems, formulas, and proofs in the paper should be numbered and cross-referenced.
        \item All assumptions should be clearly stated or referenced in the statement of any theorems.
        \item The proofs can either appear in the main paper or the supplemental material, but if they appear in the supplemental material, the authors are encouraged to provide a short proof sketch to provide intuition. 
        \item Inversely, any informal proof provided in the core of the paper should be complemented by formal proofs provided in appendix or supplemental material.
        \item Theorems and Lemmas that the proof relies upon should be properly referenced. 
    \end{itemize}

    \item {\bf Experimental result reproducibility}
    \item[] Question: Does the paper fully disclose all the information needed to reproduce the main experimental results of the paper to the extent that it affects the main claims and/or conclusions of the paper (regardless of whether the code and data are provided or not)?
    \item[] Answer: \answerYes{} % Replace by \answerYes{}, \answerNo{}, or \answerNA{}.
    \item[] Justification: {The full information for reproducing our results is detailed in Appendix~\ref{sec:experimental_details}.}
    \item[] Guidelines:
    \begin{itemize}
        \item The answer \answerNA{} means that the paper does not include experiments.
        \item If the paper includes experiments, a \answerNo{} answer to this question will not be perceived well by the reviewers: Making the paper reproducible is important, regardless of whether the code and data are provided or not.
        \item If the contribution is a dataset and\slash or model, the authors should describe the steps taken to make their results reproducible or verifiable. 
        \item Depending on the contribution, reproducibility can be accomplished in various ways. For example, if the contribution is a novel architecture, describing the architecture fully might suffice, or if the contribution is a specific model and empirical evaluation, it may be necessary to either make it possible for others to replicate the model with the same dataset, or provide access to the model. In general. releasing code and data is often one good way to accomplish this, but reproducibility can also be provided via detailed instructions for how to replicate the results, access to a hosted model (e.g., in the case of a large language model), releasing of a model checkpoint, or other means that are appropriate to the research performed.
        \item While NeurIPS does not require releasing code, the conference does require all submissions to provide some reasonable avenue for reproducibility, which may depend on the nature of the contribution. For example
        \begin{enumerate}
            \item If the contribution is primarily a new algorithm, the paper should make it clear how to reproduce that algorithm.
            \item If the contribution is primarily a new model architecture, the paper should describe the architecture clearly and fully.
            \item If the contribution is a new model (e.g., a large language model), then there should either be a way to access this model for reproducing the results or a way to reproduce the model (e.g., with an open-source dataset or instructions for how to construct the dataset).
            \item We recognize that reproducibility may be tricky in some cases, in which case authors are welcome to describe the particular way they provide for reproducibility. In the case of closed-source models, it may be that access to the model is limited in some way (e.g., to registered users), but it should be possible for other researchers to have some path to reproducing or verifying the results.
        \end{enumerate}
    \end{itemize}

\item {\bf Open access to data and code}
    \item[] Question: Does the paper provide open access to the data and code, with sufficient instructions to faithfully reproduce the main experimental results, as described in supplemental material?
    \item[] Answer: \answerYes{} % Replace by \answerYes{}, \answerNo{}, or \answerNA{}.
    \item[] Justification: {Our code is provided in the supplementary material, and the instructions to reproduce our data and experiments is given in Appendix~\ref{sec:experimental_details}.}
    \item[] Guidelines:
    \begin{itemize}
        \item The answer \answerNA{} means that paper does not include experiments requiring code.
        \item Please see the NeurIPS code and data submission guidelines (\url{https://neurips.cc/public/guides/CodeSubmissionPolicy}) for more details.
        \item While we encourage the release of code and data, we understand that this might not be possible, so \answerNo{} is an acceptable answer. Papers cannot be rejected simply for not including code, unless this is central to the contribution (e.g., for a new open-source benchmark).
        \item The instructions should contain the exact command and environment needed to run to reproduce the results. See the NeurIPS code and data submission guidelines (\url{https://neurips.cc/public/guides/CodeSubmissionPolicy}) for more details.
        \item The authors should provide instructions on data access and preparation, including how to access the raw data, preprocessed data, intermediate data, and generated data, etc.
        \item The authors should provide scripts to reproduce all experimental results for the new proposed method and baselines. If only a subset of experiments are reproducible, they should state which ones are omitted from the script and why.
        \item At submission time, to preserve anonymity, the authors should release anonymized versions (if applicable).
        \item Providing as much information as possible in supplemental material (appended to the paper) is recommended, but including URLs to data and code is permitted.
    \end{itemize}

\item {\bf Experimental setting/details}
    \item[] Question: Does the paper specify all the training and test details (e.g., data splits, hyperparameters, how they were chosen, type of optimizer) necessary to understand the results?
    \item[] Answer: \answerYes{} % Replace by \answerYes{}, \answerNo{}, or \answerNA{}.
    \item[] Justification: {All training and test details are given in Appendix~\ref{sec:experimental_details}.}
    \item[] Guidelines:
    \begin{itemize}
        \item The answer \answerNA{} means that the paper does not include experiments.
        \item The experimental setting should be presented in the core of the paper to a level of detail that is necessary to appreciate the results and make sense of them.
        \item The full details can be provided either with the code, in appendix, or as supplemental material.
    \end{itemize}

\item {\bf Experiment statistical significance}
    \item[] Question: Does the paper report error bars suitably and correctly defined or other appropriate information about the statistical significance of the experiments?
    \item[] Answer: \answerYes{} % Replace by \answerYes{}, \answerNo{}, or \answerNA{}.
    \item[] Justification: {In Section~\ref{sec:experiments} and Appendix~\ref{sec:additional_experiments} we present box plots and variance bar plots that provide information about statistical significance.}
    \item[] Guidelines:
    \begin{itemize}
        \item The answer \answerNA{} means that the paper does not include experiments.
        \item The authors should answer \answerYes{} if the results are accompanied by error bars, confidence intervals, or statistical significance tests, at least for the experiments that support the main claims of the paper.
        \item The factors of variability that the error bars are capturing should be clearly stated (for example, train/test split, initialization, random drawing of some parameter, or overall run with given experimental conditions).
        \item The method for calculating the error bars should be explained (closed form formula, call to a library function, bootstrap, etc.)
        \item The assumptions made should be given (e.g., Normally distributed errors).
        \item It should be clear whether the error bar is the standard deviation or the standard error of the mean.
        \item It is OK to report 1-sigma error bars, but one should state it. The authors should preferably report a 2-sigma error bar than state that they have a 96\% CI, if the hypothesis of Normality of errors is not verified.
        \item For asymmetric distributions, the authors should be careful not to show in tables or figures symmetric error bars that would yield results that are out of range (e.g., negative error rates).
        \item If error bars are reported in tables or plots, the authors should explain in the text how they were calculated and reference the corresponding figures or tables in the text.
    \end{itemize}

\item {\bf Experiments compute resources}
    \item[] Question: For each experiment, does the paper provide sufficient information on the computer resources (type of compute workers, memory, time of execution) needed to reproduce the experiments?
    \item[] Answer: \answerYes{} % Replace by \answerYes{}, \answerNo{}, or \answerNA{}.
    \item[] Justification: {The machine specifications are provided in Appendix~\ref{sec:machine_spec}.}
    \item[] Guidelines:
    \begin{itemize}
        \item The answer \answerNA{} means that the paper does not include experiments.
        \item The paper should indicate the type of compute workers CPU or GPU, internal cluster, or cloud provider, including relevant memory and storage.
        \item The paper should provide the amount of compute required for each of the individual experimental runs as well as estimate the total compute. 
        \item The paper should disclose whether the full research project required more compute than the experiments reported in the paper (e.g., preliminary or failed experiments that didn't make it into the paper). 
    \end{itemize}
    
\item {\bf Code of ethics}
    \item[] Question: Does the research conducted in the paper conform, in every respect, with the NeurIPS Code of Ethics \url{https://neurips.cc/public/EthicsGuidelines}?
    \item[] Answer: \answerYes{} % Replace by \answerYes{}, \answerNo{}, or \answerNA{}.
    \item[] Justification: {This research respects the NeurIPS Code of Ethics.}
    \item[] Guidelines:
    \begin{itemize}
        \item The answer \answerNA{} means that the authors have not reviewed the NeurIPS Code of Ethics.
        \item If the authors answer \answerNo, they should explain the special circumstances that require a deviation from the Code of Ethics.
        \item The authors should make sure to preserve anonymity (e.g., if there is a special consideration due to laws or regulations in their jurisdiction).
    \end{itemize}

\item {\bf Broader impacts}
    \item[] Question: Does the paper discuss both potential positive societal impacts and negative societal impacts of the work performed?
    \item[] Answer: \answerYes{} % Replace by \answerYes{}, \answerNo{}, or \answerNA{}.
    \item[] Justification: {The societal impacts are discussed in Section~\ref{sec:discussion}.}
    \item[] Guidelines:
    \begin{itemize}
        \item The answer \answerNA{} means that there is no societal impact of the work performed.
        \item If the authors answer \answerNA{} or \answerNo, they should explain why their work has no societal impact or why the paper does not address societal impact.
        \item Examples of negative societal impacts include potential malicious or unintended uses (e.g., disinformation, generating fake profiles, surveillance), fairness considerations (e.g., deployment of technologies that could make decisions that unfairly impact specific groups), privacy considerations, and security considerations.
        \item The conference expects that many papers will be foundational research and not tied to particular applications, let alone deployments. However, if there is a direct path to any negative applications, the authors should point it out. For example, it is legitimate to point out that an improvement in the quality of generative models could be used to generate Deepfakes for disinformation. On the other hand, it is not needed to point out that a generic algorithm for optimizing neural networks could enable people to train models that generate Deepfakes faster.
        \item The authors should consider possible harms that could arise when the technology is being used as intended and functioning correctly, harms that could arise when the technology is being used as intended but gives incorrect results, and harms following from (intentional or unintentional) misuse of the technology.
        \item If there are negative societal impacts, the authors could also discuss possible mitigation strategies (e.g., gated release of models, providing defenses in addition to attacks, mechanisms for monitoring misuse, mechanisms to monitor how a system learns from feedback over time, improving the efficiency and accessibility of ML).
    \end{itemize}
    
\item {\bf Safeguards}
    \item[] Question: Does the paper describe safeguards that have been put in place for responsible release of data or models that have a high risk for misuse (e.g., pre-trained language models, image generators, or scraped datasets)?
    \item[] Answer: \answerYes{} % Replace by \answerYes{}, \answerNo{}, or \answerNA{}.
    \item[] Justification: {In Section~\ref{sec:discussion}, we explain that our approach should be deployed as a complementary tool within a broader safety, alignment, and governance pipeline.}
    \item[] Guidelines:
    \begin{itemize}
        \item The answer \answerNA{} means that the paper poses no such risks.
        \item Released models that have a high risk for misuse or dual-use should be released with necessary safeguards to allow for controlled use of the model, for example by requiring that users adhere to usage guidelines or restrictions to access the model or implementing safety filters. 
        \item Datasets that have been scraped from the Internet could pose safety risks. The authors should describe how they avoided releasing unsafe images.
        \item We recognize that providing effective safeguards is challenging, and many papers do not require this, but we encourage authors to take this into account and make a best faith effort.
    \end{itemize}

\item {\bf Licenses for existing assets}
    \item[] Question: Are the creators or original owners of assets (e.g., code, data, models), used in the paper, properly credited and are the license and terms of use explicitly mentioned and properly respected?
    \item[] Answer: \answerYes{} % Replace by \answerYes{}, \answerNo{}, or \answerNA{}.
    \item[] Justification: {All assets are properly cited.}
    \item[] Guidelines:
    \begin{itemize}
        \item The answer \answerNA{} means that the paper does not use existing assets.
        \item The authors should cite the original paper that produced the code package or dataset.
        \item The authors should state which version of the asset is used and, if possible, include a URL.
        \item The name of the license (e.g., CC-BY 4.0) should be included for each asset.
        \item For scraped data from a particular source (e.g., website), the copyright and terms of service of that source should be provided.
        \item If assets are released, the license, copyright information, and terms of use in the package should be provided. For popular datasets, \url{paperswithcode.com/datasets} has curated licenses for some datasets. Their licensing guide can help determine the license of a dataset.
        \item For existing datasets that are re-packaged, both the original license and the license of the derived asset (if it has changed) should be provided.
        \item If this information is not available online, the authors are encouraged to reach out to the asset's creators.
    \end{itemize}

\item {\bf New assets}
    \item[] Question: Are new assets introduced in the paper well documented and is the documentation provided alongside the assets?
    \item[] Answer: \answerNA{} % Replace by \answerYes{}, \answerNo{}, or \answerNA{}.
    \item[] Justification: {We do not release new assets.}
    \item[] Guidelines:
    \begin{itemize}
        \item The answer \answerNA{} means that the paper does not release new assets.
        \item Researchers should communicate the details of the dataset\slash code\slash model as part of their submissions via structured templates. This includes details about training, license, limitations, etc. 
        \item The paper should discuss whether and how consent was obtained from people whose asset is used.
        \item At submission time, remember to anonymize your assets (if applicable). You can either create an anonymized URL or include an anonymized zip file.
    \end{itemize}

\item {\bf Crowdsourcing and research with human subjects}
    \item[] Question: For crowdsourcing experiments and research with human subjects, does the paper include the full text of instructions given to participants and screenshots, if applicable, as well as details about compensation (if any)? 
    \item[] Answer: \answerNA{} % Replace by \answerYes{}, \answerNo{}, or \answerNA{}.
    \item[] Justification: {This paper does not involve crowdsourcing.}
    \item[] Guidelines:
    \begin{itemize}
        \item The answer \answerNA{} means that the paper does not involve crowdsourcing nor research with human subjects.
        \item Including this information in the supplemental material is fine, but if the main contribution of the paper involves human subjects, then as much detail as possible should be included in the main paper. 
        \item According to the NeurIPS Code of Ethics, workers involved in data collection, curation, or other labor should be paid at least the minimum wage in the country of the data collector. 
    \end{itemize}

\item {\bf Institutional review board (IRB) approvals or equivalent for research with human subjects}
    \item[] Question: Does the paper describe potential risks incurred by study participants, whether such risks were disclosed to the subjects, and whether Institutional Review Board (IRB) approvals (or an equivalent approval/review based on the requirements of your country or institution) were obtained?
    \item[] Answer: \answerNA{} % Replace by \answerYes{}, \answerNo{}, or \answerNA{}.
    \item[] Justification: {The paper does not involve crowdsourcing nor research with human subjects.}
    \item[] Guidelines:
    \begin{itemize}
        \item The answer \answerNA{} means that the paper does not involve crowdsourcing nor research with human subjects.
        \item Depending on the country in which research is conducted, IRB approval (or equivalent) may be required for any human subjects research. If you obtained IRB approval, you should clearly state this in the paper. 
        \item We recognize that the procedures for this may vary significantly between institutions and locations, and we expect authors to adhere to the NeurIPS Code of Ethics and the guidelines for their institution. 
        \item For initial submissions, do not include any information that would break anonymity (if applicable), such as the institution conducting the review.
    \end{itemize}

\item {\bf Declaration of LLM usage}
    \item[] Question: Does the paper describe the usage of LLMs if it is an important, original, or non-standard component of the core methods in this research? Note that if the LLM is used only for writing, editing, or formatting purposes and does \emph{not} impact the core methodology, scientific rigor, or originality of the research, declaration is not required.
    %this research? 
    \item[] Answer: \answerNA{} % Replace by \answerYes{}, \answerNo{}, or \answerNA{}.
    \item[] Justification: {We did not use LLMs for our core methodology, scientific rigor or originality of the research.}
    \item[] Guidelines:
    \begin{itemize}
        \item The answer \answerNA{} means that the core method development in this research does not involve LLMs as any important, original, or non-standard components.
        \item Please refer to our LLM policy in the NeurIPS handbook for what should or should not be described.
    \end{itemize}

\end{enumerate}